\DeclareMathOperator*{\argmin}{arg\,min}
\newcommand{\norm}[1]{\left\lVert#1\right\rVert}
\newtheorem{theorem}{Theorem}
\newtheorem{proposition}{Proposition} 
\newtheorem{lemma}{Lemma} 
\newtheorem{remark}{Remark} 
\newtheorem{assumption}{Assumption} 
\newtheorem{corollary}{Corollary} 
\newtheorem{definition}{Definition}
\title{Tuning-Free Bilevel Optimization: New Algorithms and Convergence Analysis}
\author[1]{Yifan Yang}
\author[1]{Hao Ban}
\author[2]{Minhui Huang}
\author[3]{Shiqian Ma}
\author[1]{Kaiyi Ji}
\affil[1]{Department of Computer Science and Engineering, University at Buffalo}
\affil[2]{AI Research for Monetization, Meta}
\affil[3]{Department of Computational Applied Math and Operations Research, Rice University}
\date{Oct 3, 2024}
\begin{document}

\maketitle
\footnotetext[1]{Email: \{yyang99, haoban, kaiyiji\}@buffalo.edu. }
\footnotetext[2]{Email: mhhuang@meta.com. }
\footnotetext[3]{Email: sqma@rice.edu. }

\begin{abstract}
\noindent
Bilevel optimization has recently attracted considerable attention due to its abundant applications in machine learning problems. However, existing methods rely on prior knowledge of problem parameters to determine stepsizes, resulting in significant effort in tuning stepsizes when these parameters are unknown. In this paper, we propose two novel tuning-free algorithms, D-TFBO and S-TFBO. D-TFBO employs a double-loop structure with stepsizes adaptively adjusted by the "inverse of cumulative gradient norms" strategy. S-TFBO features a simpler fully single-loop structure that updates three variables simultaneously with a theory-motivated joint design of adaptive stepsizes for all variables. We provide a comprehensive convergence analysis for both algorithms and show that D-TFBO and S-TFBO respectively require $\mathcal{O}(\frac{1}{\epsilon})$ and $\mathcal{O}(\frac{1}{\epsilon}\log^4(\frac{1}{\epsilon}))$ iterations to find an $\epsilon$-accurate stationary point, (nearly) matching their well-tuned counterparts using the information of problem parameters. Experiments on various problems show that our methods achieve performance comparable to existing well-tuned approaches, while being more robust to the selection of initial stepsizes. 
To the best of our knowledge, our methods are the first to completely eliminate the need for stepsize tuning, while achieving theoretical guarantees. 
\end{abstract}

\section{Introduction}
Bilevel optimization has gained considerable attention recently due to its widespread use in various machine learning applications, such as meta-learning~\cite{franceschi2018bilevel,bertinetto2018meta,rajeswaran2019meta}, hyperparameter optimization~\cite{shaban2019truncated,feurer2019hyperparameter}, reinforcement learning~\cite{konda2000actor,hong2020two}, robotics~\cite{wang2024imperative}, communication~\cite{ji2022network} and federated learning~\cite{tarzanagh2022fednest}. 
In this paper, we study a standard bilevel optimization problem that takes the following mathematical formulation:
\begin{align}\label{eq:bilevel}
    &\min_{x\in \mathbb{R}^{d_x}}\Phi(x) := f\big(x,y^*(x)\big) \nonumber \\
    &\;\;\mbox{s.t.} \;  y^*(x) = \arg\min_{y \in \mathbb{R}^{d_y}} g(x,y),
\end{align}
where $f$ and $g$ are jointly continuously differentiable outer (upper-level) and inner (lower-level)  functions. 
In this paper, we focus on the nonconvex-strongly-convex setting, where the lower-level function $g$ is strongly convex w.r.t. $y$ and the outer function $\Phi(x)$ is possibly nonconvex. 

Recent years have witnessed the rapid development of bilevel optimization algorithms, which can be categorized into approximate implicit differentiation (AID)~\cite{ji2021bilevel, dagreou2022framework} based, iterative differentiation (ITD)~\cite{ji2022will,grazzi2020iteration} based,  and value-function based~\cite{kwon2023fully,liu2021value} approaches. 
However, these methods often require substantial effort to tune a couple of hyperparameters like stepsizes, which typically depend on {\bf unknown} problem parameters (such as Lipschitzness parameters, strong convexity parameters, and optimal function values). This emphasizes the importance of {\em adaptive and tuning-free} methods in bilevel optimization.
{\em In this paper, an algorithm is considered tuning-free if it does not need to know the problem parameters in advance but can still achieve almost the same convergence rate guarantee as its well-tuned counterpart using this information.}  
Despite several recent efforts to reduce dependence on problem-specific parameters~\cite{fan2024bisls, antonakopoulosadaptive}, developing a fully tuning-free bilevel optimization algorithm remains an open challenge. For instance, \cite{fan2024bisls} utilizes Polyak's stepsizes to automate both inner and outer updates but still requires information such as gradient Lipschitzness parameters and optimal lower-level function values. Similarly, \cite{antonakopoulos2023adaptive} introduces an "on-the-fly" accumulation strategy for (hyper)gradient norms, which removes the reliance on inner and outer gradient Lipschitzness parameters but still depends on the strong convexity parameter for the inner AdaNGD-type updates. 

This paper aims to close this gap by introducing two novel fully tuning-free bilevel optimization algorithms named D-TFBO and S-TFBO (where D and S represent double- and single-loop approaches), along with a comprehensive convergence analysis demonstrating their competitive performance compared to existing well-tuned approaches (which tune their hyperparameters like stepsizes based on the problem parameters). Our key contributions are outlined below. 

\begin{list}{$\bullet$}{\topsep=0.3ex \leftmargin=0.25in \rightmargin=0.in \itemsep =-0.022in}
\item Our algorithms are inspired by the "inverse of cumulative gradient norms" strategy introduced by \cite{xie2020linear, ward2020adagrad}, adapting the stepsizes based on accumulated (hyper)gradient norms. D-TFBO utilizes two optimization sub-loops: one for solving the inner problem and another for addressing a linear system (LS), which approximates the Hessian-inverse-vector product of each hypergradient. 
Unlike previous approaches, D-TFBO introduces cold-start adaptive stepsizes that accumulate gradients exclusively within the sub-loops. This method establishes a tighter lower bound on stepsizes, improving gradient complexity. In contrast, S-TFBO adopts a single-loop structure, where all variables are updated simultaneously in each iteration. Rather than applying the "inverse of cumulative gradient norms" uniformly to all updates, our error analysis motivates a joint design of adaptive stepsizes for $y$, $v$, and $x$, which correspond to solving the inner problem, LS, and outer problem, respectively. For instance, the stepsize for $v$ is coupled with that for $y$, while the stepsize for $x$ depends on both $y$ and $v$.

\item  
Compared to the well-tuned AID methods in \cite{ji2022will}, our D-TFBO method achieves the same $\mathcal{O}(\frac{1}{T})$ convergence rate. Similarly, our S-TFBO method attains an $\widetilde{\mathcal{O}}(\frac{1}{T})$ convergence rate, matching that of well-tuned counterparts, up to polylogarithmic factors. 
The complexity analysis shows that D-TFBO and S-TFBO require $\mathcal{O}(\frac{1}{\epsilon^2})$ and $\widetilde{\mathcal{O}}(\frac{1}{\epsilon})$ gradient computations, respectively, to reach an $\epsilon$-accurate stationary point. This comparison differs from the observation in well-tuned bilevel optimization, where double-loop approaches generally achieve lower gradient complexity than single-loop methods \cite{ji2022will}. This is because the inner tuning-free solver requires $\mathcal{O}(\frac{1}{\epsilon})$ more iterations than well-tuned methods to achieve $\epsilon$-level accuracy. 

\item 
The theoretical analysis is inspired by the two-stage framework in~\cite{xie2020linear, ward2020adagrad}, where the stages describe the relationship between the stepsizes and certain constants that depend on the problem parameters. 
However, exploring this technical framework in bilevel problems is far more challenging because the stages for analyzing each stepsize interact with those for other stepsizes, resulting in intertwined multi-stage dynamics across different variables. For instance, the error analysis for the updates on $v$ must account for the accumulated gradient norms from the updates on $y$. This motivates us to couple the stepsize for $v$ with the adaptive stepsize for $y$ to prevent the propagation of accumulated errors. In addition, our analysis requires establishing precise upper and lower bounds for all stepsizes to ensure convergence results that match those achieved under well-tuned stepsizes. 

\item 
We validate the effectiveness of our methods through experiments on regularization selection, data hyper-cleaning, and coreset selection for continual learning. The results show that our methods perform comparably to existing well-tuned methods. More importantly, our methods demonstrate greater robustness to different initial stepsizes, due to the tuning-free design.

\end{list}

\section{Related Work}
\textbf{Bilevel Optimization.}
Bilevel optimization, initially introduced by \cite{bracken1973mathematical}, has been extensively studied for decades. Early works \cite{hansen1992new, shi2005extended} approached the bilevel problem from a constrained optimization perspective. More recently, gradient-based methods have gained significant attention for their efficiency and effectiveness. Among these, Approximate Implicit Differentiation (AID) methods \cite{domke2012generic, liao2018reviving, ji2021bilevel, dagreou2022framework} leverage the implicit derivation of the hypergradient by approximating it through the solution of a linear system. 
In contrast, Iterative Differentiation (ITD) methods \cite{maclaurin2015gradient, franceschi2017forward} estimate the hypergradient using automatic differentiation, employing either forward or reverse mode. Recently, a range of stochastic bilevel methods have been developed and analyzed, using techniques such as Neumann series \cite{chen2021single, ji2021bilevel}, recursive momentum \cite{yang2021provably, guo2021randomized}, and variance reduction \cite{yang2021provably}. 
Another class of methods formulates the lower-level problem as a value-function-based constraint \cite{kwon2023fully, wang2023effective}, enabling the solution of bilevel problems without the need for second-order gradients. A more detailed discussion of related work can be found in the Appendix.

\noindent
\textbf{Adaptive and Tuning-free Algorithms.}
Adaptive gradient descent has achieved remarkable success and is widely studied and applied in modern machine learning. Early adaptive algorithms trace back to line search methods, such as backtracking \cite{goldstein1962cauchy}, and Polyak’s stepsize \cite{polyak1969minimization}, both of which have inspired numerous recent variants \cite{armijo1966minimization, bello2016convergence, salzo2017variable, vaswani2019painless, hazan2019revisiting, loizou2021stochastic, orvieto2022dynamics}. To reduce the computational cost of line search and avoid the reliance on an unknown optimal function value, the Barzilai-Borwein stepsize \cite{barzilai1988two, raydan1993barzilai, dai2002r} was introduced, drawing inspiration from quasi-Newton schemes. 
Normalized gradient descent \cite{cortes2006finite, nesterov2013introductory, murray2019revisiting} preserves the direction of the gradient while ignoring its magnitude, removing the need for prior knowledge about the function. \cite{duchi2011adaptive} and \cite{mcmahan2010adaptive} pioneered AdaGrad, an adaptive gradient-based method, which proved efficient in solving online convex optimization problems. AdaGrad rapidly evolved for deep learning applications, giving rise to numerous methods, including popular variants like Adam \cite{diederik2014adam, reddi2019convergence, luo2019adaptive, xie2024adan}, RMSprop \cite{tieleman2012rmsprop}, and Adadelta \cite{zeiler2012adadelta}. 
Specifically, normalized versions of AdaGrad, such as AdaNGD$_k$ \cite{levy2017online}, AcceleGrad \cite{levy2018online}, and AdaGrad-Norm \cite{ward2020adagrad, xie2020linear}, introduced adaptive stepsizes that require no problem-specific parameters, making them tuning-free approaches. Recent work by \cite{maladkar2024convergence} further established lower bounds for minimizing the deterministic gradient $l_1$-norm. Additional methods, such as Lipschitzness parameter approximation \cite{malitsky2019adaptive} and restart techniques \cite{marumo2024parameter}, have also been explored. 
A more comprehensive discussion refers to \cite{khaled2024tuning}.

\noindent
\textbf{Adaptive and Tuning-free Bilevel Algorithms.}
Instead of focusing on single-level problems, \cite{huang2021biadam} extended Adam to bilevel optimization algorithms. \cite{fan2024bisls} introduced adaptive stepsizes for bilevel problems, based on Polyak’s stepsize and line search techniques. Most recently, \cite{antonakopoulosadaptive} proposed a novel framework that applies adaptive normalized gradient descent to the strongly convex inner problem and AdaGrad-Norm to the nonconvex outer problem, allowing the algorithm to update adaptively with fewer problem-specific parameters.

\section{Algorithm}
\subsection{Standard Bilevel Optimization}
A key challenge in bilevel optimization is calculating the hypergradient $\nabla \Phi(x)$, which, according to the implicit function theorem, is given by: 
\begin{align*}
    \nabla \Phi(x) = \nabla_x f\big(x,y^*(x)\big) - \nabla_x\nabla_y g\big(x,y^*(x)\big)\big[\nabla_y\nabla_y g\big(x,y^*(x)\big)\big]^{-1}\nabla_y f\big(x,y^*(x)\big),
\end{align*}
when $g$ is twice differentiable, $\nabla_y g$ is
continuously differentiable and the Hessian $\nabla_y\nabla_y g\big(x,y^*(x)\big)$ is invertible. 
In practice, $y^*(x)$ is not directly accessible, and one often use an iterative algorithm to obtain an estimate $\hat{y}$ instead. 
Since computing the Hessian inverse is prohibitively expensive, a more efficient way is to approximate the Hessian-inverse-vector product in the above hypergradient $\nabla \Phi(x)$ by solving the following linear system: 
\begin{align}\label{eq:defR}
    \min_v R(x,\hat{y},v) = \frac{1}{2}v^T \nabla_y\nabla_y g(x,\hat{y})v - v^T\nabla_y f(x,\hat{y}). 
\end{align}
Similarly, an iterative algorithm is usually deployed to obtain an approximate solution $\hat v$ of the problem in \cref{eq:defR}.
Given the approximates $\hat y$ and $\hat v$, the variable $x$ is then updated with a hypergradient estimate given by  
\begin{align}\label{eq:estimator}
    \bar{\nabla} f(x,\hat{y},\hat{v}) = \nabla_x f(x,\hat{y}) - \nabla_x\nabla_y g(x,\hat{y})\hat{v}. 
\end{align}
Standard bilevel optimization approaches select the stepsizes for updating $y$, $v$, and $x$ based on problem-specific parameters, such as Lipschitzness and strong convexity parameters~\cite{dagreou2022framework, ji2021bilevel, ji2022will}. However, these parameters are often difficult to obtain or approximate in practice, leading to significant tuning efforts. This challenge motivates the development of adaptive bilevel optimization algorithms that require less to no tuning.

\subsection{Existing Adaptive Bilevel Optimization Methods}
Among the existing adaptive bilevel methods, the most closely related to this work are \cite{fan2024bisls} and \cite{antonakopoulosadaptive}. \cite{fan2024bisls} utilizes Polyak's stepsizes and a line search to automate the stepsizes for both inner and outer updates. \cite{antonakopoulosadaptive} applies AdaNGD~\cite{levy2017online} to solve the inner problem and updates $x$ using the inverse of cumulative hypergradient norms, where the hypergradient norms are approximated via gradient mapping~\cite{nesterov2013introductory} with Fenchel coupling \cite{mertikopoulos2016learning}.

However, these methods are not entirely tuning-free. For instance, the initialization of Polyak's stepsizes in \cite{fan2024bisls} depends on Lipschitzness parameters, strong convexity parameters, and the optimal lower-level function values. While the line search approach in \cite{fan2024bisls} bypasses the need for problem-specific parameters, it lacks theoretical convergence guarantees. Similarly, \cite{antonakopoulos2023adaptive} requires the strong convexity parameter for the inner AdaNGD updates. 

\begin{algorithm}[t]
	\caption{\textbf{D}ouble-loop \textbf{T}uning-\textbf{F}ree \textbf{B}ilevel \textbf{O}ptimizer (D-TFBO)}   
	\small
	\label{alg:main_double}
	\begin{algorithmic}[1]
            \STATE {\bfseries Input:} initialization $x_0$, $y_0$, $v_0$, $\alpha_0>0$, $\beta_0>0$, $\gamma_0>0$, total iteration rounds $T$, and {$\epsilon_y = \epsilon_v = \frac{1}{T}$}
		    \FOR{$t=0,1,2,...,T-1$}
            \STATE{$p = 0$, $q = 0$, set $y_{t}^0 = y_{t-1}^{P_{t-1}}$, $v_{t}^0 = v_{t-1}^{Q_{t-1}}$ if $t > 0$ and $y_0$, $v_0$ otherwise}
            \WHILE{$\|\nabla_y g(x_t, y_t^p)\|^2 > {\epsilon_y}$}
                \STATE{$\beta_{p+1}^2 = \beta_p^2 + \|\nabla_y g(x_t, y_t^p)\|^2$, \ \  $y_t^{p+1} = y_t^p - \frac{1}{\beta_{p+1}}\nabla_y g(x_t, y_t^p)$, \ \ $p=p+1$}
            \ENDWHILE
            \STATE{$P_t = p$}
            \WHILE{$\|\nabla_v R(x_t, y_t^{P_t}, v_t^q)\|^2 > {\epsilon_v}$}
                \STATE{$\gamma_{q+1}^2 = \gamma_q^2 + \|\nabla_v R(x_t, y_t^{P_t}, v_t^q)\|^2$, \ \  $v_t^{q+1} = v_t^q - \frac{1}{\gamma_{q+1}}\nabla_v R(x_t, y_t^{P_t}, v_t^q)$, \ \ $q=q+1$}
            \ENDWHILE
            \STATE{$Q_t = q$}
            \STATE{$\alpha_{t+1}^2 = \alpha_t^2 + \|\Bar{\nabla} f(x_t, y_t^{P_t}, v_t^{Q_t})\|^2$, \ \ $x_{t+1} = x_t - \frac{1}{\alpha_{t+1}} \Bar{\nabla} f(x_t, y_t^{P_t}, v_t^{Q_t})$}
            \ENDFOR
	\end{algorithmic}
\end{algorithm}

\subsection{Double-Loop Tuning-Free Bilevel Optimization- D-TFBO}
As shown in \Cref{alg:main_double}, our D-TFBO method follows a double-loop structure, where two sub-loops of iterations are used to solve the lower-level and linear system problems.  In the first sub-loop, 
we employ the idea of "inverse of cumulative gradient norm" to design the adaptive updates as 
\begin{align}
    y_t^{p+1} \leftarrow y_t^p - \frac{1}{\beta_{p+1}}\nabla_y g(x_t, y_t^p), \quad \text{with} \ \  \beta_{p+1}^2 = \beta_p^2 + \|\nabla_y g(x_t, y_t^p)\|^2. \nonumber
\end{align}
It can be seen from \Cref{alg:main_double} that our D-TFBO algorithm employs a stopping criterion based on the gradient norm: $\|\nabla_y g(x_t, y_t^p)\|^2 \leq \epsilon_y$, where $\epsilon_y$ ({\bf defaulted to $1/T$ for convergence analysis}) is independent of problem parameters. The rationale behind this design is that if the stopping criterion is not met (i.e., $\|\nabla_y g(x_t, y_t^p)\|^2 > \epsilon_y$), the accumulation $\beta_p$ of gradient norms continues to increase. This increase causes the stepsize $\frac{1}{\beta_p}$ to decrease to a value at which a descent in the optimality gap is guaranteed. A similar stopping criterion applies to the updates of $v_t^q$ when solving the linear system.

Notably, both sub-loops utilize warm-start variable values but reset the stepsizes at each iteration (cold-start stepsizes). The warm-start variables ensure that the initial point is reasonably close to the optimal solution, while the cold-start scheme guarantees stepsizes to achieve stronger lower bounds. Finally, the update of $x_t$ is based on the accumulation of hypergradient estimates $\bar{\nabla} f(x_t, y_t^{P_t}, v_t^{Q_t})$.

\begin{remark}[Extension to a tunable version with problem-parameter-free tuning coefficients.]\label{rmk:tune_double}
    Although \Cref{alg:main_double} is designed as a tuning-free method, a tunable version with the flexibility to preset hyperparameters can still achieve the same convergence rate and gradient complexity. 
    The stepsizes for $\{x, y, v\}$ can be set as $\{\eta_x/\alpha_t, \eta_y/\beta_p, \eta_v/\gamma_q\}$ and the sub-loops stopping criteria can be set to $\{c_y/T, c_v/T\}$, where $\{\eta_x, \eta_y, \eta_v, c_y, c_v\}$ are configurable hyperparameters that are independent of the problem parameters such as strong-convexity and Lipschitzness parameters. 
\end{remark}

\begin{algorithm}[t]
	\caption{\textbf{S}ingle-loop \textbf{T}uning-\textbf{F}ree \textbf{B}ilevel \textbf{O}ptimizer (S-TFBO)}   
	\small
	\label{alg:main}
	\begin{algorithmic}[1]
            \STATE {\bfseries Input:} initialization $x_0$, $y_0$, $v_0$, $\alpha_0 \geq 1$, $\beta_0>0$, $\gamma_0>0$, number of iteration rounds $T$
		\FOR{$t=0,1,2,...,T-1$}
                \STATE{$\beta_{t+1}^2 = \beta_t^2 + \|\nabla_y g(x_t, y_t)\|^2$}
                \STATE{$\gamma_{t+1}^2 = \gamma_t^2 + \|\nabla_v R(x_t, y_t, v_t)\|^2$}
                \STATE{$\varphi_{t+1} = \max\{\beta_{t+1}, \gamma_{t+1}\}$}
                \STATE{$\alpha_{t+1}^2 = \alpha_t^2 + \|\Bar{\nabla} f(x_t, y_t, v_t)\|^2$}
    		  \STATE{$y_{t+1} = y_{t} - \frac{1}{\beta_{t+1}} \nabla_y g(x_t, y_t)$}
                \STATE{$v_{t+1} = v_{t} - \frac{1}{\varphi_{t+1}} \nabla_v R(x_t, y_t, v_t)$}
                \STATE{$x_{t+1} = x_t - \frac{1}{\alpha_{t+1}\varphi_{t+1}} \Bar{\nabla} f(x_t, y_t, v_t)$}
            \ENDFOR
	\end{algorithmic}
\end{algorithm}

\subsection{Single-Loop Tuning-Free Bilevel Optimization- S-TFBO}
The two sub-loops in D-TFBO may complicate the implementation, and increase the number of iterations to meet the stopping criterion.  
In this section, we propose a much simpler fully single-loop tuning-free bilevel optimization method named S-TFBO, as described in \Cref{alg:main}.

The design of stepsizes in \Cref{alg:main} follows a similar idea in \Cref{alg:main_double}. 
In each iteration $t$, 
we update $\alpha_t$, $\beta_t$, $\gamma_t$ as accumulations of gradient norms of $\bar{\nabla}f$, $\nabla_y g$, and $\nabla_v R$ from the previous $t-1$ iterations.
We then update variables $y_t$, $v_t$ and $x_t$ simultaneously with adaptive stepsizes $\big\{\frac{1}{\beta_t}, \frac{1}{\max\{\beta_t, \gamma_t\}}, \frac{1}{\alpha_t{\max\{\beta_t, \gamma_t\}}}\big\}$. 
However, the stepsizes for $v$ and $x$ are not straightforward and require careful designs guided by our theoretical analysis, as elaborated below. 

\noindent
{\bf Design of stepsize for $\boldsymbol{v_t}$. }
Instead of simply using $\frac{1}{\gamma_t}$, we introduce $\frac{1}{\varphi_t}:=\frac{1}{\max\{\beta_t, \gamma_t\}}$ as the stepsize.
This adjustment is necessary because $\nabla_v R(x_t, y_t, v_t)$ involves the approximation error $\|y_t-y^*(x_t)\|^2$.  
Since this error is proportional to $\|\nabla_y g(x_t,y_t)\|^2$, using $\frac{1}{\beta_t}$ helps control this error and prevents it from exploding after accumulation, as validated in our theoretical analysis later.

\noindent
{\bf Design of stepsize for $\boldsymbol{x_t}$. }
Similarly, we use $\frac{1}{\alpha_t\varphi_t}$ as the stepsize for updating $x_t$, where the coupled factor  
$\frac{1}{\varphi_t}$ is introduced to mitigate the approximation errors from the $y_t$ and $v_t$ updates, leading to a more stable convergence. 

\begin{remark}[Extension to a tunable version with problem-parameter-free tuning coefficients.]\label{rmk:tune}
Similarly to \Cref{rmk:tune_double}, \Cref{alg:main} can extend to a tunable version with the same convergence rate and gradient complexity. 
The stepsizes for $\{x, y, v\}$ can be set as $\{\eta_x/\alpha_t\varphi_t, \eta_y/\beta_t, \eta_v/\varphi_t\}$, where $\{\eta_x, \eta_y, \eta_v\}$ are configurable hyperparameters that are independent of the problem parameters. 
\end{remark}

\section{Theoretical Analysis}
\subsection{Technical Challenges}
Compared to existing single-level tuning-free approaches, fully tuning-free bilevel optimization poses unique challenges that have not been addressed well.
\begin{itemize}[leftmargin=8.5mm]
    \item 
    Compared to single-level problems, bilevel problems involve interdependent variable updates, resulting in more complex and interconnected stepsize designs. 
    \item 
    The stages for analyzing each stepsize interact with those of other stepsizes, leading to intertwined multi-stage dynamics across various variables.
    \item The optimization error of each variable can accumulate (hyper)gradient norms from previous iterations due to the adaptive stepsize designs, complicating the error analysis.
\end{itemize}
In \Cref{sec:assumption}, we introduce the standard definitions and assumptions. Next, in \Cref{sec:alg_double} and \ref{sec:alg_single}, we provide a detailed convergence analysis, explaining how we address the above challenges. 

\subsection{Assumptions and Definitions}\label{sec:assumption}
We make the following definitions and assumptions for outer- and inner-objective functions, as also adopted by \cite{ghadimi2018approximation,chen2021single,khanduri2021near}. 
\begin{definition}
    A mapping $f$ is $L$-Lipschitz continuous if $\|f(x_1) - f(x_2)\| \leq L\|x_1-x_2\|$ for $\forall x_1, x_2$. 
\end{definition}
Since the outer objective function $\Phi(x)$ is non-convex, we aim to find an $\epsilon$-accurate stationary point, as defined below.
\begin{definition}
    An output $\bar{x}$ of an algorithm is the $\epsilon$-accurate stationary point of the objective function $\Phi(x)$ if $\|\nabla \Phi(\bar{x})\|^2 \leq \epsilon$, where $\epsilon \in (0,1)$. 
\end{definition}

\begin{assumption}\label{as:sc}
    Functions $f(x,y)$ and $g(x,y)$ are twice continuously differentiable and $g(x,y)$ is $\mu$ strongly convex w.r.t. $y$, for $x\in \mathbb{R}^{d_x}$, $y\in \mathbb{R}^{d_y}$. 
\end{assumption}
The following assumption imposes the Lipschitz continuity on the outer and inner functions and their derivatives.
\begin{assumption}\label{as:lip}
    Function $f(x,y)$ is $L_{f,0}$-Lipschitz continuous; the gradients $\nabla f(x,y)$ and $\nabla g(x,y)$ are $L_{f,1}$ and $L_{g,1}$-Lipschitz continuous, respectively; the second-order gradients $\nabla_x\nabla_y g(x,y)$ and $\nabla_y\nabla_y g(x,y)$ are $L_{g,2}$-Lipschitz continuous.
\end{assumption}
Rather than directly using the Lipschitz continuity parameters as bounds on gradients-which can cause dimensional inconsistencies during logarithmic operations-we offer the following remark:
\begin{remark}\label{as:grad}
    Assumption \ref{as:lip} indicates that there exist constants $C_{f_x}$, $C_{f_y}$, $C_{g_{xy}}$ and $C_{g_{yy}}$ such that 
    {\small $\|\nabla_x f(x,y)\| \leq C_{f_x}$}, 
    {\small$\|\nabla_y f(x,y)\| \leq C_{f_y}$}, {\small$\|\nabla_x\nabla_y g(x,y)\| \leq C_{g_{xy}}$} and {\small$\|\nabla_y\nabla_y g(x,y)\| \leq C_{g_{yy}}$}. 
\end{remark}
\begin{assumption}\label{as:inf}
    There exists $m\in \mathbb{R}$ such that $\inf_x \Phi(x) \geq m$. 
\end{assumption}
Next, we present the main convergence theorems for \Cref{alg:main_double} and \Cref{alg:main}, along with key propositions that provide insights into these theorems. {\bf A proof sketch is provided in \Cref{proof_sktech}.}

\subsection{Convergence and Complexity Analysis for \Cref{alg:main_double}}\label{sec:alg_double}
Firstly, we explain the two-stage framework used in our analysis. 
\begin{proposition}\label{prop:bar} 
Suppose the iteration rounds to update $\{x,y,v\}$ are $\{T_1,T_2,T_3\}$ and $\{\alpha_t, \beta_t, \gamma_t\}$ are generated by \Cref{alg:main_double} or \ref{alg:main}. For any $C_\alpha \geq \alpha_0$, $C_\beta \geq \beta_0$, $C_\gamma \geq \gamma_0$, we have 
\begin{enumerate}[label=(\alph*)]
\item either $\alpha_t \leq C_\alpha$ for any $t \leq T_1$, or $\exists k_1 \leq T_1$ such that $\alpha_{k_1} \leq C_\alpha$, $\alpha_{k_1+1} > C_\alpha$; 
\item either $\beta_t \leq C_\beta$ for any $t \leq T_2$, or $\exists k_2 \leq T_2$ such that $\beta_{k_2} \leq C_\beta$, $\beta_{k_2+1} > C_\beta$; 
\item either $\gamma_t \leq C_\gamma$ for any $t \leq T_3$, or $\exists k_3 \leq T_3$ such that $\gamma_{k_3} \leq C_\gamma$, $\gamma_{k_3+1} > C_\gamma$.
\end{enumerate}
\end{proposition}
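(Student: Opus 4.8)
The plan is to prove each of the three statements (a), (b), (c) by an identical monotonicity-plus-dichotomy argument, so I would only write out the proof of (a) in full and remark that (b) and (c) follow verbatim. The key observation is that the sequences $\{\alpha_t\}$, $\{\beta_t\}$, $\{\gamma_t\}$ are each nondecreasing: by construction $\alpha_{t+1}^2 = \alpha_t^2 + \|\bar\nabla f(x_t, y_t^{P_t}, v_t^{Q_t})\|^2 \ge \alpha_t^2$, and since all these quantities are positive, $\alpha_{t+1} \ge \alpha_t$; the analogous inequalities hold for $\beta$ and $\gamma$ from lines of the algorithms that update them as accumulated sums of squared gradient norms.

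Given monotonicity, the argument for (a) is a clean case split. First I would fix $C_\alpha \ge \alpha_0$ and consider the first index (if any) at which the sequence exceeds $C_\alpha$. Concretely, define $k_1 = \max\{t \le T_1 : \alpha_t \le C_\alpha\}$ — this set is nonempty because $\alpha_0 \le C_\alpha$ by hypothesis, so $t=0$ belongs to it, and it is bounded above by $T_1$. Now there are exactly two cases. If $k_1 = T_1$, then by the definition of $k_1$ as the maximum of a set all of whose elements are $\le C_\alpha$, combined with monotonicity, every $\alpha_t$ with $t \le T_1$ satisfies $\alpha_t \le \alpha_{T_1} = \alpha_{k_1} \le C_\alpha$, which is the first alternative. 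If $k_1 < T_1$, then $k_1 + 1 \le T_1$ is a valid index and, since $k_1$ is the largest index with $\alpha_{k_1} \le C_\alpha$, we must have $\alpha_{k_1+1} > C_\alpha$; together with $\alpha_{k_1} \le C_\alpha$ this is exactly the second alternative. This exhausts all cases.

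I do not expect any genuine obstacle here: the statement is essentially the intermediate-value / first-crossing-time observation for a monotone nonnegative sequence, and the only thing to be careful about is bookkeeping with the index ranges (ensuring $k_1 \le T_1$ and $k_1 + 1 \le T_1$ in the relevant cases) and confirming that the update rules in both Algorithm~\ref{alg:main_double} and Algorithm~\ref{alg:main} indeed produce nondecreasing $\{\alpha_t, \beta_t, \gamma_t\}$. For Algorithm~\ref{alg:main_double} the $\beta$ and $\gamma$ sequences live inside the inner while-loops, so I would note that the relevant ``iteration count'' $T_2$ (resp. $T_3$) is the total number of inner $y$-updates (resp. $v$-updates) and that the accumulation persists appropriately; the monotonicity is unaffected. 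For Algorithm~\ref{alg:main} all three sequences are updated once per outer iteration and monotonicity is immediate. The proposition is really just the scaffolding that lets the later analysis speak of a ``stage 1'' (before the crossing time $k_i$) and a ``stage 2'' (after it), so the proof should be short.
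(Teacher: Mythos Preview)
Your proof is correct and matches the paper's approach: both rely on the monotonicity of $\{\alpha_t\}$ (from the cumulative-norm update rule) together with $\alpha_0 \le C_\alpha$ to locate a first crossing time, and both note that (b) and (c) are identical. One small inaccuracy in your aside: in Algorithm~\ref{alg:main_double} the $\beta$ and $\gamma$ sequences do \emph{not} persist across outer iterations---they are cold-started each time---so the relevant $T_2,T_3$ should be read as the length of a single sub-loop, not a cumulative total; but this does not affect the argument, since monotonicity within each sub-loop is all that is used.
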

The analysis for each stepsize is divided into two cases. Let us take (a) as an illustration example. 
Case 1: the accumulation $\alpha_t$ of gradient norms is bounded by a constant $C_\alpha$ before the end of the iteration. In this case, the average gradient norm square can be bound as $\frac{C_\alpha^2}{T_1}$, which decreases with $T_1$. 
Case 2: the accumulation $\alpha_{T_1}$ exceeds $C_\alpha$, and hence $\alpha_t$ experiences two stages: in stage 1, $\alpha_t \leq C_\alpha$, and in stage 2, $\alpha_t > C_\alpha$. The error analysis for stage 1 is similar to that of case 1. In stage 2, the stepsizes are small enough to show the gradient norm decreases via a descent lemma.

\begin{proposition}\label{prop:subloop}
    Recall that for $t_{th}$ iteration, the sub-loops in \Cref{alg:main_double} aim to find $y_t^{P_t}$ and $v_t^{Q_t}$ such that $\|\nabla_y g(x_t, y_t^{P_t})\|^2 \leq \epsilon_y$ and $\|\nabla_v R(x_t, y_t^{P_t}, v_t^{Q_t})\|^2 \leq \epsilon_v$. 
Under Assumptions \ref{as:sc}, \ref{as:lip}, we have  
$$
\left\{
\begin{aligned}
    &P_t \leq \frac{\log(C_\beta^2/\beta_0^2)}{\log(1+\epsilon_y/C_\beta^2)} + \frac{\beta_{\text{max}}}{\mu}\log \big(\frac{L_{g,1}^2(\beta_{\text{max}}-C_\beta)}{\epsilon_y}\big),  \\
    &Q_t \leq \frac{\log(C_\gamma^2/\gamma_0^2)}{\log(1+\epsilon_v/C_\gamma^2)} + \frac{\gamma_{\text{max}}}{\mu}\log\big(\frac{C_{g_{yy}}^2(\gamma_{\text{max}}-C_\gamma)}{\epsilon_v}\big),
\end{aligned}
\right.
$$
where $\{C_\beta, C_\gamma\}$, $\beta_{\text{max}}$, $\gamma_{\text{max}}$ are denied in \cref{def:C_double}, \cref{eq:y_k2_doube7}, \cref{eq:v_k3_doube7} in the Appendix, respectively. 
\end{proposition}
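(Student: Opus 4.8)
The plan is to bound $P_t$ and $Q_t$ separately, exploiting that each sub-loop is an AdaGrad-Norm run on a strongly-convex objective ($g(x_t,\cdot)$ for the $y$-loop, $R(x_t, y_t^{P_t},\cdot)$ for the $v$-loop) with a cold-started accumulator but a warm-started iterate. I will carry out the argument for $P_t$ in full and then indicate the parallel steps for $Q_t$. First I would apply Proposition~\ref{prop:bar}(b) to the $y$-sub-loop: either $\beta_p \le C_\beta$ for all inner steps $p$, or there is a crossover index after which $\beta_p > C_\beta$. This splits the sub-loop into Stage~1 (accumulator small) and Stage~2 (accumulator large, hence stepsize $1/\beta_p$ small enough for a descent lemma).

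For Stage~1, the key observation is that on each inner iteration the stopping criterion has \emph{not} been met, so $\|\nabla_y g(x_t, y_t^p)\|^2 > \epsilon_y$; combined with $\beta_p \le C_\beta$ this forces $\beta_{p+1}^2 = \beta_p^2 + \|\nabla_y g\|^2 \ge \beta_p^2 (1 + \epsilon_y/C_\beta^2)$, so $\beta_p^2$ grows geometrically. Since it starts at $\beta_0^2$ and cannot exceed $C_\beta^2$ during Stage~1, the number of Stage~1 steps is at most $\log(C_\beta^2/\beta_0^2)/\log(1+\epsilon_y/C_\beta^2)$, which is the first term in the bound on $P_t$. For Stage~2, once $\beta_p > C_\beta$ the stepsize is small enough that a standard strong-convexity descent lemma applies (using the $L_{g,1}$-smoothness and $\mu$-strong-convexity of $g$ in $y$): the optimality gap $g(x_t,y_t^p) - g(x_t,y^*(x_t))$ contracts by a factor like $(1-\mu/\beta_{\max})$ per step, where $\beta_{\max}$ is the a posteriori upper bound on the accumulator established in the Appendix. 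Iterating this contraction from the Stage-1 exit value down to the level where $\|\nabla_y g\|^2 \le \epsilon_y$ — using $\|\nabla_y g\|^2 \le L_{g,1}^2 \|y_t^p - y^*(x_t)\|^2 \le (2 L_{g,1}^2/\mu)(g(x_t,y_t^p)-g(x_t,y^*))$ to translate the gradient-norm target into an optimality-gap target — gives a logarithmic number of steps $\frac{\beta_{\max}}{\mu}\log\!\big(L_{g,1}^2(\beta_{\max}-C_\beta)/\epsilon_y\big)$, the second term. Summing the two stages yields the claimed bound on $P_t$.

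The bound on $Q_t$ follows the same two-stage template applied to the linear system $\min_v R(x_t, y_t^{P_t}, v)$, which is $\mu$-strongly convex and $C_{g_{yy}}$-smooth in $v$ (its Hessian is $\nabla_y\nabla_y g(x_t, y_t^{P_t})$, whose spectrum lies in $[\mu, C_{g_{yy}}]$ by Assumption~\ref{as:sc} and Remark~\ref{as:grad}). Proposition~\ref{prop:bar}(c) supplies the Stage-1/Stage-2 split with threshold $C_\gamma$; Stage~1 is again geometric growth of $\gamma_q^2$ giving $\log(C_\gamma^2/\gamma_0^2)/\log(1+\epsilon_v/C_\gamma^2)$ steps, and Stage~2 is the contraction argument with rate governed by $\gamma_{\max}$, producing $\frac{\gamma_{\max}}{\mu}\log\!\big(C_{g_{yy}}^2(\gamma_{\max}-C_\gamma)/\epsilon_v\big)$ steps.

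The main obstacle is Stage~2: one must verify that the descent lemma genuinely applies with the \emph{adaptive} stepsize $1/\beta_p$ (resp. $1/\gamma_q$) rather than a fixed one, i.e., that $\beta_p$ stays in the regime where $1/\beta_p$ is below the smoothness threshold but the contraction factor $1-\mu/\beta_p$ is still controlled — this is exactly why the a posteriori upper bounds $\beta_{\max}$ and $\gamma_{\max}$ from the Appendix (\cref{eq:y_k2_doube7}, \cref{eq:v_k3_doube7}) enter, and establishing those finite upper bounds (which in turn depend on how much the accumulator can grow before the stopping criterion fires) is the delicate quantitative step. A secondary subtlety is that the warm start $y_t^0 = y_{t-1}^{P_{t-1}}$ only guarantees the \emph{previous} problem's stopping criterion, so the Stage-1 exit gap must be bounded uniformly in $t$ using Lipschitz continuity of $y^*(\cdot)$ and the boundedness of the $x_t$-trajectory; I would fold this into the constants appearing inside the logarithms.
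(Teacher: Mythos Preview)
Your proposal is correct and follows essentially the same two-stage template as the paper's proof (Lemma~\ref{lm:subloop}): geometric growth of the accumulator in Stage~1 bounds $k_2$ (resp.\ $k_3$) by the first term, and a linear contraction in Stage~2, controlled via the a~posteriori accumulator bounds $\beta_{\max},\gamma_{\max}$, gives the logarithmic second term. The only cosmetic difference is that the paper carries out Stage~2 on the squared distance $\|y_t^p - y^*(x_t)\|^2$ (showing $\|y_t^{p+1}-y^*\|^2 \le (1-\mu/\beta_{p+1})\|y_t^p-y^*\|^2$ via co-coercivity and strong convexity, then $\|\nabla_y g\|^2 \le L_{g,1}^2\|y_t^p-y^*\|^2$) rather than on the function gap as you sketch; the two are equivalent here and yield the same constants inside the logarithm. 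Your identification of the delicate step---bounding $\beta_{\max}$ by controlling $\sum_{p\ge k_2}\|\nabla_y g\|^2/\beta_{p+1}$ in terms of the Stage-1 exit distance, which in turn uses the warm start plus Lipschitzness of $y^*(\cdot)$ and the bound $\|\bar\nabla f\|\le C_f$---matches exactly what the paper does in \eqref{eq:y_k2_doube1}--\eqref{eq:y_k2_doube7}.
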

\Cref{prop:subloop} provides upper bounds on $P_t$ and $Q_t$, which correspond to the total numbers of iterations of the two sub-loops. This result is the same as that of the standard AdaGrad-Norm in the strongly convex setting \cite{xie2020linear}. 
For the sub-loop for $y$, 
in Case 1 above, the loop terminates within ${\log(C_\beta^2/\beta_0^2)}/{\log(1+\epsilon_y/C_\beta^2)}$ steps; and in Case 2, it takes at most 
${\log(C_\beta^2/\beta_0^2)}/{\log(1+\epsilon_y/C_\beta^2)}$ steps for stage 1 and it takes at most $\frac{\beta_{\text{max}}}{\mu}\log ({L_{g,1}^2(\beta_{\text{max}}-C_\beta)}/{\epsilon_y})$ steps for stage 2. 
For $\epsilon_y$  small enough, it can be seen that $P_t$ takes an order of $1/\epsilon_y$, which is typically larger than those obtained with well-tuned stepsizes.  
Based on this proposition, we can derive the following convergence  results.
\begin{theorem}\label{thm:main_double}
Suppose Assumptions \ref{as:sc},\ref{as:lip},\ref{as:inf} are satisfied. 
By setting $\epsilon_y = 1/T$ and $\epsilon_v = 1/T$, 
the iterates generated by \Cref{alg:main_double} satisfy
\begin{align}
    \frac{1}{T}\sum_{t=0}^{T-1}\|\nabla \Phi(x_t)\|^2 \leq \frac{c_1(C_\alpha + 2c_1)}{T} = \mathcal{O}\Big(\frac{1}{T}\Big), \nonumber
\end{align}
where $C_\alpha$ and $c_1$ are constants defined in \cref{def:C_double} and \cref{eq:c1_double}, respectively. 
\end{theorem}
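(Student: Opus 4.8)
\textbf{Proof proposal for Theorem~\ref{thm:main_double}.}

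The plan is to run the standard nonconvex descent argument on $\Phi$ along the iterates $x_t$, but with the adaptive stepsize $1/\alpha_{t+1}$, and to control the error between the true hypergradient $\nabla\Phi(x_t)$ and the computable estimate $\bar\nabla f(x_t,y_t^{P_t},v_t^{Q_t})$ using the sub-loop stopping criteria. First I would recall the standard bias bound: under Assumptions~\ref{as:sc} and~\ref{as:lip}, $\|\bar\nabla f(x_t,\hat y,\hat v) - \nabla\Phi(x_t)\|^2 \lesssim \|\hat y - y^*(x_t)\|^2 + \|\hat v - v^*(x_t,\hat y)\|^2$, and then use strong convexity of $g$ and of $R$ to convert these into $\|\nabla_y g(x_t,\hat y)\|^2$ and $\|\nabla_v R(x_t,\hat y,\hat v)\|^2$, which the sub-loop exit conditions bound by $\epsilon_y$ and $\epsilon_v$. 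With $\epsilon_y=\epsilon_v=1/T$, this makes the per-step bias $\mathcal{O}(1/T)$, so that summing over $t$ contributes only an $\mathcal{O}(1)$ term after division by $T$, i.e. absorbed into the constants.

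Next I would invoke Proposition~\ref{prop:bar}(a) to split into the two cases for $\alpha_t$. In Case~1 ($\alpha_t\le C_\alpha$ for all $t\le T$), I use $L$-smoothness of $\Phi$ together with $x_{t+1}-x_t = -\bar\nabla f(\cdot)/\alpha_{t+1}$ and the identity $\sum_t \|\bar\nabla f(\cdot)\|^2/\alpha_{t+1}^2 \le \sum_t (\alpha_{t+1}^2-\alpha_t^2)/\alpha_{t+1}^2 \le $ a logarithmic term, which is the AdaGrad-Norm telescoping trick. Since $\alpha_{t+1}\le C_\alpha$, the first-order term $\sum_t \|\bar\nabla f(\cdot)\|^2/\alpha_{t+1} \ge \sum_t\|\bar\nabla f(\cdot)\|^2/C_\alpha$, so rearranging the descent inequality and adding the bias correction yields $\frac1T\sum_t\|\nabla\Phi(x_t)\|^2 \le \mathcal{O}(C_\alpha^2/T)$. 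In Case~2, there is a threshold index $k_1$ with $\alpha_{k_1}\le C_\alpha < \alpha_{k_1+1}$; I would treat $t\le k_1$ exactly as in Case~1, and for $t>k_1$ note that $\alpha_{t+1}>C_\alpha$ with $C_\alpha$ chosen (as in~\eqref{def:C_double}) large enough that $1/\alpha_{t+1} \le 1/C_\alpha$ is below the $1/L_\Phi$ threshold needed for the descent lemma to give a genuine decrease; summing the descent inequality over this stage, the $\|\nabla\Phi(x_t)\|^2$ terms telescope against $\Phi(x_{k_1+1})-\inf\Phi$ (finite by Assumption~\ref{as:inf}), again giving $\mathcal{O}(1/T)$ after normalization. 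Combining the two stages and collecting all constants into $c_1$ and $C_\alpha$ gives the stated bound $\frac{c_1(C_\alpha+2c_1)}{T}$.

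One technical wrinkle I would handle carefully is that $C_\alpha$ itself must be chosen consistently: it appears both as the case-split threshold in Proposition~\ref{prop:bar} and inside the descent-lemma condition, and it depends on the (unknown) smoothness constant $L_\Phi$ of $\Phi$ as well as on the uniform bounds from Remark~\ref{as:grad}; the point is that such a $C_\alpha$ \emph{exists} even though the algorithm does not know it. I would also need the a priori boundedness of the hypergradient estimates $\|\bar\nabla f(x_t,y_t^{P_t},v_t^{Q_t})\|$ (via Remark~\ref{as:grad} and the bound $\|\hat v\|\le C_{f_y}/\mu + \mathcal{O}(\sqrt{\epsilon_v})$ coming from the LS stopping criterion) to guarantee $\alpha_t$ does not blow up in a single step, which keeps $\alpha_{k_1+1}$ comparable to $C_\alpha$ and hence keeps $c_1$ finite.

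The main obstacle I anticipate is \emph{not} the $x$-recursion itself but the propagation of the warm-start/cold-start structure: the sub-loop for $v$ at iteration $t$ is initialized at $v_{t-1}^{Q_{t-1}}$ and its iteration count $Q_t$ (Proposition~\ref{prop:subloop}) depends on $C_\gamma$ and $\gamma_{\max}$, which in turn depend on how far $x_t$ moved and on the quality of $y_t^{P_t}$; I need to be sure that the final-accuracy guarantees $\|\nabla_y g\|^2\le\epsilon_y$, $\|\nabla_v R\|^2\le\epsilon_v$ hold \emph{uniformly over all $t$} with the \emph{same} $\epsilon_y=\epsilon_v=1/T$, independent of the trajectory — this is exactly what the cold-start stepsize reset buys (a trajectory-independent lower bound on the stepsize forcing the stopping criterion), and it is where the analysis is genuinely bilevel-specific rather than a rehash of AdaGrad-Norm. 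Once that uniformity is secured, the bias is a clean $\mathcal{O}(1/T)$ additive term and the rest is the two-case AdaGrad-Norm bookkeeping described above.
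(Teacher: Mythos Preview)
Your overall architecture matches the paper's: control the hypergradient bias by the sub-loop stopping criteria, run the $L_\Phi$-smooth descent inequality on $\Phi$ with stepsize $1/\alpha_{t+1}$, and split via Proposition~\ref{prop:bar}(a). There is, however, one genuine gap in your Case~2.

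After summing the descent inequality over $t\ge k_1$ you obtain only
\[
\sum_{t\ge k_1}\frac{\|\nabla\Phi(x_t)\|^2}{\alpha_{t+1}}\;\le\;O(1)+O\Big(\sum_t \frac{\epsilon'}{\alpha_{t+1}}\Big),
\]
a \emph{weighted} sum. To pass to $\frac1T\sum_t\|\nabla\Phi(x_t)\|^2$ you need an upper bound on $\alpha_T$. Your remark that the hypergradient estimates are a priori bounded only yields $\alpha_T^2=\alpha_0^2+\sum_t\|\bar\nabla f\|^2\le \alpha_0^2+T C_f^2$, i.e.\ $\alpha_T=O(\sqrt T)$, which gives an $O(1/\sqrt T)$ rate, not $O(1/T)$. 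The paper closes this gap with a self-bounding argument (its Lemma~\ref{lm:alpha_double}): the same descent inequality, in the regime $t\ge k_1$, also carries a term $-\tfrac{1}{4\alpha_{t+1}}\|\bar\nabla f(x_t,y_t^{P_t},v_t^{Q_t})\|^2$, so summing gives $\sum_{k\ge k_1}\|\bar\nabla f\|^2/\alpha_{k+1}\le 4(\Phi(x_{k_1})-\inf\Phi)+O(T\epsilon')$. Since $\alpha_{t+1}-\alpha_t\le \|\bar\nabla f\|^2/\alpha_{t+1}$, telescoping yields $\alpha_T\le C_\alpha+O(1)+O(T\epsilon')$, which is $O(1)$ once $\epsilon'=O(1/T)$. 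Multiplying the weighted-sum bound by this $\alpha_T$ is exactly what produces the $c_1(C_\alpha+2c_1)/T$ constant. You should make this step explicit; without it the argument does not reach the claimed rate.

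A smaller point: your final paragraph about needing the sub-loop accuracies to hold ``uniformly over all $t$'' is a non-issue for Theorem~\ref{thm:main_double}. The inner while-loops exit only when $\|\nabla_y g\|^2\le\epsilon_y$ and $\|\nabla_v R\|^2\le\epsilon_v$, so the bias bound $\|\bar\nabla f-\nabla\Phi\|^2\le\epsilon'$ holds by construction at every $t$; Proposition~\ref{prop:subloop} is only needed for the gradient-complexity corollary, not for the rate itself.
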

\begin{corollary}\label{cor:main_double}
    Under the same setting \Cref{thm:main_double}, to achieve an $\epsilon$-accurate stationary point, \Cref{alg:main_double} needs $T=\mathcal{O}(1/\epsilon)$, $\{P_t, Q_t\}=\mathcal{O}(1/\epsilon)$,  
    and the gradient complexity (i.e., the number of gradient evaluations) is ${\rm Gc}(\epsilon) = \mathcal{O}(1/\epsilon^2)$. 
\end{corollary}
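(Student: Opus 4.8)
The plan is to read off $T=\mathcal{O}(1/\epsilon)$ from \Cref{thm:main_double}, feed the resulting $\epsilon_y=\epsilon_v=1/T$ into \Cref{prop:subloop} to obtain $P_t,Q_t=\mathcal{O}(1/\epsilon)$, and then add up the per-round oracle costs. For the first step, \Cref{thm:main_double} gives $\frac1T\sum_{t=0}^{T-1}\|\nabla\Phi(x_t)\|^2\le \frac{c_1(C_\alpha+2c_1)}{T}$, so $\min_{0\le t<T}\|\nabla\Phi(x_t)\|^2$ --- equivalently, the expected squared hypergradient norm of a uniformly random iterate --- is at most $\frac{c_1(C_\alpha+2c_1)}{T}$; choosing $T\ge c_1(C_\alpha+2c_1)/\epsilon=\mathcal{O}(1/\epsilon)$ drives this below $\epsilon$. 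The only point to verify here is that $C_\alpha$ and $c_1$ (from \cref{def:C_double} and \cref{eq:c1_double}) are absolute constants depending on the problem parameters and the initializations but not on $T$.

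Next I would substitute $\epsilon_y=1/T=\Theta(\epsilon)$ into \Cref{prop:subloop}. For the first summand $\frac{\log(C_\beta^2/\beta_0^2)}{\log(1+\epsilon_y/C_\beta^2)}$, once $T$ is large enough that $\epsilon_y/C_\beta^2\le 1$ the elementary bound $\log(1+u)\ge u/2$ on $(0,1]$ yields $\frac{1}{\log(1+\epsilon_y/C_\beta^2)}\le \frac{2C_\beta^2}{\epsilon_y}$, so this summand is at most $\frac{2C_\beta^2\log(C_\beta^2/\beta_0^2)}{\epsilon_y}=\mathcal{O}(1/\epsilon)$; the second summand $\frac{\beta_{\text{max}}}{\mu}\log\!\big(L_{g,1}^2(\beta_{\text{max}}-C_\beta)/\epsilon_y\big)=\mathcal{O}(\log(1/\epsilon))$ is lower order, so $P_t=\mathcal{O}(1/\epsilon)$ uniformly in $t$, and the identical computation for the $v$-sub-loop gives $Q_t=\mathcal{O}(1/\epsilon)$. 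For the complexity count, each $y$-sub-loop step costs one evaluation of $\nabla_y g$, each $v$-sub-loop step costs one evaluation of $\nabla_v R$ (a Hessian--vector product plus $\nabla_y f$), and each outer step costs $\mathcal{O}(1)$ evaluations to form $\bar{\nabla} f$; hence $\mathrm{Gc}(\epsilon)=\sum_{t=0}^{T-1}\big(P_t+Q_t+\mathcal{O}(1)\big)\le T\big(\max_t P_t+\max_t Q_t+\mathcal{O}(1)\big)=\mathcal{O}(1/\epsilon)\cdot\mathcal{O}(1/\epsilon)=\mathcal{O}(1/\epsilon^2)$.

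The main obstacle is the boundedness requirement buried in the previous paragraph: I need $\beta_{\text{max}},\gamma_{\text{max}}$ --- and likewise $C_\alpha,c_1,C_\beta,C_\gamma$ --- to be independent of $T$. This is not automatic, since $\beta$ accumulates $\|\nabla_y g\|^2$ over a sub-loop whose length already grows like $1/\epsilon_y$, so I must show the accumulated sum stays $\mathcal{O}(1)$ rather than scaling with $1/\epsilon_y$. The tool is the two-stage split of \Cref{prop:bar}: once $\beta$ exceeds the stage-2 threshold $C_\beta$ the stepsize $1/\beta$ is small enough that a descent lemma combined with the $\mu$-strong convexity of $g$ forces $\|\nabla_y g(x_t,y_t^p)\|^2$ to decay geometrically, making the tail of the accumulation a convergent geometric series, while the stage-1 portion is controlled by the warm-start value $\|\nabla_y g(x_t,y_t^0)\|^2$, which is bounded because $y_t^0=y_{t-1}^{P_{t-1}}$ already satisfies $\|\nabla_y g(x_{t-1},y_{t-1}^{P_{t-1}})\|^2\le\epsilon_y$ and $\|x_t-x_{t-1}\|=\frac1{\alpha_t}\|\bar{\nabla} f\|=\mathcal{O}(1)$. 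An analogous argument --- now with $\nabla_v R$, whose gradient carries the extra error term $\|y_t^{P_t}-y^*(x_t)\|^2\le\epsilon_y/\mu^2$ --- bounds $\gamma_{\text{max}}$. Granting these $T$-free constants (precisely the appendix quantities referenced in \Cref{prop:subloop}), the three assertions $T=\mathcal{O}(1/\epsilon)$, $\{P_t,Q_t\}=\mathcal{O}(1/\epsilon)$, and $\mathrm{Gc}(\epsilon)=\mathcal{O}(1/\epsilon^2)$ follow.
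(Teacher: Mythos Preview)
Your proposal is correct and matches the paper's proof: derive $T=\mathcal{O}(1/\epsilon)$ from \Cref{thm:main_double}, plug $\epsilon_y=\epsilon_v=1/T$ into \Cref{prop:subloop} and use $1/\log(1+u)=\mathcal{O}(1/u)$ for small $u$ to get $P_t,Q_t=\mathcal{O}(1/\epsilon)$, then multiply. Your third paragraph is unnecessary caution---the explicit formulas for $\beta_{\text{max}},\gamma_{\text{max}}$ in \cref{eq:y_k2_doube7},\cref{eq:v_k3_doube7} (already derived in the proof of \Cref{prop:subloop}) depend on $\epsilon_y,\epsilon_v$ only through additive terms $2\epsilon_y/\mu^2$ that are bounded as $\epsilon_y\to 0$, so $T$-independence is immediate from inspection rather than requiring a fresh geometric-series argument.
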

\Cref{thm:main_double} shows that the convergence rate of \Cref{alg:main_double} matches that of the standard double-loop bilevel  algorithms \cite{ji2021bilevel, ji2022will}. 
According to \Cref{prop:subloop}, the sub-loops for updating $y$ and $v$ require $\mathcal{O}(1/\epsilon_y)$ iterations to ensure an $\epsilon_y$-level approximation accuracy, which is worse than the $\mathcal{O}(1)$ results achieved by well-tuned bilevel optimization methods. This is because  more iterations are needed to ensure high accuracy in both sub-loops, due to the lack of information about the Lipschitzness parameters and strong convexity parameters. Consequently, the gradient complexity of our D-TFBO method is worse than those of well-tuned double-loop methods by an order of $1/\epsilon$.

\subsection{Convergence and Complexity Analysis for \Cref{alg:main}}\label{sec:alg_single}
Differently from D-TFBO that uses sub-loops to achieve high-accurate $y$ and $v$ iterates, the main challenge for analyzing S-TFBO lies in dealing with the accumulated approximations errors for updating all variables over iterations. In the following propositions, we will show how we upper-bound such cumulative approximation errors and lower-bound the adaptive stepsizes. 

First, we present a descent result for the objective function $\Phi(\cdot)$.
\begin{proposition}\label{prop:objective}
Under Assumptions \ref{as:sc}, \ref{as:lip}, for Algorithm \ref{alg:main}, suppose the total iteration number is $T$. 
No matter $k_1$ in \Cref{prop:bar} exists or not, we always have
\begingroup
\allowdisplaybreaks
{
\small
\begin{align}
    \Phi(x_{t+1}) - \Phi(x_t) \leq&  - \frac{1}{2\alpha_{t+1}\varphi_{t+1}}\|\nabla \Phi(x_t)\|^2 - \frac{1}{2\alpha_{t+1}\varphi_{t+1}}\Big(1-\frac{L_{\Phi}}{\alpha_{t+1}\varphi_{t+1}}\Big)\|\bar{\nabla}f(x_t, y_t, v_t)\|^2 \nonumber \\
    & + \frac{\bar{L}^2}{2\mu^2}\bigg[1 + \frac{2}{\mu^2}\Big(\frac{L_{g,2}C_{f_y}}{\mu}+L_{f,1}\Big)^2\bigg]\frac{\big\|\nabla_y g(x_t,y_t)\big\|^2}{\alpha_{t+1}\varphi_{t+1}} + \frac{\bar{L}^2}{\mu^2}\frac{\big\|\nabla_v R(x_t,y_t,v_t)\|^2}{\alpha_{t+1}\varphi_{t+1}}. \nonumber
\end{align}}
If in addition, $k_1$ in \Cref{prop:bar} exists, then for $t\geq k_1$, we further have 
{
\small
\begin{align}
    \Phi(x_{t+1}) - \Phi(x_t)
    \leq& - \frac{1}{2\alpha_{t+1}\varphi_{t+1}}\|\nabla \Phi(x_t)\|^2 - \frac{1}{4\alpha_{t+1}\varphi_{t+1}}\|\bar{\nabla}f(x_t, y_t, v_t)\|^2 \nonumber \\
    & + \frac{\bar{L}^2}{2\mu^2}\bigg[1 + \frac{2}{\mu^2}\Big(\frac{L_{g,2}C_{f_y}}{\mu}+L_{f,1}\Big)^2\bigg]\frac{\big\|\nabla_y g(x_t,y_t)\big\|^2}{\alpha_{t+1}\varphi_{t+1}} + \frac{\bar{L}^2}{\mu^2}\frac{\big\|\nabla_v R(x_t,y_t,v_t)\|^2}{\alpha_{t+1}\varphi_{t+1}}, \nonumber
\end{align}}
\endgroup
where $\bar{L}:= \max\big\{2({C_{f_y}^2L_{g,2}^2}/{\mu^2} + L_{f,1}^2)^{\frac{1}{2}}, \sqrt{2}C_{g_{yy}}\big\}$.  
\end{proposition}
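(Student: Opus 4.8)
\textbf{Proof proposal for Proposition~\ref{prop:objective}.}

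The plan is to start from the $L_\Phi$-smoothness of $\Phi$ and carefully track how the hypergradient estimate $\bar\nabla f(x_t,y_t,v_t)$ deviates from the true hypergradient $\nabla\Phi(x_t)$. First I would invoke the standard smoothness estimate $\Phi(x_{t+1})-\Phi(x_t)\le \langle\nabla\Phi(x_t),x_{t+1}-x_t\rangle+\frac{L_\Phi}{2}\|x_{t+1}-x_t\|^2$, substitute the S-TFBO update $x_{t+1}-x_t=-\frac{1}{\alpha_{t+1}\varphi_{t+1}}\bar\nabla f(x_t,y_t,v_t)$, and then split the inner product using the identity $\langle a,b\rangle=\frac12\|a\|^2+\frac12\|b\|^2-\frac12\|a-b\|^2$ with $a=\nabla\Phi(x_t)$ and $b=\bar\nabla f(x_t,y_t,v_t)$. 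This produces the leading terms $-\frac{1}{2\alpha_{t+1}\varphi_{t+1}}\|\nabla\Phi(x_t)\|^2$ and $-\frac{1}{2\alpha_{t+1}\varphi_{t+1}}(1-\frac{L_\Phi}{\alpha_{t+1}\varphi_{t+1}})\|\bar\nabla f\|^2$, plus a residual $\frac{1}{2\alpha_{t+1}\varphi_{t+1}}\|\nabla\Phi(x_t)-\bar\nabla f(x_t,y_t,v_t)\|^2$ that I must bound by the inner and linear-system gradient norms.

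The core of the argument is the estimate $\|\nabla\Phi(x_t)-\bar\nabla f(x_t,y_t,v_t)\|^2 \le \frac{\bar L^2}{\mu^2}\big[1+\frac{2}{\mu^2}(\frac{L_{g,2}C_{f_y}}{\mu}+L_{f,1})^2\big]\|\nabla_y g(x_t,y_t)\|^2 + \frac{2\bar L^2}{\mu^2}\|\nabla_v R(x_t,y_t,v_t)\|^2$. To get this I would decompose the error into (i) the part coming from $y_t$ being inexact relative to $y^*(x_t)$ and (ii) the part coming from $v_t$ being inexact relative to the true linear-system solution $v^*(x_t,y_t)=[\nabla_y^2 g(x_t,y_t)]^{-1}\nabla_y f(x_t,y_t)$. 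For (ii), since $R(x_t,y_t,\cdot)$ is $\mu$-strongly convex in $v$, one has $\|v_t-v^*(x_t,y_t)\|\le\frac1\mu\|\nabla_v R(x_t,y_t,v_t)\|$, and then $\bar\nabla f$ differs from the hypergradient-with-exact-$v$ by $\nabla_x\nabla_y g(x_t,y_t)(v_t-v^*)$, bounded by $C_{g_{yy}}$ (the bound on $\nabla_x\nabla_y g$, via Remark~\ref{as:grad}) times that distance. For (i), by $\mu$-strong convexity of $g$ in $y$, $\|y_t-y^*(x_t)\|\le\frac1\mu\|\nabla_y g(x_t,y_t)\|$, and a Lipschitz-perturbation argument on the closed-form hypergradient (using $L_{f,1}$, $L_{g,1}$, $L_{g,2}$, and the bounds $C_{f_y}$, $C_{g_{xy}}$, $C_{g_{yy}}$) shows the remaining error is Lipschitz in $y$ with a constant of the form $\frac{\bar L}{\mu}\sqrt{1+\frac{2}{\mu^2}(\frac{L_{g,2}C_{f_y}}{\mu}+L_{f,1})^2}$ — this is exactly the classical bilevel hypergradient-estimation-error bound (cf.\ \cite{ji2021bilevel,ghadimi2018approximation}), re-expressed through $\bar L:=\max\{2(C_{f_y}^2L_{g,2}^2/\mu^2+L_{f,1}^2)^{1/2},\sqrt2 C_{g_{yy}}\}$ so that both contributions share a single constant. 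Dividing this squared-error bound by $2\alpha_{t+1}\varphi_{t+1}$ and matching coefficients yields the first displayed inequality.

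For the second (sharper) inequality, the extra hypothesis is that $k_1$ from Proposition~\ref{prop:bar} exists, so for $t\ge k_1$ we have $\alpha_{t+1}>C_\alpha$; choosing $C_\alpha$ large enough that $\frac{L_\Phi}{\alpha_{t+1}\varphi_{t+1}}\le\frac{L_\Phi}{C_\alpha\varphi_{t+1}}\le\frac12$ (note $\varphi_{t+1}\ge\varphi_0>0$) upgrades the factor $(1-\frac{L_\Phi}{\alpha_{t+1}\varphi_{t+1}})$ to at least $\frac12$, turning $-\frac{1}{2\alpha_{t+1}\varphi_{t+1}}(1-\frac{L_\Phi}{\alpha_{t+1}\varphi_{t+1}})\|\bar\nabla f\|^2$ into $-\frac{1}{4\alpha_{t+1}\varphi_{t+1}}\|\bar\nabla f\|^2$; the remaining terms are unchanged. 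The definition of $C_\alpha$ in \cref{def:C_double} should already encode this threshold, so I would simply quote it.

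I expect the main obstacle to be step (i): carefully carrying out the Lipschitz-perturbation bound on the closed-form hypergradient $\nabla_x f - \nabla_x\nabla_y g\,[\nabla_y^2 g]^{-1}\nabla_y f$ as a function of the $y$-argument, keeping every appearance of $L_{f,1}$, $L_{g,1}$, $L_{g,2}$, $C_{f_y}$, $C_{g_{xy}}$, $C_{g_{yy}}$ and $\mu$ in their exact places so that the messy constants collapse precisely into $\frac{\bar L^2}{2\mu^2}[1+\frac{2}{\mu^2}(\frac{L_{g,2}C_{f_y}}{\mu}+L_{f,1})^2]$ rather than something merely of the same order — in particular handling the term $[\nabla_y^2 g(x_t,y_t)]^{-1}-[\nabla_y^2 g(x_t,y^*(x_t))]^{-1}$ via the resolvent identity $A^{-1}-B^{-1}=A^{-1}(B-A)B^{-1}$ and the uniform bound $\|[\nabla_y^2 g]^{-1}\|\le 1/\mu$. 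Everything else is bookkeeping: the smoothness expansion, the polarization identity, and the strong-convexity distance bounds are routine.
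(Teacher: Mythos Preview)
Your overall skeleton---smoothness descent, polarization identity, then bounding $\|\nabla\Phi(x_t)-\bar\nabla f(x_t,y_t,v_t)\|^2$ by gradient norms---matches the paper exactly, and your treatment of the second inequality via $\alpha_{t+1}>C_\alpha\ge 2L_\Phi/\varphi_0$ is the same as the paper's (though you should cite \cref{def:C}, not \cref{def:C_double}, since this is the single-loop algorithm).

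Where you diverge is in the decomposition of the hypergradient error. You propose to split at $\hat v^*(x_t,y_t)=[\nabla_y^2 g(x_t,y_t)]^{-1}\nabla_y f(x_t,y_t)$, which makes the $v$-part clean (since $\nabla_v R(x_t,y_t,\hat v^*(x_t,y_t))=0$) but forces you to Lipschitz-bound the full closed-form surrogate $y\mapsto \nabla_x f(x_t,y)-\nabla_x\nabla_y g(x_t,y)[\nabla_y^2 g(x_t,y)]^{-1}\nabla_y f(x_t,y)$, hence the resolvent identity you flag as the main obstacle. The paper instead splits at $v^*(x_t)=\hat v^*(x_t,y^*(x_t))$: it first shows $\|\nabla\Phi(x_t)-\bar\nabla f(x_t,y_t,v_t)\|^2\le \bar L^2(\|y_t-y^*(x_t)\|^2+\|v_t-v^*(x_t)\|^2)$ by elementary Lipschitz estimates on $\bar\nabla f$ (no Hessian inverse appears), and then converts the distances to gradient norms via strong convexity. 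The cross-term you are trying to capture through the resolvent arises for free in this second step, because $\nabla_v R(x_t,y_t,v^*(x_t))\neq 0$; writing $\nabla_v R(x_t,y_t,v^*(x_t))=\nabla_v R(x_t,y_t,v^*(x_t))-\nabla_v R(x_t,y^*(x_t),v^*(x_t))$ and invoking Lemma~\ref{lm:LS}(b) yields exactly the extra $\frac{2}{\mu^2}(\frac{L_{g,2}C_{f_y}}{\mu}+L_{f,1})^2\|y_t-y^*(x_t)\|^2$ that produces the bracketed constant. Your route is valid, but the paper's avoids the resolvent bookkeeping entirely and makes the exact constants fall out with less effort. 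A small slip: $C_{g_{yy}}$ bounds $\nabla_y\nabla_y g$, not $\nabla_x\nabla_y g$ (that is $C_{g_{xy}}$); the appearance of $C_{g_{yy}}$ in $\bar L$ comes from the paper's particular way of writing the $v$-error term.
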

It can be seen from \Cref{prop:bar} that 
we derive two distinct forms of descent results for the objective function based on the relationship between $\alpha_{t+1}$ and $C_\alpha$ (whose form is specified in \cref{def:C} in the appendix). Their key difference is that the second inequality is tighter for the case $t\geq k_1$ by eliminating a term of $\frac{L_\Phi}{2\alpha_{t+1}^2\varphi_{t+1}^2}\|\bar{\nabla}f(x_t, y_t, v_t)\|^2$. 
Both upper bounds consist of two parts: (i) the approximation errors {\small $\mathcal{O}(\|\nabla_y g(x_t,y_t)\|^2 + \|\nabla_v R(x_t,y_t,v_t)\|^2)/(\alpha_{t+1}\varphi_{t+1})$} induced by the updates on $y$ and $v$; (ii) the descent term {\small$-\|\nabla \Phi(x_t)\|^2/(\alpha_{t+1}\varphi_{t+1})$}. It can be seen that there exists a trade-off: 
a smaller $\alpha_t\varphi_t$ leads to a more notable descent, but larger approximation errors. However, due to the lack of information about the problem parameters, the value of $\alpha_t\varphi_t$ remains unknown, making it infeasible to determine the optimal trade-off. Instead, we adjust this trade-off based on an overall bound on the descent and approximation errors, derived by telescoping all descent inequalities.

Next, we investigate the upper bounds on the summations of the positive error terms in \Cref{prop:objective}.

\begin{proposition}\label{prop:somebounds}
    Under Assumptions \ref{as:sc}, \ref{as:lip}, for any $0 \leq k_0 < t$, for the positive error  terms in \Cref{prop:objective}, we have the upper bounds in terms of logarithmic functions as
\begingroup
\allowdisplaybreaks
\begin{align}
    \sum_{k=k_0}^t \frac{\|\nabla_y g(x_k, y_k)\|^2}{\beta_{k+1}} \leq a_2 \log(t+1) + b_2, \ \ 
    \sum_{k=k_0}^t \frac{\|\nabla_v R(x_k, y_k, v_k)\|^2}{\varphi_{k+1}} \leq a_3 \log(t+1) + b_3, \nonumber
\end{align}
\endgroup
where $a_2$, $b_2$, $a_3$, $b_3$ are defined in \cref{def:c1_a2b2_a3b3} in the Appendix. 
\end{proposition}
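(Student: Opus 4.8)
\textbf{Proof proposal for Proposition~\ref{prop:somebounds}.}

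The plan is to treat each of the two sums separately, but using the same template, which is the standard AdaGrad-Norm telescoping trick adapted to carry an additive approximation-error term. Consider the first sum. Since $\beta_{k+1}^2 = \beta_k^2 + \|\nabla_y g(x_k,y_k)\|^2$, and since $\beta_{k+1} \geq \beta_k$, we have the elementary inequality
\begin{align}
    \frac{\|\nabla_y g(x_k,y_k)\|^2}{\beta_{k+1}} = \frac{\beta_{k+1}^2 - \beta_k^2}{\beta_{k+1}} \leq \frac{\beta_{k+1}^2 - \beta_k^2}{\frac{1}{2}(\beta_{k+1}+\beta_k)} = 2(\beta_{k+1}-\beta_k), \nonumber
\end{align}
so telescoping gives $\sum_{k=k_0}^t \|\nabla_y g(x_k,y_k)\|^2/\beta_{k+1} \leq 2(\beta_{t+1}-\beta_{k_0}) \leq 2\beta_{t+1}$. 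The remaining work is to bound $\beta_{t+1}$ by something of order $\sqrt{\log(t+1)}$ — no, that is too optimistic here, so instead I would observe that the claimed bound is logarithmic (not $\sqrt{\log}$), which means the intended route is different: one should use $\|\nabla_y g(x_k,y_k)\|^2/\beta_{k+1} \leq \|\nabla_y g(x_k,y_k)\|^2/\beta_0$ only when $\beta_{k+1}$ is still $O(1)$, and for large $k$ use the descent/contraction properties of the $y$-update to show $\|\nabla_y g(x_k,y_k)\|^2$ itself decays. Concretely, I would split the index range at $k_2$ (the crossing time from Proposition~\ref{prop:bar} with the constant $C_\beta$ of \cref{def:C}): for $k \leq k_2$ the stepsize $1/\beta_{k+1} \geq 1/C_\beta$ is bounded below, and for $k > k_2$ the stepsize is small enough that a descent lemma on $g(x_k,\cdot)$ (together with strong convexity, giving $\|\nabla_y g\|^2 \leq 2L_{g,1}(g(x_k,y_k)-g^*(x_k))$ and linear contraction of the suboptimality gap) forces $\|\nabla_y g(x_k,y_k)\|^2$ to shrink geometrically up to the drift caused by $x_k$ moving. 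The $x$-drift term is itself controlled by $\|x_{k+1}-x_k\| = \|\bar\nabla f\|/(\alpha_{k+1}\varphi_{k+1})$, whose squared sum over $k$ is $O(\log(t+1))$ by the same AdaGrad telescoping applied to the $\alpha$ sequence; feeding this back produces the $a_2\log(t+1)+b_2$ form.

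For the second sum, the structure is parallel but with an extra layer: $\nabla_v R(x_k,y_k,v_k)$ measures suboptimality of $v_k$ for the linear system at $(x_k,y_k)$, and the relevant target $v^*(x_k,y_k)$ drifts both because $x_k$ moves and because $y_k$ moves. The stepsize $1/\varphi_{k+1} = 1/\max\{\beta_{k+1},\gamma_{k+1}\}$ is exactly what was engineered so that the $y$-drift contribution, which scales like $\|\nabla_y g(x_k,y_k)\|^2/\beta_{k+1}^2$, stays summable; one substitutes the already-established bound on $\sum \|\nabla_y g\|^2/\beta_{k+1}$ into the recursion for the $v$-suboptimality gap. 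So the proof of the second inequality should literally invoke the first inequality as an intermediate step, then run the same two-stage (before/after $k_3$, with constant $C_\gamma$) argument: bounded stepsize regime contributes a constant, small stepsize regime contributes a geometric series plus drift, and all drift terms telescope to $O(\log(t+1))$.

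The main obstacle I anticipate is bookkeeping the drift terms without circularity: the bound on $\sum\|\nabla_v R\|^2/\varphi_{k+1}$ depends on how fast $v_k$ tracks its moving target, which depends on $\|y_{k+1}-y_k\|$ and $\|x_{k+1}-x_k\|$, and $\|x_{k+1}-x_k\|$ in turn depends (through $\bar\nabla f$) on the current $v$-error; so one has to set up the three error recursions ($y$-gap, $v$-gap, and the AdaGrad sums for $\alpha,\beta,\gamma$) jointly and close them simultaneously, presumably by first disposing of the $y$-sum (which does not depend on the $v$-error to leading order, since $\nabla_y g$ does not see $v$), then the $v$-sum given the $y$-sum, and only then verifying the $\alpha$-telescoping is consistent. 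Making the constants $a_2,b_2,a_3,b_3$ explicit in terms of $\mu, L_{g,1}, L_{g,2}, C_{f_y}, C_{g_{yy}}, C_\beta, C_\gamma, C_\alpha$ and the initializations is routine but tedious, and I would defer it to the appendix definitions \cref{def:c1_a2b2_a3b3} rather than carry it inline.
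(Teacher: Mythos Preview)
Your high-level template (two-stage split at $k_2$/$k_3$, contraction of the tracking error plus $x$-drift in the late stage, feeding the $y$-bound into the $v$-bound) matches the paper. But there is a genuine gap in how you close the loop on the $x$-drift.

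You assert that $\sum_k \|\bar\nabla f(x_k,y_k,v_k)\|^2/\alpha_{k+1}^2$ is $O(\log(t+1))$ ``by the same AdaGrad telescoping applied to the $\alpha$ sequence.'' The telescoping (Lemma~\ref{lm:log}) only gives you $\log(\alpha_{t+1}^2)+1$, which equals $\log\big(\alpha_0^2+\sum_{k\le t}\|\bar\nabla f(x_k,y_k,v_k)\|^2\big)+1$. This is $O(\log(t+1))$ only if $\|\bar\nabla f(x_k,y_k,v_k)\|$ has at most polynomial growth in $k$. Since $\|\bar\nabla f(x,y,v)\|\le C_{g_{xy}}\|v\|+C_{f_x}$ and nothing yet bounds $\|v_k\|$, this step is not justified. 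Your proposed decoupling order---``first dispose of the $y$-sum, which does not depend on the $v$-error''---fails for exactly this reason: $\nabla_y g$ does not see $v$, but your bound on the $y$-sum goes through the $x$-drift $\|x_{k+1}-x_k\|=\|\bar\nabla f\|/(\alpha_{k+1}\varphi_{k+1})$, and $\bar\nabla f$ does see $v$.

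The paper breaks the circularity differently and \emph{before} proving Proposition~\ref{prop:somebounds}. It first proves the crude bound $\|v_k\|\le \tfrac{\sqrt 2}{\mu}\varphi_{k+1}+\tfrac{\sqrt 2 C_{f_y}}{\mu}\sqrt{k}$ (\Cref{lm:vk}), then combines the rough estimates of \Cref{lm:sumG,lm:sumR} with the AdaGrad log-sum to obtain a \emph{self-referential} inequality of the form $\varphi_{t+1}\le a_0\big(3\log(t+1)+2\log(\varphi_{t+1}+\text{const})+\text{const}\big)+b_0$, and solves it using $4a_0\log(\varphi_{t+1})\le\varphi_{t+1}$ (guaranteed by choosing $C_\beta,C_\gamma\ge 64a_0^2$ in \cref{def:C}). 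This yields $\varphi_{t+1}\le a_1\log(t+1)+b_1$ (\Cref{lm:varphi}), which is exactly the $\varphi$-bound in Proposition~\ref{prop:alpha}. Only \emph{after} this does the paper return and substitute into the rough estimates to get the clean logarithmic bounds of Proposition~\ref{prop:somebounds}. So the missing ingredient in your plan is the $\varphi_{t+1}=O(\log t)$ step via the self-referential inequality; without it, neither sum can be closed.
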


\begin{proposition}\label{prop:alpha}
Under Assumptions \ref{as:sc}, \ref{as:lip}, \ref{as:inf}, suppose the total iteration rounds is $T$. For any case in \Cref{prop:bar}, we have the upper-bound of $\varphi_t$ and $\alpha_t$ in \Cref{alg:main} as
\begin{align}
    \varphi_t \leq a_1\log(t) + b_1, \ \ 
    \alpha_{t} \leq & C_\alpha + \big(a_4 \log(t) + b_4 + {4}(\Phi(x_0) - \inf_x \Phi(x))\big)\varphi_t, \nonumber
\end{align}
where $a_1$, $b_1$ are defined in \cref{def:a1b1} and $a_4$, $b_4$ are defined in \cref{def:a4b4} in the Appendix. 
\end{proposition}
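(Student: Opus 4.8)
The plan is to bound $\varphi_t = \max\{\beta_t,\gamma_t\}$ first, then bootstrap that bound into a control on $\alpha_t$ via the telescoped descent inequality of Proposition~\ref{prop:objective}. For the bound on $\varphi_t$, since $\varphi_t=\max\{\beta_t,\gamma_t\}$, it suffices to bound $\beta_t$ and $\gamma_t$ separately and take the max (absorbing the two logarithmic expressions into a single $a_1\log t + b_1$). For $\beta_t$, I would invoke Proposition~\ref{prop:bar}(b): either $\beta_t\le C_\beta$ for all $t\le T$, in which case we are trivially done, or there is a crossing index $k_2$. Past the crossing point the stepsize $1/\beta_t$ is small enough that a standard AdaGrad-Norm-style descent lemma on the strongly convex inner function $g(x_t,\cdot)$ applies; combined with the smoothness of $g$ this yields $\sum_k \|\nabla_y g(x_k,y_k)\|^2/\beta_{k+1} = O(\log t)$ — which is essentially Proposition~\ref{prop:somebounds}. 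Then from the update $\beta_{t+1}^2 = \beta_0^2 + \sum_{k=0}^{t}\|\nabla_y g(x_k,y_k)\|^2$ and the inequality $\sum_k \|\nabla_y g(x_k,y_k)\|^2 \le \beta_{t+1}\sum_k \|\nabla_y g(x_k,y_k)\|^2/\beta_{k+1}$ (since $\beta_{k+1}\le\beta_{t+1}$), we get $\beta_{t+1}^2 \le \beta_0^2 + \beta_{t+1}(a_2\log(t+1)+b_2)$, a quadratic in $\beta_{t+1}$ whose solution gives $\beta_{t+1} = O(\log t)$. The same argument with $R(x_t,y_t,\cdot)$ (strongly convex in $v$ with modulus $\mu$, gradient bounded using Remark~\ref{as:grad} and the error term $\|\nabla_y g\|^2$ which is itself controlled) handles $\gamma_t$. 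Taking the maximum yields $\varphi_t \le a_1\log t + b_1$.

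For the bound on $\alpha_t$, I would sum the first descent inequality in Proposition~\ref{prop:objective} from $0$ to $t-1$. The telescoping left side gives $\Phi(x_t) - \Phi(x_0) \ge \inf_x\Phi(x) - \Phi(x_0)$ by Assumption~\ref{as:inf}. Dropping the nonpositive $-\|\nabla\Phi(x_k)\|^2/(2\alpha_{k+1}\varphi_{k+1})$ terms on the right (or rather, moving them to isolate what we want), and bounding the positive error sums $\sum_k \|\nabla_y g\|^2/(\alpha_{k+1}\varphi_{k+1})$ and $\sum_k \|\nabla_v R\|^2/(\alpha_{k+1}\varphi_{k+1})$ — note $\alpha_{k+1}\ge\alpha_0\ge 1$, so these are dominated by the sums in Proposition~\ref{prop:somebounds} up to constant factors — we obtain a bound of the form $\sum_{k=0}^{t-1} \frac{1}{4\alpha_{k+1}\varphi_{k+1}}\|\bar\nabla f(x_k,y_k,v_k)\|^2 \le 4(\Phi(x_0)-\inf\Phi) + a_4\log t + b_4$ after accounting for the $(1 - L_\Phi/(\alpha_{k+1}\varphi_{k+1}))$ factor, which is bounded below once $\alpha_{k+1}\varphi_{k+1}$ is large enough (and handled by splitting at $k_1$ for the early terms). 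Then, since $\varphi_{k+1}\le\varphi_t$ for $k<t$, we have $\frac{1}{\varphi_t}\sum_k \frac{\|\bar\nabla f\|^2}{\alpha_{k+1}} \le \sum_k \frac{\|\bar\nabla f\|^2}{\alpha_{k+1}\varphi_{k+1}}$, and combining with the AdaGrad telescoping identity $\sum_k \|\bar\nabla f(x_k,y_k,v_k)\|^2/\alpha_{k+1} \ge \alpha_t - C_\alpha$ (the standard $\sum (a_{k+1}^2-a_k^2)/a_{k+1}\ge a_t - a_0$ bound, applied past the crossing index $k_1$) gives $\alpha_t - C_\alpha \le \varphi_t\big(a_4\log t + b_4 + 4(\Phi(x_0)-\inf\Phi)\big)$, which is exactly the claimed form.

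The main obstacle I anticipate is the careful treatment of the factor $\big(1 - \frac{L_\Phi}{\alpha_{k+1}\varphi_{k+1}}\big)$ in the descent inequality: for small $k$ this factor can be negative, so the $-\|\bar\nabla f\|^2$ term is not usefully signed. This is precisely why Proposition~\ref{prop:objective} provides the second, sharper inequality valid for $t\ge k_1$ (where $\alpha_{t+1}\varphi_{t+1} > C_\alpha\varphi_{t+1} \ge 2L_\Phi$ by the definition of $C_\alpha$, making the coefficient at least $1/2$). So the summation must be split: the finitely many terms with $k < k_1$ contribute only an $O(1)$ amount that can be folded into $b_4$ (using boundedness of $\bar\nabla f$ from Remark~\ref{as:grad} and $\alpha_{k+1}\varphi_{k+1}\ge\alpha_0\beta_0>0$), while for $k\ge k_1$ the clean $-\frac14\|\bar\nabla f\|^2/(\alpha_{k+1}\varphi_{k+1})$ term is available. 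A secondary subtlety is that $\varphi_t$ appears on both sides when one tries to bound $\alpha_t$ — but this is not circular, since $\varphi_t$ has already been bounded independently in the first part by a quantity not involving $\alpha$. One must also verify the AdaGrad telescoping lower bound $\sum_{k=k_1}^{t-1}(\alpha_{k+1}^2-\alpha_k^2)/\alpha_{k+1} \ge \alpha_t - \alpha_{k_1} \ge \alpha_t - C_\alpha$ is applied from the crossing index rather than from $0$, which is legitimate since we only need the tail.
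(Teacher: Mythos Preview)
Your argument for $\varphi_t$ has a genuine circularity. You invoke Proposition~\ref{prop:somebounds} to get $\sum_k\|\nabla_y g(x_k,y_k)\|^2/\beta_{k+1}=O(\log t)$, but in the paper that proposition is proved \emph{after} the $\varphi_t$ bound is established (the proof of Lemma~\ref{lm:somebounds} opens by plugging in the bound from Lemma~\ref{lm:varphi}). The ``standard AdaGrad-Norm descent on $g(x_t,\cdot)$'' you describe would give $O(1)$, not $O(\log t)$, for a \emph{fixed} $x$; the logarithmic growth comes entirely from the drift $\|y^*(x_{k+1})-y^*(x_k)\|\le L_y\|x_{k+1}-x_k\|=L_y\|\bar\nabla f(x_k,y_k,v_k)\|/(\alpha_{k+1}\varphi_{k+1})$. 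Controlling the accumulated drift therefore requires a bound on $\|\bar\nabla f(x_k,y_k,v_k)\|$, and this is \emph{not} available from Remark~\ref{as:grad}: one only has $\|\bar\nabla f\|\le C_{g_{xy}}\|v_k\|+C_{f_x}$, and $\|v_k\|$ is not a priori bounded. The same issue contaminates your remark that the early $k<k_1$ terms in the $\alpha_t$ bound are $O(1)$ ``using boundedness of $\bar\nabla f$''.

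The paper resolves this via Lemma~\ref{lm:vk}, which gives $\|v_k\|\le (\sqrt{2}/\mu)\varphi_{k+1}+(\sqrt{2}C_{f_y}/\mu)\sqrt{k}$, i.e.\ a bound on $\|v_k\|$ in terms of $\varphi$ itself. Feeding this into the AdaGrad log bound (Lemma~\ref{lm:log}) yields $\sum_k\|\bar\nabla f\|^2/\alpha_{k+1}^2\le 3\log(t{+}1)+2\log(\varphi_{t+1}+\text{const})+1$; substituting into the drift-controlled bounds of Lemmas~\ref{lm:sumG} and~\ref{lm:sumR} then produces an \emph{implicit} inequality of the form $\varphi_{t+1}\le a_0\bigl[3\log(t{+}1)+2\log\varphi_{t+1}\bigr]+\text{const}$. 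This is solved by observing $4a_0\log\varphi_{t+1}\le\varphi_{t+1}$, which holds because the thresholds $C_\beta,C_\gamma$ in~\eqref{def:C} are deliberately chosen to include $64a_0^2$ --- a design choice your proposal does not account for. Your handling of the $\alpha_t$ bound (splitting at $k_1$, telescoping the descent inequality, and pulling out $\varphi_t$) is essentially correct and matches the paper once $\varphi_t$ is in hand; the early $k<k_1$ contribution is handled not via boundedness of $\bar\nabla f$ but via $\sum_{k<k_1}\|\bar\nabla f(x_k,y_k,v_k)\|^2\le\alpha_{k_1}^2\le C_\alpha^2$.
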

\Cref{prop:somebounds} provides the upper bounds on the accumulated positive error terms in \Cref{prop:objective}, and \Cref{prop:alpha} shows that the cumulative gradient norms for all variables increase only logarithmically. 
By rearranging the terms 
and taking the average, we have the upper bound for the average squared hypergradient norm $\frac{1}{T}\sum_{t=0}^{T-1}\|\nabla \Phi(x_t)\|^2$, establishing the final convergence rate of \Cref{alg:main}, as shown in the following theorem and corollary.

\begin{theorem}\label{thm:main}
Suppose Assumptions \ref{as:sc},\ref{as:lip},\ref{as:inf} are satisfied. The iterates generated by \Cref{alg:main} satisfy
\begin{align}
    \frac{1}{T}\sum_{t=0}^T& \|\nabla \Phi(x_t)\|^2 
    \leq \frac{1}{2T}\Big[\Big(a_4 \log(T) + b_4 + 4\big(\Phi(x_0) - \inf_x \Phi(x)\big)\Big)^2\big(a_1\log(T)+b_1\big)^2 \nonumber \\
    & \quad \quad + C_\alpha\Big(a_4 \log(T) + b_4 + 4\big(\Phi(x_0) - \inf_x \Phi(x)\big)\Big)\big(a_1\log(T)+b_1\big)\Big] = \mathcal{O}\Big(\frac{\log^4(T)}{T}\Big), \nonumber
\end{align}
where $\{C_\alpha$, $a_1$, $b_1$, $a_4$, $b_4\} = \mathcal{O}(1)$ are defined in \cref{def:C}, \cref{def:a1b1}, \cref{def:a4b4} in the Appendix. 
\end{theorem}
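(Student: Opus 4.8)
\textbf{Proof proposal for Theorem \ref{thm:main}.}

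The plan is to combine the descent inequality from Proposition \ref{prop:objective} with the logarithmic growth bounds from Propositions \ref{prop:somebounds} and \ref{prop:alpha}, following the two-stage structure of Proposition \ref{prop:bar}. First I would distinguish the two cases of Proposition \ref{prop:bar}(a). In the case where $\alpha_t \le C_\alpha$ for all $t \le T$, I would sum the first (weaker) descent inequality of Proposition \ref{prop:objective} over $t = 0, \dots, T-1$; the telescoping left-hand side becomes $\Phi(x_0) - \Phi(x_T) \le \Phi(x_0) - \inf_x \Phi(x)$, the $-\frac{L_\Phi}{2\alpha_{t+1}^2\varphi_{t+1}^2}\|\bar\nabla f\|^2$ term needs to be absorbed (this is where one uses that $\alpha_{t+1}\varphi_{t+1}$ is eventually large enough, or bounds it crudely using $\alpha_0 \ge 1$ and the growth of $\varphi_t$), and the two positive error sums $\sum \|\nabla_y g\|^2/(\alpha_{t+1}\varphi_{t+1})$ and $\sum \|\nabla_v R\|^2/(\alpha_{t+1}\varphi_{t+1})$ are controlled by Proposition \ref{prop:somebounds} after noting $\alpha_{t+1} \ge \alpha_0 \ge 1$ (so $1/(\alpha_{t+1}\varphi_{t+1}) \le 1/\varphi_{t+1} \le 1/\beta_{t+1}$ for the $y$-term, matching the form in Proposition \ref{prop:somebounds}). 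This yields $\sum_t \frac{1}{\alpha_{t+1}\varphi_{t+1}}\|\nabla\Phi(x_t)\|^2 \le a_4\log(T) + b_4 + 4(\Phi(x_0) - \inf\Phi)$ for the appropriately chosen constants.

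Next I would convert this weighted sum into an unweighted average. Using the elementary bound $\sum_t \|\nabla\Phi(x_t)\|^2 \le \big(\max_t \alpha_{t+1}\varphi_{t+1}\big)\sum_t \frac{\|\nabla\Phi(x_t)\|^2}{\alpha_{t+1}\varphi_{t+1}}$ and then inserting the upper bounds $\varphi_t \le a_1\log(t)+b_1$ and $\alpha_t \le C_\alpha + (a_4\log(t)+b_4+4(\Phi(x_0)-\inf\Phi))\varphi_t$ from Proposition \ref{prop:alpha}, the product $\alpha_T\varphi_T$ is bounded by $C_\alpha\varphi_T + (a_4\log T + b_4 + 4(\Phi(x_0)-\inf\Phi))\varphi_T^2$, which is $\mathcal{O}(\log^3 T)$. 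Multiplying the $\mathcal{O}(\log T)$ weighted-sum bound by this $\mathcal{O}(\log^3 T)$ factor and dividing by $T$ gives the claimed $\mathcal{O}(\log^4 T / T)$ rate; keeping track of the exact constants $C_\alpha, a_1, b_1, a_4, b_4$ reproduces the explicit quadratic-in-$(a_4\log T + \cdots)$ and $(a_1\log T + b_1)$ expression in the theorem statement.

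For the second case of Proposition \ref{prop:bar}(a), where $k_1$ exists, I would split the sum at $k_1$: the initial segment $t < k_1$ contributes only an $\mathcal{O}(1)$ (or $\mathcal{O}(\log)$) additive term since $\alpha_{k_1} \le C_\alpha$ bounds everything up to that point exactly as in Case 1, and for $t \ge k_1$ I would use the tighter (second) descent inequality of Proposition \ref{prop:objective}, which has the clean $-\frac{1}{4\alpha_{t+1}\varphi_{t+1}}\|\bar\nabla f\|^2$ term in place of the problematic $L_\Phi/(\alpha_{t+1}\varphi_{t+1})^2$ term; telescoping and applying Propositions \ref{prop:somebounds}–\ref{prop:alpha} again yields the same bound. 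The two cases then merge into the single stated inequality.

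The main obstacle I anticipate is the circularity between Propositions \ref{prop:somebounds}, \ref{prop:alpha}, and the final telescoping: the bound on $\alpha_t$ in Proposition \ref{prop:alpha} already depends on the accumulated descent/error quantities, and one must make sure that substituting these bounds back into the descent sum does not produce a fixed-point obstruction or a worse-than-logarithmic blow-up. Concretely, one needs $\alpha_{t+1}\varphi_{t+1}$ large enough (eventually $\ge L_\Phi$, or $\ge 2L_\Phi$) to absorb the curvature term in Case 1 — establishing this cleanly without already knowing $\alpha$ is bounded requires the two-stage argument, and care is needed so that the "stage 1" portion where $\alpha_{t+1}\varphi_{t+1}$ might be small contributes only a controlled constant. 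Verifying that all the constants bundled into $a_1,b_1,a_4,b_4,C_\alpha$ are genuinely $\mathcal{O}(1)$ (independent of $T$) and that the powers of $\log T$ combine to exactly $4$ — no more — is the delicate bookkeeping step, but it is routine once the structural argument above is in place.
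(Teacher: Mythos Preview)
Your proposal is correct and follows essentially the same route as the paper: telescope Proposition~\ref{prop:objective} in the two cases of Proposition~\ref{prop:bar}(a), control the positive error sums via Proposition~\ref{prop:somebounds} (using $\alpha_{t+1}\ge\alpha_0\ge 1$ and $\varphi_{t+1}\ge\beta_{t+1}$ exactly as you indicate), and then multiply the resulting $\mathcal{O}(\log T)$ weighted-sum bound by $\alpha_T\varphi_T$ using Proposition~\ref{prop:alpha}. The only cosmetic slip is the sign on the curvature term (it appears as $+\tfrac{L_\Phi}{2\alpha_{t+1}^2\varphi_{t+1}^2}\|\bar\nabla f\|^2$ on the right-hand side after rearranging), and in Case~1 the paper handles it via $\sum_t\|\bar\nabla f\|^2\le\alpha_T^2\le C_\alpha^2$ rather than an ``eventually large'' argument, but your alternative of bounding $\sum\|\bar\nabla f\|^2/\alpha_{t+1}^2$ by a log also works.
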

\begin{corollary}\label{cor:main}
    Under the same setting \Cref{thm:main}, to achieve an $\epsilon$-accurate stationary point, \Cref{alg:main} needs $T = \mathcal{O}\big(\frac{1}{\epsilon}\log^4 (\frac{1}{\epsilon})\big)$ and the gradient complexity is ${\rm Gc}(\epsilon) = \mathcal{O}\big(\frac{1}{\epsilon}\log^4 (\frac{1}{\epsilon})\big)$. 
\end{corollary}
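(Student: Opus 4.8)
The plan is to convert the per-iteration average bound of \Cref{thm:main} into an iteration count, and then multiply by the per-iteration gradient cost. First I would observe that \Cref{thm:main} gives, for an explicit constant $C = \mathcal{O}(1)$ assembled from $\{C_\alpha, a_1, b_1, a_4, b_4\}$,
\begin{align*}
    \frac{1}{T}\sum_{t=0}^{T}\|\nabla\Phi(x_t)\|^2 \leq \frac{C\log^4(T)}{T}.
\end{align*}
Since the minimum over the iterates is no larger than their average, outputting the iterate $\bar{x}$ with the smallest hypergradient norm (equivalently, a uniformly sampled iterate in expectation) guarantees
$\|\nabla\Phi(\bar{x})\|^2 \leq \frac{C\log^4(T)}{T}$. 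Thus it suffices to choose $T$ so that the right-hand side is at most $\epsilon$, reducing the corollary to solving the implicit inequality $\frac{C\log^4(T)}{T}\leq\epsilon$, i.e. $T \geq \frac{C}{\epsilon}\log^4(T)$.

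Next I would resolve this self-referential inequality by verifying the ansatz $T = \frac{C'}{\epsilon}\log^4(\frac{1}{\epsilon})$ for a suitably large constant $C'$. Taking logarithms gives
$\log(T) = \log(C') + \log(\frac{1}{\epsilon}) + 4\log\log(\frac{1}{\epsilon})$, so that $\log(T) = \mathcal{O}(\log(\frac{1}{\epsilon}))$ and hence $\log^4(T) = \mathcal{O}(\log^4(\frac{1}{\epsilon}))$; here the point is that the $4\log\log(\frac{1}{\epsilon})$ and additive constant terms are lower-order and do not affect the leading $\log(\frac{1}{\epsilon})$ behavior. Substituting back,
\begin{align*}
    \frac{C\log^4(T)}{T} = \frac{C\cdot\mathcal{O}\big(\log^4(\frac{1}{\epsilon})\big)}{\frac{C'}{\epsilon}\log^4(\frac{1}{\epsilon})} = \mathcal{O}\Big(\frac{C}{C'}\Big)\,\epsilon,
\end{align*}
which is at most $\epsilon$ once $C'$ is chosen large enough relative to the hidden constants. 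This establishes $T = \mathcal{O}\big(\frac{1}{\epsilon}\log^4(\frac{1}{\epsilon})\big)$.

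Finally, for the gradient complexity I would exploit the fully single-loop structure of \Cref{alg:main}: each iteration $t$ performs a fixed, problem-parameter-independent number of oracle calls—one evaluation each of $\nabla_y g(x_t,y_t)$, $\nabla_v R(x_t,y_t,v_t)$, and the hypergradient estimate $\bar{\nabla}f(x_t,y_t,v_t)$ (the latter involving a single Jacobian-vector product), so the cost per iteration is $\mathcal{O}(1)$ and there are no inner sub-loops. Consequently ${\rm Gc}(\epsilon) = \mathcal{O}(1)\cdot T = \mathcal{O}\big(\frac{1}{\epsilon}\log^4(\frac{1}{\epsilon})\big)$, matching the stated claim and contrasting sharply with the extra $\mathcal{O}(1/\epsilon)$ factor incurred by the sub-loops of D-TFBO in \Cref{cor:main_double}.

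I expect the main obstacle to be the rigorous handling of the transcendental inequality $T \geq \frac{C}{\epsilon}\log^4(T)$: one must confirm that the ansatz genuinely satisfies it rather than merely matching orders, which requires controlling the composition $\log^4$ of the lower-order terms $\log(C')$ and $4\log\log(\frac{1}{\epsilon})$ uniformly for all $\epsilon\in(0,1)$. A clean way to make this airtight is to note $\log(T)\leq 2\log(\frac{1}{\epsilon})$ for all sufficiently small $\epsilon$ (absorbing the $\log\log$ and constant terms), so that $\log^4(T)\leq 16\log^4(\frac{1}{\epsilon})$, after which the substitution above closes the argument by a single choice of $C'$.
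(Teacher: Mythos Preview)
Your proposal is correct and follows essentially the same approach as the paper: both start from the $\mathcal{O}(\log^4(T)/T)$ bound of \Cref{thm:main}, plug in the ansatz $T = \Theta\big(\frac{1}{\epsilon}\log^4(\frac{1}{\epsilon})\big)$, and verify it by bounding $\log(T)$ via $\log\log(\frac{1}{\epsilon}) \leq \frac{1}{4}\log(\frac{1}{\epsilon})$ for small $\epsilon$. The paper carries out the substitution with explicit constants (taking $N=12^4$ so that $\log N + 2\log(M/\epsilon)\leq N^{1/4}\log(M/\epsilon)$), whereas you argue at the big-$\mathcal{O}$ level; and your per-iteration $\mathcal{O}(1)$ oracle count matches the paper's one-line observation that ${\rm Gc}(\epsilon)$ is of the same order as $T$.
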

\Cref{thm:main} shows that the proposed \Cref{alg:main} achieves a convergence rate of  $\mathcal{O}({\log^4(T)}/{T})$ and a gradient complexity of $\mathcal{O}\big(\frac{1}{\epsilon}\log^4 (\frac{1}{\epsilon})\big)$, both of which nearly match the results in ~\cite{ji2022will} of the standard well-tuned bilevel optimization methods up to polylogarithmic factors. 

\begin{remark}
Note that the difference of $\frac{1}{\epsilon}$ in gradient complexity between double-loop and single-loop methods has not been observed in previous works on well-tuned bilevel optimization.
This difference stems from the design of the sub-loops. In previous double-loop works, carefully selected stepsizes are used to ensure that the iterates of each sub-loop converge linearly, up to an approximation error caused by the shift in $x$. However, due to the precise control of stepsizes, tuning-free approaches can only guarantee a sub-linear convergence for each sub-loop (as shown in \Cref{prop:subloop}). 
\end{remark}

\vspace{-0.2cm}

\section{Experiments}
\vspace{-0.2cm}
In this section, we evaluate the effectiveness of our proposed algorithm on practical applications including regularization selection, data hyper-cleaning~\cite{franceschi2017forward}, and coreset selection for continual learning~\cite{hao2024bilevel}. Our implementation is based on the benchmark provided in \cite{dagreou2022framework} and \cite{hao2024bilevel}, respectively. {\em Please refer to \Cref{supp:experiments} for more details about practical implementation, experiment configurations, and additional plots.}
\vspace{-0.2cm}
\subsection{Regularization Selection}
\vspace{-0.1cm}
The selection of regularization can be framed as a bilevel optimization problem, where the inner objective focuses on optimizing the model parameters $\theta$ on the training set $\mathcal{S}_T=\{(d_i^{train},y_i^{train})\}_{1\le i\le n}$, while the outer objective aims to determine the best regularization term $\lambda$ on the validation set $\mathcal{S}_V=\{(d_j^{val}, y_j^{val})\}_{1\le j\le m}$. Denote the model parameters by $\theta\in\mathbb{R}^p$ and regularization term by $\lambda\in\mathbb{R}^p$, then the outer and inner problems can be formulated as 
\begin{align*}
    & f(\theta, \lambda) = \frac{1}{m}\sum_{j=1}^m l\big((d_j^{val}, y_j^{val}), \theta\big);\quad  g(\theta, \lambda) = \frac{1}{n} \sum_{i=1}^n l\big((d_i^{train}, y_i^{train}), \theta\big) + \mathcal{R}(\theta, \lambda),
\end{align*}
where the loss $l((d_i,y_i),\theta)=\log(1+\exp(-y_id_i^\top\theta)$, and $\mathcal{R}(\theta, \lambda)=\frac{1}{2}\sum_{k=1}^p \exp(\lambda_k)\theta_k^2$ represents the regularization, where each element $\theta_k$ is regularized with strength $\exp(\lambda_k)$. We compare our proposed algorithm with benchmark bilevel algorithms including AmIGO~\cite{arbel2021amortized}, BSA~\cite{ghadimi2018approximation}, FSLA~\cite{li2022fully}, MRBO~\cite{yang2021provably}, SOBA~\cite{dagreou2022framework}, StocBiO~\cite{ji2021bilevel}, SUSTAIN~\cite{khanduri2021near}, TTSA~\cite{hong2023two}, VRBO~\cite{yang2021provably} on the Covtype dataset. More details are provided in \Cref{supp:experiments}.

\begin{figure}[H]
\centering     
\subfigure[Covtype]{\label{fig:covtype}\includegraphics[width=80mm]{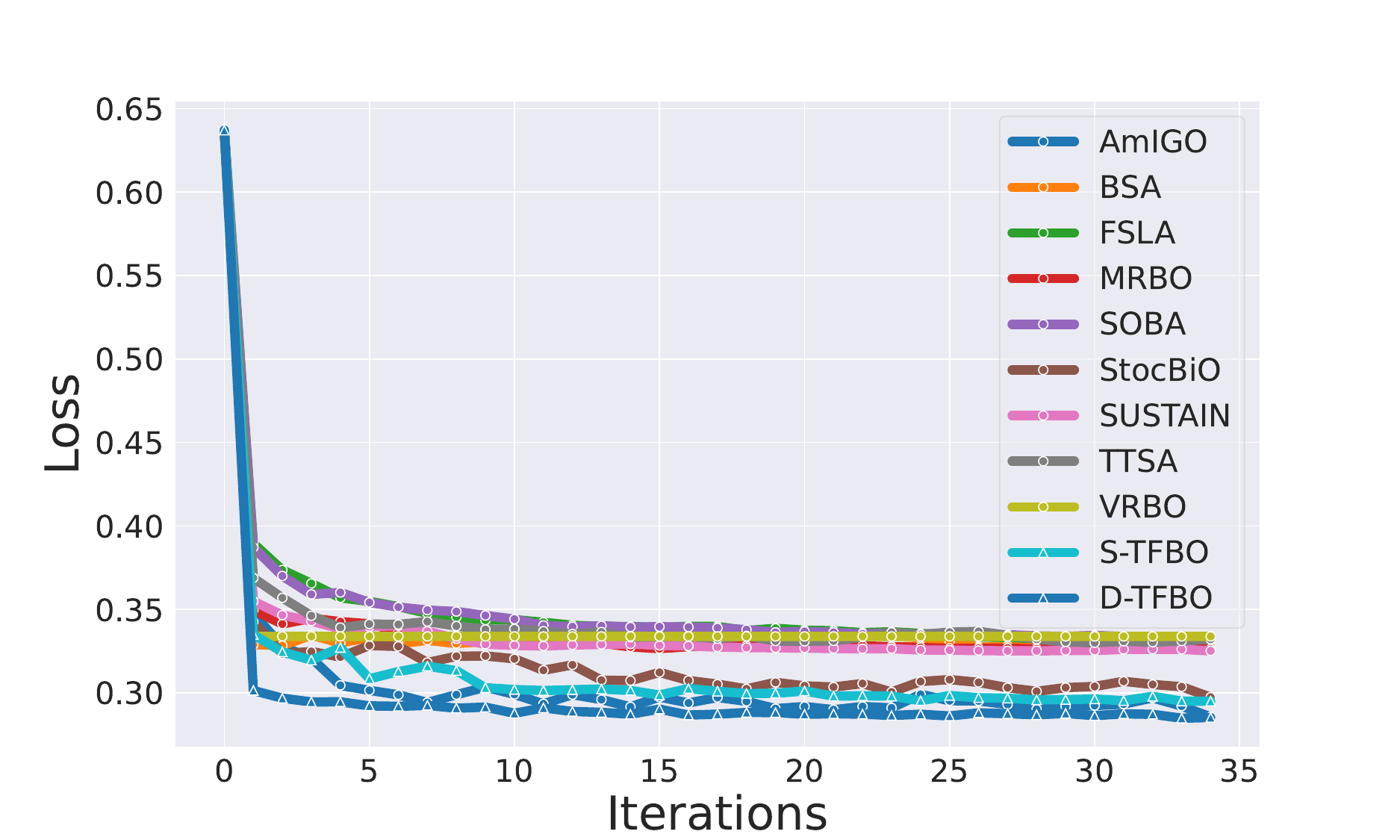}}
\subfigure[ MNIST]{\label{fig:mnist}\includegraphics[width=80mm]{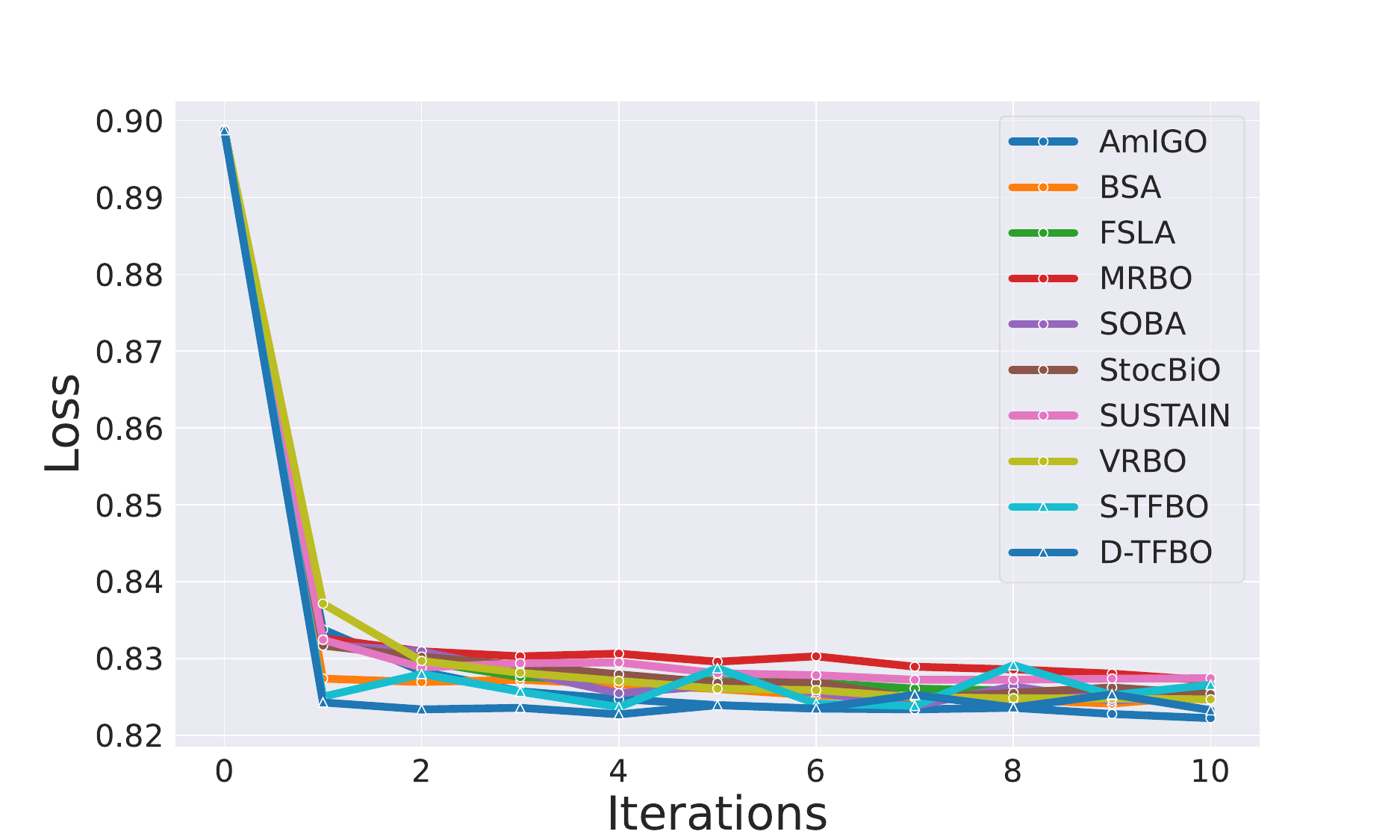}}
\caption{Comparison with other bilevel methods. (a) Regularization selection on Covtype dataset. (b) Data hyper-cleaning on MNIST dataset.}
\label{fig:fig1}
\end{figure}

As shown in \Cref{fig:covtype}, our D-TFBO achieves the fastest convergence rate, while S-TFBO converges slightly more slowly but remains comparable to other well-tuned methods. 

\vspace{-0.2cm}
\subsection{Data Hyper-Cleaning}
\vspace{-0.1cm}
The training set $\mathcal{S}_T=\{(d_i^{train}, y_i^{train})\}_{1\le i\le n}$ have been corrupted in this scenario, where the label of a data sample could be replaced by a random label with a certain probability $p$. It is important to note that we do not have prior knowledge about which data samples have been corrupted. The objective is to develop a model that can effectively fit the corrupted training set while performing well on the clean validation set $\mathcal{S}_V=\{(d_j^{val}, y_j^{val})\}_{1\le j\le m}$. We conduct experiments on the MNIST dataset, where we aim to learn a set of weights $\lambda$, one for each training sample, in addition to the model parameters $\theta$. Hence, the outer and inner problems are
\begin{align*}
    & f(\theta, \lambda) = \frac{1}{m}\sum_{j=1}^m l\big((d_j^{val},y_j^{val}), \theta\big);\quad g(\theta, \lambda) = \frac{1}{n} \sum_{i=1}^n \sigma(\lambda_i) l\big((d_i^{train}, y_i^{train}), \theta\big) + C\|\theta\|^2,
\end{align*}
where $\sigma(\cdot)$ is sigmoid function, $C$ is a regularization constant, and loss function $l((d_i, y_i), \theta)=1/(1+\exp(-y_id_i^\top\theta))$. Ideally, we would like the weights to be 0 for the corrupted sample and 1 for the clean sample. More details can be found in \Cref{supp:experiments}. We compare the performance with other bilevel optimization methods including AmIGO~\cite{arbel2021amortized}, BSA~\cite{ghadimi2018approximation}, FSLA~\cite{li2022fully}, MRBO~\cite{yang2021provably}, SOBA~\cite{dagreou2022framework}, StocBiO~\cite{ji2021bilevel}, SUSTAIN~\cite{khanduri2021near}, VRBO~\cite{yang2021provably}. The results presented in \Cref{fig:mnist} demonstrate that our algorithms achieve a convergence rate  comparable to other baselines.

\vspace{-0.2cm}
\subsection{Coreset Selection for Continual Learning}
\vspace{-0.1cm}
Coreset selection aims to improve training efficiency by selecting a subset of the most informative data samples, which can be used as an approximation of the entire dataset. Thus, the model that minimizes the loss on the coreset can also minimize the loss on the entire dataset. Following the design in \cite{hao2024bilevel}, we apply the proposed algorithms to coreset selection for continual learning. 
The inner problem learns model parameters $\theta$, and the outer problem determines the distribution $\lambda$ ($0\le\lambda_{(i)}\le 1$ and $\|\lambda\|_1=1$) over the entire dataset
\begin{align*}
    f(\theta, \lambda)=\sum_{i=1}^n l_i(\theta) + C \mathcal{R}(\lambda); \quad g(\theta, \lambda)=\sum_{i=1}^n \lambda_{(i)}l_i(\theta), 
\end{align*}
where $n$ is the sample size, $C$ is a constant, $\lambda_{(i)}$ is the $i$-th entry. $\mathcal{R}(\lambda)=-\sum_{i=1}^K\mathbb{E}(\lambda+\delta z)_{[i]}$ denotes the smoothed top-$K$ regularizer, where $\delta$ is a constant and $z\sim\mathcal{N}(0,1)$, $\lambda_{[i]}$ is the $i$-th largest component. The regularizer encourages the distribution to have $K$ non-zero entries, corresponding to the size of the selected coreset. Following \cite{zhou2022probabilistic}, we use the Split CIFAR100 dataset and conduct experiments in the balanced and imbalanced scenarios. We compare the proposed algorithms with various methods, including $k$-means features~\cite{nguyen2017variational}, $k$-means embedding~\cite{sener2017active}, Uniform Sampling, iCaRL~\cite{rebuffi2017icarl}, Grad Matching~\cite{campbell2019automated}, GCR~\cite{tiwari2022gcr}, Greedy Coreset~\cite{borsos2020coresets}, PBCS~\cite{zhou2022probabilistic}, and BCSR~\cite{hao2024bilevel}, with the last three being bilevel optimization-based methods. We evaluate the performance using the average accuracy and forgetting measure across all tasks after learning task $T$. The former is defined as $A_T=\frac{1}{T}\sum_{i=1}^T a_{T,i}$, where $a_{T,i}$ is the test accuracy of the $i$-th task after learning task $T$. The latter is defined as $FGT_T=\frac{1}{T}\sum_{i=1}^T[\max_{j\in 1,\cdots,T-1}(a_{j,i}-a_{T,i})]$. The results are shown in \Cref{tab:tab1}. Each experiment is repeated three times and the average is reported. It can be observed that our D-TFBO achieves the best $FGT_T$ under the balanced setting and the second-best performance under the imbalanced setting. 

\begin{table}[t]
\small
\caption{Results on Split CIFAR100. The best and second best results are in bold and  underlined.}
\label{tab:tab1}
\begin{center}
\begin{tabular}{lcccc}
\toprule
\multirow{2}*{Method} & \multicolumn{2}{c}{Balanced} & \multicolumn{2}{c}{Imbalanced} \\
\cmidrule(lr){2-3}\cmidrule(lr){4-5}
& $A_T$ & $FGT_T$ & $A_T$ & $FGT_T$ \\
\midrule
$k$-means features & 57.82$\pm$0.69 & 0.070$\pm$0.003 & 45.44$\pm$0.76 & 0.037$\pm$0.002 \\
$k$-means embedding & 59.77$\pm$0.24 & 0.061$\pm$0.001 & 43.91$\pm$0.15 &  0.044$\pm$0.001 \\
Uniform Sampling & 58.99$\pm$0.54 & 0.074$\pm$0.004 & 44.73$\pm$0.11 & 0.033$\pm$0.007 \\
iCaRL & \underline{60.74$\pm$0.09} & \underline{0.044$\pm$0.026} & 44.25$\pm$2.04 & 0.042$\pm$0.019 \\
Grad Matching & 59.17$\pm$0.38 & 0.067$\pm$0.003 & 45.44$\pm$0.64 & 0.038$\pm$0.001 \\
GCR & 58.73$\pm$0.43 & 0.073$\pm$0.013 & 44.48$\pm$0.05 &  0.035$\pm$0.005 \\
Greedy Coreset & 59.39$\pm$0.16 & 0.066$\pm$0.017 & 43.80$\pm$0.01 &  0.039$\pm$0.007 \\
PBCS &  55.64$\pm$2.26 & 0.062$\pm$0.001 &  39.87$\pm$1.12 &  0.076$\pm$0.011 \\
BCSR & \textbf{61.60$\pm$0.14} & 0.051$\pm$0.015 & \textbf{47.30$\pm$0.57} & \textbf{0.022$\pm$0.005} \\
\midrule
S-TFBO & 58.90$\pm$0.75 & 0.046$\pm$0.009 & 45.78$\pm$0.70 & 0.036$\pm$0.005 \\
D-TFBO & 59.54$\pm$0.45 & \textbf{0.041$\pm$0.005} & \underline{46.68$\pm$0.72} & \underline{0.029$\pm$0.002} \\
\bottomrule
\end{tabular}
\end{center}
\end{table}

\begin{table}[t]\vspace{-0.2cm}
\small
\caption{Experiment results of sensitivity analysis on Split CIFAR100. The initial values refer to the constant learning rates in BCSR or $\alpha_0$,$\beta_0$,$\gamma_0$ in S-TFBO and D-TFBO.}
\label{tab:tab2}
\begin{center}
\begin{tabular}{lccccc}
\toprule
Method & $initial=2$ & $initial=4$ & $initial=6$ & $initial=8$ & Relative Average Change \\
\midrule
BCSR & 59.42 & 56.25 & 58.75 & 57.55 & 5.8$\%$ \\
S-TFBO & 58.85 & 58.55 & 58.69 & 58.47 & {\bf 0.4$\%$} \\
D-TFBO & 59.71 & 59.62 & 59.11 & 59.08 &  {\bf 0.3$\%$} \\
\bottomrule
\end{tabular}
\end{center}
\vspace{-0.5cm}
\end{table}

\noindent
{\bf Sensitivity analysis w.r.t.~different initial learning rates.} The tuning-free design provides another benefit. The proposed algorithms demonstrate more robustness compared to the \cite{hao2024bilevel}. We conduct a simple sensitivity analysis under the balanced setting, regarding the learning rates in the inner and outer loops. Specifically, we set the initial learning rates in \cite{hao2024bilevel} and $\alpha_0$, $\beta_0$, $\gamma_0$ in S-TFBO and D-TFBO for the inner and outer loops to $\{2,4,6,8\}$, where the original values are set to 5. We run one experiment for each learning rate. Further, we compare the changes in average accuracy $A_T$. We also compute the average and report the relative change compared to the results presented in \Cref{tab:tab1}. 
The code is available at \url{https://github.com/OptMN-Lab/tfbo}.

\vspace{-0.2cm}
\section{Conclusion}
We introduce two fully tuning-free bilevel optimization algorithms, D-TFBO and S-TFBO. Both methods adaptively update stepsizes without requiring prior knowledge of problem parameters, while achieving convergence rates comparable to their well-tuned counterparts. The experimental results show that our tuning-free design performs comparably to existing well-tuned methods and is more robust to initial stepsizes. We anticipate that the proposed algorithms and the developed analysis can be extended to the stochastic setting, 
and the proposed algorithms may be applied to other applications such
as meta-learning, few-shot learning, and fair machine learning.

\bibliography{ref}
\bibliographystyle{abbrv}

\newpage
\appendix
{\huge \bf Supplementary material}

\section{Supplementary Related Work on Bilevel Optimization }
Initially introduced by \cite{bracken1973mathematical}, bilevel optimization has been extensively studied for decades. Early works~\cite{hansen1992new, shi2005extended, gould2016differentiating, sinha2017review} solved the bilevel problem from a constrained optimization perspective. More recently, gradient-based bilevel methods have gained significant attention for their efficiency and effectiveness in addressing machine learning problems. Among them, approaches based on Approximate Implicit Differentiation (AID)~\cite{domke2012generic, liao2018reviving, pedregosa2016hyperparameter, lorraine2020optimizing, grazzi2020iteration, ji2021bilevel, arbel2021amortized, hong2023two} exploit the implicit derivation of the hypergradient, approximating it by solving a linear system.

On the other hand, approaches based on Iterative Differentiation (ITD)~\cite{maclaurin2015gradient, franceschi2017forward, finn2017model, shaban2019truncated, grazzi2020iteration} estimate the hypergradient by employing automatic differentiation, utilizing either forward or reverse mode.

A series of stochastic bilevel approaches has been developed and analyzed recently, utilizing Neumann series~\cite{chen2021single, ji2021bilevel, arbel2021amortized}, recursive momentum~\cite{yang2021provably, huang2021biadam, guo2021randomized}, and variance reduction~\cite{yang2021provably, dagreou2022framework}, etc. For the lower-level problem with multiple solutions, several approaches were proposed based on upper- and lower-level gradient aggregation~\cite{sabach2017first, liu2020generic, li2020improved}, barrier types of regularization~\cite{liu2021value, liu2022bome}, penalty-based formulations~\cite{shen2023penalty}, primal-dual techniques~\cite{sow2022constrained}, and dynamic system-based methods~\cite{liu2021towards}. Another class of approaches formulated the lower-level problem as a value-function-based constraint~\cite{kwon2023fully, wang2023effective} to solve bilevel problems without second-order gradients.

\section{Specifications of Experiments}\label{supp:experiments}
\subsection{Practical Implementation} 
For regularization selection and data hyper-cleaning, we use the benchmark provided in \cite{dagreou2022framework}. For coreset selection, we use the codebase from \cite{hao2024bilevel}. We implement D-TFBO using ``for loops'' as an approximation, since the magnitude of $\|\nabla_v R(x,y,v)\|$ in \Cref{alg:main_double} varies across different experiments. Specifically, the number of loops for updating $y$ and $v$ in regularization selection and data hyper-cleaning are both set to 10, while the numbers of loops for updating $y$ and $v$ in coreset selection are 5 and 3, respectively.

\begin{figure}[H]
\centering     
\subfigure[Covtype]{\label{fig:covtype_supp}\includegraphics[width=80mm]{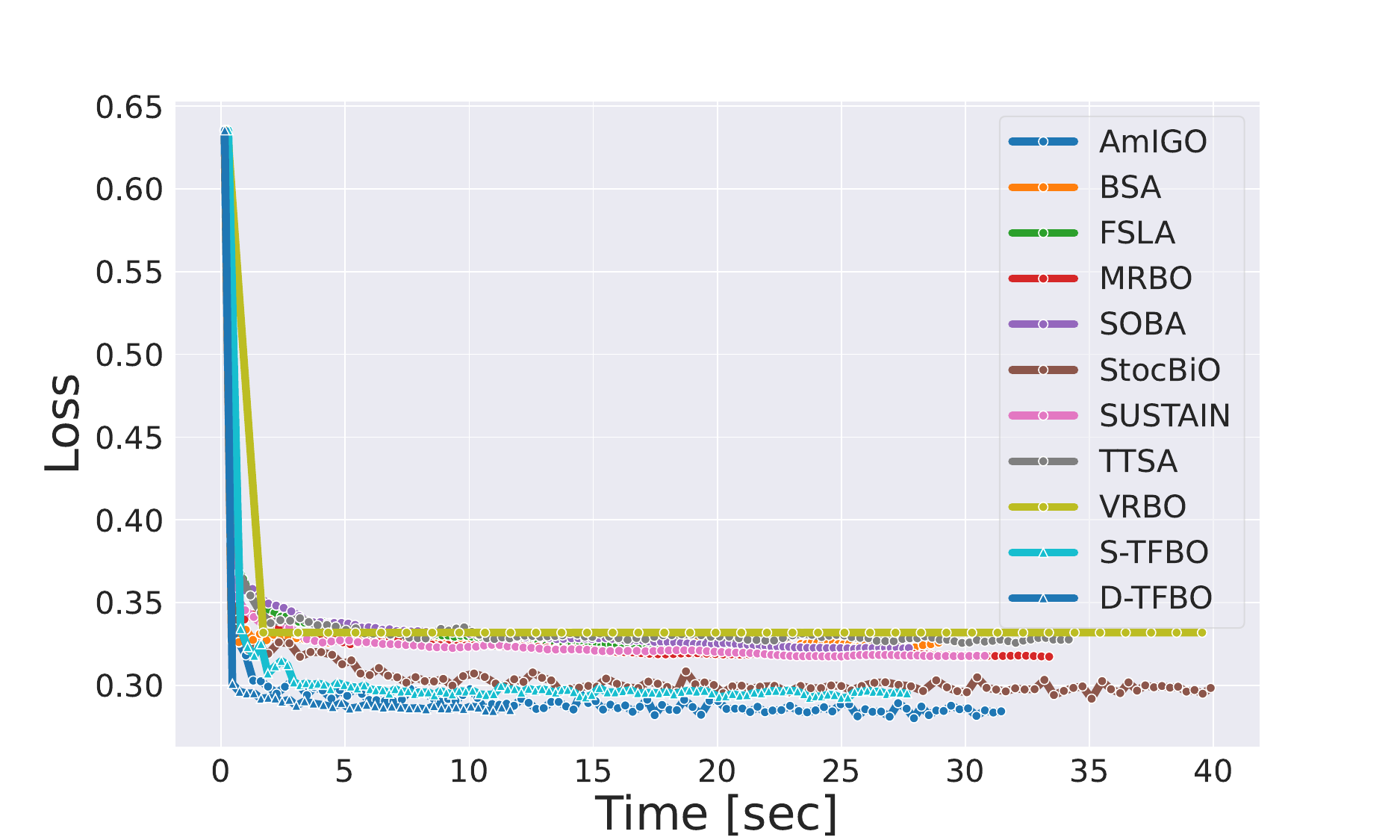}}
\subfigure[ MNIST]{\label{fig:mnist_supp}\includegraphics[width=80mm]{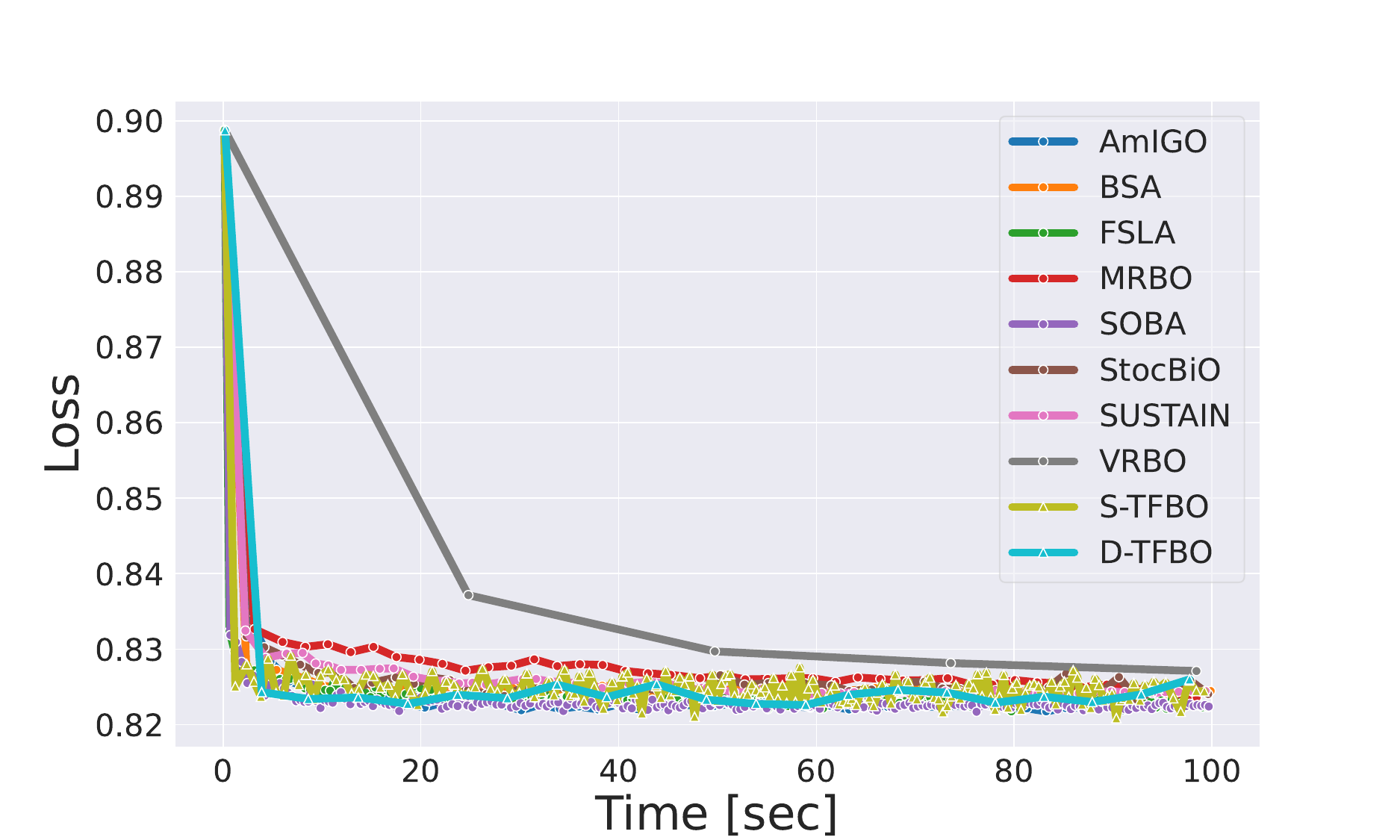}}
\caption{Comparison of running time with other bilevel optimization methods.}
\label{fig:fig2}
\end{figure}

\subsection{Configuration} 
We adopt the default configuration for regularization selection and data hyper-cleaning. The batch size is 64. The maximum iterations are 2048 and 512, respectively. The data corruption ratio in hyper-cleaning is 0.1. For coreset selection, we also use the default configuration except for the leaning rates, due to the tuning-free design. The $\alpha_0$, $\beta_0$, and $\gamma_0$ values are set to 5.

\subsection{Additional Results}\label{appen:add}
For regularization and data hyper-cleaning, we also present the loss curves regarding running time in \Cref{fig:fig2}. Our methods exhibit a faster running time than other baselines on the Covtype dataset.

\section{Proof Sketch}\label{proof_sktech}
The proofs of Propositions \ref{prop:bar}, \ref{prop:subloop}, \ref{prop:objective}, \ref{prop:somebounds} and \ref{prop:alpha} can be found in \Cref{lm:bar}, \ref{lm:subloop}, \ref{lm:objective}, \ref{lm:varphi},\ref{lm:alpha}, respectively. In this section, we present a high level proof sketch that outlines the convergence and gradient complexity analysis of \Cref{alg:main_double} and \Cref{alg:main}, emphasizing the key challenges and our technical innovations.

\noindent
\textbf{Proof sketch of \Cref{alg:main_double}:}

\noindent
\textbf{Step 1:} We first discuss the two-stage framework in our problem in \Cref{lm:bar} and we develop two forms of descent lemma of the objective function in \Cref{lm:objective_double} based on the two stages of $\alpha_t$ in \Cref{lm:bar}. 

\noindent
\textbf{Step 2:} We developed upper bounds of $\alpha_t$ under the two stages in \Cref{lm:bar}. 

\noindent
\textbf{Step 3:} We provide the maximum iteration numbers for the sub-loops approximating $y^*(x_t)$ and $v^*(x_t)$. 

\noindent
\textbf{Step 4:} Combining the results in Step 1 and Step 2, we telescope and take the average of the inequalities in the descent lemma of the objective function, then we obtain the convergence rate. 

\noindent
\textbf{Step 5:} Combining the maximum iteration numbers in Step 3 and convergence rate in Step 4, we obtain the gradient computation complexity to find $\epsilon$-stationary point. 
Then the proof is complete. 

\noindent
\textbf{Proof sketch of \Cref{alg:main}:}

\noindent
\textbf{Step 1:} We first discuss the two-stage framework in our problem in \Cref{lm:bar} and we develop two forms of descent lemma of the objective function in \Cref{lm:objective} based on the two stages of $\alpha_t$ in \Cref{lm:bar}. 

\noindent
\textbf{Step 2:} We develop a rough upper bound of two important components in the descent lemma in \Cref{lm:objective}:
$\sum_{k=k_2}^t \frac{\|\nabla_y g(x_k, y_k)\|^2}{\beta_{k+1}}$ and $\sum_{k=k_3}^t \frac{\|\nabla_v R(x_k, y_k, v_k)\|^2}{\varphi_{k+1}}$, where $k_2$ and $k_3$ represents the second stage in \Cref{lm:bar}. 

\noindent
\textbf{Step 3:} Following the results in Step 2 and the upper bound of $v_t$ in \Cref{lm:vk}, 
we obtain a two-way relationship between $\varphi_{t+1}$ and 
$\sum_{k=0}^t \frac{\|\bar{\nabla}f(x_k, y_k, v_k)\|^2}{\alpha_{k+1}^2}$, which further indicates the logarithmic upper bounds of both terms in \Cref{lm:varphi} and \Cref{lm:somebounds}, respectively. 

\noindent
\textbf{Step 4:} Incorporating the results from Step 3 into the rough bounds from Step 2, we can also obtain the logarithmic upper bounds of $\sum_{k=k_2}^t \frac{\|\nabla_y g(x_k, y_k)\|^2}{\beta_{k+1}}$ and $\sum_{k=k_3}^t \frac{\|\nabla_v R(x_k, y_k, v_k)\|^2}{\varphi_{k+1}}$ in \Cref{lm:somebounds}. 

\noindent
\textbf{Step 5:} We rearrange the terms in \Cref{lm:objective} and 
incorporate in the results in Step 4, we obtain two forms of the upper bound of $\alpha_{t}$ in \Cref{lm:alpha}. 

\noindent
\textbf{Step 6:} 
Combining the results in Steps 3, 4, 5, we telescope and take the average of the inequalities in the descent lemma of the objective function, then we obtain the convergence rate. 

\noindent
\textbf{Step 7:} Without sub-loops, via the convergence rate in Step 6, 
we can directly obtain the gradient computation complexity to find $\epsilon$-stationary point. 
Then the proof is complete. 

\newpage
\clearpage
\section{Proofs of Preliminary Lemmas}
\begin{lemma}[\cite{ward2020adagrad} Lemma 3.2]\label{lm:log}
    For any non-negative $a_1,..., a_T$, and $a_1 \geq 1$, we have
    \begin{align}
        \sum_{l=1}^T\frac{a_l}{\sum_{i=1}^l a_i} \leq \log\Bigg(\sum_{l=1}^T a_l\Bigg)+1.
    \end{align}
\end{lemma}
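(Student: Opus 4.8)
\textbf{Proof proposal for Lemma~\ref{lm:log}.}

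The plan is to follow the standard integral-comparison argument that underlies AdaGrad-type analyses. First I would set $S_l := \sum_{i=1}^l a_i$ for $l = 1, \dots, T$, so that $S_l$ is a nondecreasing sequence with $S_1 = a_1 \geq 1$, and the sum in question becomes $\sum_{l=1}^T (S_l - S_{l-1})/S_l$ with the convention $S_0 = 0$. The key observation is the elementary inequality $(S_l - S_{l-1})/S_l \leq \int_{S_{l-1}}^{S_l} \frac{1}{s}\,ds = \log S_l - \log S_{l-1}$, which holds whenever $S_{l-1} > 0$ because $1/s$ is decreasing on $(S_{l-1}, S_l]$ so its value at the right endpoint $1/S_l$ underestimates the average of $1/s$ over the interval; if $a_l = 0$ the corresponding term is zero and the inequality is trivial.

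Second I would handle the first term $l=1$ separately, since $S_0 = 0$ makes the integral improper. Here I use $a_1/S_1 = a_1/a_1 = 1$ when $a_1 > 0$ (and if $a_1 \geq 1$ strictly there is nothing degenerate), and bound this term by $1$ directly. For $l \geq 2$, provided $S_{l-1} \geq S_1 \geq 1 > 0$, the integral bound above applies cleanly. Summing the telescoping estimate over $l = 2, \dots, T$ gives $\sum_{l=2}^T (S_l - S_{l-1})/S_l \leq \log S_T - \log S_1 \leq \log S_T$, using $S_1 \geq 1$ so that $\log S_1 \geq 0$. Adding back the $l=1$ term bounded by $1$ yields $\sum_{l=1}^T a_l/S_l \leq \log S_T + 1 = \log\big(\sum_{l=1}^T a_l\big) + 1$, as claimed.

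A minor technical point I would address is the case where some $a_l = 0$: then $S_l = S_{l-1}$, the term contributes $0$, and the telescoping still goes through (the log-difference is $0$). Another edge case is $a_1 = 1$ exactly, handled above, and the possibility that all $a_l$ beyond the first vanish, in which case the bound reads $1 \leq \log 1 + 1 = 1$, which holds with equality. The main obstacle, such as it is, is simply being careful about the $S_0 = 0$ boundary term — there is no deep difficulty here, as this is a known lemma from \cite{ward2020adagrad}; the proof is essentially a one-line integral comparison once the first term is peeled off. I would present it concisely, emphasizing the decreasing-integrand comparison $\frac{1}{S_l} \leq \frac{1}{S_l - S_{l-1}}\int_{S_{l-1}}^{S_l}\frac{ds}{s}$ and the telescoping sum.
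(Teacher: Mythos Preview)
Your proof is correct and follows the standard integral-comparison argument. Note that the paper does not actually supply its own proof of this lemma; it simply cites it as Lemma~3.2 of \cite{ward2020adagrad}, and your argument is precisely the one given there.
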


\begin{lemma}\label{lm:basic}
Under Assumptions \ref{as:sc}, \ref{as:lip}, we have basic properties as follows:
\begin{enumerate}[label=(\alph*)]   
\item $\Phi(x)$ is $L_{\Phi}$-smooth w.r.t $x$, where $L_{\Phi} := \Big(L_{f,1}+\frac{L_{g,2}C_{f_y}}{\mu}\Big)\Big(1+\frac{C_{g_{xy}}}{\mu}\Big)^2$;
\item $y^*(x)$ is $L_y$-Lipschitz continuous w.r.t. $x$, where $L_y:=\frac{C_{g_{xy}}}{\mu}$; 
\item the gradient estimator $\bar{\nabla} f(x,y,v)$ is $(L_{g,2}\|v\|+L_{f,1})$ -Lipschitz continuous w.r.t. $(x,y)$, and $L_{g,1}$-Lipschitz continuous w.r.t. $v$;
\item $\bar{\nabla} f(x,y,v)$ can be bounded as $\|\bar{\nabla} f(x,y,v)\| \leq C_{g_{xy}}\|v\|+C_{f_x}$. 
\end{enumerate}
\end{lemma}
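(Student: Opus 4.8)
\textbf{Proof plan for Lemma \ref{lm:basic}.}

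The plan is to establish the four properties as standard consequences of Assumptions \ref{as:sc} and \ref{as:lip}, following the now-classical calculus developed in \cite{ghadimi2018approximation} and \cite{ji2021bilevel}. First I would prove part (b): differentiating the optimality condition $\nabla_y g(x, y^*(x)) = 0$ implicitly gives $\nabla y^*(x) = -[\nabla_y\nabla_y g(x,y^*(x))]^{-1}\nabla_y\nabla_x g(x,y^*(x))$, so $\|\nabla y^*(x)\| \le \|[\nabla_y\nabla_y g]^{-1}\|\,\|\nabla_x\nabla_y g\| \le C_{g_{xy}}/\mu$, where the inverse-Hessian bound $1/\mu$ comes from $\mu$-strong convexity in Assumption \ref{as:sc} and the numerator bound from Remark \ref{as:grad}; integrating along the segment between $x_1$ and $x_2$ yields the Lipschitz constant $L_y = C_{g_{xy}}/\mu$.

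Next I would handle part (a). Using the hypergradient formula $\nabla\Phi(x) = \nabla_x f(x,y^*(x)) - \nabla_x\nabla_y g(x,y^*(x))[\nabla_y\nabla_y g(x,y^*(x))]^{-1}\nabla_y f(x,y^*(x))$, I would bound $\|\nabla\Phi(x_1) - \nabla\Phi(x_2)\|$ by splitting into the difference of the first terms and the difference of the (product of three factors) second terms, adding and subtracting cross terms. Each factor is Lipschitz in its arguments by Assumption \ref{as:lip}; composition with $y^*(\cdot)$ contributes an extra factor of $(1 + L_y)$ from part (b) via the chain rule (bounding $\|(x_1,y^*(x_1)) - (x_2,y^*(x_2))\| \le (1+L_y)\|x_1-x_2\|$); and the inverse Hessian is handled by the identity $A^{-1} - B^{-1} = A^{-1}(B-A)B^{-1}$ together with $\|A^{-1}\|,\|B^{-1}\| \le 1/\mu$ and $\|B - A\| \le L_{g,2}(1+L_y)\|x_1 - x_2\|$. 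Collecting the constants and using $1 + L_y = 1 + C_{g_{xy}}/\mu$ reproduces $L_\Phi = (L_{f,1} + L_{g,2}C_{f_y}/\mu)(1 + C_{g_{xy}}/\mu)^2$; I would be somewhat careful that the exact grouping of constants matches the stated form, since this is the part most prone to bookkeeping slips — expanding the squared factor accounts for one $(1+L_y)$ from differentiating through $y^*$ inside each first-order term and a second from the Hessian-difference term.

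For part (c), I would write $\bar\nabla f(x,y,v) - \bar\nabla f(x',y',v') = [\nabla_x f(x,y) - \nabla_x f(x',y')] - [\nabla_x\nabla_y g(x,y) - \nabla_x\nabla_y g(x',y')]v - \nabla_x\nabla_y g(x',y')[v - v']$; fixing $v = v'$ and using $L_{f,1}$-Lipschitzness of $\nabla f$ and $L_{g,2}$-Lipschitzness of $\nabla_x\nabla_y g$ gives the $(L_{g,2}\|v\| + L_{f,1})$ constant in $(x,y)$, while fixing $(x,y) = (x',y')$ and using $\|\nabla_x\nabla_y g\| \le C_{g_{xy}}$... here I note the stated constant is $L_{g,1}$, which dominates $C_{g_{xy}}$ since $\nabla g$ being $L_{g,1}$-Lipschitz forces $\|\nabla_x\nabla_y g\| \le L_{g,1}$, so either bound is valid and I would simply use $L_{g,1}$. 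Part (d) is immediate from the definition \eqref{eq:estimator}, the triangle inequality, and Remark \ref{as:grad}: $\|\bar\nabla f(x,y,v)\| \le \|\nabla_x f(x,y)\| + \|\nabla_x\nabla_y g(x,y)\|\,\|v\| \le C_{f_x} + C_{g_{xy}}\|v\|$. The only mild obstacle throughout is tracking constants to match the precise forms claimed in the lemma statement; no genuinely new idea is required beyond the standard implicit-differentiation estimates.
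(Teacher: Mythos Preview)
Your proposal is correct and follows essentially the same approach as the paper: the paper simply defers (a) and (b) to \cite{ghadimi2018approximation} (which is exactly the implicit-differentiation calculus you sketch), and for (c) and (d) it carries out the same triangle-inequality decompositions you describe, bounding $\|\nabla_x\nabla_y g\|\le L_{g,1}$ directly from the $L_{g,1}$-Lipschitzness of $\nabla g$ for the $v$-variable part. There is no substantive difference.
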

\begin{proof}
The proof of (a) and (b) can refer to \cite{ghadimi2018approximation}. For (c), under Assumption \ref{as:lip}, we have 
\begin{align}
    \|\bar{\nabla}f(x_1,y_1,v) - \bar{\nabla}f(x_2,y_2,v)\| \leq& \|\nabla_x\nabla_y g(x_1,y_1) - \nabla_x\nabla_y g(x_2,y_2)\|\cdot\|v\| \nonumber \\
    & + \|\nabla_x f(x_1,y_1) - \nabla_x f(x_2,y_2)\|\nonumber \\
    \leq & (L_{g,2}\|v\|+L_{f,1})(\|x_1-x_2\| + \|y_1-y_2\|) \nonumber \\
    \|\bar{\nabla}f(x,y,v_1) - \bar{\nabla}f(x,y,v_2)\| \leq& \|\nabla_x\nabla_y g(x,y)\|\cdot\|v_1-v_2\| \leq L_{g,1}\|v_1-v_2\|. \nonumber
\end{align}
By Assumption \ref{as:lip} and Remark \ref{as:grad}, we can easily prove (d) as 
\begin{align}
    \|\bar{\nabla} f(x,y,v)\| \leq \|\nabla_x\nabla_y g(x,y)\|\cdot\|v\| + \|\nabla_xf(x,y)\| \leq C_{g_{xy}}\|v\|+C_{f_x}. \nonumber
\end{align}
Then the proof is complete.
\end{proof}

\begin{lemma}\label{lm:LS}
Under Assumptions \ref{as:sc}, \ref{as:lip}, we have basic properties of linear system function $R$ in \cref{eq:defR} as follows:
\begin{enumerate}[label=(\alph*)]
\item $R(x,y,v)$ is $\mu$-strongly convex and $C_{g_{yy}}$-smooth w.r.t. $v$;
\item $\nabla_v R(x,y,v)$ is $(L_{g,2}\|v\|+L_{f,1})$-Lipschitz continuous w.r.t. $(x,y)$;
\item $\nabla_v R(x,y,v)$ can be bounded as $\|\nabla_v R(x, y, v)\| \leq C_{g_{yy}}\|v\|+C_{f_y}$; 
\item $v^*(x)$ in \cref{eq:defR} can be bounded as $\|v^*(x)\| \leq \frac{C_{f_y}}{\mu}$, and {\small$\hat{v}^*(x, y):=\argmin_v R(x, y, v)$} can also be bounded as $\|\hat{v}^*(x,y)\| \leq \frac{C_{f_y}}{\mu}$;
\item $v^*(x)$ is $L_v$-Lipschitz continuous w.r.t. $x$ and $\hat{v}^*(x,y)$ is $\bar{L}_v$-Lipschitz continuous w.r.t. $y$, 
where $L_v:= \Big(\frac{L_{f,1}}{\mu}+\frac{C_{f_y}L_{g,2}}{\mu^2}\Big)(1+L_y)$ and $\bar{L}_v := \frac{L_{f,1}}{\mu}+\frac{C_{f_y}L_{g,2}}{\mu^2}$.
\end{enumerate}
\end{lemma}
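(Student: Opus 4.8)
\textbf{Proof plan for Lemma~\ref{lm:LS}.}
The plan is to establish each of the five properties directly from the explicit quadratic form $R(x,y,v)=\frac{1}{2}v^\top\nabla_y\nabla_y g(x,y)v - v^\top\nabla_y f(x,y)$ together with Assumptions~\ref{as:sc} and~\ref{as:lip} and the bounds recorded in Remark~\ref{as:grad}. For part (a), I note that $\nabla_v R(x,y,v)=\nabla_y\nabla_y g(x,y)v - \nabla_y f(x,y)$ and $\nabla_v^2 R(x,y,v)=\nabla_y\nabla_y g(x,y)$; since $g$ is $\mu$-strongly convex in $y$ (Assumption~\ref{as:sc}) the Hessian satisfies $\mu I \preceq \nabla_y\nabla_y g(x,y) \preceq C_{g_{yy}} I$ using Remark~\ref{as:grad}, which is exactly $\mu$-strong convexity and $C_{g_{yy}}$-smoothness of $R$ in $v$. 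For part (b), I would take the difference $\nabla_v R(x_1,y_1,v)-\nabla_v R(x_2,y_2,v) = \big(\nabla_y\nabla_y g(x_1,y_1)-\nabla_y\nabla_y g(x_2,y_2)\big)v - \big(\nabla_y f(x_1,y_1)-\nabla_y f(x_2,y_2)\big)$ and bound it by $(L_{g,2}\|v\|+L_{f,1})(\|x_1-x_2\|+\|y_1-y_2\|)$ using the $L_{g,2}$-Lipschitzness of $\nabla_y\nabla_y g$ and $L_{f,1}$-Lipschitzness of $\nabla f$ from Assumption~\ref{as:lip}; this is the exact analogue of Lemma~\ref{lm:basic}(c).

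For part (c), the triangle inequality on $\nabla_v R(x,y,v)=\nabla_y\nabla_y g(x,y)v-\nabla_y f(x,y)$ together with $\|\nabla_y\nabla_y g(x,y)\|\le C_{g_{yy}}$ and $\|\nabla_y f(x,y)\|\le C_{f_y}$ (both from Remark~\ref{as:grad}) immediately gives $\|\nabla_v R(x,y,v)\|\le C_{g_{yy}}\|v\|+C_{f_y}$. For part (d), the minimizer satisfies the normal equation $\nabla_y\nabla_y g(x,y)\,\hat v^*(x,y)=\nabla_y f(x,y)$, so $\hat v^*(x,y)=[\nabla_y\nabla_y g(x,y)]^{-1}\nabla_y f(x,y)$; since the Hessian is invertible with smallest eigenvalue at least $\mu$, its inverse has operator norm at most $1/\mu$, and combined with $\|\nabla_y f\|\le C_{f_y}$ this yields $\|\hat v^*(x,y)\|\le C_{f_y}/\mu$. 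The bound on $v^*(x)$ is the special case $y=y^*(x)$.

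For part (e), I would use the same normal-equation characterization and differentiate implicitly. For the $y$-Lipschitzness of $\hat v^*$: writing $H(x,y):=\nabla_y\nabla_y g(x,y)$ and $b(x,y):=\nabla_y f(x,y)$, the difference $\hat v^*(x,y_1)-\hat v^*(x,y_2)$ can be estimated by the standard perturbation bound $\|\hat v^*(x,y_1)-\hat v^*(x,y_2)\| \le \|H(x,y_1)^{-1}\|\,\big(\|b(x,y_1)-b(x,y_2)\| + \|H(x,y_1)-H(x,y_2)\|\,\|\hat v^*(x,y_2)\|\big)$, and then invoking $\|H^{-1}\|\le 1/\mu$, $\|b(x,y_1)-b(x,y_2)\|\le L_{f,1}\|y_1-y_2\|$, $\|H(x,y_1)-H(x,y_2)\|\le L_{g,2}\|y_1-y_2\|$, and $\|\hat v^*(x,y_2)\|\le C_{f_y}/\mu$ from part (d), giving the constant $\bar L_v=\frac{L_{f,1}}{\mu}+\frac{C_{f_y}L_{g,2}}{\mu^2}$. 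For the $x$-Lipschitzness of $v^*(x)=\hat v^*(x,y^*(x))$, I would compose the above with the $L_y$-Lipschitzness of $y^*(x)$ (Lemma~\ref{lm:basic}(b)): perturbing $x$ changes both the explicit $x$-argument and the $y^*(x)$-argument, each term picking up a factor bounded by $1+L_y$ after collecting, yielding $L_v=\big(\frac{L_{f,1}}{\mu}+\frac{C_{f_y}L_{g,2}}{\mu^2}\big)(1+L_y)$. The only mildly delicate point is handling the perturbation of a matrix inverse cleanly; I expect part (e) to be the main obstacle, though it is routine once one uses the identity $A^{-1}-B^{-1}=A^{-1}(B-A)B^{-1}$ and the uniform lower eigenvalue bound $\mu$ on both Hessians. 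All other parts are immediate consequences of the quadratic structure of $R$.
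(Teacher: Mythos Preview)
Your proposal is correct and follows essentially the same approach as the paper: parts (a)--(d) match the paper's arguments almost verbatim, and for part (e) your perturbation bound via $H(x,y_1)^{-1}[b(x,y_1)-b(x,y_2)+(H(x,y_2)-H(x,y_1))\hat v^*(x,y_2)]$ is equivalent to the paper's use of the identity $A^{-1}-B^{-1}=A^{-1}(B-A)B^{-1}$ and yields the same constant $\bar L_v$. The only cosmetic difference is that the paper defers the $L_v$ bound on $v^*(x)$ to an external reference, whereas you sketch it directly by composing the joint $(x,y)$-Lipschitzness of $\hat v^*$ with the $L_y$-Lipschitzness of $y^*(x)$; your route is self-contained and equally valid.
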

\begin{proof}
First of all, since $\nabla_v\nabla_v R(x,y,v) = \nabla_y\nabla_y g(x,y)$, we know $\mu I \preceq \nabla_y\nabla_y g(x,y)$. Thus, according to Assumptions\ref{as:sc},\ref{as:lip}, we have 
\begin{align}
    \|\nabla_v\nabla_v R(x,y,v_1) - \nabla_v\nabla_v R(x,y,v_2)\| \leq \|\nabla_y\nabla_y g(x,y)\|\|v_1 - v_2\| \leq C_{g_{yy}}\|v_1 - v_2\|. \nonumber
\end{align}
Then (a) is proved. 
Next, by using Lipschitz continuity in Assumption \ref{as:lip}, we have 
\begin{align}
    \|\nabla_v R(x_1,y_1,v) - \nabla_v R(x_2,y_2,v)\| \leq& \|\nabla_y\nabla_y g(x_1,y_1) - \nabla_y\nabla_y g(x_2,y_2)\|\cdot\|v\| \nonumber \\
    & + \|\nabla_y f(x_1,y_1) - \nabla_y f(x_2,y_2)\|\nonumber \\
    \leq & (L_{g,2}\|v\|+L_{f,1})(\|x_1-x_2\| + \|y_1-y_2\|).  \nonumber 
\end{align}
Then (b) is proved. By Assumption \ref{as:lip}, we can easily prove (c) as 
\begin{align}
    \|\nabla_v R(x, y, v)\| \leq \|\nabla_y\nabla_y g(x,y)\|\cdot\|v\| + \|\nabla_yf(x,y)\| \leq C_{g_{yy}}\|v\|+C_{f_y}. \nonumber
\end{align}
Next, for $\hat{v}^*(x, y)$, we have 
\begin{align}
    \nabla_v R\big(x,y,\hat{v}^*(x, y)\big) = \nabla_y\nabla_y g(x,y)\hat{v}^*(x, y) - \nabla_y f(x,y) = 0, \nonumber
\end{align}
which indicates that 
\begin{align}
    \norm{\hat{v}^*(x, y)} =& \big\|\big[\nabla_y\nabla_yg(x,y)\big]^{-1}\nabla_y f(x,y)\big\|
    \leq \big\|\big[\nabla_y\nabla_yg(x,y)\big]^{-1}\big\|\cdot \|\nabla_y f(x,y))\| \leq \frac{C_{f_y}}{\mu}. \nonumber
\end{align}
Since $v^*(x)$ is a special case as $v^*(x) = \hat{v}^*(x, y^*(x))$, (d) is proved. 
The proof of the first part of (e) can refer to Lemma 4 in \cite{yang2024simfbo};
for the second part, we have
\begingroup
\allowdisplaybreaks
{\small
\begin{align}
    \|\hat{v}^*&(x, y_1) - \hat{v}^*(x, y_2)\| \nonumber \\
    = & \big\|[\nabla_y\nabla_y g(x,y_1)]^{-1}\nabla_y f(x,y_1) - [\nabla_y\nabla_y g(x,y_2)]^{-1}\nabla_y f(x,y_2)\big\| \nonumber \\
    \leq & \big\|[\nabla_y\nabla_y g(x,y_1)]^{-1}\big(\nabla_y f(x,y_1) - \nabla_y f(x,y_2)\big)\big\| \nonumber \\
    & + \big\| \big([\nabla_y\nabla_y g(x,y_1)]^{-1} - [\nabla_y\nabla_y g(x,y_2)]^{-1}\big)\nabla_y f(x,y_2) \big\| \nonumber \\
    \leq & \frac{L_{f,1}}{\mu}\|y_1 - y_2\| + C_{f_y}\big\|\big([\nabla_y\nabla_y g(x,y_1)]^{-1}\big(\nabla_y\nabla_y g(x,y_2) - \nabla_y\nabla_y g(x,y_1)\big)[\nabla_y\nabla_y g(x,y_2)]^{-1}\big)\big\| \nonumber \\
    \leq & \bigg(\frac{L_{f,1}}{\mu} + \frac{C_{f_y}L_{g,2}}{\mu^2}\bigg)\|y_1 - y_2\|. \nonumber 
\end{align}}
\endgroup
Thus, the second part of (e) is proved and the proof of \Cref{lm:LS} is complete. 
\end{proof}

{
\begin{lemma}\label{lm:bar}
Suppose the iteration rounds to update $\{x,y,v\}$ are $\{T_1,T_2,T_3\}$ and $\{\alpha_t, \beta_t, \gamma_t\}$ are generated by \Cref{alg:main_double} or \ref{alg:main}. For any $C_\alpha \geq \alpha_0$, $C_\beta \geq \beta_0$, $C_\gamma \geq \gamma_0$, we have 
\begin{enumerate}[label=(\alph*)]
\item either $\alpha_t \leq C_\alpha$ for any $t \leq T_1$, or $\exists k_1 \leq T_1$ such that $\alpha_{k_1} \leq C_\alpha$, $\alpha_{k_1+1} > C_\alpha$; 
\item either $\beta_t \leq C_\beta$ for any $t \leq T_2$, or $\exists k_2 \leq T_2$ such that $\beta_{k_2} \leq C_\beta$, $\beta_{k_2+1} > C_\beta$; 
\item either $\gamma_t \leq C_\gamma$ for any $t \leq T_3$, or $\exists k_3 \leq T_3$ such that $\gamma_{k_3} \leq C_\gamma$, $\gamma_{k_3+1} > C_\gamma$.
\end{enumerate}
\end{lemma}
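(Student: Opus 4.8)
The plan is to observe that Lemma~\ref{lm:bar} is really just a statement about monotone sequences, and to prove all three parts (a), (b), (c) by the same elementary argument, since $\{\alpha_t\}$, $\{\beta_t\}$, $\{\gamma_t\}$ are each defined by an update of the form ``new square equals old square plus a nonnegative quantity.'' I will write the argument in detail for part (a) and then remark that (b) and (c) follow identically.

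First I would record the monotonicity: from the update $\alpha_{t+1}^2 = \alpha_t^2 + \|\bar\nabla f(x_t,y_t,v_t)\|^2 \geq \alpha_t^2$ (in \Cref{alg:main}; the sub-loop updates in \Cref{alg:main_double} have the same structure), and since all $\alpha_t > 0$, we get $\alpha_{t+1}\geq \alpha_t$ for every $t$, i.e.\ the sequence $\{\alpha_t\}_{t\geq 0}$ is nondecreasing. Next I would do a simple case split on whether the sequence ever exceeds the threshold $C_\alpha$ within the range $0\le t\le T_1$. Case one: $\alpha_t\le C_\alpha$ for all $t\le T_1$ — this is exactly the first alternative in (a), so nothing more is needed. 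Case two: there exists some index $t\le T_1$ with $\alpha_t > C_\alpha$. Let $k_1$ be the \emph{largest} index $j$ with $0\le j\le T_1$ such that $\alpha_j \le C_\alpha$; this index is well defined because the set $\{\,j : 0\le j, \ \alpha_j\le C_\alpha\,\}$ is nonempty (it contains $j=0$, since $C_\alpha\ge \alpha_0$ by hypothesis) and is bounded above by the assumption of Case two (once the nondecreasing sequence passes $C_\alpha$ it never returns, so the set of ``small'' indices is an initial segment). By maximality $k_1 < T_1$ would need checking, but in fact we only need $k_1 \le T_1$ and $\alpha_{k_1+1} > C_\alpha$: the latter holds because $k_1+1$ is not in the set of indices with $\alpha_j\le C_\alpha$, and $k_1+1 \le T_1$ since some index in $[0,T_1]$ exceeds $C_\alpha$ and the sequence is monotone. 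Combining, $\alpha_{k_1}\le C_\alpha$ and $\alpha_{k_1+1} > C_\alpha$, which is the second alternative.

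The cleanest way to phrase the ``initial segment'' point without fussing over indices is: define $k_1 := \max\{\, t \le T_1 : \alpha_t \le C_\alpha\,\}$ whenever this set's complement in $\{0,\dots,T_1\}$ is nonempty; monotonicity gives $\alpha_t \le \alpha_{k_1} \le C_\alpha$ for all $t\le k_1$ and $\alpha_t \ge \alpha_{k_1+1}$ for all $t \ge k_1+1$, and the defining property of $k_1$ forces $\alpha_{k_1+1} > C_\alpha$ (else $k_1+1$ would belong to the set, contradicting maximality, provided $k_1+1 \le T_1$, which holds since the complement is nonempty and lies entirely above $k_1$ by monotonicity). Parts (b) and (c) are proved verbatim with $\beta_t, C_\beta, T_2$ and $\gamma_t, C_\gamma, T_3$ in place of $\alpha_t, C_\alpha, T_1$, using the updates $\beta_{p+1}^2 = \beta_p^2 + \|\nabla_y g\|^2$ and $\gamma_{q+1}^2 = \gamma_q^2 + \|\nabla_v R\|^2$ (or their single-loop analogues), each of which is again nondecreasing.

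There is no real obstacle here; the lemma is a structural observation, not a computation, and the only thing to be careful about is the bookkeeping that a nondecreasing sequence crossing a threshold does so exactly once, so that the ``either $\le C_\alpha$ throughout, or there is a unique last small index'' dichotomy is exhaustive and the crossing index $k_1$ is well defined. I would make sure to explicitly invoke $C_\alpha\ge\alpha_0$ (resp.\ $C_\beta\ge\beta_0$, $C_\gamma\ge\gamma_0$) so that the set of small indices is nonempty and $k_1$ (resp.\ $k_2,k_3$) exists — this hypothesis is exactly what rules out the degenerate possibility that the sequence starts already above the threshold.
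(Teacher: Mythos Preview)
Your proposal is correct and takes essentially the same approach as the paper: both argue by monotonicity of $\{\alpha_t\}$ (from the update $\alpha_{t+1}^2=\alpha_t^2+\|\cdot\|^2$) together with the hypothesis $C_\alpha\ge\alpha_0$, then split into the two cases $\alpha_{T_1}\le C_\alpha$ versus $\alpha_{T_1}>C_\alpha$ (equivalently, your ``some index exceeds $C_\alpha$''), and in the latter case locate the crossing index. Your write-up is simply a more explicit version of the paper's two-line argument.
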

}
\begin{proof}
The proof resembles the Lemma 4.1 in \cite{ward2020adagrad}. Here we only prove part (a), and the other two are similar. 
Note that if $\alpha_{T_1}>C_\alpha$, then there must exist $k_1\leq T_1$ such that $\alpha_{k_1} \leq C_\alpha$, $\alpha_{k_1+1} > C_\alpha$, because $C_\alpha\geq\alpha_0$ and the sequence $\{\alpha_k\}$ is monotonically increasing. Otherwise, we have $\alpha_t \leq \alpha_{T_1} \leq C_\alpha$ for any $t \leq T_1$. 
This completes the proof of part (a). 
\end{proof}

\newpage
\clearpage

\section{Proof of \Cref{thm:main_double}}
We define some notation for convenience before proving \Cref{thm:main_double}. 
\subsection{Notation}
Here, we define the following constants as thresholds for parameters $\beta_{p}$, $\gamma_{q}$, $\alpha_{t}$ in \Cref{alg:main_double} as 
\begin{align}\label{def:C_double}
    C_\alpha := \max\big\{2L_{\Phi}, \alpha_0 \big\}, \quad
    C_\beta := \max\big\{L_{g,1}, \beta_0\big\}, \quad 
    C_\gamma := \max\big\{C_{g_{yy}}, \gamma_0\big\}.
\end{align}

\subsection{Proofs of Preliminary Lemmas}
\begin{lemma}\label{lm:innererror}
    Under Assumptions \ref{as:sc}, \ref{as:lip}, for any $t \geq 0$ in \Cref{alg:main_double}, we have
\begin{align}
    \|y_t^{P_t} - y^*(x_t)\|^2 \leq \frac{\epsilon_y}{\mu^2}, \quad 
    \big\|v_t^{Q_t} - \hat{v}^*(x_t, y_t^{P_t})\big\|^2 \leq \frac{\epsilon_v}{\mu^2}, \nonumber
\end{align}
where $\epsilon_y$ and $\epsilon_v$ are sub-loop stopping criteria in \Cref{alg:main_double}. 
\end{lemma}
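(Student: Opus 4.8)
The plan is to exploit strong convexity together with the stopping criteria of the two sub-loops in \Cref{alg:main_double}. The key observation is that for a $\mu$-strongly convex function $h$ with minimizer $w^*$, one has the standard ``PL-type'' inequality $\mu^2 \|w - w^*\|^2 \leq \|\nabla h(w)\|^2$, which follows from the first-order characterization of strong convexity: $\langle \nabla h(w), w - w^*\rangle \geq \mu\|w-w^*\|^2$, and then Cauchy--Schwarz gives $\|\nabla h(w)\| \cdot \|w - w^*\| \geq \mu\|w-w^*\|^2$, i.e. $\|\nabla h(w)\| \geq \mu\|w-w^*\|$.

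First I would apply this to the inner problem. By Assumption \ref{as:sc}, $g(x_t, \cdot)$ is $\mu$-strongly convex in $y$, with unique minimizer $y^*(x_t)$ and $\nabla_y g(x_t, y^*(x_t)) = 0$. The sub-loop for $y$ terminates precisely when $\|\nabla_y g(x_t, y_t^{P_t})\|^2 \leq \epsilon_y$. Combining with the inequality above applied to $h = g(x_t,\cdot)$ and $w = y_t^{P_t}$, I obtain
\begin{align}
    \mu^2 \|y_t^{P_t} - y^*(x_t)\|^2 \leq \|\nabla_y g(x_t, y_t^{P_t})\|^2 \leq \epsilon_y, \nonumber
\end{align}
which rearranges to the first claimed bound.

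Next I would apply the same argument to the linear-system problem. By \Cref{lm:LS}(a), $R(x_t, y_t^{P_t}, \cdot)$ is $\mu$-strongly convex in $v$, with unique minimizer $\hat v^*(x_t, y_t^{P_t})$ at which $\nabla_v R = 0$. The $v$ sub-loop terminates when $\|\nabla_v R(x_t, y_t^{P_t}, v_t^{Q_t})\|^2 \leq \epsilon_v$. Applying the PL-type inequality to $h = R(x_t, y_t^{P_t}, \cdot)$ and $w = v_t^{Q_t}$ yields
\begin{align}
    \mu^2 \big\|v_t^{Q_t} - \hat v^*(x_t, y_t^{P_t})\big\|^2 \leq \big\|\nabla_v R(x_t, y_t^{P_t}, v_t^{Q_t})\big\|^2 \leq \epsilon_v, \nonumber
\end{align}
which gives the second claimed bound and completes the proof. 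There is essentially no obstacle here: the only things to be careful about are that the sub-loops do terminate (so that $P_t, Q_t$ are well-defined finite quantities — this is guaranteed by \Cref{prop:subloop}/\Cref{lm:subloop}, which can be invoked since it is stated earlier), and that the gradient used in the stopping criterion is exactly the gradient of the strongly convex objective evaluated at the accepted iterate, which it is by construction of \Cref{alg:main_double}.
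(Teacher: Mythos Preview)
Your proof is correct and essentially identical to the paper's: both use the stopping criteria together with the $\mu$-strong-convexity inequality $\mu\|w-w^*\|\le\|\nabla h(w)\|$ (the paper phrases it as $\|\nabla h(w)-\nabla h(w^*)\|\ge\mu\|w-w^*\|$ and then invokes $\nabla h(w^*)=0$). One small caveat: your side remark appealing to \Cref{lm:subloop} for termination is circular, since that lemma's proof uses \Cref{lm:innererror}; the paper simply takes the stopping criteria as given and does not address termination here.
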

\begin{proof}
For the $k_{th}$ iteration, according to the stop criteria of the sub-loops, we have 
\begin{align}
    \|\nabla_y g(x_t, y_t^{P_t})\|^2\leq\epsilon_y, \quad \|\nabla_v R(x_t, y_t^{P_t}, v_t^{Q_t})\|^2\leq\epsilon_v. \nonumber
\end{align}
By using Assumptions \ref{as:sc},\ref{as:lip}, we have
{
\begin{align}
    &\|y_t^{P_t} - y^*(x_t)\|^2 \leq \frac{1}{\mu^2}\big\|\nabla_y g(x_t, y_t^{P_t}) - \nabla_y g\big(x_t, y^*(x_t)\big)\big\|^2 
    \leq \frac{\epsilon_y}{\mu^2}, \nonumber \\
    &\big\|v_t^{Q_t} - \hat{v}^*(x_t, y_t^{P_t})\big\|^2 \leq \frac{1}{\mu^2}\big\|\nabla_v R(x_t, y_t^{P_t}, v_t^{Q_t}) - \nabla_v R\big(x_t, y_t^{P_t}, \hat{v}^*(x_t, y_t^{P_t})\big)\big\|^2 \leq \frac{\epsilon_v}{\mu^2}, \nonumber
\end{align}}
since $\|\nabla_y g(x_t, y^*(x_t))\|^2 = 0$ and $\big\|\nabla_v R\big(x_t, y_t^{P_t}, \hat{v}^*(x_t, y_t^{P_t})\big)\big\|^2 = 0$. 
Thus, the proof is complete. 
\end{proof}

\begin{lemma}\label{lm:boundoff_double}
    Under Assumptions \ref{as:sc}, \ref{as:lip}, for any $t \geq 0$ in Algorithm \ref{alg:main_double}, we have $\|\Bar{\nabla}f(x_t, y_t^{P_t}, v_t^{Q_t})\|^2 \leq C_{f}^2$, where $C_f := \Big(\frac{2C_{g_{xy}}^2\epsilon_v}{\mu^2} + \frac{4C_{g_{xy}}^2C_{f_y}^2}{\mu^2} + 4C_{f_y}^2\Big)^{\frac{1}{2}}$.
\end{lemma}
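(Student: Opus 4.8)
The plan is to reduce the statement to an \emph{a priori} bound on $\|v_t^{Q_t}\|$: once $\|v_t^{Q_t}\|$ is controlled, the claim follows at once from the explicit form $\bar{\nabla} f(x_t,y_t^{P_t},v_t^{Q_t}) = \nabla_x f(x_t,y_t^{P_t}) - \nabla_x\nabla_y g(x_t,y_t^{P_t})\,v_t^{Q_t}$ in \cref{eq:estimator} together with the uniform gradient bounds of \Cref{as:grad}. To bound $\|v_t^{Q_t}\|$, I would write, via the triangle inequality, $\|v_t^{Q_t}\| \le \|v_t^{Q_t} - \hat{v}^*(x_t,y_t^{P_t})\| + \|\hat{v}^*(x_t,y_t^{P_t})\|$. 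The first term is at most $\sqrt{\epsilon_v}/\mu$ by \Cref{lm:innererror}, which itself combines the sub-loop stopping rule $\|\nabla_v R(x_t,y_t^{P_t},v_t^{Q_t})\|^2 \le \epsilon_v$ with the $\mu$-strong convexity of $R$ in $v$ (\Cref{lm:LS}(a)); the second term is at most $C_{f_y}/\mu$ by \Cref{lm:LS}(d).

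With this in hand, I would assemble the bound by splitting the hypergradient estimate around its ``target'' $\hat{v}^*(x_t,y_t^{P_t})$, writing $\bar{\nabla} f(x_t,y_t^{P_t},v_t^{Q_t}) = \bar{\nabla} f\big(x_t,y_t^{P_t},\hat{v}^*(x_t,y_t^{P_t})\big) + \nabla_x\nabla_y g(x_t,y_t^{P_t})\big(\hat{v}^*(x_t,y_t^{P_t}) - v_t^{Q_t}\big)$. The first summand is bounded by $C_{f_x} + C_{g_{xy}}C_{f_y}/\mu$ using $\|\nabla_x f\|\le C_{f_x}$ and $\|\nabla_x\nabla_y g\|\le C_{g_{xy}}$ from \Cref{as:grad} and $\|\hat{v}^*\|\le C_{f_y}/\mu$ from \Cref{lm:LS}(d); the second summand is bounded by $C_{g_{xy}}\sqrt{\epsilon_v}/\mu$ using \Cref{as:grad} and the error estimate just derived. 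Squaring the sum and applying $(a+b)^2\le 2a^2+2b^2$ twice then isolates the $\epsilon_v$-dependent contribution and collapses the remaining terms into the stated constant $C_f$ (with $C_{f_x}$ playing the role of the uniform bound on $\|\nabla_x f\|$). Keeping $\epsilon_v$ as an explicit additive term with a problem-parameter coefficient is the reason for this particular bookkeeping, since $\epsilon_v$ is later set to $1/T$ in \Cref{thm:main_double}.

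I do not expect any genuine obstacle here: the proof is a short chain of triangle inequalities fed into the already-established \Cref{lm:innererror} and \Cref{lm:LS}, with no new idea required. The only care needed is in the order in which the elementary inequality $(a+b)^2\le 2a^2+2b^2$ is applied, and in the choice to split around $\hat{v}^*(x_t,y_t^{P_t})$ rather than bounding $\|v_t^{Q_t}\|$ and then invoking \Cref{lm:basic}(d) directly, so that the coefficients match $C_f$ exactly rather than a slightly looser constant.
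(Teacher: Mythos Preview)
Your proposal is correct and matches the paper's own proof essentially line for line: both split $\bar{\nabla} f(x_t,y_t^{P_t},v_t^{Q_t})$ around $\hat{v}^*(x_t,y_t^{P_t})$, invoke \Cref{lm:innererror} for the $\epsilon_v$-term and \Cref{lm:LS}(d) together with \Cref{as:grad} for the remaining piece, and then apply $(a+b)^2\le 2a^2+2b^2$ twice to arrive at the stated constant. Your parenthetical remark that $C_{f_x}$ is the logically correct bound on $\|\nabla_x f\|$ is also on point; the paper's proof in fact writes $\nabla_y f$ and $C_{f_y}$ where $\nabla_x f$ and $C_{f_x}$ are meant, so the constant $C_f$ in the statement carries this harmless typo.
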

\begin{proof}
For the $k_{th}$ iteration, we have
\begingroup
\allowdisplaybreaks
{
\small
{
\begin{align}
    \|\Bar{\nabla}f(&x_t, y_t^{P_t}, v_t^{Q_t})\|^2 \nonumber \\
    \leq& 2\big\|\Bar{\nabla}f(x_t, y_t^{P_t}, v_t^{Q_t}) - \Bar{\nabla}f\big(x_t, y_t^{P_t}, \hat{v}^*(x_t, y_t^{P_t})\big)\big\|^2 + 2\big\|\Bar{\nabla}f\big(x_t, y_t^{P_t}, \hat{v}^*(x_t, y_t^{P_t})\big)\big\|^2 \nonumber \\
    =& 2\big\|\nabla_x\nabla_y g(x_t, y_t^{P_t})\big(v_t^{Q_t}-\hat{v}^*(x_t, y_t^{P_t})\big)\big\|^2  + 2\|\nabla_x\nabla_y g(x_t, y_t^{P_t})\hat{v}^*(x_t, y_t^{P_t}) - \nabla_y f(x_t, y_t^{P_t})\|^2 \nonumber \\
    \leq & 2\big\|\nabla_x\nabla_y g(x_t, y_t^{P_t})\big\|^2\cdot\|v_t^{Q_t}-\hat{v}^*(x_t, y_t^{P_t})\|^2   + 2\|\nabla_x\nabla_y g(x_t, y_t^{P_t})\hat{v}^*(x_t, y_t^{P_t}) - \nabla_y f(x_t, y_t^{P_t})\|^2 \nonumber \\
    \overset{(a)}{\leq}& \frac{2C_{g_{xy}}^2\epsilon_v}{\mu^2} + \frac{4C_{g_{xy}}^2C_{f_y}^2}{\mu^2} + 4C_{f_y}^2, \nonumber
\end{align}
}
}
\endgroup
where (a) uses Assumption \ref{as:sc}, \Cref{as:grad}, \Cref{lm:LS} and \Cref{lm:innererror}. Then, the proof is complete. 
\end{proof}

\subsection{Descent in Objective Function}
\begin{lemma}\label{lm:objective_double}
    Under Assumptions \ref{as:sc}, \ref{as:lip}, for Algorithm \ref{alg:main_double}, suppose the total iteration number is $T$. 
    No matter $k_1$ in \Cref{lm:bar} exists or not, we always have
\begin{align}\label{eq:objective_title_double1}
    \Phi(x_{t+1}) \leq& \Phi(x_t) - \frac{1}{2\alpha_{t+1}}\|\nabla \Phi(x_t)\|^2 - \frac{1}{2\alpha_{t+1}}\Big(1-\frac{L_{\Phi}}{2\alpha_{t+1}}\Big)\|\bar{\nabla}f(x_t, y_t^{P_t}, v_t^{Q_t})\|^2 + \frac{\epsilon'}{2\alpha_{t+1}}. 
\end{align}
If in addition, $k_1$ in \Cref{lm:bar} exists, then for $t\geq k_1$, we further have 
\begin{align}\label{eq:objective_title_double2}
    \Phi(x_{t+1}) 
    \leq& \Phi(x_t) - \frac{1}{2\alpha_{t+1}}\|\nabla \Phi(x_t)\|^2 - \frac{1}{4\alpha_{t+1}}\|\bar{\nabla}f(x_t, y_t^{P_t}, v_t^{Q_t})\|^2 + \frac{\epsilon'}{2\alpha_{t+1}}, 
\end{align}
where 
$\epsilon' := \frac{\bar{L}^2}{\mu^2}(\epsilon_y + \epsilon_v)$ and $\bar{L}:= \max\big\{2\big(\frac{C_{f_y}^2L_{g,2}^2}{\mu^2} + L_{f,1}^2 + C_{g_{yy}}^2\bar{L}_v^2\big)^{\frac{1}{2}}, \sqrt{2}C_{g_{yy}}\big\}$.  
\end{lemma}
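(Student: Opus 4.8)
\textbf{Proof proposal for Lemma~\ref{lm:objective_double}.}

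The plan is to run a standard descent-lemma argument for the $L_\Phi$-smooth function $\Phi$ applied to the update $x_{t+1} = x_t - \frac{1}{\alpha_{t+1}}\bar\nabla f(x_t, y_t^{P_t}, v_t^{Q_t})$, and then control the gap between the true hypergradient $\nabla\Phi(x_t)$ and the computed estimate $\bar\nabla f(x_t, y_t^{P_t}, v_t^{Q_t})$ using the sub-loop accuracy from \Cref{lm:innererror}. Concretely, $L_\Phi$-smoothness (\Cref{lm:basic}(a)) gives
\begin{align}
    \Phi(x_{t+1}) \leq \Phi(x_t) - \frac{1}{\alpha_{t+1}}\inner{\nabla\Phi(x_t), \bar\nabla f(x_t, y_t^{P_t}, v_t^{Q_t})} + \frac{L_\Phi}{2\alpha_{t+1}^2}\|\bar\nabla f(x_t, y_t^{P_t}, v_t^{Q_t})\|^2. \nonumber
\end{align}
I would then write $\inner{\nabla\Phi, \bar\nabla f} = \frac12\|\nabla\Phi\|^2 + \frac12\|\bar\nabla f\|^2 - \frac12\|\nabla\Phi - \bar\nabla f\|^2$ (the polarization identity), which splits off the two negative terms in the claimed bounds and leaves the error term $\frac{1}{2\alpha_{t+1}}\|\nabla\Phi(x_t) - \bar\nabla f(x_t, y_t^{P_t}, v_t^{Q_t})\|^2$ to be bounded by $\frac{\epsilon'}{2\alpha_{t+1}}$.

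The core estimate is therefore $\|\nabla\Phi(x_t) - \bar\nabla f(x_t, y_t^{P_t}, v_t^{Q_t})\|^2 \leq \frac{\bar L^2}{\mu^2}(\epsilon_y + \epsilon_v)$. I would decompose this via a triangle inequality into (i) the error from using $y_t^{P_t}$ in place of $y^*(x_t)$, and (ii) the error from using $v_t^{Q_t}$ in place of the exact linear-system solution. For (ii), $\bar\nabla f$ is $L_{g,1}$-Lipschitz in $v$ but more usefully one uses the structure $\bar\nabla f(x,y,v) - \bar\nabla f(x,y,\hat v^*(x,y)) = \nabla_x\nabla_y g(x,y)(v - \hat v^*(x,y))$ together with $\|\nabla_x\nabla_y g\|\le C_{g_{xy}}$; actually the cleaner route matching the constant $\sqrt2 C_{g_{yy}}$ in $\bar L$ is to note $\|v_t^{Q_t}-\hat v^*(x_t,y_t^{P_t})\|^2 \le \epsilon_v/\mu^2$ from \Cref{lm:innererror} and bound the relevant matrix norm by $C_{g_{yy}}$. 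For (i), one compares $\nabla\Phi(x_t) = \bar\nabla f(x_t, y^*(x_t), v^*(x_t))$ against $\bar\nabla f(x_t, y_t^{P_t}, \hat v^*(x_t, y_t^{P_t}))$, using the Lipschitz-in-$(x,y)$ constant of $\bar\nabla f$ restricted to the relevant bounded region plus the $\bar L_v$-Lipschitzness of $\hat v^*$ in $y$ from \Cref{lm:LS}(e), all multiplied against $\|y_t^{P_t} - y^*(x_t)\|^2 \le \epsilon_y/\mu^2$. Collecting the constants $L_{f,1}$, $C_{f_y}L_{g,2}/\mu$, $C_{g_{yy}}\bar L_v$, and $C_{g_{yy}}$ and taking the max yields exactly the stated $\bar L$; squaring and summing gives $\epsilon' = \frac{\bar L^2}{\mu^2}(\epsilon_y+\epsilon_v)$.

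Finally, the two forms of the bound come from two ways of handling the $\frac{L_\Phi}{2\alpha_{t+1}^2}\|\bar\nabla f\|^2$ term. In general we keep it, giving \cref{eq:objective_title_double1} with coefficient $\frac{1}{2\alpha_{t+1}}(1 - \frac{L_\Phi}{2\alpha_{t+1}})$ on $\|\bar\nabla f\|^2$. When $k_1$ from \Cref{lm:bar} exists, for $t \ge k_1$ we have $\alpha_{t+1} > C_\alpha \ge 2L_\Phi$ (by the definition \cref{def:C_double}), so $\frac{L_\Phi}{2\alpha_{t+1}} \le \frac14$, hence $1 - \frac{L_\Phi}{2\alpha_{t+1}} \ge \frac34 \ge \frac12$, which upgrades the coefficient to $\frac{1}{4\alpha_{t+1}}$ and yields \cref{eq:objective_title_double2}. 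The main obstacle I anticipate is purely bookkeeping: carefully tracking which bounded region the iterates live in so the Lipschitz constants of $\bar\nabla f$ and $\hat v^*$ apply (this is where \Cref{lm:LS}(d) bounding $\|v^*\|,\|\hat v^*\| \le C_{f_y}/\mu$ and \Cref{lm:boundoff_double} are used), and assembling the four constants into $\bar L$ without slack — everything else is a routine descent-lemma manipulation.
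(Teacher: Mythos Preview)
Your proposal is correct and follows essentially the same route as the paper: apply the $L_\Phi$-descent lemma, rewrite the inner product via the polarization identity, and bound $\|\nabla\Phi(x_t)-\bar\nabla f(x_t,y_t^{P_t},v_t^{Q_t})\|^2$ by splitting through the intermediate point $\hat v^*(x_t,y_t^{P_t})$ using \Cref{lm:LS}(d),(e) and \Cref{lm:innererror}; the paper's only cosmetic difference is that it inserts $v^*(x_t)$ first and then splits $\|v_t^{Q_t}-v^*(x_t)\|^2$ through $\hat v^*(x_t,y_t^{P_t})$, which regroups the same four constants into the same $\bar L$. Your handling of the second inequality via $\alpha_{t+1}>C_\alpha\ge 2L_\Phi$ is exactly what the paper does (and note that \Cref{lm:boundoff_double} is not actually needed here).
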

\begin{proof} 
From \Cref{lm:basic}, we have $\Phi(x)$ is $L_{\Phi}$-smooth. So we can apply the descent lemma to $\Phi$ as 
\begingroup
\allowdisplaybreaks
\begin{align}\label{eq:doubledescent1}
    \Phi(x_{t+1}) \leq& \Phi(x_t) + \langle\nabla\Phi(x_t), x_{t+1} - x_t \rangle + \frac{L_{\Phi}}{2}\|x_{t+1} - x_t\|^2 \nonumber \\
    = & \Phi(x_t) - \frac{1}{\alpha_{t+1} }\big\langle\nabla\Phi(x_t), \bar{\nabla}f\big(x_t, y_t^{P_t}, v_t^{Q_t}\big) \big\rangle + \frac{L_{\Phi}}{2\alpha_{t+1}^2 }\big\|\bar{\nabla}f\big(x_t, y_t^{P_t}, v_t^{Q_t}\big)\big\|^2 \nonumber \\
    = & \Phi(x_t) - \frac{1}{2\alpha_{t+1} }\|\nabla \Phi(x)\|^2 - \frac{1}{2\alpha_{t+1} }\big\|\bar{\nabla}f\big(x_t, y_t^{P_t}, v_t^{Q_t}\big)\big\|^2 \nonumber \\
    & + \frac{1}{2\alpha_{t+1} }\big\|\nabla \Phi(x_t) - \bar{\nabla}f\big(x_t, y_t^{P_t}, v_t^{Q_t}\big)\big\|^2 + \frac{L_{\Phi}}{2\alpha_{t+1}^2 }\big\|\bar{\nabla}f\big(x_t, y_t^{P_t}, v_t^{Q_t}\big)\big\|^2, 
\end{align}
\endgroup
where the approximation error 
\begingroup
\allowdisplaybreaks
{
\small
\begin{align}\label{eq:doubledescent2}
    \big\|\nabla&\Phi(x_t) - \bar{\nabla}f\big(x_t, y_t^{P_t}, v_t^{Q_t}\big)\big\|^2 \nonumber \\
    =& \big\|\bar{\nabla}f\big(x_t, y^*(x_t), v^*(x_t)\big) - \bar{\nabla}f\big(x_t, y_t^{P_t}, v_t^{Q_t}\big)\big\|^2 \nonumber \\
    \leq & 2\big\|\bar{\nabla}f\big(x_t, y^*(x_t), v^*(x_t)\big) - \bar{\nabla}f\big(x_t, y_t^{P_t}, v^*(x_t)\big)\big\|^2 \nonumber \\
    &+ 2\big\|\bar{\nabla}f\big(x_t, y_t^{P_t}, v^*(x_t)\big) - \bar{\nabla}f\big(x_t, y_t^{P_t}, v_t^{Q_t}\big)\big\|^2 \nonumber \\
    \leq & 4\big\|\nabla_y\nabla_y g(x_t, y^*(x_t))v^*(x_t) - \nabla_y\nabla_y g\big(x_t, y_t^{P_t}\big)v^*(x_t)\big\|^2 \nonumber \\
    & + 4\big\|\nabla_y f\big(x_t, y^*(x_t)\big) - \nabla_y f(x_t, y_t^{P_t})\big\|^2 + 2\big\|\nabla_y\nabla_y g(x_t, y_t^{P_t}) \big(v^*(x_t) - v_t^{Q_t}\big)\big\|^2 \nonumber \\
    \overset{(a)}{\leq} & 4\Big(\frac{C_{f_y}^2L_{g,2}^2}{\mu^2} + L_{f,1}^2\Big)\|y_t^{P_t} - y^*(x_t)\|^2 + 2C_{g_{yy}}^2\|v_t^{Q_t} - v^*(x_t)\|^2 \nonumber \\
    \leq & {4\Big(\frac{C_{f_y}^2L_{g,2}^2}{\mu^2} + L_{f,1}^2\Big)\|y_t^{P_t} - y^*(x_t)\|^2 + 4C_{g_{yy}}^2\|v_t^{Q_t} - \hat{v}^*(x_t, y_t^{P_t})\|^2 + 4C_{g_{yy}}^2\|\hat{v}^*(x_t, y_t^{P_t}) - v^*(x_t)\|^2} \nonumber \\
    \overset{(b)}{\leq} & {4\bigg(\frac{C_{f_y}^2L_{g,2}^2}{\mu^2} + L_{f,1}^2 + C_{g_{yy}}^2\bar{L}_v^2\bigg)\|y_t^{P_t} - y^*(x_t)\|^2 + 4C_{g_{yy}}^2\|v_t^{Q_t} - \hat{v}^*(x_t, y_t^{P_t})\|^2} \nonumber \\
    \leq & \bar{L}^2 \big(\|y_t^{P_t} - y^*(x_t)\|^2 + \|v_t^{Q_t} - \hat{v}^*(x_t, y^*(x_t))\|^2\big), 
\end{align}}
\endgroup
where (a) uses Assumption \ref{as:lip}, \Cref{as:grad} and \Cref{lm:LS};
(b) uses $v^*(x_t) = \hat{v}^*\big(x_t, y^*(x_t)\big)$ and \Cref{lm:LS}. 
By using \Cref{lm:innererror}, we have
\begin{align}\label{eq:doubledescent3}
    \big\|\nabla\Phi&(x_t) - \bar{\nabla}f\big(x_t, y_t^{P_t}, v_t^{Q_t}\big)\big\|^2 \leq \frac{\bar{L}^2}{\mu^2}(\epsilon_y + \epsilon_v) =: \epsilon'.
\end{align}
By plugging \cref{eq:doubledescent3} into \cref{eq:doubledescent1}, we obtain \eqref{eq:objective_title_double1}. 

\noindent
Now if in addition, $k_1$ in \Cref{lm:bar} exists, then for $t\geq k_1$, we have $\alpha_{t+1} > C_\alpha\geq 2L_{\Phi}$. From \eqref{eq:objective_title_double1} we can immediately obtain \eqref{eq:objective_title_double2}.
Thus, the proof is complete. 
\end{proof}

\subsection{The bound of $\alpha_t$}
\begin{lemma}\label{lm:alpha_double}
Under Assumptions \ref{as:sc}, \ref{as:lip}, \ref{as:inf}, suppose the number of total iteration rounds in \Cref{alg:main_double} is $T$. 
If there exists $k_1 \leq T$ described in \Cref{lm:bar}, we have 
$$
\left\{
\begin{aligned}
\alpha_{t} \leq & C_\alpha, \quad & t \leq k_1; \\
\alpha_{t} \leq & C_\alpha + 2c_0 + \frac{2t\epsilon'}{\alpha_{0}}, \quad & t \geq k_1, 
\end{aligned}
\right.
$$
where we define
\begin{align}\label{eq:c0_double}
    c_0 := 2\big(\Phi(x_{0}) - \inf_x \Phi(x)\big) + \frac{L_{\Phi}C_\alpha^2}{2\alpha_{0}^2}. 
\end{align}

\noindent
When such $k_1$ does not exist, we have $\alpha_t \leq C_\alpha$ for any $t \leq T$. 
\end{lemma}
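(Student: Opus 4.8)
\textbf{Proof plan for \Cref{lm:alpha_double}.}
The plan is to split into the two cases dictated by \Cref{lm:bar}(a). When $k_1$ does not exist, \Cref{lm:bar}(a) directly gives $\alpha_t \le C_\alpha$ for all $t \le T$, so there is nothing to prove. So assume $k_1 \le T$ exists. For $t \le k_1$ we again appeal to \Cref{lm:bar}(a): $\alpha_t$ is monotonically increasing and $\alpha_{k_1} \le C_\alpha$, hence $\alpha_t \le C_\alpha$ for all $t \le k_1$. The real work is the bound for $t \ge k_1$.

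For the range $t \ge k_1$, the plan is to telescope the tighter descent inequality \eqref{eq:objective_title_double2} from \Cref{lm:objective_double}, which is valid precisely because $\alpha_{t+1} > C_\alpha \ge 2L_\Phi$ once $t \ge k_1$. Summing \eqref{eq:objective_title_double2} over $k = k_1, \dots, t-1$ and using Assumption \ref{as:inf} to lower-bound $\Phi(x_t)$ by $\inf_x \Phi(x)$, I get
\begin{align}
    \sum_{k=k_1}^{t-1} \frac{1}{4\alpha_{k+1}}\big\|\bar{\nabla}f(x_k, y_k^{P_k}, v_k^{Q_k})\big\|^2 \le \Phi(x_{k_1}) - \inf_x\Phi(x) + \sum_{k=k_1}^{t-1}\frac{\epsilon'}{2\alpha_{k+1}}. \nonumber
\end{align}
Since $\alpha_{k+1} \ge \alpha_0$ for every $k$, the error sum on the right is at most $\frac{t\epsilon'}{2\alpha_0}$. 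To control $\Phi(x_{k_1}) - \inf_x\Phi(x)$ without assuming $\Phi(x_{k_1}) \le \Phi(x_0)$, I would instead telescope the \emph{weaker} inequality \eqref{eq:objective_title_double1} over $k = 0, \dots, k_1 - 1$; dropping the negative $\|\nabla\Phi\|^2$ term and bounding $1 - \frac{L_\Phi}{2\alpha_{k+1}} \ge -\frac{L_\Phi}{2\alpha_{k+1}} \ge -\frac{L_\Phi}{2\alpha_0}$ together with $\|\bar\nabla f\|^2 \le C_f^2$ (\Cref{lm:boundoff_double}) — or more cleanly, using that within $k < k_1$ we have $\alpha_{k+1} \le C_\alpha$ so the $x$-update steps are well controlled — yields $\Phi(x_{k_1}) - \inf_x\Phi(x) \le \Phi(x_0) - \inf_x\Phi(x) + \frac{L_\Phi C_\alpha^2}{4\alpha_0^2} + \frac{k_1\epsilon'}{2\alpha_0}$, which is absorbed into the constant $c_0$ defined in \eqref{eq:c0_double} plus an $O(t\epsilon'/\alpha_0)$ term.

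Finally, to convert the bound on $\sum \frac{1}{\alpha_{k+1}}\|\bar\nabla f\|^2$ into a bound on $\alpha_t$ itself, I use the defining recursion $\alpha_{t}^2 = \alpha_{k_1}^2 + \sum_{k=k_1}^{t-1}\|\bar\nabla f(x_k, y_k^{P_k}, v_k^{Q_k})\|^2$. The standard AdaGrad-Norm trick is that $\alpha_{k+1} \le \alpha_t$ for $k < t$, so $\sum_{k=k_1}^{t-1}\|\bar\nabla f\|^2 \le \alpha_t \sum_{k=k_1}^{t-1}\frac{\|\bar\nabla f\|^2}{\alpha_{k+1}}$; combined with $\alpha_{k_1} \le C_\alpha$ this gives a quadratic-in-$\alpha_t$ inequality of the form $\alpha_t^2 \le C_\alpha^2 + \alpha_t\big(2c_0 + \frac{2t\epsilon'}{\alpha_0}\big)$, and solving it (using $\sqrt{a+b}\le\sqrt a+\sqrt b$ and $C_\alpha \le \alpha_t$, or simply that $x^2 \le c + bx \Rightarrow x \le b + \sqrt c$) produces $\alpha_t \le C_\alpha + 2c_0 + \frac{2t\epsilon'}{\alpha_0}$. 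The main obstacle I anticipate is the bookkeeping for the "stage 1" part, i.e.\ bounding $\Phi(x_{k_1})$ cleanly: one must be careful not to circularly assume a bound on $\alpha_t$ while deriving it, and to correctly route the factor-of-$2$ losses from Young's inequality so that the final constant matches the stated $c_0$ in \eqref{eq:c0_double}; the rest is routine telescoping and the AdaGrad-Norm self-bounding argument.
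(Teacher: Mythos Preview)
Your proposal is correct and follows essentially the same two-stage strategy as the paper: telescope \eqref{eq:objective_title_double2} for $k \ge k_1$, control $\Phi(x_{k_1})$ by telescoping \eqref{eq:objective_title_double1} over $k < k_1$ and using $\sum_{k<k_1}\|\bar\nabla f\|^2 \le \alpha_{k_1}^2 \le C_\alpha^2$, and then turn the bound on $\sum_{k\ge k_1}\|\bar\nabla f\|^2/\alpha_{k+1}$ into a bound on $\alpha_t$.

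The only noteworthy difference is the final conversion step. You set up the quadratic inequality $\alpha_t^2 \le C_\alpha^2 + \alpha_t\big(2c_0 + \tfrac{2t\epsilon'}{\alpha_0}\big)$ and solve it. The paper instead uses the simpler identity
\[
\alpha_{k+1} \;=\; \alpha_k + \frac{\|\bar\nabla f(x_k,y_k^{P_k},v_k^{Q_k})\|^2}{\alpha_{k+1}+\alpha_k}
\;\le\; \alpha_k + \frac{\|\bar\nabla f(x_k,y_k^{P_k},v_k^{Q_k})\|^2}{\alpha_{k+1}},
\]
which telescopes directly to $\alpha_{t+1} \le \alpha_{k_1} + \sum_{k=k_1}^{t}\|\bar\nabla f\|^2/\alpha_{k+1} \le C_\alpha + 2c_0 + \tfrac{2(t+1)\epsilon'}{\alpha_0}$ with no quadratic to solve. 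Your route works and yields the same constant, but the paper's telescoping trick is cleaner and avoids the $\sqrt{a+b}\le\sqrt a+\sqrt b$ bookkeeping you flagged as a potential obstacle.
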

\begin{proof}
According to \Cref{lm:bar}, the proof can be split into the following three cases. 

\noindent
{\bf Case 1:} if $\alpha_T \leq C_\alpha$, for any $t < T$, we have the upper bound of $\alpha_{t+1}$ as $\alpha_{t+1} \leq C_\alpha$. 

\noindent
{\bf Case 2:} if $\alpha_T > C_\alpha$, there exists $k_1 \leq T$ described in \Cref{lm:bar}. 
Then we have the upper bound of $\alpha_{t+1}$ as $\alpha_{t+1} \leq C_\alpha$ for any $t < k_1$. 

\noindent
{\bf Case 3:} in the remaining proof, we only consider and explore the case $k_1 \leq t \leq T$ when $\alpha_T > C_\alpha$.

\noindent
From \Cref{lm:objective_double}, for $k \geq k_1$, we have
\begin{align}
    \Phi(x_{k+1}) 
    \leq& \Phi(x_k) - \frac{1}{2\alpha_{k+1}}\|\nabla \Phi(x_k)\|^2 - \frac{1}{4\alpha_{k+1}}\|\bar{\nabla}f(x_k, y_k^{P_k}, v_k^{Q_k})\|^2 + \frac{\epsilon'}{2\alpha_{k+1}}, \nonumber
\end{align}
which indicates that 
\begin{align}
    \frac{\|\bar{\nabla}f(x_k, y_k^{P_k}, v_k^{Q_k})\|^2}{\alpha_{k+1}} \leq 4\big(\Phi(x_k) - \Phi(x_{k+1})\big) + \frac{2\epsilon'}{\alpha_{k+1}}. \nonumber
\end{align}
By taking summation over $k=k_1,\ldots,t$, we have
\begin{align}\label{eq:double_alpha1}
    \sum_{k=k_1}^{t}\frac{\|\bar{\nabla}f(x_k, y_k^{P_k}, v_k^{Q_k})\|^2}{\alpha_{k+1}} \leq& 4\sum_{k={k_1}}^{t}\big(\Phi(x_k) - \Phi(x_{k+1})\big) + \sum_{k={k_1}}^{t}\frac{2\epsilon'}{\alpha_{k+1}} \nonumber \\
    = & 4\big(\Phi(x_{k_1}) - \Phi(x_{t+1})\big) + \sum_{k={k_1}}^{t}\frac{2\epsilon'}{\alpha_{k+1}}.
\end{align}
For $\Phi(x_{k_1})$, by telescoping \eqref{eq:objective_title_double1}, we get
\begin{align}\label{eq:double_alpha2}
    \Phi(x_{k_1}) \leq& \Phi(x_0) + \sum_{k=0}^{k_1-1}\frac{L_{\Phi}}{4\alpha_{k+1}^2}\|\bar{\nabla}f(x_k, y_k^{P_k}, v_k^{Q_k})\|^2 + \sum_{k=0}^{k_1-1}\frac{\epsilon'}{2\alpha_{k+1}}.
\end{align}
Plugging \cref{eq:double_alpha2} into \cref{eq:double_alpha1}, we obtain
\begingroup
\allowdisplaybreaks
{\small
\begin{align}\label{eq:double_alpha3}
    \sum_{k=k_1}^{t}\frac{\|\bar{\nabla}f(x_k, y_k^{P_k}, v_k^{Q_k})\|^2}{\alpha_{k+1}} 
    \leq & 4\big(\Phi(x_{0}) - \inf_x \Phi(x)\big) + \sum_{k=0}^{k_1-1}\frac{L_{\Phi}}{\alpha_{k+1}^2}\|\bar{\nabla}f(x_k, y_k^{P_k}, v_k^{Q_k})\|^2 + \sum_{k=0}^{t}\frac{2\epsilon'}{\alpha_{k+1}} \nonumber \\
    \leq & 
    4\big(\Phi(x_{0}) - \inf_x \Phi(x)\big) + \frac{L_{\Phi}\sum_{k=0}^{k_1-1}\|\bar{\nabla}f(x_k, y_k^{P_k}, v_k^{Q_k})\|^2}{\alpha_{0}^2} + \sum_{k=0}^{t}\frac{2\epsilon'}{\alpha_{k+1}} \nonumber \\
    \leq & 4\big(\Phi(x_{0}) - \inf_x \Phi(x)\big) + \frac{L_{\Phi}\alpha_{k_1}^2}{\alpha_{0}^2} + \frac{2(t+1)\epsilon'}{\alpha_{0}} \nonumber \\
    \leq & 4\big(\Phi(x_{0}) - \inf_x \Phi(x)\big) + \frac{L_{\Phi}C_\alpha^2}{\alpha_{0}^2} + \frac{2(t+1)\epsilon'}{\alpha_{0}}. 
\end{align}}
\endgroup

\noindent
Inspired by \cite{ward2020adagrad} and using telescoping, we have 
\begingroup
\allowdisplaybreaks
\begin{align}
    \alpha_{t+1} =& \alpha_t + \frac{\|\bar{\nabla}f(x_t, y_t^{P_t}, v_t^{Q_t})\|^2}{\alpha_{t+1} + \alpha_t} \nonumber \\
    \leq& \alpha_t + \frac{\|\bar{\nabla}f(x_t, y_t^{P_t}, v_t^{Q_t})\|^2}{\alpha_{t+1} } \nonumber \\
    \leq& \alpha_{k_1} + \sum_{k=k_1}^{t}\frac{\|\bar{\nabla}f(x_k, y_k^{P_k}, v_k^{Q_k})\|^2}{\alpha_{k+1}} \nonumber \\
    \leq & C_\alpha + 4\big(\Phi(x_{0}) - \inf_x \Phi(x)\big) + \frac{L_{\Phi}C_\alpha^2}{\alpha_{0}^2} + \frac{2(t+1)\epsilon'}{\alpha_{0}}. \nonumber
\end{align}
\endgroup
Thus, the proof is complete. 
\end{proof}

\subsection{Convergence Analysis of Sub-loops}
\begin{lemma}\label{lm:subloop}
    Recall that for the $t_{th}$ iteration, the sub-loops in \Cref{alg:main_double} aim to find $y_t^{P_t}$ and $v_t^{Q_t}$ such that $\|\nabla_y g(x_t, y_t^{P_t})\|^2 \leq \epsilon_y$ and $\|\nabla_v R(x_t, y_t^{P_t}, v_t^{Q_t})\|^2 \leq \epsilon_v$. 
Here we prove that 
\begin{subequations}
\begin{align}
    &P_t \leq P' := \frac{\log(C_\beta^2/\beta_0^2)}{\log(1+\epsilon_y/C_\beta^2)} + \frac{\beta_{\text{max}}}{\mu}\log \big(\frac{L_{g,1}^2(\beta_{\text{max}}-C_\beta)}{\epsilon_y}\big), \label{lm:subloop-bound-P} \\
    &Q_t \leq Q' := \frac{\log(C_\gamma^2/\gamma_0^2)}{\log(1+\epsilon_v/C_\gamma^2)} + \frac{\gamma_{\text{max}}}{\mu}\log\big(\frac{C_{g_{yy}}^2(\gamma_{\text{max}}-C_\gamma)}{\epsilon_v}\big),\label{lm:subloop-bound-Q}
\end{align}
\end{subequations}
where $\beta_{\text{max}} := C_\beta + L_{g,1}\big(\frac{2\epsilon_y}{\mu^2} + \frac{2C_{g_{xy}}^2C_f^2}{\mu^2\alpha_0^2} + 2\log(C_\beta/\beta_0) + 1\big)$ and $\gamma_{\text{max}} := C_\gamma + C_{g_{yy}}\big(\frac{2\epsilon_y}{\mu^2} + \frac{8C_{f_y}^2}{\mu^2} + 2\log(C_\gamma/\gamma_0) + 1\big)$. 
\end{lemma}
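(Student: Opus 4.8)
The plan is to run the two-stage AdaGrad-Norm analysis for strongly-convex smooth functions (as in \cite{xie2020linear}) separately on the two sub-loops of iteration $t$: for the $y$-loop the relevant function is $g(x_t,\cdot)$, which is $\mu$-strongly convex and $L_{g,1}$-smooth, and for the $v$-loop it is $R(x_t,y_t^{P_t},\cdot)$, which by \Cref{lm:LS}(a) is $\mu$-strongly convex and $C_{g_{yy}}$-smooth. Fix $t$, write $g^p:=g(x_t,y_t^p)$, $\nabla g^p:=\nabla_y g(x_t,y_t^p)$, $g^*:=\min_y g(x_t,y)$, and apply \Cref{lm:bar}(b) with the threshold $C_\beta=\max\{L_{g,1},\beta_0\}$ of \cref{def:C_double}; this splits the $y$-loop into Stage~1 (on which $\beta_p\le C_\beta$) and Stage~2 (on which $\beta_p>C_\beta\ge L_{g,1}$), with transition index $k_2$. \emph{First} I would bound the Stage-1 length: while the loop has not stopped, $\|\nabla g^p\|^2>\epsilon_y$, hence $\beta_{p+1}^2\ge\beta_p^2+\epsilon_y\ge(1+\epsilon_y/C_\beta^2)\beta_p^2$, which iterates to $\beta_p^2\ge\beta_0^2(1+\epsilon_y/C_\beta^2)^p$; since Stage~1 requires $\beta_p\le C_\beta$, it ends (or the loop terminates) after at most $\log(C_\beta^2/\beta_0^2)/\log(1+\epsilon_y/C_\beta^2)$ steps, the first term in the bound on $P_t$.

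\emph{Second}, I would establish the uniform upper bound $\beta_{\text{max}}$ on the whole sequence $\{\beta_p\}$. For $p$ in Stage~1 the descent lemma only gives $g^{p+1}\le g^p+\tfrac{L_{g,1}}{2\beta_{p+1}^2}\|\nabla g^p\|^2$, and summing while bounding $\sum_{p<k_2}\|\nabla g^p\|^2/\beta_{p+1}^2\le 2\log(C_\beta/\beta_0)+1$ (integral comparison, or \Cref{lm:log}) controls the Stage-1 growth of $g$ by $\tfrac{L_{g,1}}{2}(2\log(C_\beta/\beta_0)+1)$. For $p$ in Stage~2, $\beta_{p+1}\ge L_{g,1}$ upgrades the descent lemma to $g^{p+1}\le g^p-\tfrac{1}{2\beta_{p+1}}\|\nabla g^p\|^2$; telescoping gives $\sum_{p\ge k_2}\|\nabla g^p\|^2/\beta_{p+1}\le 2(g^{k_2}-g^*)$, and since $\beta_{p+1}-\beta_p\le\|\nabla g^p\|^2/\beta_{p+1}$ the Stage-2 growth of $\beta$ is at most $2(g^{k_2}-g^*)$. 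Combining this with $\beta_{k_2}\le C_\beta$, with $g^0-g^*\le\tfrac{L_{g,1}}2\|y_t^0-y^*(x_t)\|^2$, and with the warm-start estimate $\|y_t^0-y^*(x_t)\|^2\le 2\|y_{t-1}^{P_{t-1}}-y^*(x_{t-1})\|^2+2L_y^2\|x_t-x_{t-1}\|^2\le\tfrac{2\epsilon_y}{\mu^2}+\tfrac{2C_{g_{xy}}^2C_f^2}{\mu^2\alpha_0^2}$ (using \Cref{lm:innererror} at iteration $t-1$, \Cref{lm:basic}(b), \Cref{lm:boundoff_double}, and $\alpha_t\ge\alpha_0$) yields exactly the stated $\beta_{\text{max}}$. \emph{The main obstacle is avoiding circularity here}: the bound on $\beta_{\text{max}}$ is what controls the Stage-2 contraction rate and hence $P_t$, so it must be shown to depend only on $g^{k_2}-g^*$, a quantity frozen once Stage~1 ends, and not on the (a priori unknown) loop length; it is the telescoping Stage-2 descent inequality that makes this possible.

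\emph{Third}, I would bound the Stage-2 length. There $1/\beta_{p+1}\le 1/L_{g,1}$, so gradient descent on the $\mu$-strongly-convex $L_{g,1}$-smooth $g(x_t,\cdot)$ contracts, $\|y_t^{p+1}-y^*(x_t)\|^2\le(1-\mu/\beta_{p+1})\|y_t^p-y^*(x_t)\|^2\le(1-\mu/\beta_{\text{max}})\|y_t^p-y^*(x_t)\|^2$. Iterating from $k_2$ and using $\|\nabla g^p\|^2\le L_{g,1}^2\|y_t^p-y^*(x_t)\|^2$, the stopping rule $\|\nabla g^p\|^2\le\epsilon_y$ holds once $(1-\mu/\beta_{\text{max}})^jL_{g,1}^2\|y_t^{k_2}-y^*(x_t)\|^2\le\epsilon_y$; with $1-\mu/\beta_{\text{max}}\le e^{-\mu/\beta_{\text{max}}}$ and $\|y_t^{k_2}-y^*(x_t)\|^2\le\tfrac2\mu(g^{k_2}-g^*)\le(\beta_{\text{max}}-C_\beta)/\mu$ from the previous step, this happens within $\tfrac{\beta_{\text{max}}}{\mu}\log\!\big(L_{g,1}^2(\beta_{\text{max}}-C_\beta)/\epsilon_y\big)$ steps. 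Adding the two stage lengths gives \eqref{lm:subloop-bound-P}.

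\emph{Finally}, I would obtain \eqref{lm:subloop-bound-Q} for $Q_t$ by rerunning the same three steps with $g(x_t,\cdot)$ replaced by $R(x_t,y_t^{P_t},\cdot)$, $L_{g,1}$ by $C_{g_{yy}}$, $(\beta_p,C_\beta,k_2)$ by $(\gamma_q,C_\gamma,k_3)$, $y^*(x_t)$ by $\hat{v}^*(x_t,y_t^{P_t})$, and $\epsilon_y$ by $\epsilon_v$. The only genuinely new ingredient is a $t$-uniform bound on the warm-start error $\|v_t^0-\hat{v}^*(x_t,y_t^{P_t})\|^2$, which comes from $\|\hat{v}^*(\cdot,\cdot)\|\le C_{f_y}/\mu$ (\Cref{lm:LS}(d)), \Cref{lm:innererror} at iteration $t-1$, and the Lipschitzness of $\hat{v}^*$ in $(x,y)$ (\Cref{lm:LS}(e)); tracking these gives the constant $\tfrac{2\epsilon_y}{\mu^2}+\tfrac{8C_{f_y}^2}{\mu^2}$ inside $\gamma_{\text{max}}$ and completes the proof.
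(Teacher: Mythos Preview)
Your plan is correct and follows the paper's two-stage template almost exactly: geometric growth of $\beta_p$ bounds Stage~1, a telescoping argument bounds $\beta_{\max}$, and Stage~2 contracts at rate $1-\mu/\beta_{\max}$. The one substantive difference is that you route everything through the function gap $g^{k_2}-g^*$, whereas the paper stays in iterate space. Specifically, the paper shows directly via co-coercivity that
\[
\|y_t^{k_2}-y^*(x_t)\|^2 \;\le\; \|y_t^{0}-y^*(x_t)\|^2 + \sum_{p<k_2}\frac{\|\nabla g^p\|^2}{\beta_{p+1}^2}\;\le\;\frac{\beta_{\max}-C_\beta}{L_{g,1}},
\]
and separately that $\sum_{p\ge k_2}\|\nabla g^p\|^2/\beta_{p+1}\le L_{g,1}\|y_t^{k_2}-y^*(x_t)\|^2$, again by co-coercivity rather than the function-value descent lemma. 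Your route through strong convexity gives only $\|y_t^{k_2}-y^*(x_t)\|^2\le\tfrac{2}{\mu}(g^{k_2}-g^*)\le(\beta_{\max}-C_\beta)/\mu$, which is looser by a factor $L_{g,1}/\mu$ and would add an extra $\tfrac{\beta_{\max}}{\mu}\log(L_{g,1}/\mu)$ to the Stage-2 length; to recover the \emph{exact} constant in \eqref{lm:subloop-bound-P} you need the iterate-level bound. For the $v$-loop, the paper does not invoke the Lipschitzness of $\hat v^*$ you mention (that would require controlling $\|y_{t-1}^{P_{t-1}}-y_t^{P_t}\|$, which is not readily available); instead it uses the crude triangle inequality $\|\hat v^*(x_{t-1},y_{t-1}^{P_{t-1}})-\hat v^*(x_t,y_t^{P_t})\|^2\le 2\|\hat v^*(x_{t-1},\cdot)\|^2+2\|\hat v^*(x_t,\cdot)\|^2\le 4C_{f_y}^2/\mu^2$ from \Cref{lm:LS}(d), which is what produces the $8C_{f_y}^2/\mu^2$ in $\gamma_{\max}$.
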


\begin{proof}
The proof is split into the following two parts. 

\noindent
{\bf Part I: maximum number for convergence of $g(x_t, y_t^{P_t})$.} 

\noindent
Inspired by \cite{xie2020linear}, we split the analysis into the following two cases. 

\noindent
{\bf Case 1: $k_2$ does not exist before we find $P_t$.} This indicates $\beta_{P_t} < C_\beta$. 
Referring to Lemma 2 in \cite{xie2020linear}, we have $P_t < \frac{\log(C_\beta^2/\beta_0^2)}{\log(1+\epsilon_y/C_\beta^2)}$ and therefore the desired upper bound for $P_t$ holds. This can be proved as follows. If $P_t \geq \frac{\log(C_\beta^2/\beta_0^2)}{\log(1+\epsilon_y/C_\beta^2)}$, we have the following result. 
\begingroup
\allowdisplaybreaks
\begin{align}\label{eq:k_2bound1}
    \beta_{P_t}^2 =& \beta_{P_t-1}^2 + \|\nabla_y g(x_t,y_t^{P_t-1})\|^2 \nonumber \\
    =& \beta_{P_t-1}^2\Big(1+\frac{\|\nabla_y g(x_t,y_t^{P_t-1})\|^2}{\beta_{P_t-1}^2}\Big) \nonumber \\
    \geq& \beta_0^2 \prod_{p=0}^{P_t-1}\Big(1+\frac{\|\nabla_y g(x_t,y_t^{p})\|^2}{\beta_{p}^2}\Big) \nonumber \\
    \geq& \beta_0^2 \Big(1+\frac{\epsilon_y}{C_\beta^2}\Big)^{P_t} \geq C_\beta^2. 
\end{align}
\endgroup
This contradicts $\beta_{P_t} < C_\beta$. 

\noindent
{\bf Case 2: $k_2$ exists and $P_t \geq k_2$. } Here we have $\beta_{k_2} \leq C_\beta$ and $\beta_{k_2+1} > C_\beta$. 

\noindent
{\bf Firstly}, we prove $k_2 \leq \frac{\log(C_\beta^2/\beta_0^2)}{\log(1+\epsilon_y/C_\beta^2)}$. 
Similar to {\bf Case 1}, if $k_2 > \frac{\log(C_\beta^2/\beta_0^2)}{\log(1+\epsilon_y/C_\beta^2)}$, following \cref{eq:k_2bound1}{by replacing $P_t$ with $k_2$}, we have 
\begin{align}
    \beta_{k_2}^2 \geq \beta_0^2 \Big(1+\frac{\epsilon_y}{C_\beta^2}\Big)^{k_2} > C_\beta^2, \nonumber
\end{align}
which contradicts $\beta_{k_2} \leq C_\beta$.

\noindent
{\bf Secondly}, referring to Lemma 3 in \cite{xie2020linear}, we have the bound of $\|y_t^{k_2} - y^*(x_t)\|^2$ as 
\begingroup
\allowdisplaybreaks
\begin{align}\label{eq:y_k2_doube1}
    \|y_t^{k_2}& - y^*(x_t)\|^2 \nonumber \\
    =& \bigg\|y_t^{k_2-1} - \frac{\nabla_y g(x_t,y_t^{k_2-1})}{\beta_{k_2}} - y^*(x_t)\bigg\|^2 \nonumber \\
    =& \|y_t^{k_2-1} - y^*(x_t)\|^2 + \bigg\|\frac{\nabla_y g(x_t,y_t^{k_2})}{\beta_{k_2}}\bigg\|^2 - 2\bigg\langle y_t^{k_2-1} - y^*(x_t), \frac{\nabla_y g(x_t,y_t^{k_2-1})}{\beta_{k_2}}\bigg\rangle \nonumber \\
    \overset{(a)}{\leq}& \|y_t^{k_2-1} - y^*(x_t)\|^2 + \bigg\|\frac{\nabla_y g(x_t,y_t^{k_2-1})}{\beta_{k_2}}\bigg\|^2 - \frac{2}{\beta_{k_2}L_{g,1}}\big\| \nabla_y g(x_t,y_t^{k_2-1}) - \nabla_y g\big(x_t,y^*(x_t)\big)\big\|^2 \nonumber \\
    \leq & \|y_t^{k_2-1} - y^*(x_t)\|^2 + \frac{\|\nabla_y g(x_t,y_t^{k_2-1})\|^2}{\beta_{k_2}^2} \nonumber \\
    \leq & \|y_t^{0} - y^*(x_t)\|^2 + \sum_{p=0}^{k_2-1}\frac{\|\nabla_y g(x_t,y_t^{p})\|^2}{\beta_{p+1}^2} \nonumber \\
    \overset{(b)}{\leq}& \|y_{t-1}^{P_{t-1}} - y^*(x_t)\|^2 + \sum_{p=0}^{k_2-1}\frac{\|\nabla_y g(x_t,y_t^{p})\|^2/\beta_0^2}{\sum_{k=0}^{p}\|\nabla_y g(x_t,y_t^{k})\|^2/\beta_0^2} \nonumber \\
    \overset{(c)}{\leq}& 2\|y_{t-1}^{P_{t-1}} - y^*(x_{t-1})\|^2 + 2\|y^*(x_{t-1}) - y^*(x_t)\|^2 + \log\bigg(\sum_{p=0}^{k_2-1}\frac{\|\nabla_y g(x_t,y_t^{p})\|^2}{\beta_0^2}\bigg)+1 \nonumber \\
    \overset{(d)}{\leq}& \frac{2\epsilon_y}{\mu^2} + \frac{2C_{g_{xy}}^2\|\Bar{\nabla}f(x_{t-1}, y_{t-1}^{P_{t-1}}, v_{t-1}^{Q_{t-1}})\|^2}{\mu^2\alpha_t^2} + \log\bigg(\sum_{p=0}^{k_2-1}\frac{\|\nabla_y g(x_t,y_t^{p})\|^2}{\beta_0^2}\bigg)+1 \nonumber \\
    \overset{(e)}{\leq}& \frac{2\epsilon_y}{\mu^2} + \frac{2C_{g_{xy}}^2C_f^2}{\mu^2\alpha_0^2} + 2\log(C_\beta/\beta_0) + 1,
\end{align}
\endgroup
where (a) uses Assumptions \ref{as:sc},\ref{as:lip}; (b) refers to the warm start of $y_t^0$;
(c) uses \Cref{lm:log}; (d) uses Lemmas \ref{lm:basic} and \ref{lm:innererror}; (e) follows from \Cref{lm:boundoff_double} and $\beta_{k_2} \leq C_\beta$. 

\noindent
{\bf Last}, following \cite{xie2020linear}, {for all $P > k_2$}, we have the bound of $\|y_t^{P} - y^*(x_t)\|^2$ as 
\begingroup
\allowdisplaybreaks
{
\begin{align}\label{eq:y_k2_doube2}
    \|y_t^{P} - y^*(x_t)\|^2 =& \|y_t^{P-1} - y^*(x_t)\|^2 + \frac{\|\nabla_y g(x_t, y_t^{P-1})\|^2}{\beta_{P}^2} - \frac{2\big\langle y_t^{P-1} - y^*(x_t), \nabla_y g(x_t, y_t^{P-1})\big\rangle}{\beta_{P}} \nonumber \\
    \leq& \|y_t^{P-1} - y^*(x_t)\|^2 - \frac{1}{\beta_{P}}\Big(2-\frac{L_{g,1}}{\beta_{P}}\Big)\big\langle y_t^{P-1} - y^*(x_t), \nabla_y g(x_t, y_t^{P-1})\big\rangle \nonumber \\
    \overset{(a)}{\leq}& \|y_t^{P-1} - y^*(x_t)\|^2 - \frac{1}{\beta_{P}}\big\langle y_t^{P-1} - y^*(x_t), \nabla_y g(x_t, y_t^{P-1})\big\rangle \nonumber \\
    \overset{(b)}{\leq}& \Big(1-\frac{\mu}{\beta_{P}}\Big)\|y_t^{P-1} - y^*(x_t)\|^2 \nonumber \\
    \overset{(c)}{\leq}& e^{-{\mu(P-k_2)}/{\beta_{P}}}\|y_t^{k_2} - y^*(x_t)\|^2 \nonumber \\
    \overset{(d)}{\leq}& e^{-{\mu(P-k_2)}/{\beta_{P}}}\bigg(\frac{2\epsilon_y}{\mu^2} + \frac{2C_{g_{xy}}^2C_f^2}{\mu^2\alpha_0^2} + 2\log(C_\beta/\beta_0) + 1\bigg),
\end{align}
}
\endgroup
where (a) uses $\beta_{P} \geq C_\beta \geq L_{g,1}$; (b) uses Assumption \ref{as:sc}; (c) follows from $\beta_{P} \geq C_\beta \geq L_{g,1} \geq \mu$ and $1-m \leq e^{-m}$ for $0<m<1$; (d) refers to \cref{eq:y_k2_doube1}. Inspired by Lemma 4 in \cite{xie2020linear}, we have the upper-bound of $\beta_{P}$ as 
\begin{align}\label{eq:y_k2_doube3}
    \beta_{P} = \beta_{P-1} + \frac{\|\nabla_y g(x_t, y_t^{P-1})\|^2}{\beta_{P}+\beta_{P-1}} \leq \beta_{k_2} + \sum_{p=k_2}^{P-1}\frac{\|\nabla_y g(x_t, y_t^{p})\|^2}{\beta_{p+1}}.
\end{align}
To further bound the last term of the right-hand side of \cref{eq:y_k2_doube3}, using Assumption \ref{as:lip}, we have the following result: 
\begin{align}\label{eq:y_k2_doube4}
    \|y_t^{P}& - y^*(x_t)\|^2 \nonumber \\
    =& \|y_t^{P-1} - y^*(x_t)\|^2 + \frac{\|\nabla_y g(x_t, y_t^{P-1})\|^2}{\beta_{P}^2} - \frac{2\big\langle y_t^{P-1} - y^*(x_t), \nabla_y g(x_t, y_t^{P-1})\big\rangle}{\beta_{P}} \nonumber \\
    \overset{(a)}{\leq} & \|y_t^{P-1} - y^*(x_t)\|^2 + \frac{\|\nabla_y g(x_t, y_t^{P-1})\|^2}{\beta_{P}^2} - \frac{2\|\nabla_y g(x_t, y_t^{P-1}) - \nabla_y g(x_t, y^*(x_t))\|^2}{\beta_{P}L_{g,1}} \nonumber \\
    \overset{(b)}{\leq} & \|y_t^{P-1} - y^*(x_t)\|^2 - \frac{\|\nabla_y g(x_t, y_t^{P-1})\|^2}{\beta_{P}L_{g,1}} \nonumber \\
    {\leq} & \|y_t^{k_2} - y^*(x_t)\|^2 - \sum_{p=k_2}^{P-1} \frac{\|\nabla_y g(x_t, y_t^{p})\|^2}{\beta_{p+1}L_{g,1}}, 
\end{align}
where (a) uses Assumptions \ref{as:sc},\ref{as:lip}; (b) refers to $\beta_{P} \geq C_\beta \geq L_{g,1}$. By rearranging \cref{eq:y_k2_doube4} and using \cref{eq:y_k2_doube1}, we have
\begingroup
\allowdisplaybreaks
\begin{align}\label{eq:y_k2_doube5}
    \sum_{p=k_2}^{P-1}\frac{\|\nabla_y g(x_t, y_t^{p})\|^2}{\beta_{p+1}} \leq& L_{g,1}\big(\|y_t^{k_2} - y^*(x_t)\|^2 - \|y_t^{P} - y^*(x_t)\|^2\big) \nonumber \\
    \leq & L_{g,1}\|y_t^{k_2} - y^*(x_t)\|^2 \nonumber \\
    \leq & L_{g,1}\bigg(\frac{2\epsilon_y}{\mu^2} + \frac{2C_{g_{xy}}^2C_f^2}{\mu^2\alpha_0^2} + 2\log(C_\beta/\beta_0) + 1\bigg). 
\end{align}
\endgroup
Plugging \cref{eq:y_k2_doube5} into \cref{eq:y_k2_doube3}, we obtain the upper-bound of $\beta_{P}$ as 
\begin{align}\label{eq:y_k2_doube7}
    \beta_{P} \leq C_\beta + L_{g,1}\bigg(\frac{2\epsilon_y}{\mu^2} + \frac{2C_{g_{xy}}^2C_f^2}{\mu^2\alpha_0^2} + 2\log(C_\beta/\beta_0) + 1\bigg) =: \beta_{\text{max}}. 
\end{align}
Then, by plugging \cref{eq:y_k2_doube7} into \cref{eq:y_k2_doube2}, 
we have the upper bound of $\|y_t^{P} - y^*(x_t)\|^2$ as 
\begin{align}\label{eq:y_k2_doube6}
    \|y_t^{P} - y^*(x_t)\|^2 \leq e^{-{\mu(P-k_2)}/{\beta_{\text{max}}}}\bigg(\frac{2\epsilon_y}{\mu^2} + \frac{2C_{g_{xy}}^2C_f^2}{\mu^2\alpha_0^2} + 2\log(C_\beta/\beta_0) + 1\bigg).
\end{align}
Recall we have the upper bound $k_2 \leq \frac{\log(C_\beta^2/\beta_0^2)}{\log(1+\epsilon_y/C_\beta^2)}$. 
Note that $P'$ defined in \eqref{lm:subloop-bound-P} satisfies 
\begin{align*}
    P' \geq k_2 + \frac{\beta_{\text{max}}}{\mu}\log\big({L_{g,1}^2(\beta_{\text{max}}-C_\beta)}/{\epsilon_y}\big). 
\end{align*}
By replacing $P$ with $P'$ in \cref{eq:y_k2_doube6}, 
we have 
\begin{align*}
    \|\nabla_y g(x_t,y_t^{P'})\|^2 \leq L_{g,1}^2\|y_t^{P'} - y^*(x_t)\|^2 \leq e^{-{\mu(P'-k_2)}/{\beta_{\text{max}}}}L_{g,1}^2(\beta_{\text{max}} - C_\beta) \leq \epsilon_y. 
\end{align*}
Therefore, $P_t\leq P'$ and this completes the proof of \eqref{lm:subloop-bound-P}.

\noindent
{\bf Part II: maximum number for convergence of $R(x_t, y_t^{P_t}, v_t^{Q_t})$.} 

\noindent
Similarly to {\bf Part I}, we split the analysis into the following two cases. 

\noindent
{\bf Case 1: $k_3$ does not exist before we find $Q_t$.} This indicates $\gamma_{Q_t} < C_\gamma$. 
Then we have $Q_t < \frac{\log(C_\gamma^2/\gamma_0^2)}{\log(1+\epsilon_v/C_\gamma^2)}$. 
Otherwise, if $Q_t \geq \frac{\log(C_\gamma^2/\gamma_0^2)}{\log(1+\epsilon_v/C_\gamma^2)}$, we have the following result. 
\begingroup
\allowdisplaybreaks
\begin{align}
    \gamma_{Q_t}^2 =& \gamma_{Q_t-1}^2 + \|\nabla_v R(x_t,y_t^{P_t},v_t^{Q_t-1})\|^2 \nonumber \\
    =& \gamma_{Q_t-1}^2\Big(1+\frac{\|\nabla_v R(x_t,y_t^{P_t},v_t^{Q_t-1})\|^2}{\gamma_{Q_t-1}^2}\Big) \nonumber \\
    \geq& \gamma_0^2 \prod_{q=0}^{Q_t-1}\Big(1+\frac{\|\nabla_v R(x_t,y_t^{P_t},v_t^{Q_t-1})\|^2}{\gamma_{Q_t-1}^2}\Big) \nonumber \\
    \geq& \gamma_0^2 \Big(1+\frac{\epsilon_v}{C_\gamma^2}\Big)^{Q_t} \geq C_\gamma^2. \nonumber 
\end{align}
\endgroup
 This contradicts $\gamma_{Q_t} < C_\gamma$. 

\noindent
{\bf Case 2: $k_3$ exists and $Q_t \geq k_3$. } Here we have $\gamma_{k_3} \leq C_\gamma$ and $\gamma_{k_3+1} > C_\gamma$. 

\noindent
{\bf Firstly}, we have $k_3 \leq \frac{\log(C_\gamma^2/\gamma_0^2)}{\log(1+\epsilon_v/C_\gamma^2)}$. 
Similar to {\bf Case 1}, if $k_3 > \frac{\log(C_\gamma^2/\gamma_0^2)}{\log(1+\epsilon_v/C_\gamma^2)}$, following \cref{eq:k_2bound1}, by replacing $Q_t$ with $k_3$, we have 
\begin{align}
    \gamma_{k_3}^2 \geq \gamma_0^2 \Big(1+\frac{\epsilon_v}{C_\gamma^2}\Big)^{k_3} > C_\gamma^2, \nonumber
\end{align}
which contradicts $\gamma_{k_3} \leq C_\gamma$.

\noindent
{\bf Secondly}, referring to Lemma 3 in \cite{xie2020linear}, we have the bound of $\|v_t^{k_3} - v^*(x_t)\|^2$ as following:
\begingroup
\allowdisplaybreaks
{
{
\begin{align}\label{eq:v_k3_new_doube1}
    \big\|v_t^{k_3} - \hat{v}^*(x_t, y_t^{P_t})\big\|^2 
    =& \bigg\|v_t^{k_3-1} - \frac{\nabla_v R(x_t,y_t^{P_t},v_t^{k_3-1})}{\gamma_{k_3}} - \hat{v}^*(x_t, y_t^{P_t})\bigg\|^2 \nonumber \\
    =& \big\|v_t^{k_3-1} - \hat{v}^*(x_t, y_t^{P_t})\big\|^2 + \bigg\|\frac{\nabla_v R(x_t,y_t^{P_t},v_t^{k_3-1})}{\gamma_{k_3}}\bigg\|^2 \nonumber \\
    & - \frac{2}{\gamma_{k_3}}\big\langle v_t^{k_3-1} - \hat{v}^*(x_t, y_t^{P_t}), \nabla_v R(x_t,y_t^{P_t},v_t^{k_3-1})\big\rangle \nonumber \\
    \overset{(a)}{\leq} & \big\|v_t^{k_3-1} - \hat{v}^*(x_t, y_t^{P_t})\big\|^2 + \bigg\|\frac{\nabla_v R(x_t,y_t^{P_t},v_t^{k_3-1})}{\gamma_{k_3}}\bigg\|^2 \nonumber \\
    & - \frac{2}{\gamma_{k_3}C_{g_{yy}}}\big\| \nabla_v R(x_t,y_t^{P_t},v_t^{k_3-1}) - \nabla_v R\big(x_t,y_t^{P_t},\hat{v}^*(x_t, y_t^{P_t})\big)\big\|^2 \nonumber \\
    \leq & \big\|v_t^{k_3-1} - \hat{v}^*(x_t, y_t^{P_t})\big\|^2 + \bigg\|\frac{\nabla_v R(x_t,y_t^{P_t},v_t^{k_3-1})}{\gamma_{k_3}}\bigg\|^2 \nonumber \\
    \leq & \big\|v_t^{0} - \hat{v}^*(x_t, y_t^{P_t})\big\|^2 + \sum_{q=0}^{k_3-1}\bigg\|\frac{\nabla_v R(x_t,y_t^{P_t},v_t^{q})}{\gamma_{k_3}}\bigg\|^2 \nonumber \\
    \overset{(b)}{\leq}& \big\|v_t^{0} - \hat{v}^*(x_t, y_t^{P_t})\big\|^2 + \sum_{q=0}^{k_3-1}\frac{\|\nabla_v R(x_t,y_t^{P_t},v_t^{q})\|^2/\gamma_0^2}{\sum_{k=0}^{q}\|\nabla_v R(x_t,y_t^{P_t},v_t^{k})\|^2/\gamma_0^2} \nonumber \\
    \overset{(c)}{\leq}& 2\big\|v_{t-1}^{P_{t-1}} - \hat{v}^*(x_{t-1}, y_{t-1}^{P_{t-1}})\big\|^2 + 2\big\|\hat{v}^*(x_{t-1}, y_{t-1}^{P_{t-1}}) - \hat{v}^*(x_t, y_{t}^{P_{t}})\big\|^2 \nonumber \\
    & + \log\bigg(\sum_{q=0}^{k_3-1}\|\nabla_v R(x_t,y_t^{P_t},v_t^{k})\|^2/\gamma_0^2\bigg)+1 \nonumber \\
    \leq & 2\big\|v_{t-1}^{P_{t-1}} - \hat{v}^*(x_{t-1}, y_{t-1}^{P_{t-1}})\big\|^2 + 4\big\|\hat{v}^*(x_{t-1}, y_{t-1}^{P_{t-1}})\big\|^2 + 4\big\|\hat{v}^*(x_t, y_{t}^{P_{t}})\big\|^2 \nonumber \\
    & + \log\bigg(\sum_{q=0}^{k_3-1}\|\nabla_v R(x_t,y_t^{P_t},v_t^{k})\|^2/\gamma_0^2\bigg)+1 \nonumber \\
    \overset{(d)}{\leq}& \frac{2\epsilon_y}{\mu^2} + \frac{8C_{f_y}^2}{\mu^2} + 2\log(C_\gamma/\gamma_0) + 1, 
\end{align}}}
\endgroup
where (a) uses \Cref{lm:LS} and $\nabla_v R\big(x_t,y_t^{P_t},\hat{v}^*(x_t, y_t^{P_t})\big) = 0$; 
(b) refers to the warm start of $v_t^0$;
(c) uses \Cref{lm:log}; 
(d) follows from \Cref{lm:LS},\ref{lm:innererror} and $\gamma_{k_3} \leq C_\gamma$.

\noindent
{\bf Last}, similar to {\bf Part I}, for all $Q > k_3$, we explore the bound of $\|v_t^{Q} - v^*(x_t)\|^2$ as  
\begingroup
\allowdisplaybreaks
{
\begin{align}\label{eq:v_k3_douben}
    \big\|v_t^{Q}& - \hat{v}^*(x_t, y_t^{P_t})\big\|^2 \nonumber \\
    =& \big\|v_t^{Q-1} - \hat{v}^*(x_t, y_t^{P_t})\big\|^2 + \frac{\|\nabla_v R(x_t,y_t^{P_t},v_t^{Q-1})\|^2}{\gamma_{Q}^2} \nonumber \\
    & - \frac{2\big\langle v_t^{Q-1} - \hat{v}^*(x_t, y_t^{P_t}), \nabla_v R(x_t,y_t^{P_t},v_t^{Q-1})\big\rangle}{\gamma_{Q}} \nonumber \\
    \overset{(a)}{\leq} & \big\|v_t^{Q-1} - \hat{v}^*(x_t, y_t^{P_t})\big\|^2 -\frac{1}{\gamma_{Q}}\Big(2-\frac{C_{g_{yy}}}{\gamma_{Q}}\Big)\big\langle v_t^{Q-1} - \hat{v}^*(x_t, y_t^{P_t}), \nabla_v R(x_t,y_t^{P_t},v_t^{Q-1})\big\rangle \nonumber \\
    \overset{(b)}{\leq} & \big\|v_t^{Q-1} - \hat{v}^*(x_t, y_t^{P_t})\big\|^2 - \frac{1}{\gamma_{Q}}\big\langle v_t^{Q-1} - \hat{v}^*(x_t, y_t^{P_t}), \nabla_v R(x_t,y_t^{P_t},v_t^{Q-1})\big\rangle \nonumber \\
    \overset{(c)}{\leq} & \Big(1-\frac{\mu}{\gamma_{Q}}\Big)\big\|v_t^{Q-1} - \hat{v}^*(x_t, y_t^{P_t})\big\|^2 \nonumber \\
    \overset{(d)}{\leq}& e^{-{\mu(Q-k_3)}/{\gamma_{Q}}}\big\|v_t^{k_3} - \hat{v}^*(x_t, y_t^{P_t})\big\|^2  \nonumber \\
    \overset{(e)}{\leq}& e^{-{\mu(Q-k_3)}/{\gamma_{Q}}}\Big(\frac{2\epsilon_y}{\mu^2} + \frac{8C_{f_y}^2}{\mu^2} + 2\log(C_\gamma/\gamma_0) + 1\Big), 
\end{align}
where (a) uses \Cref{lm:LS}; 
(b) follows from $\gamma_{Q} > C_\gamma \geq C_{g_{yy}}$; 
(c) uses $\nabla_v R\big(x_t,y_t^{P_t},\hat{v}^*(x_t, y_t^{P_t})\big)=0$ and \Cref{lm:LS}; 
(d) follows from $\gamma_{Q} \geq C_\gamma \geq C_{g_{yy}} \geq \mu$ and $1-m \leq e^{-m}$ for $0<m<1$;
(e) uses \cref{eq:v_k3_new_doube1}. 
}
\endgroup
Similar to \cref{eq:y_k2_doube3}, we have the upper-bound of $\gamma_{Q}$ as 
\begin{align}\label{eq:v_k3_doube4}
    \gamma_{Q} = \gamma_{Q-1} + \frac{\|\nabla_v R(x_t, y_t^{P_t}, v_t^{Q-1})\|^2}{\gamma_{Q}+\gamma_{Q-1}} \leq \gamma_{k_3} + \sum_{q=k_3}^{Q-1}\frac{\|\nabla_v R(x_t, y_t^{P_t}, v_t^{q})\|^2}{\gamma_{q+1}}.
\end{align}
To further bound the last term on the right-hand side of \cref{eq:v_k3_doube4}, we can have the following result: 
\begingroup
\allowdisplaybreaks
{
\begin{align}\label{eq:v_k3_doube5}
    \big\|v_t^{Q} - \hat{v}^*(x_t, y_t^{P_t})\big\|^2 
    =& \big\|v_t^{Q-1} - \hat{v}^*(x_t, y_t^{P_t})\big\|^2 + \frac{\|\nabla_v R(x_t, y_t^{P_t}, v_t^{Q-1})\|^2}{\gamma_{Q}^2} \nonumber \\
    & - \frac{2\big\langle v_t^{Q} - \hat{v}^*(x_t, y_t^{P_t}), \nabla_v R(x_t, y_t^{P_t}, v_t^{Q-1})\big\rangle}{\gamma_{Q}} \nonumber \\
    \overset{(a)}{\leq} & \big\|v_t^{Q-1} - \hat{v}^*(x_t, y_t^{P_t})\big\|^2 + \frac{\|\|\nabla_v R(x_t, y_t^{P_t}, v_t^{Q-1})\|^2}{\gamma_{Q}^2} \nonumber \\
    & - \frac{2\big\|\nabla_v R(x_t, y_t^{P_t}, v_t^{Q-1}) - \nabla_v R\big(x_t, y_t^{P_t}, \hat{v}^*(x_t, y_t^{P_t})\big)\big\|^2}{\gamma_{Q}C_{g_{yy}}} \nonumber \\
    \overset{(b)}{\leq} & \big\|v_t^{Q-1} - \hat{v}^*(x_t, y_t^{P_t})\big\|^2 - \frac{\big\|\nabla_v R(x_t, y_t^{P_t}, v_t^{Q-1})\big\|^2}{\gamma_{Q}C_{g_{yy}}} \nonumber \\
    \leq & \big\|v_t^{k_3} - \hat{v}^*(x_t, y_t^{P_t})\big\|^2 - \sum_{q=k_3}^{Q-1}\frac{\big\|\nabla_v R(x_t, y_t^{P_t}, v_t^{q})\big\|^2}{\gamma_{q+1}C_{g_{yy}}},
\end{align}}
\endgroup
where (a) uses \Cref{lm:LS}; (b) refers to $\gamma_{Q} \geq C_\gamma \geq C_{g_{yy}}$. By rearranging \cref{eq:v_k3_doube5} and using \cref{eq:v_k3_new_doube1}, we have
\begingroup
\allowdisplaybreaks
\begin{align}\label{eq:v_k3_doube6}
    \sum_{q=k_3}^{Q-1}\frac{\big\|\nabla_v R(x_t, y_t^{P_t}, v_t^{q})\big\|^2}{\gamma_{q+1}} 
    \leq & C_{g_{yy}}\big(\big\|v_t^{k_3} - \hat{v}^*(x_t, y_t^{P_t})\big\|^2 - \big\|v_t^{Q} - \hat{v}^*(x_t, y_t^{P_t})\big\|^2\big) \nonumber \\
    \leq & C_{g_{yy}}\bigg(\frac{2\epsilon_y}{\mu^2} + \frac{8C_{f_y}^2}{\mu^2} + 2\log(C_\gamma/\gamma_0) + 1\bigg). 
\end{align}
Plugging \cref{eq:y_k2_doube6} into \cref{eq:y_k2_doube4}, we obtain the upper-bound of $\gamma_{Q}$ as 
\begin{align}\label{eq:v_k3_doube7}
    \gamma_{Q} \leq C_\gamma + C_{g_{yy}}\bigg(\frac{2\epsilon_y}{\mu^2} + \frac{8C_{f_y}^2}{\mu^2} + 2\log(C_\gamma/\gamma_0) + 1\bigg) =: \gamma_{\text{max}}. 
\end{align}
Then, we have the upper bound of $\|v_t^{Q} - \hat{v}^*(x_t, y_t^{P_t})\|^2$ as 
\begin{align}\label{eq:v_k3_doube8}
    \big\|v_t^{Q} - \hat{v}^*(x_t, y_t^{P_t})\big\|^2 \leq e^{-{\mu(Q_t-k_3)}/{\gamma_{\text{max}}}}\bigg(\frac{2\epsilon_y}{\mu^2} + \frac{8C_{f_y}^2}{\mu^2} + 2\log(C_\gamma/\gamma_0) + 1\bigg).
\end{align}
Recall we have the upper bound $k_3 \leq \frac{\log(C_\gamma^2/\gamma_0^2)}{\log(1+\epsilon_v/C_\gamma^2)}$. 
Note that $Q'$ defined in \eqref{lm:subloop-bound-Q} satisfies
\begin{align}
    Q' \geq k_3 + \frac{\gamma_{\text{max}}}{\mu}\log\big({C_{g_{yy}}^2(\gamma_{\text{max}}-C_\gamma)}/{\epsilon_v}\big). \nonumber
\end{align}
{
By replacing $Q$ with $Q'$ in \cref{eq:v_k3_doube8}, we have 
\begin{align}
    \|\nabla_v R(x_t,y_t^{P_t},v_t^{Q'})\|^2 \leq C_{g_{yy}}^2\big\|v_t^{Q'} - \hat{v}^*(x_t, y_t^{P_t})\big\|^2 \leq e^{-{\mu(Q'-k_3)}/{\gamma_{\text{max}}}}(\gamma_{\text{max}} - C_\gamma) \leq \epsilon_v. \nonumber
\end{align}
Therefore, $Q_t \leq Q'$ and this completes the proof of \eqref{lm:subloop-bound-Q}.}
\endgroup
Thus, the proof is complete. 
\end{proof}

\subsection{Proof of \Cref{thm:main_double}}
Here we suppose the total iteration round is $T$. 
According to \Cref{lm:bar}, the proof can be split into the following two cases. 

\noindent
{\bf Case 1: $k_1$ does not exist. } Based on \Cref{lm:bar}, we have $\alpha_T \leq C_\alpha$. 
Then by \Cref{lm:objective_double} we have 
\begin{align}
    \frac{\|\nabla \Phi(x_t)\|^2}{\alpha_{t+1}} \leq 2\big(\Phi(x_t) - \Phi(x_{t+1})\big) + \frac{L_{\Phi}}{2\alpha_{t+1}^2}\|\bar{\nabla}f(x_t, y_t^{P_t}, v_t^{Q_t})\|^2 + \frac{\epsilon'}{\alpha_{t+1}}, \nonumber
\end{align}
where $\epsilon'$ is defined in \Cref{lm:objective_double}. 
By taking the average, we have
\begingroup
\allowdisplaybreaks
\begin{align}\label{eq:converge_double1}
    \frac{1}{T}\sum_{t=0}^{T-1} \frac{\|\nabla \Phi(x_t)\|^2}{\alpha_{t+1}} \leq& \frac{2}{T}\big(\Phi(x_0) - \Phi(x_{T})\big) + \frac{L_{\Phi}}{2\alpha_{0}^2}\frac{1}{T}\sum_{t=0}^{T-1}\big\|\bar{\nabla}f(x_t, y_t^{P_t}, v_t^{Q_t})\big\|^2 + \frac{1}{T}\sum_{t=0}^{T-1}\frac{\epsilon'}{\alpha_{t+1}} \nonumber \\
    \leq & \frac{1}{T}\bigg(2\big(\Phi(x_0) - \inf_x \Phi(x)\big) + \frac{L_{\Phi}C_\alpha^2}{2\alpha_{0}^2}\bigg) + \frac{\epsilon'}{\alpha_0} = \frac{c_0}{T} + \frac{\epsilon'}{\alpha_0},
\end{align}
where $c_0$ is defined by \cref{eq:c0_double} in \Cref{lm:alpha_double}. 
\endgroup

\noindent
{\bf Case 2: $k_1$ exists. }
For $t < k_1$, according to \Cref{lm:objective_double}, we still have 
\begin{align}\label{eq:converge_double2}
    \frac{\|\nabla \Phi(x_t)\|^2}{\alpha_{t+1}} \leq 2\big(\Phi(x_t) - \Phi(x_{t+1})\big) + \frac{L_{\Phi}}{2\alpha_{t+1}^2}\|\bar{\nabla}f(x_t, y_t^{P_t}, v_t^{Q_t})\|^2 + \frac{\epsilon'}{\alpha_{t+1}}. 
\end{align}
For $t \geq k_1$, we have $\alpha_t \geq C_\alpha$. Using \Cref{lm:objective_double}, we have
\begin{align}\label{eq:converge_double3}
    \frac{\|\nabla \Phi(x_t)\|^2}{\alpha_{t+1}} \leq 2\big(\Phi(x_t) - \Phi(x_{t+1})\big) + \frac{\epsilon'}{\alpha_{t+1}}. 
\end{align}
By merging \cref{eq:converge_double2} and \cref{eq:converge_double3}, and taking an average from $t=0, ..., T-1$, we have
\begin{align}\label{eq:converge_double4}
    \frac{1}{T}\sum_{t=0}^{T-1} \frac{\|\nabla \Phi(x_t)\|^2}{\alpha_{t+1}} =& \frac{1}{T}\sum_{t=0}^{k_1-1} \frac{\|\nabla \Phi(x_t)\|^2}{\alpha_{t+1}} + \frac{1}{T}\sum_{t=k_1}^{T-1} \frac{\|\nabla \Phi(x_t)\|^2}{\alpha_{t+1}} \nonumber \\
    \leq & \frac{2}{T}\big(\Phi(x_0) - \Phi(x_{T})\big) + \frac{L_{\Phi}}{2\alpha_{0}^2}\frac{1}{T}\sum_{t=0}^{k_1-1}\big\|\bar{\nabla}f(x_t, y_t^{P_t}, v_t^{Q_t})\big\|^2 + \frac{1}{T}\sum_{t=0}^{T-1}\frac{\epsilon'}{\alpha_{t+1}} \nonumber \\
    \leq & \frac{1}{T}\bigg(2\big(\Phi(x_0) - \inf_x \Phi(x)\big) + \frac{L_{\Phi}C_\alpha^2}{2\alpha_{0}^2}\bigg) + \frac{\epsilon'}{\alpha_0} = \frac{c_0}{T} + \frac{\epsilon'}{\alpha_0},
\end{align}
where $c_0$ is defined in \Cref{lm:alpha_double}. This result is the same as \cref{eq:converge_double1}. Thus, for both {\bf Case 1} and {\bf Case 2}, we have
\begin{align}
    \frac{1}{T}\sum_{t=0}^{T-1} \frac{\|\nabla \Phi(x_t)\|^2}{\alpha_{T}} \leq \frac{1}{T}\sum_{t=0}^{T-1} \frac{\|\nabla \Phi(x_t)\|^2}{\alpha_{t+1}} \leq \frac{1}{T}\bigg(2\big(\Phi(x_0) - \inf_x \Phi(x)\big) + \frac{L_{\Phi}C_\alpha^2}{2\alpha_{0}^2}\bigg) + \frac{\epsilon'}{\alpha_0}, \nonumber
\end{align}
which indicates that
\begin{align}\label{eq:converge_double5}
    \frac{1}{T}\sum_{t=0}^{T-1}\|\nabla \Phi(x_t)\|^2 \leq & \bigg[\frac{1}{T}\Big(2\big(\Phi(x_0) - \inf_x \Phi(x)\big) + \frac{L_{\Phi}C_\alpha^2}{2\alpha_{0}^2}\Big) + \frac{\epsilon'}{\alpha_0}\bigg]\alpha_{T} \nonumber \\
    \overset{(a)}{\leq} & \frac{1}{T}\bigg[\Big(2\big(\Phi(x_0) - \inf_x \Phi(x)\big) + \frac{L_{\Phi}C_\alpha^2}{2\alpha_{0}^2}\Big) + \frac{T\epsilon'}{\alpha_0}\bigg] \nonumber \\
    & \times \bigg[C_\alpha + 4\big(\Phi(x_{0}) - \inf_x \Phi(x)\big) + \frac{L_{\Phi}C_\alpha^2}{\alpha_{0}^2} + \frac{2T\epsilon'}{\alpha_{0}}\bigg],
\end{align}
where (a) uses \Cref{lm:alpha_double}. 
To achieve the $\mathcal{O}(1/T)$ convergence rate, we need $\epsilon' = \mathcal{O}({1}/{T})$ in \cref{eq:converge_double5}. This can be guaranteed by taking $\epsilon_y = 1/T$ and $\epsilon_v = 1/T$, which implies (see \Cref{lm:objective_double})
\begin{align}\label{eq:epsilonprime1}
    \epsilon' = \frac{1}{T}\bigg[\Big(\frac{2}{\mu^2}\big(\frac{L_{g,2}C_{f_y}}{\mu} + L_{f,1}\big) + 1\Big)L_{g,1}^2\bar{L}^2 + \frac{2\bar{L}^2}{\mu^2}\bigg]. 
\end{align}

\noindent
For symbol convenience, here we define 
\begin{align}\label{eq:c1_double}
    c_1 := c_0 + \frac{1}{\alpha_0}\bigg[\Big(\frac{2}{\mu^2}\big(\frac{L_{g,2}C_{f_y}}{\mu} + L_{f,1}\big) + 1\Big)L_{g,1}^2\bar{L}^2 + \frac{2\bar{L}^2}{\mu^2}\bigg],
\end{align}
where $c_0$ is defined in \cref{eq:c0_double}. 
Thus, we can obtain
\begin{align}
    \frac{1}{T}\sum_{t=0}^{T-1}\|\nabla \Phi(x_t)\|^2 \leq \frac{c_1(C_\alpha + 2c_1)}{T} = \mathcal{O}\Big(\frac{1}{T}\Big). \nonumber
\end{align}
Thus, \Cref{thm:main_double} is proved.

\subsection{Complexity Analysis of \Cref{alg:main_double} (Proof of \Cref{cor:main_double})}
Recall in \Cref{thm:main_double}, we take $\epsilon_y = 1/T$, $\epsilon_v = 1/T$, and we obtain 
\begin{align}
    \frac{1}{T}\sum_{t=0}^{T-1}\|\nabla \Phi(x_t)\|^2 \leq \frac{c_1(C_\alpha + 2c_1)}{T}. \nonumber
\end{align}
To achieve $\epsilon$-accurate stationary point, we need 
\begin{align}\label{eq:T_double}
    \frac{1}{T}\sum_{t=0}^{T-1}\|\nabla \Phi(x_t)\|^2 \leq \frac{c_1(C_\alpha + 2c_1)}{T} \leq \epsilon \quad \mbox{ i.e., } \quad T = \mathcal{O}(1/\epsilon). 
\end{align}
Recall in \Cref{lm:subloop}, we have
\begin{align}
    P_t \leq & \frac{\log(C_\beta^2/\beta_0^2)}{\log(1+\epsilon_y/C_\beta^2)} + \frac{\beta_{\text{max}}}{\mu}\log \big(\frac{L_{g,1}^2(\beta_{\text{max}}-C_\beta)}{\epsilon_y}\big) \nonumber \\
    \leq & \frac{\log(C_\beta^2/\beta_0^2)}{\log(1+1/C_\beta^2T)} + \frac{\beta_{\text{max}}}{\mu}\log \big(\frac{TL_{g,1}^2(\beta_{\text{max}}-C_\beta)}{1}\big) = \mathcal{O}\bigg(\frac{1}{\log(1+\epsilon)} + \log\Big(\frac{1}{\epsilon}\Big)\bigg). \nonumber 
\end{align}
When $\epsilon$ is sufficiently small, we have 
\begin{align}\label{eq:Pt_order}
    P_t = \mathcal{O}\bigg(\frac{1}{\log(1+\epsilon)} + \log\Big(\frac{1}{\epsilon}\Big)\bigg) = \mathcal{O}\bigg(\frac{1}{\epsilon} + \log\Big(\frac{1}{\epsilon}\Big)\bigg) = \mathcal{O}(1/\epsilon). 
\end{align}
Similarly, we have 
\begin{align}\label{eq:Pt_order}
    Q_t = \mathcal{O}\bigg(\frac{1}{\log(1+\epsilon)} + \log\Big(\frac{1}{\epsilon}\Big)\bigg) = \mathcal{O}\bigg(\frac{1}{\epsilon} + \log\Big(\frac{1}{\epsilon}\Big)\bigg) = \mathcal{O}(1/\epsilon). 
\end{align}
We denote ${\rm Gc}(\epsilon)$ as the gradient complexity, then we have 
\begin{align}
    {\rm Gc}(\epsilon) = T\cdot\max_t\{P_t + Q_t\} = \mathcal{O}(1/\epsilon^2). \nonumber
\end{align}
Therefore \Cref{cor:main_double} is proved.

\newpage
\clearpage
\section{Proof of \Cref{thm:main}}
We define some notation for convenience before proving \Cref{thm:main}. 
\subsection{Notation}
Below, we define several preset constants for notational convenience at their first use. 
We first define some Lipschitzness parameters for $\Phi(x)$ as 
\begin{align}
    L_{\Phi} :=& \Big(L_{f,1}+\frac{L_{g,2}C_{f_y}}{\mu}\Big)\Big(1+\frac{C_{g_{xy}}}{\mu}\Big)^2 \nonumber \\
    \bar{L} :=& \max\Big\{2\Big(\frac{C_{f_y}^2L_{g,2}^2}{\mu^2} + L_{f,1}^2\Big)^{\frac{1}{2}}, \sqrt{2}C_{g_{yy}}\Big\}. \nonumber 
\end{align}
Next, we define the following constants as thresholds for parameters $\beta_{k}$, $\gamma_{k}$, $\alpha_{k}$ as 
\begin{align}\label{def:C}
    C_\alpha :=& \max\Big\{{\frac{ 2L_{\Phi}}{\varphi_0}}, {\alpha_0} \Big\}, \nonumber \\
    C_\beta :=& \max\Big\{\mu+L_{g,1}, \frac{2\mu L_{g,1}}{\mu+L_{g,1}}, {\beta_0}, 64a_0^2, 1\Big\}, \nonumber \\
    C_\gamma :=& \max\Big\{2 (\mu+C_{g_{yy}}), 
    \frac{\mu C_{g_{yy}}}{\mu+C_{g_{yy}}}, {\gamma_0}, 64a_0^2, 1, C_{g_{yy}}\Big\},\nonumber \\
    C_\varphi :=& C_\beta + C_\gamma, 
\end{align}
where the constant $\alpha_0$ is defined as
{\small
\begin{align}
    a_0 :=& \Big(\big(\frac{4(\mu+C_{g_{yy}})^2}{\mu C_{g_{yy}}}+8\big)\big(\frac{L_{g,2}C_{f_y}}{\mu}+L_{f,1}\big)^2\frac{1}{\mu^2}+1\Big)\frac{(\mu+L_{g,1})^2L_y^2}{\mu L_{g,1}C_\beta} \nonumber \\
    & + \frac{4(\mu+C_{g_{yy}})(\mu+L_{g,1})L_y^2}{\mu^3 L_{g,1}\varphi_0} \Big(\frac{L_{g,2}C_{f_y}}{\mu}+L_{f,1}\Big)^2 + \frac{4(\mu+C_{g_{yy}})^2L_v^2}{\mu C_{g_{yy}}\gamma_0}. \nonumber
\end{align}}

\subsection{A rough bound of $v_k$}
\begin{lemma}\label{lm:vk}
Under Assumptions \ref{as:sc}, \ref{as:lip}, for any $t \geq 0$ in Algorithm \ref{alg:main}, we have $\|v_{t}\| \leq \frac{\sqrt{2}}{\mu}{\varphi_{t+1}} + \frac{\sqrt{2}C_{f_y}}{\mu}\sqrt{t}$.
\end{lemma}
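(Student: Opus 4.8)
The plan is to exploit the explicit form of the linear-system gradient together with the strong convexity of the lower-level Hessian to turn a lower bound on $\|\nabla_v R(x_t,y_t,v_t)\|$ into an upper bound on $\|v_t\|$, using the fact that the normalizer $\gamma_{t+1}$ already accumulates exactly this gradient norm. Concretely, recall $\nabla_v R(x,y,v)=\nabla_y\nabla_y g(x,y)\,v-\nabla_y f(x,y)$; since $\nabla_y\nabla_y g(x_t,y_t)\succeq \mu I$ by \Cref{as:sc} (so $\|\nabla_y\nabla_y g(x_t,y_t)\,v_t\|\ge\mu\|v_t\|$) and $\|\nabla_y f(x_t,y_t)\|\le C_{f_y}$ by \Cref{as:grad}, the reverse triangle inequality gives
\begin{align}
    \|\nabla_v R(x_t,y_t,v_t)\| \ge \|\nabla_y\nabla_y g(x_t,y_t)\,v_t\| - \|\nabla_y f(x_t,y_t)\| \ge \mu\|v_t\| - C_{f_y}. \nonumber
\end{align}
Squaring and using the elementary inequality $(a-b)^2\ge \tfrac12 a^2 - b^2$ (which holds because $\tfrac12(a-2b)^2\ge 0$) yields $\|\nabla_v R(x_t,y_t,v_t)\|^2 \ge \tfrac{\mu^2}{2}\|v_t\|^2 - C_{f_y}^2$, a bound valid for every iterate with no case distinction.

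Next I would invoke the update rule for $\gamma$ in \Cref{alg:main}: $\gamma_{t+1}^2 = \gamma_0^2 + \sum_{k=0}^{t}\|\nabla_v R(x_k,y_k,v_k)\|^2$, hence $\gamma_{t+1}^2 \ge \sum_{k=0}^{t}\big(\tfrac{\mu^2}{2}\|v_k\|^2 - C_{f_y}^2\big) \ge \tfrac{\mu^2}{2}\|v_t\|^2 - (t+1)C_{f_y}^2$. Since $\varphi_{t+1}=\max\{\beta_{t+1},\gamma_{t+1}\}\ge \gamma_{t+1}$, rearranging gives $\|v_t\|^2 \le \tfrac{2}{\mu^2}\big(\varphi_{t+1}^2 + (t+1)C_{f_y}^2\big)$, and $\sqrt{a+b}\le\sqrt a+\sqrt b$ then produces a bound of the advertised shape $\|v_t\|\le \tfrac{\sqrt2}{\mu}\varphi_{t+1} + \tfrac{\sqrt2 C_{f_y}}{\mu}\sqrt{t}$ (up to harmless index/constant bookkeeping; alternatively, keeping only the $k=t$ summand gives the sharper $\|v_t\|\le(\varphi_{t+1}+C_{f_y})/\mu$, and one can equally route through $\|v_t-\hat v^*(x_t,y_t)\|\le\tfrac1\mu\|\nabla_v R(x_t,y_t,v_t)\|$ combined with $\|\hat v^*(x_t,y_t)\|\le C_{f_y}/\mu$ from \Cref{lm:LS}). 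There is no circularity: $v_t$ is already determined at the start of iteration $t$, and $\gamma_{t+1}$ genuinely incorporates $\|\nabla_v R(x_t,y_t,v_t)\|$ by the ordering of the steps in \Cref{alg:main}, so the entire chain is a consequence of facts available once $v_t$ is fixed.

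I do not expect a serious obstacle here; the estimate is deliberately crude. The only points requiring care are: (i) checking that the normalizer actually used for the $v$-update, namely $\varphi_{t+1}$, dominates the accumulator $\gamma_{t+1}$ carrying the relevant gradient norms — immediate from $\varphi_{t+1}=\max\{\beta_{t+1},\gamma_{t+1}\}$; and (ii) resisting the temptation to sharpen the bound, since its looseness — allowing $\|v_t\|$ to scale linearly with $\varphi_{t+1}$ plus a $\sqrt{t}$-type additive term — is precisely what downstream results such as \Cref{lm:varphi} and \Cref{lm:somebounds} exploit to close the bootstrapping argument for $\varphi_t$ and $\alpha_t$. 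Matching the exact constants and the $\sqrt{t}$ versus $\sqrt{t+1}$ indexing in the statement is then a routine monotonicity estimate.
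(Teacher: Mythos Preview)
Your proposal is correct and follows essentially the same route as the paper: both arguments use $\mu\|v_k\|\le\|\nabla_y\nabla_y g(x_k,y_k)v_k\|$ from strong convexity, split $\nabla_y\nabla_y g\,v=\nabla_v R+\nabla_y f$ via the (reverse) triangle inequality and the $2$--$2$ Young inequality, and then absorb $\sum_k\|\nabla_v R(x_k,y_k,v_k)\|^2$ into $\gamma_{t+1}^2\le\varphi_{t+1}^2$. The only cosmetic difference is that the paper sums the inequality $\mu^2\|v_k\|^2\le 2\|\nabla_v R(x_k,y_k,v_k)\|^2+2C_{f_y}^2$ directly over $k$, whereas you first state it pointwise and then sum; your $(t{+}1)$ versus the paper's $t$ in the $C_{f_y}^2$ term is exactly the ``harmless index bookkeeping'' you flag, and can be matched by starting the sum at $k=1$ as the paper does.
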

\begin{proof}
By strong convexity of $g$ in Assumption \ref{as:sc}, we have
\begingroup
\allowdisplaybreaks
\begin{align}
    \sum_{k=1}^{t}\mu^2 \|v_k\|^2 \leq& \sum_{k=1}^{t} \|\nabla_y\nabla_y g(x_k, y_k)v_k\|^2 \nonumber \\
    \leq& \sum_{k=1}^{t} 2\|\nabla_y\nabla_y g(x_k, y_k)v_k - \nabla_y f(x_k, y_k)\|^2 + \sum_{k=1}^{t} 2\|\nabla_y f(x_k, y_k)\|^2 \nonumber \\
    =& \sum_{k=1}^{t}2 \|\nabla_v R(x_k, y_k, v_k)\|^2 + \sum_{k=1}^{t} 2\|\nabla_y f(x_k, y_k)\|^2 \nonumber \\
    \leq& 2\gamma_{t+1}^2 + 2tC_{f_y}^2, \nonumber
\end{align}
\endgroup
which indicates that for any $t\geq0$, $\|v_t\|$ can be bounded as 
\begin{align}\label{eq:vk00}
    \|v_t\| \leq \frac{\big(2\gamma_{t+1}^2 + 2tC_{f_y}^2\big)^{\frac{1}{2}}}{\mu} 
    \leq \frac{\big(2\varphi_{t+1}^2 + 2tC_{f_y}^2\big)^{\frac{1}{2}}}{\mu} \leq  \frac{\sqrt{2}\big({\varphi_{t+1}} + \sqrt{t} C_{f_y}\big)}{\mu}.
\end{align}
Then the proof is complete.  
\end{proof}

\subsection{Descent in Objective Function}
\begin{lemma}\label{lm:objective}
{
Under Assumptions \ref{as:sc}, \ref{as:lip}, for Algorithm \ref{alg:main}, suppose the total iteration number is $T$. 
No matter $k_1$ in \Cref{lm:bar} exists or not, we always have
}
\begin{align}\label{eq:objective_title1}
    \Phi(x_{t+1}) \leq& \Phi(x_t) - \frac{1}{2\alpha_{t+1}\varphi_{t+1}}\|\nabla \Phi(x_t)\|^2 - \frac{1}{2\alpha_{t+1}\varphi_{t+1}}\Big(1-\frac{L_{\Phi}}{\alpha_{t+1}\varphi_{t+1}}\Big)\|\bar{\nabla}f(x_t, y_t, v_t)\|^2 \nonumber \\
    & + \frac{\bar{L}^2}{2\mu^2}\bigg[1 + \frac{2}{\mu^2}\Big(\frac{L_{g,2}C_{f_y}}{\mu}+L_{f,1}\Big)^2\bigg]\frac{\big\|\nabla_y g(x_t,y_t)\big\|^2}{\alpha_{t+1}\varphi_{t+1}} + \frac{\bar{L}^2}{\mu^2}\frac{\big\|\nabla_v R(x_t,y_t,v_t)\big\|^2}{\alpha_{t+1}\varphi_{t+1}}. 
\end{align}
{
If in addition, $k_1$ in \Cref{lm:bar} exists, then for $t\geq k_1$, we further have 
}
\begin{align}\label{eq:objective_title2}
    \Phi(x_{t+1}) 
    \leq& \Phi(x_t) - \frac{1}{2\alpha_{t+1}\varphi_{t+1}}\|\nabla \Phi(x_t)\|^2 - \frac{1}{4\alpha_{t+1}\varphi_{t+1}}\|\bar{\nabla}f(x_t, y_t, v_t)\|^2 \nonumber \\
    & + \frac{\bar{L}^2}{2\mu^2}\bigg[1 + \frac{2}{\mu^2}\Big(\frac{L_{g,2}C_{f_y}}{\mu}+L_{f,1}\Big)^2\bigg]\frac{\big\|\nabla_y g(x_t,y_t)\big\|^2}{\alpha_{t+1}\varphi_{t+1}} + \frac{\bar{L}^2}{\mu^2}\frac{\big\|\nabla_v R(x_t,y_t,v_t)\big\|^2}{\alpha_{t+1}\varphi_{t+1}}, 
\end{align}
where $\bar{L}:= \max\big\{2\big(\frac{C_{f_y}^2L_{g,2}^2}{\mu^2} + L_{f,1}^2\big)^{\frac{1}{2}}, \sqrt{2}C_{g_{yy}}\big\}$.  
\end{lemma}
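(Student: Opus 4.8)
## Proof Proposal for Lemma \ref{lm:objective}

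The plan is to adapt the double-loop descent estimate \Cref{lm:objective_double} to the single-loop iterates, replacing the sub-loop outputs $y_t^{P_t},v_t^{Q_t}$ by the current $y_t,v_t$ and keeping the approximation error expressed through $\|\nabla_y g(x_t,y_t)\|^2$ and $\|\nabla_v R(x_t,y_t,v_t)\|^2$ rather than the fixed tolerances $\epsilon_y,\epsilon_v$. First I would invoke the $L_\Phi$-smoothness of $\Phi$ from \Cref{lm:basic}(a), substitute the update $x_{t+1}-x_t=-\frac{1}{\alpha_{t+1}\varphi_{t+1}}\bar{\nabla}f(x_t,y_t,v_t)$, and use the polarization identity $-\langle\nabla\Phi(x_t),\bar{\nabla}f\rangle=-\tfrac12\|\nabla\Phi(x_t)\|^2-\tfrac12\|\bar{\nabla}f\|^2+\tfrac12\|\nabla\Phi(x_t)-\bar{\nabla}f\|^2$. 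The cross term $-\frac{1}{2\alpha_{t+1}\varphi_{t+1}}\|\bar{\nabla}f\|^2$ and the quadratic remainder $\frac{L_\Phi}{2\alpha_{t+1}^2\varphi_{t+1}^2}\|\bar{\nabla}f\|^2$ combine into $-\frac{1}{2\alpha_{t+1}\varphi_{t+1}}\big(1-\frac{L_\Phi}{\alpha_{t+1}\varphi_{t+1}}\big)\|\bar{\nabla}f\|^2$, which yields the unconditional bound \eqref{eq:objective_title1} once the approximation error is controlled.

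The crux is bounding $\|\nabla\Phi(x_t)-\bar{\nabla}f(x_t,y_t,v_t)\|^2$. Writing $\nabla\Phi(x_t)=\bar{\nabla}f\big(x_t,y^*(x_t),v^*(x_t)\big)$ and inserting the intermediate point $\bar{\nabla}f\big(x_t,y_t,v^*(x_t)\big)$, the inequality $\|a+b\|^2\le 2\|a\|^2+2\|b\|^2$ splits the error into a $y$-piece and a $v$-piece. For the $y$-piece I would use the $(x,y)$-Lipschitzness of $\bar{\nabla}f$ from \Cref{lm:basic}(c) together with $\|v^*(x_t)\|\le C_{f_y}/\mu$ from \Cref{lm:LS}(d), obtaining a bound $\propto\|y_t-y^*(x_t)\|^2$. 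For the $v$-piece, since $\bar{\nabla}f$ is affine in $v$ the difference is a Hessian block times $v^*(x_t)-v_t$, bounded via the uniform second-order bounds of \Cref{as:grad}; here the intertwined structure appears, because $v_t$ approximates $\hat v^*(x_t,y_t)$ and not $v^*(x_t)$, so I split $v^*(x_t)-v_t$ through $\hat v^*(x_t,y_t)$: the term $\|v_t-\hat v^*(x_t,y_t)\|^2\le\frac1{\mu^2}\|\nabla_v R(x_t,y_t,v_t)\|^2$ by $\mu$-strong convexity of $R$ (\Cref{lm:LS}(a)), while $\|\hat v^*(x_t,y_t)-v^*(x_t)\|^2\le\bar L_v^2\|y_t-y^*(x_t)\|^2$ by the $y$-Lipschitzness of $\hat v^*$ (\Cref{lm:LS}(e), $\bar L_v=\frac{L_{f,1}}{\mu}+\frac{C_{f_y}L_{g,2}}{\mu^2}$). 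This second, $y$-dependent summand is exactly what produces the extra factor $\big[1+\frac{2}{\mu^2}\big(\frac{L_{g,2}C_{f_y}}{\mu}+L_{f,1}\big)^2\big]=[1+2\bar L_v^2]$ in front of $\|\nabla_y g\|^2$ in the statement. Finally, convert $\|y_t-y^*(x_t)\|^2\le\frac1{\mu^2}\|\nabla_y g(x_t,y_t)\|^2$ using $\mu$-strong convexity of $g$ and $\nabla_y g(x_t,y^*(x_t))=0$ (Assumption \ref{as:sc}); collecting the Lipschitz products into $\bar L$ and dividing by $\alpha_{t+1}\varphi_{t+1}$ gives the two error terms of \eqref{eq:objective_title1}.

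For the refined bound \eqref{eq:objective_title2}, note that when $k_1$ from \Cref{lm:bar} exists, every $t\ge k_1$ satisfies $\alpha_{t+1}\ge\alpha_{k_1+1}>C_\alpha\ge 2L_\Phi/\varphi_0$ by \cref{def:C}, while $\varphi_{t+1}\ge\varphi_0:=\max\{\beta_0,\gamma_0\}$ since $\{\beta_k\},\{\gamma_k\}$ are non-decreasing; hence $\alpha_{t+1}\varphi_{t+1}>2L_\Phi$, so $1-\frac{L_\Phi}{\alpha_{t+1}\varphi_{t+1}}>\tfrac12$ and the penalty on $\|\bar{\nabla}f\|^2$ can be replaced by the weaker term $-\frac{1}{4\alpha_{t+1}\varphi_{t+1}}\|\bar{\nabla}f\|^2$, giving \eqref{eq:objective_title2}. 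I expect the main obstacle to be the constant bookkeeping in the error decomposition — specifically, routing the $v$-approximation error through $\hat v^*(x_t,y_t)$ so that its $y$-dependent tail is charged to $\|\nabla_y g(x_t,y_t)\|^2$ with precisely the coefficient in the statement, rather than inflating $\bar L$; the analytic content is otherwise parallel to \Cref{lm:objective_double}.
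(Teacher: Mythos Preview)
Your proposal is correct and follows essentially the same approach as the paper: apply the $L_\Phi$-descent lemma with the single-loop update, use the polarization identity, bound $\|\nabla\Phi(x_t)-\bar\nabla f(x_t,y_t,v_t)\|^2$ by $\bar L^2(\|y_t-y^*(x_t)\|^2+\|v_t-v^*(x_t)\|^2)$, convert these to $\|\nabla_y g\|^2$ and $\|\nabla_v R\|^2$ via strong convexity, and then use $\alpha_{t+1}\varphi_{t+1}>2L_\Phi$ for $t\ge k_1$ to obtain \eqref{eq:objective_title2}. The only cosmetic difference is that you split $v_t-v^*(x_t)$ through $\hat v^*(x_t,y_t)$ and invoke the $\bar L_v$-Lipschitzness of $\hat v^*$ from \Cref{lm:LS}(e), whereas the paper splits at the level of $\nabla_v R$ by inserting the zero $\nabla_v R\big(x_t,y^*(x_t),v^*(x_t)\big)$ and using the $(L_{g,2}\|v^*\|+L_{f,1})$-Lipschitzness of $\nabla_v R$ in $y$; since $\bar L_v=\tfrac{1}{\mu}\big(\tfrac{L_{g,2}C_{f_y}}{\mu}+L_{f,1}\big)$ these two routes produce identical constants.
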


\begin{proof} 
From \Cref{lm:basic}, we have $\Phi(x)$ is $L_{\Phi}$-smooth. So we can apply the descent lemma to $\Phi$ as 
\begingroup
\allowdisplaybreaks
\begin{align}\label{eq:descent1}
    \Phi(x_{t+1}) \leq& \Phi(x_t) + \langle\nabla\Phi(x_t), x_{t+1} - x_t \rangle + \frac{L_{\Phi}}{2}\|x_{t+1} - x_t\|^2 \nonumber \\
    = & \Phi(x_t) - \frac{1}{\alpha_{t+1}\varphi_{t+1}}\langle\nabla\Phi(x_t), \bar{\nabla}f(x_t, y_t, v_t) \rangle + \frac{L_{\Phi}}{2\alpha_{t+1}^2\varphi_{t+1}^2}\|\bar{\nabla}f(x_t, y_t, v_t)\|^2 \nonumber \\
    = & \Phi(x_t) - \frac{1}{2\alpha_{t+1}\varphi_{t+1}}\|\nabla \Phi(x)\|^2 - \frac{1}{2\alpha_{t+1}\varphi_{t+1}}\|\bar{\nabla}f(x_t, y_t, v_t)\|^2 \nonumber \\
    & + \frac{1}{2\alpha_{t+1}\varphi_{t+1}}\|\nabla \Phi(x_t) - \bar{\nabla}f(x_t, y_t, v_t)\|^2 + \frac{L_{\Phi}}{2\alpha_{t+1}^2\varphi_{t+1}^2}\|\bar{\nabla}f(x_t, y_t, v_t)\|^2, 
\end{align}
\endgroup
and the approximation error 
\begingroup
\allowdisplaybreaks
\begin{align}\label{eq:descent2}
    \|\nabla\Phi&(x_t) - \bar{\nabla}f(x_t, y_t, v_t)\|^2 \nonumber \\
    =& \big\|\bar{\nabla}f\big(x_t, y^*(x_t), v^*(x_t)\big) - \bar{\nabla}f(x_t, y_t, v_t)\big\|^2 \nonumber \\
    \leq & 2\big\|\bar{\nabla}f\big(x_t, y^*(x_t), v^*(x_t)\big) - \bar{\nabla}f\big(x_t, y_t, v^*(x_t)\big)\big\|^2 + 2\big\|\bar{\nabla}f\big(x_t, y_t, v^*(x_t)\big) - \bar{\nabla}f(x_t, y_t, v_t)\big\|^2 \nonumber \\
    \leq & 4\big\|\nabla_y\nabla_y g\big(x_t, y^*(x_t)\big)v^*(x_t) - \nabla_y\nabla_y g(x_t, y_t)v^*(x_t)\big\|^2 \nonumber \\
    & + 4\big\|\nabla_y f\big(x_t, y^*(x_t)\big) - \nabla_y f(x_t, y_t)\big\|^2 + 2\big\|\nabla_y\nabla_y g(x_t, y_t) \big(v^*(x_t) - v_t\big)\big\|^2 \nonumber \\
    \leq & 4\Big(\frac{C_{f_y}^2L_{g,2}^2}{\mu^2} + L_{f,1}^2\Big)\|y_t - y^*(x_t)\|^2 + 2C_{g_{yy}}^2\|v_t - v^*(x_t)\|^2 \nonumber \\
    \leq & \bar{L}^2 \big(\|y_t - y^*(x_t)\|^2 + \|v_t - v^*(x_t)\|^2\big),
\end{align}
\endgroup
where the third inequality used results from Lemma \ref{lm:LS}.
By plugging \cref{eq:descent2} into \cref{eq:descent1}, we have 
\begin{align}\label{eq:descent3}
    \Phi(x_{t+1}) \leq& \Phi(x_t) - \frac{1}{2\alpha_{t+1}\varphi_{t+1}}\|\nabla \Phi(x_t)\|^2 - \frac{1}{2\alpha_{t+1}\varphi_{t+1}}\Big(1-\frac{L_{\Phi}}{\alpha_{t+1}\varphi_{t+1}}\Big)\|\bar{\nabla}f(x_t, y_t, v_t)\|^2 \nonumber \\
    & + \frac{\bar{L}^2}{2\alpha_{t+1}\varphi_{t+1}}\big(\|y_t - y^*(x_t)\|^2 + \|v_t - v^*(x_t)\|^2\big).
\end{align}
Note that $g(x,y)$ is $\mu$-strongly convex in $y$ and $R(x,y,v)$ is $\mu$-strongly convex in $v$. 
So here we can bound the approximation gaps $\|y_t - y^*(x_t)\|^2 + \|v_t - v^*(x_t)\|^2$ by $\|\nabla_y g(x_t,y_t)\|^2$ and $\|\nabla_v R(x_t,y_t,v_t)\|^2$ as 
\begingroup
\allowdisplaybreaks
\begin{align}\label{eq:gapconvert}
    \|y_t -& y^*(x_t)\|^2 + \|v_t - v^*(x_t)\|^2 \nonumber \\
    \overset{(a)}{\leq}& \frac{1}{\mu^2}\big\|\nabla_y g(x_t,y_t) - \nabla_y g\big(x_t,y^*(x_t)\big)\big\|^2 + \frac{1}{\mu^2}\big\|\nabla_v R(x_t,y_t,v_t) - \nabla_v R\big(x_t,y_t,v^*(x_t)\big)\big\|^2 \nonumber \\
    \overset{(b)}{\leq} & \frac{1}{\mu^2}\big\|\nabla_y g(x_t,y_t)\big\|^2 + \frac{2}{\mu^2}\big\|\nabla_v R(x_t,y_t,v_t)\|^2 \nonumber \\
    &+ \frac{2}{\mu^2}\big\|\nabla_v R\big(x_t,y_t,v^*(x_t)\big) - \nabla_v R\big(x_t,y^*(x_t),v^*(x_t)\big)\big\|^2 \nonumber \\
    \overset{(c)}{\leq} & \frac{1}{\mu^2}\big\|\nabla_y g(x_t,y_t)\big\|^2 + \frac{2}{\mu^2}\big\|\nabla_v R(x_t,y_t,v_t)\|^2 + \frac{2}{\mu^2}\bigg(\frac{L_{g,2}C_{f_y}}{\mu}+L_{f,1}\bigg)^2\|y_t - y^*(x_t)\|^2 \nonumber \\
    \overset{(d)}{\leq} & \bigg[\frac{1}{\mu^2} + \frac{2}{\mu^4}\Big(\frac{L_{g,2}C_{f_y}}{\mu}+L_{f,1}\Big)^2\bigg]\big\|\nabla_y g(x_t,y_t)\big\|^2 + \frac{2}{\mu^2}\big\|\nabla_v R(x_t,y_t,v_t)\big\|^2,
\end{align}
\endgroup
where (a) and (d) use the strong convexity; (b) and (d) result from $\nabla_y g\big(x,y^*(x)\big) = 0$ and $\nabla_v R\big(x,y^*(x),v^*(x)\big) = 0$; (c) uses \Cref{lm:LS}. 
By plugging \cref{eq:gapconvert} into \cref{eq:descent3}, we obtain \cref{eq:objective_title1}. 

\noindent
{
Now if in addition, $k_1$ in \Cref{lm:bar} exists, then for $t\geq k_1$, we have $\alpha_{t+1} > C_\alpha\geq 2L_{\Phi}/\varphi_0$. From \eqref{eq:objective_title1} we can immediately obtain \eqref{eq:objective_title2}. Thus, the proof is complete. 
}
\end{proof}

\noindent
Note that to further explore the bounds of the right-hand side of \cref{eq:objective_title1} and \cref{eq:objective_title2} in the above lemma, we next show the (summed) bounds of $\frac{\|\nabla_y g(x_t,y_t)\|^2}{\beta_{t+1}}$ and $\frac{\|\nabla_v R(x_t,y_t,v_t)\|^2}{\varphi_{t+1}}$.

\begin{lemma}\label{lm:sumG}
Under Assumptions \ref{as:sc}, \ref{as:lip}, for Algorithm \ref{alg:main}, suppose the total iteration rounds is $T$. If $k_2$ in \Cref{lm:bar} exists within $T$ iterations, for all integer $t \in [k_2,T]$, we have
\begingroup
\allowdisplaybreaks
{\small
\begin{align}
    \sum_{k=k_2}^t \frac{\|\nabla_y g(x_k, y_k)\|^2}{\beta_{k+1}} 
    \leq \frac{(\mu+L_{g,1})C_\beta^2}{\mu^2} + \frac{(\mu+L_{g,1})^2L_y^2}{\mu L_{g,1}\varphi_{0}} + \frac{(\mu+L_{g,1})^2L_y^2}{\mu L_{g,1}}\sum_{k=k_2}^t\frac{\|\bar{\nabla}f(x_k, y_k, v_k)\|^2}{\alpha_{k+1}^2\varphi_{k+1}}. \nonumber
\end{align}}
\endgroup
\end{lemma}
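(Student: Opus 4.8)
The plan is to run a one-step descent recursion for the lower-level optimality gap $a_k:=\|y_k-y^*(x_k)\|^2$ and to read off $\sum_{k=k_2}^{t}\|\nabla_y g(x_k,y_k)\|^2/\beta_{k+1}$ from it. For every $k\ge k_2$ we have $\beta_{k+1}\ge\beta_{k_2+1}>C_\beta\ge\mu+L_{g,1}$, so the update $y_{k+1}=y_k-\beta_{k+1}^{-1}\nabla_y g(x_k,y_k)$ is a gradient step on $g(x_k,\cdot)$ (which is $\mu$-strongly convex and $L_{g,1}$-smooth) with a ``safe'' stepsize $\beta_{k+1}^{-1}\le(\mu+L_{g,1})^{-1}$. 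Expanding $\|y_{k+1}-y^*(x_k)\|^2$ and using the standard strongly-convex-plus-smooth bound $\langle y_k-y^*(x_k),\nabla_y g(x_k,y_k)\rangle\ge\frac{\mu L_{g,1}}{\mu+L_{g,1}}a_k+\frac{1}{\mu+L_{g,1}}\|\nabla_y g(x_k,y_k)\|^2$ gives
\[ \|y_{k+1}-y^*(x_k)\|^2\le(1-\delta_k)\,a_k-\frac{\|\nabla_y g(x_k,y_k)\|^2}{(\mu+L_{g,1})\beta_{k+1}},\qquad \delta_k:=\frac{2\mu L_{g,1}}{(\mu+L_{g,1})\beta_{k+1}}, \]
with $\delta_k\in(0,1)$ by the $C_\beta\ge 2\mu L_{g,1}/(\mu+L_{g,1})$ part of its definition.

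Next I would move from the frozen target $y^*(x_k)$ to $y^*(x_{k+1})$. Using Young's inequality with a parameter proportional to $\delta_k$, the $L_y$-Lipschitz continuity of $y^*(\cdot)$ from Lemma \ref{lm:basic}(b), and $\|x_{k+1}-x_k\|=\|\bar{\nabla}f(x_k,y_k,v_k)\|/(\alpha_{k+1}\varphi_{k+1})$, I obtain
\[ a_{k+1}\le\Big(1-\tfrac{\delta_k}{2}\Big)a_k-\frac{\|\nabla_y g(x_k,y_k)\|^2}{(\mu+L_{g,1})\beta_{k+1}}+\frac{C\,L_y^2}{\delta_k}\cdot\frac{\|\bar{\nabla}f(x_k,y_k,v_k)\|^2}{\alpha_{k+1}^2\varphi_{k+1}^2} \]
for an absolute constant $C$. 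The key structural point is that $1/\delta_k$ scales like $\beta_{k+1}$ and would be uncontrollable on its own, but it multiplies $1/\varphi_{k+1}^2$ and $\varphi_{k+1}=\max\{\beta_{k+1},\gamma_{k+1}\}\ge\beta_{k+1}$, so $\beta_{k+1}/\varphi_{k+1}^2\le 1/\varphi_{k+1}$ and the error term is at most $\frac{C(\mu+L_{g,1})L_y^2}{\mu L_{g,1}}\cdot\frac{\|\bar{\nabla}f(x_k,y_k,v_k)\|^2}{\alpha_{k+1}^2\varphi_{k+1}}$ — exactly the $\varphi$-coupling the algorithm was designed around.

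I would then rearrange into
\[ \frac{\|\nabla_y g(x_k,y_k)\|^2}{\beta_{k+1}}\le(\mu+L_{g,1})(a_k-a_{k+1})+\frac{C'(\mu+L_{g,1})^2L_y^2}{\mu L_{g,1}}\cdot\frac{\|\bar{\nabla}f(x_k,y_k,v_k)\|^2}{\alpha_{k+1}^2\varphi_{k+1}}, \]
dropping the nonnegative $\tfrac{\delta_k}{2}(\mu+L_{g,1})a_k$, and sum over $k=k_2,\dots,t$. The $a_k$ chain telescopes to $(\mu+L_{g,1})a_{k_2}-(\mu+L_{g,1})a_{t+1}\le(\mu+L_{g,1})a_{k_2}$, and a short estimate from $\mu$-strong convexity of $g(x_{k_2},\cdot)$ together with $\beta_{k_2}\le C_\beta$ gives $a_{k_2}\le C_\beta^2/\mu^2$; the single boundary term at $k=k_2$ folds into the $\varphi_0^{-1}$ constant via $\|\bar{\nabla}f(x_{k_2},y_{k_2},v_{k_2})\|^2/\alpha_{k_2+1}^2\le 1$ and $\varphi_{k_2+1}\ge\varphi_0$. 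Collecting the pieces gives the stated bound, up to tightening the absolute constants.

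The main obstacle is precisely the interaction between the adaptive stepsize $1/\beta_{k+1}$ and the drifting lower-level minimizer $y^*(x_k)$: transporting the gap across iterations unavoidably pays a factor $1/\delta_k\sim\beta_{k+1}$, and the argument only closes because the $x$-step carries the extra damping $1/\varphi_{k+1}\le 1/\beta_{k+1}$, which keeps the coefficient of $\sum_k\|\bar{\nabla}f(x_k,y_k,v_k)\|^2/(\alpha_{k+1}^2\varphi_{k+1})$ an absolute constant rather than something growing with $k$. Everything else — verifying $\delta_k<1$, the safe-stepsize condition $C_\beta\ge\mu+L_{g,1}$, and the residual bound on $a_{k_2}$ — is routine bookkeeping.
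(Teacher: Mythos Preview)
Your approach is essentially the same as the paper's: expand $\|y_{k+1}-y^*(x_k)\|^2$, use the strongly-convex/smooth coercivity bound, couple to $y^*(x_{k+1})$ via Young with parameter $\delta_k=\tfrac{2\mu L_{g,1}}{(\mu+L_{g,1})\beta_{k+1}}$, observe that the resulting $\beta_{k+1}$ factor is absorbed by $\varphi_{k+1}^{-2}\le\beta_{k+1}^{-1}\varphi_{k+1}^{-1}$, and telescope.

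There is, however, a genuine off-by-one in your initialization. You claim that $\beta_{k_2}\le C_\beta$ and strong convexity give $a_{k_2}\le C_\beta^2/\mu^2$. But $\beta_{k_2}\le C_\beta$ bounds $\sum_{j<k_2}\|\nabla_y g(x_j,y_j)\|^2$, hence $\|\nabla_y g(x_{k_2-1},y_{k_2-1})\|\le C_\beta$; it says nothing about $\|\nabla_y g(x_{k_2},y_{k_2})\|$, which equals $\sqrt{\beta_{k_2+1}^2-\beta_{k_2}^2}$ and is not controlled by $C_\beta$. So $a_{k_2}\le C_\beta^2/\mu^2$ is not justified. The paper handles this by enlarging the sum to start at $k=k_2-1$; then the telescoped initial term is $(\mu+L_{g,1})\,a_{k_2-1}$, and $a_{k_2-1}\le\mu^{-2}\|\nabla_y g(x_{k_2-1},y_{k_2-1})\|^2\le\mu^{-2}\beta_{k_2}^2\le\mu^{-2}C_\beta^2$ holds cleanly. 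The extra drift term at $k=k_2-1$ (not at $k=k_2$ as you wrote) is then bounded via $\|\bar\nabla f(x_{k_2-1},y_{k_2-1},v_{k_2-1})\|^2/\alpha_{k_2}^2\le 1$ and $\varphi_{k_2}\ge\varphi_0$, which is exactly where the constant $\frac{(\mu+L_{g,1})^2L_y^2}{\mu L_{g,1}\varphi_0}$ comes from. Once you shift the starting index back by one, your argument lines up with the paper's.
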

\begin{proof}
For $k_2 \leq t < T$, we have $\beta_{k_2}\leq C_\beta$  and $\beta_{t+1} > C_\beta$.
For any positive scalar $\bar{\lambda}_{t+1}$, using Young's inequality, we have
\begin{align}\label{eq:boundy1}
    \|y_{t+1} - y^*(x_{t+1})\|^2 \leq (1+\bar{\lambda}_{t+1})\|y_{t+1} - y^*(x_t)\|^2 + \Big(1+\frac{1}{\bar{\lambda}_{t+1}}\Big)\|y^*(x_t) - y^*(x_{t+1})\|^2.
\end{align}
For the first term on the right hand side of \cref{eq:boundy1}, we have 
\begingroup
\allowdisplaybreaks
\begin{align}\label{eq:boundy2}
    \|y_{t+1}& - y^*(x_t)\|^2 \nonumber \\
    =& \Big\|y_{t} - \frac{1}{\beta_{t+1}}\nabla_y g(x_t, y_t) - y^*(x_t)\Big\|^2 \nonumber \\
    =& \|y_{t} - y^*(x_t)\|^2 + \frac{1}{\beta_{t+1}^2}\|\nabla_y g(x_t, y_t)\|^2 - \frac{2}{\beta_{t+1}}\big\langle y_{t} - y^*(x_t), \nabla_y g(x_t, y_t) \big\rangle \nonumber \\
    \overset{(a)}{\leq}& \bigg(1-\frac{2 \mu L_{g,1}}{\beta_{t+1}(\mu+L_{g,1})}\bigg)\|y_{t} - y^*(x_t)\|^2 + \frac{1}{\beta_{t+1}}\bigg(\frac{1}{\beta_{t+1}} - \frac{2}{\mu+L_{g,1}}\bigg)\|\nabla_y g(x_t, y_t)\|^2 \nonumber \\
    \overset{(b)}{\leq}& \bigg(1-\frac{2 \mu L_{g,1}}{\beta_{t+1}(\mu+L_{g,1})}\bigg)\|y_{t} - y^*(x_t)\|^2 - \frac{1}{\beta_{t+1}(\mu+L_{g,1})}\|\nabla_y g(x_t, y_t)\|^2,
\end{align}
\endgroup
where (a) uses Lemma 3.11 in \cite{bubeck2015convex}; (b) follows from $\beta_{t+1} \geq C_{\beta} \geq \mu+L_{g,1}$. 
By plugging \cref{eq:boundy2} into \cref{eq:boundy1}, we have 
\begin{align}\label{eq:boundy3}
    \|&y_{t+1} - y^*(x_{t+1})\|^2 \nonumber \\
    \leq& (1+\bar{\lambda}_{t+1})\bigg(1-\frac{2 \mu L_{g,1}}{\beta_{t+1}(\mu+L_{g,1})}\bigg)\|y_{t} - y^*(x_t)\|^2 - (1+\bar{\lambda}_{t+1})\frac{1}{\beta_{t+1}(\mu+L_{g,1})}\|\nabla_y g(x_t, y_t)\|^2\nonumber \\
    &+ \Big(1+\frac{1}{\bar{\lambda}_{t+1}}\Big)\|y^*(x_t) - y^*(x_{t+1})\|^2.  
\end{align}
By rearranging the terms in \cref{eq:boundy3}, we have 
\begingroup
\allowdisplaybreaks
{
\begin{align}
    (1+&\bar{\lambda}_{t+1})\frac{1}{\beta_{t+1}(\mu+L_{g,1})}\|\nabla_y g(x_t, y_t)\|^2 \nonumber \\
    \leq& (1+\bar{\lambda}_{t+1})\bigg(1-\frac{2 \mu L_{g,1}}{\beta_{t+1}(\mu+L_{g,1})}\bigg)\|y_{t} - y^*(x_t)\|^2 - \|y_{t+1} - y^*(x_{t+1})\|^2 \nonumber \\
    &+ \Big(1+\frac{1}{\bar{\lambda}_{t+1}}\Big)\|y^*(x_t) - y^*(x_{t+1})\|^2 \nonumber.
\end{align}}
\endgroup
We take $\bar{\lambda}_{t+1}:= \frac{2 \mu L_{g,1}}{\beta_{t+1}(\mu+L_{g,1})}$. Since $\beta_{t+1} > C_\beta \geq {\frac{2\mu L_{g,1}}{\mu+L_{g,1}}}$ in \cref{def:C}, we have $\bar{\lambda}_{t+1} \leq {1}$. Then we have 
\begingroup
\allowdisplaybreaks
\begin{align}
    \frac{\|\nabla_y g(x_t, y_t)\|^2}{\beta_{t+1}} \leq &(1+\bar{\lambda}_{t+1})\frac{\|\nabla_y g(x_t, y_t)\|^2}{\beta_{t+1}} \nonumber \\
    \leq& {(\mu+L_{g,1})}\big(\|y_{t} - y^*(x_t)\|^2 - \|y_{t+1} - y^*(x_{t+1})\|^2\big) \nonumber \\
    & + \frac{2(\mu+L_{g,1})}{\bar{\lambda}_{t+1}} \|y^*(x_t) - y^*(x_{t+1})\|^2 \nonumber \\
    =& {(\mu+L_{g,1})}\big(\|y_{t} - y^*(x_t)\|^2 - \|y_{t+1} - y^*(x_{t+1})\|^2\big) \nonumber \\
    & + \frac{(\mu+L_{g,1})^2\beta_{t+1}}{\mu L_{g,1}}\|y^*(x_t) - y^*(x_{t+1})\|^2 \nonumber \\
    \overset{(a)}{\leq}& {(\mu+L_{g,1})}\big(\|y_{t} - y^*(x_t)\|^2 - \|y_{t+1} - y^*(x_{t+1})\|^2\big) \nonumber \\
    & + \frac{(\mu+L_{g,1})^2L_y^2\beta_{t+1}}{\mu L_{g,1}}\|x_t - x_{t+1}\|^2, \nonumber
\end{align}
\endgroup
where (a) uses \Cref{lm:basic}. 
Summing the above inequality over $k=k_2,\ldots,t$, we have 
\begingroup
\allowdisplaybreaks
\begin{align}\label{eq:boundy5}
    \sum_{k=k_2}^t& \frac{\|\nabla_y g(x_k, y_k)\|^2}{\beta_{k+1}} \nonumber \\
    \leq& \sum_{k=k_2-1}^t \frac{\|\nabla_y g(x_k, y_k)\|^2}{\beta_{k+1}} \nonumber \\
    \leq& {(\mu+L_{g,1})}\|y_{{k_2-1}} - y^*(x_{{k_2-1}})\|^2 + \frac{(\mu+L_{g,1})^2L_y^2}{\mu L_{g,1}}\sum_{k={k_2-1}}^t\beta_{k+1}\|x_k - x_{k+1}\|^2 \nonumber \\
    \overset{(a)}{\leq}& \frac{\mu+L_{g,1}}{\mu^2}\big\|\nabla_yg(x_{{k_2-1}}, y_{{k_2-1}}) - \nabla_yg\big(x_{{k_2-1}}, y^*(x_{{k_2-1}})\big)\big\|^2 \nonumber \\
    & + \frac{(\mu+L_{g,1})^2L_y^2}{\mu L_{g,1}}\sum_{k={k_2-1}}^t\frac{\beta_{k+1}}{\alpha_{k+1}^2\varphi_{k+1}^2}\|\bar{\nabla}f(x_k, y_k, v_k)\|^2 \nonumber \\
    \leq& \frac{\mu+L_{g,1}}{\mu^2}\|\nabla_yg(x_{{k_2-1}}, y_{{k_2-1}})\|^2 + \frac{(\mu+L_{g,1})^2L_y^2}{\mu L_{g,1}}\sum_{k={k_2-1}}^t\frac{\|\bar{\nabla}f(x_k, y_k, v_k)\|^2}{\alpha_{k+1}^2\varphi_{k+1}} \nonumber \\
    \overset{(b)}{\leq}& \frac{(\mu+L_{g,1}){C_\beta^2}}{\mu^2} + \frac{(\mu+L_{g,1})^2L_y^2}{\mu L_{g,1}}\sum_{k={k_2-1}}^t\frac{\|\bar{\nabla}f(x_k, y_k, v_k)\|^2}{\alpha_{k+1}^2\varphi_{k+1}} \nonumber \\
    \overset{(c)}{\leq}& \frac{(\mu+L_{g,1}){C_\beta^2}}{\mu^2} + \frac{(\mu+L_{g,1})^2L_y^2}{\mu L_{g,1}\varphi_{0}} + \frac{(\mu+L_{g,1})^2L_y^2}{\mu L_{g,1}}\sum_{k=k_2}^t\frac{\|\bar{\nabla}f(x_k, y_k, v_k)\|^2}{\alpha_{k+1}^2\varphi_{k+1}}, 
\end{align}
\endgroup
where (a) uses Assumption \ref{as:sc}; (b) results from $\|\nabla_yg(x_{{k_2-1}}, y_{{k_2-1}})\|^2 \leq \beta_{k_2}^2 \leq C_\beta^2$; (c) denotes $\varphi_0 = \max\{\beta_0, \gamma_0\}$. 
Then, the proof is complete. 
\end{proof}

\begin{lemma}\label{lm:sumR}
Under Assumptions \ref{as:sc}, \ref{as:lip}, for Algorithm \ref{alg:main}, suppose the total iteration rounds is $T$. If $k_3$ in \Cref{lm:bar} exists within $T$ iterations, for all integer $t\in[k_3,T)$, we have
\begingroup
\allowdisplaybreaks
{
\begin{align}
    \sum_{k=k_3}^t &\frac{\|\nabla_v R(x_k, y_k, v_k)\|^2}{\varphi_{k+1}} \nonumber \\
    \leq & \frac{4(\mu+C_{g_{yy}})C_\beta^2}{\mu^4} \bigg(\frac{L_{g,2}C_{f_y}}{\mu}+L_{f,1}\bigg)^2 + \frac{4(\mu+C_{g_{yy}})C_\gamma^2}{\mu^2} \nonumber \\
    &+ \frac{4(\mu+C_{g_{yy}})(\mu+L_{g,1})L_y^2}{\mu^3 L_{g,1}\varphi_0} \bigg(\frac{L_{g,2}C_{f_y}}{\mu}+L_{f,1}\bigg)^2\sum_{k=k_2-1}^{k_3-2}\frac{\|\Bar{\nabla} f(x_k, y_k, v_k)\|^2}{\alpha_{k+1}^2} \nonumber \\
    &+ \frac{4(\mu+C_{g_{yy}})^2L_v^2}{\mu C_{g_{yy}}C_\gamma}\sum_{k=k_3-1}^t \frac{\|\bar{\nabla}f(x_k, y_k, v_k)\|^2}{\alpha_{k+1}^2} \nonumber \\
    &+ \bigg(\frac{4(\mu+C_{g_{yy}})^2}{\mu C_{g_{yy}}}+8\bigg)\bigg(\frac{L_{g,2}C_{f_y}}{\mu}+L_{f,1}\bigg)^2\frac{1}{\mu^2}\sum_{k=k_3-1}^t\frac{\|\nabla_y g(x_k, y_k)\|^2}{\beta_{k+1}}. \nonumber
\end{align}
}
\endgroup
\end{lemma}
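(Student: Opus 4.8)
The plan is to mirror the argument used for Lemma \ref{lm:sumG}, but now tracking the error $\|v_t - v^*(x_t)\|^2$ along the update of $v$. First I would set up a one-step recursion for $\|v_{t+1} - v^*(x_{t+1})\|^2$ by inserting the intermediate point $v^*(x_t)$ via Young's inequality: for any positive scalar $\bar\lambda_{t+1}$,
\begin{align}
    \|v_{t+1} - v^*(x_{t+1})\|^2 \leq (1+\bar\lambda_{t+1})\|v_{t+1} - v^*(x_t)\|^2 + \Big(1+\tfrac{1}{\bar\lambda_{t+1}}\Big)\|v^*(x_t) - v^*(x_{t+1})\|^2. \nonumber
\end{align}
The first term is handled by the descent structure of the $v$-update $v_{t+1}=v_t - \tfrac{1}{\varphi_{t+1}}\nabla_v R(x_t,y_t,v_t)$. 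Here the subtlety (absent in the $y$-case) is that $v^*(x_t)$ is \emph{not} the minimizer of $R(x_t,y_t,\cdot)$; the minimizer is $\hat v^*(x_t,y_t)$. So I would first use strong convexity/co-coercivity (Lemma 3.11 in \cite{bubeck2015convex}, as in \eqref{eq:boundy2}) to contract toward $\hat v^*(x_t,y_t)$, then split $\|v_{t+1}-v^*(x_t)\|^2 \leq 2\|v_{t+1}-\hat v^*(x_t,y_t)\|^2 + 2\|\hat v^*(x_t,y_t) - v^*(x_t)\|^2$, and bound the last term by $\bar L_v^2\|y_t - y^*(x_t)\|^2$ using Lemma \ref{lm:LS}(e). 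This is where the extra $\|\nabla_y g(x_k,y_k)\|^2/\beta_{k+1}$ sum in the statement enters, since $\|y_t-y^*(x_t)\|^2 \leq \|\nabla_y g(x_t,y_t)\|^2/\mu^2$ and the $1/\beta_{t+1}$ factor can be produced because $\beta_{t+1}\leq\varphi_{t+1}$.

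Next I would choose $\bar\lambda_{t+1} := \tfrac{2\mu C_{g_{yy}}}{\varphi_{t+1}(\mu+C_{g_{yy}})}$ so that $\bar\lambda_{t+1}\leq 1$ (guaranteed by $\varphi_{t+1} > C_\gamma \geq \tfrac{\mu C_{g_{yy}}}{\mu+C_{g_{yy}}}$ and $\varphi_{t+1}\geq C_\varphi$ from \cref{def:C}), which makes the contraction factor absorb the $(1+\bar\lambda_{t+1})$ blow-up just as in Lemma \ref{lm:sumG}. Rearranging, the per-step term $\|\nabla_v R(x_t,y_t,v_t)\|^2/\varphi_{t+1}$ is bounded by (i) a telescoping difference of $\|v_t - v^*(x_t)\|^2$, (ii) a term $\propto \tfrac{\varphi_{t+1}}{\bar\lambda_{t+1}}\|v^*(x_t)-v^*(x_{t+1})\|^2 = \tfrac{(\mu+C_{g_{yy}})^2 \varphi_{t+1}^2}{2\mu C_{g_{yy}}}\|x_t - x_{t+1}\|^2$, which using $\|x_t-x_{t+1}\|^2 = \|\bar\nabla f\|^2/(\alpha_{t+1}^2\varphi_{t+1}^2)$ and Lemma \ref{lm:LS}(e) ($v^*$ is $L_v$-Lipschitz) becomes $\tfrac{(\mu+C_{g_{yy}})^2 L_v^2}{2\mu C_{g_{yy}}}\|\bar\nabla f\|^2/\alpha_{t+1}^2$, and (iii) the $\|y_t-y^*(x_t)\|^2$ term from the $\hat v^*$-vs-$v^*$ gap, converted to $\|\nabla_y g\|^2/\beta_{t+1}$. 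Summing over $k=k_3,\dots,t$ telescopes (i); for the initial term $\|v_{k_3-1} - v^*(x_{k_3-1})\|^2$ I would bound it by $\tfrac{2}{\mu^2}\|\nabla_v R(x_{k_3-1},y_{k_3-1},v_{k_3-1})\|^2 + \tfrac{2}{\mu^2}\bar L_v^2\|\nabla_y g(x_{k_3-1},y_{k_3-1})\|^2/\mu^2 \leq \tfrac{2C_\gamma^2}{\mu^2} + \tfrac{2C_\beta^2}{\mu^4}(\tfrac{L_{g,2}C_{f_y}}{\mu}+L_{f,1})^2$ using $\gamma_{k_3}\leq C_\gamma$, $\beta_{k_2}\leq C_\beta$ (note the $\|\nabla_y g\|^2$ at index $k_3-1$ needs Lemma \ref{lm:sumG} again, which is why the $\sum_{k_2-1}^{k_3-2}$ term with $\|\bar\nabla f\|^2/\alpha_{k+1}^2$ appears — one plugs \eqref{eq:boundy5} in to control the stray $\|\nabla_y g(x_{k_3-1},y_{k_3-1})\|^2/\beta$ before stage-2 of $\beta$ begins).

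The main obstacle is the \textbf{bookkeeping across the three distinct ``stage-2 onset'' indices $k_1, k_2, k_3$}: the contraction estimate for $v$ requires $\varphi_{k+1} > C_\varphi$, i.e. $k \geq \max\{k_2,k_3\}$-ish, but one also needs to account for what $\|\nabla_y g\|^2/\beta_{k+1}$ and $\|v-v^*\|^2$ look like before $\beta$ has entered its own stage 2, which forces the appearance of the hybrid sum $\sum_{k=k_2-1}^{k_3-2}\|\bar\nabla f(x_k,y_k,v_k)\|^2/\alpha_{k+1}^2$ via an invocation of Lemma \ref{lm:sumG}. Carefully matching constants so that the four coefficients in the statement come out exactly — in particular the $\bigl(\tfrac{4(\mu+C_{g_{yy}})^2}{\mu C_{g_{yy}}}+8\bigr)$ factor multiplying the $\|\nabla_y g\|^2/\beta$ sum, which combines the $\hat v^*$-gap contribution (the $\tfrac{4(\mu+C_{g_{yy}})^2}{\mu C_{g_{yy}}}$ piece, from $\tfrac{2(\mu+C_{g_{yy}})^2}{\bar\lambda}$ times the factor-of-2 split) with a direct $8$ from bounding the telescoped initial term — is the step most prone to error and will require the most care. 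Everything else is a routine reprise of the techniques already used in Lemmas \ref{lm:sumG}, \ref{lm:vk}, and \ref{lm:objective}.
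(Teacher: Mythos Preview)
Your overall architecture --- Young's inequality to insert $v^*(x_t)$, a one-step contraction estimate, then telescoping --- is right and is what the paper does. But there is a real gap in how you propose to handle the fact that $v^*(x_t)$ is not the minimizer of $R(x_t,y_t,\cdot)$.

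Your fix is to contract toward $\hat v^*(x_t,y_t)$ and then split $\|v_{t+1}-v^*(x_t)\|^2 \leq 2\|v_{t+1}-\hat v^*(x_t,y_t)\|^2 + 2\|\hat v^*(x_t,y_t)-v^*(x_t)\|^2$. That factor of $2$ in front of the contracted term is fatal for the telescoping you announce: after contraction you have $\|v_{t+1}-\hat v^*(x_t,y_t)\|^2 \leq (1-c/\varphi_{t+1})\|v_t-\hat v^*(x_t,y_t)\|^2-\cdots$, and to close the recursion on $\|v_t-v^*(x_t)\|^2$ you must split again going back, so the net multiplier on $\|v_t-v^*(x_t)\|^2$ is $4(1-c/\varphi_{t+1})>1$ once $\varphi_{t+1}$ is large, and the sum does not collapse. (Telescoping on $\|v_t-\hat v^*(x_t,y_t)\|^2$ instead is possible, but then the drift term becomes $\|\hat v^*(x_t,y_t)-\hat v^*(x_{t+1},y_{t+1})\|^2$, which picks up \emph{both} $x$- and $y$-increments and yields a different structure and constants from the stated bound.)

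The paper never introduces $\hat v^*$. It expands $\|v_{t+1}-v^*(x_t)\|^2$ and splits the \emph{inner product}:
\[
\big\langle v_t-v^*(x_t),\,\nabla_v R(x_t,y_t,v_t)\big\rangle
= \big\langle v_t-v^*(x_t),\,\nabla_v R(x_t,y_t,v_t)-\nabla_v R(x_t,y_t,v^*(x_t))\big\rangle
+ \big\langle v_t-v^*(x_t),\,\nabla_v R(x_t,y_t,v^*(x_t))\big\rangle.
\]
Lemma~3.11 of \cite{bubeck2015convex} applies to the first piece directly (it holds between \emph{any} two points, not only a point and the minimizer), yielding the full contraction $\tfrac{\mu C_{g_{yy}}}{\mu+C_{g_{yy}}}\|v_t-v^*(x_t)\|^2$. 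The second piece equals $\langle v_t-v^*(x_t),\,\nabla_v R(x_t,y_t,v^*(x_t))-\nabla_v R(x_t,y^*(x_t),v^*(x_t))\rangle$ and is handled by Young's inequality, costing only \emph{half} of the contraction and adding the $y$-error term $(\tfrac{L_{g,2}C_{f_y}}{\mu}+L_{f,1})^2\|y_t-y^*(x_t)\|^2$ purely additively (via \Cref{lm:LS}(b),(d)). This leaves a clean coefficient $\big(1-\tfrac{\mu C_{g_{yy}}}{(\mu+C_{g_{yy}})\varphi_{t+1}}\big)$ on $\|v_t-v^*(x_t)\|^2$; with $\hat\lambda_{t+1}=\tfrac{\mu C_{g_{yy}}}{(\mu+C_{g_{yy}})\varphi_{t+1}}$ (no factor $2$), $(1+\hat\lambda_{t+1})(1-\hat\lambda_{t+1})\leq 1$ and the telescoping goes through to give exactly the $\big(\tfrac{4(\mu+C_{g_{yy}})^2}{\mu C_{g_{yy}}}+8\big)$ coefficient. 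Your identification of where the $\sum_{k=k_2-1}^{k_3-2}$ term comes from --- the case $\beta_{k_3}>C_\beta$, telescoping the $y$-recursion from $k_3-1$ back to $k_2-1$ to bound $\|y_{k_3-1}-y^*(x_{k_3-1})\|^2$ --- is correct.
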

\begin{proof}
For $k_3 \leq t < T$, we have $\gamma_{t+1} > C_\gamma$. 
For any positive scalar $\hat{\lambda}_{t+1}$, using Young's inequality, we have
\begin{align}\label{eq:boundv1}
    \|v_{t+1} - v^*(x_{t+1})\|^2 \leq (1+\hat{\lambda}_{t+1})\|v_{t+1} - v^*(x_t)\|^2 + \Big(1+\frac{1}{\hat{\lambda}_{t+1}}\Big)\|v^*(x_t) - v^*(x_{t+1})\|^2.
\end{align}
For the first term on the right hand side of \cref{eq:boundv1}, we have 
\begin{align}\label{eq:boundv2}
    \|v_{t+1}& - v^*(x_t)\|^2 \nonumber \\
    =& \Big\|v_{t} - \frac{1}{\varphi_{t+1}}\nabla_v R(x_t, y_t, v_t) - v^*(x_t)\Big\|^2 \nonumber \\
    =& \|v_{t} - v^*(x_t)\|^2 + \frac{1}{\varphi_{t+1}^2}\|\nabla_v R(x_t, y_t, v_t)\|^2 - \frac{2}{\varphi_{t+1}}\big\langle v_{t} - v^*(x_t), \nabla_v R(x_t, y_t, v_t) \big\rangle. 
\end{align}
For the last term of the right-hand side of \cref{eq:boundv2}, we have 
\begingroup
\allowdisplaybreaks
\begin{align}\label{eq:boundv3}
    -\langle v_{t}& - v^*(x_t), \nabla_v R(x_t, y_t, v_t) \big\rangle \nonumber \\
    =& -\langle v_{t} - v^*(x_t), \nabla_v R(x_t, y_t, v_t) - \nabla_v R\big(x_t, y_t, v^*(x_t)\big)\big\rangle \nonumber \\
    & -\langle v_{t} - v^*(x_t), \nabla_v R\big(x_t, y_t, v^*(x_t)\big) - \nabla_v R\big(x_t, y^*(x_t), v^*(x_t)\big)\big\rangle \nonumber \\
    \overset{(a)}{\leq} & -\frac{1}{\mu+C_{g_{yy}}}\big\|\nabla_v R(x_t, y_t, v_t) - \nabla_v R\big(x_t, y_t, v^*(x_t)\big)\big\|^2 - \frac{\mu C_{g_{yy}}}{\mu+C_{g_{yy}}}\|v_{t} - v^*(x_t)\|^2 \nonumber \\
    & + \frac{\mu+C_{g_{yy}}}{2\mu C_{g_{yy}}} \big\|\nabla_v R\big(x_t, y_t, v^*(x_t)\big) - \nabla_v R\big(x_t, y^*(x_t), v^*(x_t)\big)\big\|^2 \nonumber \\
    & + \frac{\mu C_{g_{yy}}}{2(\mu+C_{g_{yy}})}\|v_{t} - v^*(x_t)\|^2 \nonumber \\
    \overset{(b)}{\leq}& -\frac{1}{2(\mu+C_{g_{yy}})}\|\nabla_v R(x_t, y_t, v_t)\|^2 + \frac{1}{\mu+C_{g_{yy}}}\|\nabla_v R\big(x_t, y_t, v^*(x_t)\big)\|^2 \nonumber \\
    & + \frac{\mu+C_{g_{yy}}}{2\mu C_{g_{yy}}} \big\|\nabla_v R\big(x_t, y_t, v^*(x_t)\big) - \nabla_v R\big(x_t, y^*(x_t), v^*(x_t)\big)\big\|^2 \nonumber \\
    & - \frac{\mu C_{g_{yy}}}{2(\mu+C_{g_{yy}})}\|v_{t} - v^*(x_t)\|^2 \nonumber \\
    \overset{(c)}{=}& -\frac{1}{2(\mu+C_{g_{yy}})}\|\nabla_v R(x_t, y_t, v_t)\|^2 - \frac{\mu C_{g_{yy}}}{2(\mu+C_{g_{yy}})}\|v_{t} - v^*(x_t)\|^2 \nonumber \\
    & + \bigg(\frac{1}{\mu+C_{g_{yy}}} + \frac{\mu+C_{g_{yy}}}{2\mu C_{g_{yy}}}\bigg)\big\|\nabla_v R\big(x_t, y_t, v^*(x_t)\big) - \nabla_v R\big(x_t, y^*(x_t), v^*(x_t)\big)\big\|^2 \nonumber \\
    \overset{(d)}{\leq}& -\frac{1}{2(\mu+C_{g_{yy}})}\|\nabla_v R(x_t, y_t, v_t)\|^2 - \frac{\mu C_{g_{yy}}}{2(\mu+C_{g_{yy}})}\|v_{t} - v^*(x_t)\|^2 \nonumber \\
    & + \bigg(\frac{1}{\mu+C_{g_{yy}}} + \frac{\mu+C_{g_{yy}}}{2\mu C_{g_{yy}}}\bigg)\big(L_{g,2}\|v^*(x_t)\|+L_{f,1}\big)^2\|y_t - y^*(x_t)\|^2 \nonumber \\
    \overset{(e)}{\leq}& -\frac{1}{2(\mu+C_{g_{yy}})}\|\nabla_v R(x_t, y_t, v_t)\|^2 - \frac{\mu C_{g_{yy}}}{2(\mu+C_{g_{yy}})}\|v_{t} - v^*(x_t)\|^2 \nonumber \\
    & + \bigg(\frac{1}{\mu+C_{g_{yy}}} + \frac{\mu+C_{g_{yy}}}{2\mu C_{g_{yy}}}\bigg)\bigg(\frac{L_{g,2}C_{f_y}}{\mu}+L_{f,1}\bigg)^2\|y_t - y^*(x_t)\|^2, 
\end{align}
\endgroup
where 
(a) follows from Lemma 3.11 in \cite{bubeck2015convex}; 
(b) uses $-\|a-b\|^2\leq-\frac{1}{2}\|a\|^2+\|b\|^2$ since $\|a-b+b\|^2\leq 2\|a-b\|^2+2\|b\|^2$; 
(c) uses $\nabla_v R\big(x_t, y^*(x_t), v^*(x_t)\big) = 0$; 
(d) and (e) follow from \Cref{lm:LS}. 
Plugging \cref{eq:boundv3} into \cref{eq:boundv2}, we have 
\begingroup
\allowdisplaybreaks
\begin{align}\label{eq:boundv4}
    \|v_{t+1}& - v^*(x_t)\|^2 \nonumber \\
    \leq& \bigg(1-\frac{\mu C_{g_{yy}}}{(\mu+C_{g_{yy}})\varphi_{t+1}}\bigg)\|v_{t} - v^*(x_t)\|^2 + \frac{1}{\varphi_{t+1}}\bigg(\frac{1}{\varphi_{t+1}} - \frac{1}{\mu+C_{g_{yy}}}\bigg)\big\|\nabla_v R\big(x_t, y_t, v_t\big)\big\|^2 \nonumber \\
    & + \bigg(\frac{2}{\mu+C_{g_{yy}}} + \frac{\mu+C_{g_{yy}}}{\mu C_{g_{yy}}}\bigg)\bigg(\frac{L_{g,2}C_{f_y}}{\mu}+L_{f,1}\bigg)^2\frac{1}{\varphi_{t+1}}\|y_t - y^*(x_t)\|^2 \nonumber \\
    \overset{(a)}{\leq} & \bigg(1-\frac{\mu C_{g_{yy}}}{(\mu+C_{g_{yy}})\varphi_{t+1}}\bigg)\|v_{t} - v^*(x_t)\|^2 - \frac{1}{2(\mu+C_{g_{yy}})\varphi_{t+1}} \big\|\nabla_v R\big(x_t, y_t, v_t\big)\big\|^2 \nonumber \\
    & + \bigg(\frac{2}{\mu+C_{g_{yy}}} + \frac{\mu+C_{g_{yy}}}{\mu C_{g_{yy}}}\bigg)\bigg(\frac{L_{g,2}C_{f_y}}{\mu}+L_{f,1}\bigg)^2\frac{1}{\varphi_{t+1}}\|y_t - y^*(x_t)\|^2,
\end{align}
\endgroup
where (a) follows from $\varphi_{t+1} \geq \gamma_{t+1} \geq C_{\gamma} \geq 2 (\mu+C_{g_{yy}})$. Combining \cref{eq:boundv4} with \cref{eq:boundv1}, we have 
\begingroup
\allowdisplaybreaks
\begin{align}\label{eq:boundv6}
    \|v_{t+1} &- v^*(x_{t+1})\|^2 \nonumber \\
    \leq& (1+\hat{\lambda}_{t+1}) \bigg(1-\frac{\mu C_{g_{yy}}}{(\mu+C_{g_{yy}})\varphi_{t+1}}\bigg)\|v_{t} - v^*(x_t)\|^2 \nonumber \\
    &- (1+\hat{\lambda}_{t+1})\frac{1}{2(\mu+C_{g_{yy}})\varphi_{t+1}} \big\|\nabla_v R\big(x_t, y_t, v_t\big)\big\|^2 \nonumber \\
    &+ (1+\hat{\lambda}_{t+1}) \bigg(\frac{2}{\mu+C_{g_{yy}}} + \frac{\mu+C_{g_{yy}}}{\mu C_{g_{yy}}}\bigg)\bigg(\frac{L_{g,2}C_{f_y}}{\mu}+L_{f,1}\bigg)^2\frac{1}{\varphi_{t+1}}\|y_t - y^*(x_t)\|^2 \nonumber \\
    &+ \Big(1+\frac{1}{\hat{\lambda}_{t+1}}\Big)\|v^*(x_t) - v^*(x_{t+1})\|^2.  
\end{align}
\endgroup
By rearranging the terms in \cref{eq:boundv6}, we have
\begingroup
\allowdisplaybreaks
\begin{align}\label{eq:boundv4_5}
    (1+\hat{\lambda}_{t+1})&\frac{1}{2(\mu+C_{g_{yy}})\varphi_{t+1}} \big\|\nabla_v R\big(x_t, y_t, v_t\big)\big\|^2 \nonumber \\
    \leq& (1+\hat{\lambda}_{t+1}) \bigg(1-\frac{\mu C_{g_{yy}}}{(\mu+C_{g_{yy}})\varphi_{t+1}}\bigg)\|v_{t} - v^*(x_t)\|^2 - \|v_{t+1} - v^*(x_{t+1})\|^2 \nonumber \\
    &+ (1+\hat{\lambda}_{t+1}) \bigg(\frac{2}{\mu+C_{g_{yy}}} + \frac{\mu+C_{g_{yy}}}{\mu C_{g_{yy}}}\bigg)\bigg(\frac{L_{g,2}C_{f_y}}{\mu}+L_{f,1}\bigg)^2\frac{1}{\varphi_{t+1}}\|y_t - y^*(x_t)\|^2 \nonumber \\
    &+ {\Big(1+\frac{1}{\hat{\lambda}_{t+1}}\Big)\|v^*(x_t) - v^*(x_{t+1})\|^2}. 
\end{align}
\endgroup
We now take $\hat{\lambda}_{t+1}:= \frac{\mu C_{g_{yy}}}{(\mu+C_{g_{yy}})\varphi_{t+1}}$. 
Since $\varphi_{t+1} \geq \gamma_{t+1} \geq C_{\gamma} \geq \frac{\mu C_{g_{yy}}}{\mu+C_{g_{yy}}}$ in \cref{def:C}, we have $\hat{\lambda}_{t+1} \leq {1}$. 
Then we get
\begingroup
\allowdisplaybreaks
{
\small 
\begin{align}\label{eq:boundv5}
    \frac{\|\nabla_v R(x_t, y_t, v_t)\|^2}{\varphi_{t+1}} <& 
    (1+\hat{\lambda}_{t+1})\frac{\|\nabla_v R(x_t, y_t, v_t)\|^2}{\varphi_{t+1}} \nonumber \\
    \overset{(a)}{\leq} & {2(\mu+C_{g_{yy}})}\big(\|v_t - v^*(x_t)\|^2 - \|v_{t+1} - v^*(x_{t+1})\|^2\big) \nonumber \\
    & + 4(\mu+C_{g_{yy}})\bigg(\frac{2}{\mu+C_{g_{yy}}} + \frac{\mu+C_{g_{yy}}}{\mu C_{g_{yy}}}\bigg)\bigg(\frac{L_{g,2}C_{f_y}}{\mu}+L_{f,1}\bigg)^2\frac{\|y_t - y^*(x_t)\|^2}{\varphi_{t+1}} \nonumber \\
    & + 2(\mu+C_{g_{yy}})\bigg(1+\frac{(\mu+C_{g_{yy}})\varphi_{t+1}}{\mu C_{g_{yy}}}\bigg)L_v^2\|x_t - x_{t+1}\|^2 \nonumber \\
    \overset{(b)}{\leq}& {2(\mu+C_{g_{yy}})}\big(\|v_t - v^*(x_t)\|^2 - \|v_{t+1} - v^*(x_{t+1})\|^2\big) \nonumber \\
    & + \bigg(\frac{4(\mu+C_{g_{yy}})^2}{\mu C_{g_{yy}}}+8\bigg)\bigg(\frac{L_{g,2}C_{f_y}}{\mu}+L_{f,1}\bigg)^2\frac{\|y_t - y^*(x_t)\|^2}{\varphi_{t+1}} \nonumber \\
    & + \frac{4(\mu+C_{g_{yy}})^2L_v^2\varphi_{t+1}}{\mu C_{g_{yy}}}\|x_t - x_{t+1}\|^2,
\end{align}
}
\endgroup
where 
(a) multiplies both sides of \cref{eq:boundv4_5} by $2(\mu+C_{g_{xy}})$ and uses $\hat{\lambda}_{t+1} \leq {1}$; 
(b) uses $\varphi_{t+1} \geq \gamma_{t+1} \geq C_{\gamma} \geq \frac{\mu C_{g_{yy}}}{\mu+C_{g_{yy}}}$. 
Take summation of \cref{eq:boundv5} and we have
\begingroup
\allowdisplaybreaks
{\small
\begin{align}\label{eq:boundR1}
    \sum_{k=k_3}^t& \frac{\|\nabla_v R(x_k, y_k, v_k)\|^2}{\varphi_{k+1}} \nonumber \\
    \leq& \sum_{k={k_3-1}}^t \frac{\|\nabla_v R(x_k, y_k, v_k)\|^2}{\varphi_{k+1}} \nonumber \\
    \leq& {2(\mu+C_{g_{yy}})}\|v_{{k_3-1}} - v^*(x_{{k_3-1}})\|^2 + \frac{4(\mu+C_{g_{yy}})^2L_v^2}{\mu C_{g_{yy}}}\sum_{k={k_3-1}}^t\varphi_{k+1}\|x_k - x_{k+1}\|^2 \nonumber \\
    &+ \bigg(\frac{4(\mu+C_{g_{yy}})^2}{\mu C_{g_{yy}}}+8\bigg)\bigg(\frac{L_{g,2}C_{f_y}}{\mu}+L_{f,1}\bigg)^2\sum_{k={k_3-1}}^t\frac{\|y_k - y^*(x_k)\|^2}{\varphi_{k+1}} \nonumber \\
    \leq& {2(\mu+C_{g_{yy}})}\|v_{{k_3-1}} - v^*(x_{{k_3-1}})\|^2 + \frac{4(\mu+C_{g_{yy}})^2L_v^2}{\mu C_{g_{yy}}}\sum_{k={k_3-1}}^t \frac{\|\bar{\nabla}f(x_k, y_k, v_k)\|^2}{\alpha_{k+1}^2\varphi_{k+1}} \nonumber \\
    &+ \bigg(\frac{4(\mu+C_{g_{yy}})^2}{\mu C_{g_{yy}}}+8\bigg)\bigg(\frac{L_{g,2}C_{f_y}}{\mu}+L_{f,1}\bigg)^2\sum_{k={k_3-1}}^t\frac{\|y_k - y^*(x_k)\|^2}{\varphi_{k+1}} \nonumber \\
    \leq& {2(\mu+C_{g_{yy}})}\|v_{{k_3-1}} - v^*(x_{{k_3-1}})\|^2 + \frac{4(\mu+C_{g_{yy}})^2L_v^2}{\mu C_{g_{yy}}}\sum_{k={k_3-1}}^t \frac{\|\bar{\nabla}f(x_k, y_k, v_k)\|^2}{\alpha_{k+1}^2\varphi_{k+1}} \nonumber \\
    &+ \bigg(\frac{4(\mu+C_{g_{yy}})^2}{\mu C_{g_{yy}}}+8\bigg)\bigg(\frac{L_{g,2}C_{f_y}}{\mu}+L_{f,1}\bigg)^2\sum_{k={k_3-1}}^t\frac{\|y_k - y^*(x_k)\|^2}{\beta_{k+1}} \nonumber \\
    \overset{(a)}{\leq}& {2(\mu+C_{g_{yy}})}\|v_{{k_3-1}} - v^*(x_{{k_3-1}})\|^2 + \frac{4(\mu+C_{g_{yy}})^2L_v^2}{\mu C_{g_{yy}}}\sum_{k={k_3-1}}^t \frac{\|\bar{\nabla}f(x_k, y_k, v_k)\|^2}{\alpha_{k+1}^2\varphi_{k+1}} \nonumber \\
    &+ \bigg(\frac{4(\mu+C_{g_{yy}})^2}{\mu C_{g_{yy}}}+8\bigg)\bigg(\frac{L_{g,2}C_{f_y}}{\mu}+L_{f,1}\bigg)^2\frac{1}{\mu^2}\sum_{k={k_3-1}}^t\frac{\big\|\nabla_y g(x_k, y_k) - \nabla_y g\big(x_k, y^*(x_k)\big)\big\|^2}{\beta_{k+1}} \nonumber \\
    \overset{(b)}{\leq}& {2(\mu+C_{g_{yy}})}\|v_{{k_3-1}} - v^*(x_{{k_3-1}})\|^2 + \frac{4(\mu+C_{g_{yy}})^2L_v^2}{\mu C_{g_{yy}}}\sum_{k={k_3-1}}^t \frac{\|\bar{\nabla}f(x_k, y_k, v_k)\|^2}{\alpha_{k+1}^2\varphi_{k+1}} \nonumber \\
    &+ \bigg(\frac{4(\mu+C_{g_{yy}})^2}{\mu C_{g_{yy}}}+8\bigg)\bigg(\frac{L_{g,2}C_{f_y}}{\mu}+L_{f,1}\bigg)^2\frac{1}{\mu^2}\sum_{k={k_3-1}}^t\frac{\|\nabla_y g(x_k, y_k)\|^2}{\beta_{k+1}} \nonumber \\
    \overset{(c)}{\leq}& \frac{2(\mu+C_{g_{yy}})}{\mu^2}\big\|\nabla_v R\big(x_{{k_3-1}}, y_{{k_3-1}}, v_{{k_3-1}}\big) - \nabla_v R\big(x_{{k_3-1}}, y_{{k_3-1}}, v^*(x_{{k_3-1}})\big)\big\|^2 \nonumber \\
    &+ \frac{4(\mu+C_{g_{yy}})^2L_v^2}{\mu C_{g_{yy}}}\sum_{k={k_3-1}}^t \frac{\|\bar{\nabla}f(x_k, y_k, v_k)\|^2}{\alpha_{k+1}^2\varphi_{k+1}} \nonumber \\
    &+ \bigg(\frac{4(\mu+C_{g_{yy}})^2}{\mu C_{g_{yy}}}+8\bigg)\bigg(\frac{L_{g,2}C_{f_y}}{\mu}+L_{f,1}\bigg)^2\frac{1}{\mu^2}\sum_{k={k_3-1}}^t\frac{\|\nabla_y g(x_k, y_k)\|^2}{\beta_{k+1}} \nonumber \\
    \overset{(d)}{\leq}& \frac{4(\mu+C_{g_{yy}})}{\mu^2}\big\|\nabla_v R\big(x_{{k_3-1}}, y^*(x_{{k_3-1}}), v^*(x_{{k_3-1}})\big) - \nabla_v R\big(x_{{k_3-1}}, y_{{k_3-1}}, v^*(x_{{k_3-1}})\big)\big\|^2 \nonumber \\
    &+ \frac{4(\mu+C_{g_{yy}})}{\mu^2}\big\|\nabla_v R(x_{{k_3-1}}, y_{{k_3-1}}, v_{{k_3-1}}) - \nabla_v R\big(x_{{k_3-1}}, y^*(x_{{k_3-1}}), v^*(x_{{k_3-1}})\big)\big\|^2 \nonumber \\
    &+ \frac{4(\mu+C_{g_{yy}})^2L_v^2}{\mu C_{g_{yy}}}\sum_{k={k_3-1}}^t \frac{\|\bar{\nabla}f(x_k, y_k, v_k)\|^2}{\alpha_{k+1}^2\varphi_{k+1}} \nonumber \\
    &+ \bigg(\frac{4(\mu+C_{g_{yy}})^2}{\mu C_{g_{yy}}}+8\bigg)\bigg(\frac{L_{g,2}C_{f_y}}{\mu}+L_{f,1}\bigg)^2\frac{1}{\mu^2}\sum_{k={k_3-1}}^t\frac{\|\nabla_y g(x_k, y_k)\|^2}{\beta_{k+1}} \nonumber \\
    \overset{(e)}{\leq}& \frac{4(\mu+C_{g_{yy}})}{\mu^2} \bigg(\frac{L_{g,2}C_{f_y}}{\mu}+L_{f,1}\bigg)^2{\|y_{{k_3-1}} - y^*(x_{{k_3-1}})\|^2} \nonumber \\
    &+ \frac{4(\mu+C_{g_{yy}})}{\mu^2}\|\nabla_v R(x_{{k_3-1}}, y_{{k_3-1}}, v_{{k_3-1}})\|^2 + \frac{4(\mu+C_{g_{yy}})^2L_v^2}{\mu C_{g_{yy}}}\sum_{k={k_3-1}}^t \frac{\|\bar{\nabla}f(x_k, y_k, v_k)\|^2}{\alpha_{k+1}^2\varphi_{k+1}} \nonumber \\
    &+ \bigg(\frac{4(\mu+C_{g_{yy}})^2}{\mu C_{g_{yy}}}+8\bigg)\bigg(\frac{L_{g,2}C_{f_y}}{\mu}+L_{f,1}\bigg)^2\frac{1}{\mu^2}\sum_{k={k_3-1}}^t\frac{\|\nabla_y g(x_k, y_k)\|^2}{\beta_{k+1}},
\end{align}
}
\endgroup
where (a) uses Assumption \ref{as:sc}; 
(b) results from $\nabla_y g\big(x, y^*(x)\big) = 0$;
(c) uses the strong convexity in \Cref{lm:LS};
(d) uses $\nabla_v R\big(x, y^*(x), v^*(x)\big) = 0$; 
(e) follows from \Cref{lm:LS}.

\noindent
Our next step is bounding $\|y_{{k_3-1}} - y^*(x_{{k_3-1}})\|^2$ on the right hand side of \cref{eq:boundR1} in two cases. 
{\bf The first case is {$\beta_{k_3} \leq C_\beta$}.} In this case, 
by using strong convexity of $g$ and the definition of $\beta_{k_3}$, 
we can easily have 
\begingroup
\allowdisplaybreaks
\begin{align}\label{eq:yk3_1}
    \|y_{{k_3-1}} - y^*(x_{{k_3-1}})\|^2 
    \leq& \frac{1}{\mu^2}\big\|\nabla_y g\big(x_{{k_3-1}}, y_{{k_3-1}}) - \nabla_y g(x_{{k_3-1}}, y^*(x_{{k_3-1}})\big)\big\|^2 \nonumber \\
    =& \frac{1}{\mu^2}\|\nabla_y g(x_{{k_3-1}}, y_{{k_3-1}}))\|^2 
    \leq  \frac{\beta_{k_3}^2}{\mu^2} \leq \frac{C_\beta^2}{\mu^2}.
\end{align}
\endgroup

\noindent
{\bf The second case is {$\beta_{k_3} > C_\beta$}.}  
This indicates that $k_2$ exists and $k_3 > k_2$ based on \Cref{lm:bar}. 
By plugging $\bar{\lambda}_{k_3-1}:= \frac{2 \mu L_{g,1}}{\beta_{k_3-1}(\mu+L_{g,1})}$ into \cref{eq:boundy3}, and noting $\bar{\lambda}_{k_3-1}\leq 1$, we have 
\begingroup
\allowdisplaybreaks
{
\small
\begin{align}\label{eq:yk3_2}
    \|y_{{k_3-1}} - y^*(x_{{k_3-1}})\|^2 \leq& \|y_{k_3-2} - y^*(x_{k_3-2})\|^2 + \frac{(\mu+L_{g,1})\beta_{k_3-1}}{\mu L_{g,1}}\|y^*(x_{k_3-2}) - y^*(x_{k_3-1})\|^2 \nonumber \\
    \overset{(a)}{\leq}& \|y_{k_3-2} - y^*(x_{k_3-2})\|^2 + \frac{(\mu+L_{g,1})L_y^2\beta_{k_3-1}}{\mu L_{g,1}}\|x_{k_3-2} - x_{k_3-1}\|^2 \nonumber \\
    =& \|y_{k_3-2} - y^*(x_{k_3-2})\|^2 + \frac{(\mu+L_{g,1})L_y^2\beta_{k_3-1}}{\mu L_{g,1}}\frac{\|\Bar{\nabla} f(x_{k_3-2}, y_{k_3-2}, v_{k_3-2})\|^2}{\alpha_{k_3-1}^2\varphi_{k_3-1}^2} \nonumber \\
    \leq& \|y_{k_3-2} - y^*(x_{k_3-2})\|^2 + \frac{(\mu+L_{g,1})L_y^2}{\mu L_{g,1}}\frac{\|\Bar{\nabla} f(x_{k_3-2}, y_{k_3-2}, v_{k_3-2})\|^2}{\alpha_{k_3-1}^2\varphi_{k_3-1}} \nonumber \\
    \leq& \|y_{k_3-2} - y^*(x_{k_3-2})\|^2 + \frac{(\mu+L_{g,1})L_y^2}{\mu L_{g,1}\varphi_0} \frac{\|\Bar{\nabla} f(x_{k_3-2}, y_{k_3-2}, v_{k_3-2})\|^2}{\alpha_{k_3-1}^2} \nonumber \\
    \leq& \|y_{k_2-1} - y^*(x_{k_2-1})\|^2 + \frac{(\mu+L_{g,1})L_y^2}{\mu L_{g,1}\varphi_0}\sum_{k=k_2-1}^{k_3-2}\frac{\|\Bar{\nabla} f(x_k, y_k, v_k)\|^2}{\alpha_{k+1}^2} \nonumber \\
    \overset{(b)}{\leq}& \frac{C_\beta^2}{\mu^2} + \frac{(\mu+L_{g,1})L_y^2}{\mu L_{g,1}\varphi_0}\sum_{k=k_2-1}^{k_3-2}\frac{\|\Bar{\nabla} f(x_k, y_k, v_k)\|^2}{\alpha_{k+1}^2},
\end{align}
}
\endgroup
where (a) uses \Cref{lm:basic}; (b) uses \cref{eq:yk3_1} by replacing $k_3$ by $k_2$ since $\beta_{k_2} \leq C_\beta$ (see \Cref{lm:bar}). 
By combining \cref{eq:yk3_1} and \cref{eq:yk3_2}, we obtain a {\bf general} upper bound of $\|y_{k_3-1} - y^*(x_{k_3-1})\|^2$ as 
{\small
\begin{align}\label{eq:yk3_3}
    \|y_{k_3-1} - y^*(x_{k_3-1})\|^2 \leq \frac{C_\beta^2}{\mu^2} + \frac{(\mu+L_{g,1})L_y^2}{\mu L_{g,1}\varphi_0}\sum_{k=k_2-1}^{k_3-2}\frac{\|\Bar{\nabla} f(x_k, y_k, v_k)\|^2}{\alpha_{k+1}^2}, 
\end{align}}
where we define $\sum_{k=m}^n l_k = 0$ for any $m>n$ and non-negative sequence $\{l_k\}$. 
By plugging \cref{eq:yk3_3} into \cref{eq:boundR1} and using $\|\nabla_v R(x_{{k_3-1}}, y_{{k_3-1}}, v_{{k_3-1}})\|^2 \leq \gamma_{k_3}^2 \leq C_\gamma^2$, we have 
\begingroup
\allowdisplaybreaks
{
\begin{align}
    \sum_{k=k_3}^t &\frac{\|\nabla_v R(x_k, y_k, v_k)\|^2}{\varphi_{k+1}} \nonumber \\
    \leq & \frac{4(\mu+C_{g_{yy}})C_\beta^2}{\mu^4} \bigg(\frac{L_{g,2}C_{f_y}}{\mu}+L_{f,1}\bigg)^2 + \frac{4(\mu+C_{g_{yy}})C_\gamma^2}{\mu^2} \nonumber \\
    &+ \frac{4(\mu+C_{g_{yy}})(\mu+L_{g,1})L_y^2}{\mu^3 L_{g,1}\varphi_0} \bigg(\frac{L_{g,2}C_{f_y}}{\mu}+L_{f,1}\bigg)^2\sum_{k=k_2-1}^{k_3-2}\frac{\|\Bar{\nabla} f(x_k, y_k, v_k)\|^2}{\alpha_{k+1}^2}\nonumber \\
    &+ \frac{4(\mu+C_{g_{yy}})^2L_v^2}{\mu C_{g_{yy}}C_\gamma}\sum_{k=k_3-1}^t \frac{\|\bar{\nabla}f(x_k, y_k, v_k)\|^2}{\alpha_{k+1}^2} \nonumber \\
    &+ \bigg(\frac{4(\mu+C_{g_{yy}})^2}{\mu C_{g_{yy}}}+8\bigg)\bigg(\frac{L_{g,2}C_{f_y}}{\mu}+L_{f,1}\bigg)^2\frac{1}{\mu^2}\sum_{k=k_3-1}^t\frac{\|\nabla_y g(x_k, y_k)\|^2}{\beta_{k+1}}. \nonumber
\end{align}
}
\endgroup
Then, the proof is complete. 
\end{proof}

\noindent
Supported by \Cref{lm:sumG} and \Cref{lm:sumR}, we derive upper bounds of $\beta_t$ and $\varphi_t$. 
\begin{lemma}\label{lm:beta}
Suppose the total iteration rounds of Algorithm \ref{alg:main} is $T$. Under Assumptions \ref{as:sc}, \ref{as:lip}, 
if $k_2$ in \Cref{lm:bar} exists within $T$ iterations, 
we have 
{
\small
$$
\left\{
\begin{aligned}
\beta_{t+1} \leq& C_\beta, &  t < k_2; \\
\beta_{t+1} \leq & \Big(C_\beta+\frac{(\mu+L_{g,1})C_\beta^2}{\mu^2} + \frac{(\mu+L_{g,1})^2L_y^2}{\mu L_{g,1}\varphi_{0}}\Big) + \frac{(\mu+L_{g,1})^2L_y^2}{\mu L_{g,1}C_\beta}\sum_{k=k_2}^t\frac{\|\bar{\nabla}f(x_k, y_k, v_k)\|^2}{\alpha_{k+1}^2},  &   t \geq k_2.
\end{aligned}
\right.
$$
}
\noindent
When such $k_2$ does not exist, $\beta_{t+1} \leq C_\beta$ holds for any $t < T$.
\end{lemma}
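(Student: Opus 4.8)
The plan is to reuse the AdaGrad-Norm telescoping identity from the proof of \Cref{lm:alpha_double}, now applied to the sequence $\{\beta_t\}$, and then to substitute the bound of \Cref{lm:sumG} for the resulting sum of gradient ratios.

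First I would dispose of the degenerate regimes. Since $\beta_{k+1}^2 = \beta_k^2 + \|\nabla_y g(x_k,y_k)\|^2 \ge \beta_k^2$, the sequence $\{\beta_t\}$ is nondecreasing. If $k_2$ in \Cref{lm:bar} does not exist, then part (b) of that lemma gives $\beta_t \le C_\beta$ for every $t \le T$, so $\beta_{t+1} \le C_\beta$ for all $t < T$, which is the last assertion. If $k_2$ does exist, then \Cref{lm:bar}(b) gives $\beta_{k_2} \le C_\beta$, and monotonicity yields $\beta_{t+1} \le \beta_{k_2} \le C_\beta$ whenever $t < k_2$; this is the first branch of the claimed bound.

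Next, for $t \ge k_2$, I would use the identity $\beta_{k+1}^2 - \beta_k^2 = \|\nabla_y g(x_k,y_k)\|^2$ to write
\[
\beta_{k+1} = \beta_k + \frac{\|\nabla_y g(x_k,y_k)\|^2}{\beta_{k+1}+\beta_k} \le \beta_k + \frac{\|\nabla_y g(x_k,y_k)\|^2}{\beta_{k+1}},
\]
and telescope from $k=k_2$ to $k=t$ to obtain
\[
\beta_{t+1} \le \beta_{k_2} + \sum_{k=k_2}^t \frac{\|\nabla_y g(x_k,y_k)\|^2}{\beta_{k+1}} \le C_\beta + \sum_{k=k_2}^t \frac{\|\nabla_y g(x_k,y_k)\|^2}{\beta_{k+1}}.
\]
Since $t \in [k_2,T]$, \Cref{lm:sumG} applies and bounds the sum by $\frac{(\mu+L_{g,1})C_\beta^2}{\mu^2} + \frac{(\mu+L_{g,1})^2 L_y^2}{\mu L_{g,1}\varphi_0} + \frac{(\mu+L_{g,1})^2 L_y^2}{\mu L_{g,1}} \sum_{k=k_2}^t \frac{\|\bar{\nabla} f(x_k,y_k,v_k)\|^2}{\alpha_{k+1}^2 \varphi_{k+1}}$.

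It remains to clean up the last sum into the stated form. Here I would observe that for every $k \ge k_2$ the index $k+1 \ge k_2+1$, so $\varphi_{k+1} = \max\{\beta_{k+1},\gamma_{k+1}\} \ge \beta_{k+1} \ge \beta_{k_2+1} > C_\beta$, where the strict inequality is exactly the defining property of $k_2$ in \Cref{lm:bar}(b). Hence $1/\varphi_{k+1} \le 1/C_\beta$ for all $k \ge k_2$, and replacing $1/\varphi_{k+1}$ by $1/C_\beta$ inside the sum gives $\sum_{k=k_2}^t \frac{\|\bar{\nabla} f(x_k,y_k,v_k)\|^2}{\alpha_{k+1}^2\varphi_{k+1}} \le \frac{1}{C_\beta}\sum_{k=k_2}^t \frac{\|\bar{\nabla} f(x_k,y_k,v_k)\|^2}{\alpha_{k+1}^2}$. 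Plugging this back and adding the leading $C_\beta$ produces exactly the second branch of the bound. There is no substantive difficulty in this argument; the only points requiring care are that \Cref{lm:sumG} is available only for $t \ge k_2$, and that the step from $1/\varphi_{k+1}$ to $1/C_\beta$ relies on the strict inequality $\beta_{k_2+1} > C_\beta$ — both supplied by \Cref{lm:bar}. The lemma is thus essentially a bookkeeping consequence of \Cref{lm:sumG} combined with the telescoping identity.
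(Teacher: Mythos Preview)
Your proposal is correct and follows essentially the same argument as the paper: handle the trivial cases via monotonicity and \Cref{lm:bar}(b), telescope $\beta_{k+1}-\beta_k \le \|\nabla_y g(x_k,y_k)\|^2/\beta_{k+1}$ from $k_2$ to $t$, and then apply \Cref{lm:sumG}. Your explicit justification that $\varphi_{k+1} \ge \beta_{k+1} > C_\beta$ for $k \ge k_2$ (to replace $1/\varphi_{k+1}$ by $1/C_\beta$) is a step the paper leaves implicit but is exactly what is needed.
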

\begin{proof}
According to \Cref{lm:bar}, the proof can be split into the following three cases. 

\noindent
{\bf Case 1: $k_2$ does not exist:} In this case, based on \Cref{lm:bar}, we have $\beta_T \leq C_\beta$, and hence $\beta_{t+1} \leq C_\beta$ for any $t < T$ because $\beta_{t}$ is non-decreasing with $t$. 

\noindent
{\bf Case 2: $k_2$ exists and $t<k_2$:} In this case, based on \Cref{lm:bar}, we have $\beta_{t+1} \leq C_\beta$.

\noindent
{\bf Case 3: $k_2$ exists and $t\geq k_2$:} Inspired by \cite{ward2020adagrad} and using telescoping, we have 
\begingroup
\allowdisplaybreaks
\begin{align}\label{eq:boundy4}
    \beta_{t+1} =& \beta_t + \frac{\|\nabla_y g(x_t, y_t)\|^2}{\beta_{t+1} + \beta_t} \nonumber \\
    \leq& \beta_t + \frac{\|\nabla_y g(x_t, y_t)\|^2}{\beta_{t+1}} \nonumber \\
    \leq& \beta_{{k_2}} + \sum_{k={k_2}}^t\frac{\|\nabla_y g(x_k, y_k)\|^2}{\beta_{k+1}} \nonumber \\
    \overset{(a)}{\leq}& \bigg(C_\beta+\frac{(\mu+L_{g,1})C_\beta^2}{\mu^2} + \frac{(\mu+L_{g,1})^2L_y^2}{\mu L_{g,1}\varphi_{0}}\bigg) + \frac{(\mu+L_{g,1})^2L_y^2}{\mu L_{g,1}C_\beta}\sum_{k=k_2}^t\frac{\|\bar{\nabla}f(x_k, y_k, v_k)\|^2}{\alpha_{k+1}^2},
\end{align}
\endgroup
where (a) uses \cref{lm:sumG}. Thus, the proof is complete.
\end{proof}

\begin{lemma}\label{lm:varphi}
Under Assumptions \ref{as:sc}, \ref{as:lip}, suppose the total iteration rounds of Algorithm \ref{alg:main} is $T$.  
{If at least one of $k_2$ and $k_3$} in \Cref{lm:bar} exists, we denote $k_{\text{min}}:=\min\{k_2,k_3\}$. 
Then we have the upper bound of $\varphi_{t}$ as 
$$
\left\{
\begin{aligned}
\varphi_{t} \leq & C_\varphi, \quad & t \leq k_{\text{min}}; \\
\varphi_{t} \leq & a_1\log(t) + b_1, \quad & t > k_{\text{min}},
\end{aligned}
\right.
$$
where $a_1$, $b_1$ are defined as
\begingroup
\allowdisplaybreaks
\begin{align}\label{def:a1b1}
    a_1 := 6a_0, \quad b_1 := 4a_0\log\Big(1 + \frac{C_{g_{xy}}\bar{b}+C_{f_x}+\alpha_0}{C_{g_{xy}}\bar{a}}\Big) + 4a_0\log(C_{g_{xy}}\bar{a}) +4a_0 + 2b_0,
\end{align}
in which we define constants 
{
\small
\begin{align}\label{def:barab_a0b0}
    \bar{a}:=& \frac{\sqrt{2}}{\mu}, \quad 
    \bar{b}:= \frac{\sqrt{2}C_{f_y}}{\mu}, \nonumber \\
    a_0 :=& \bigg(\Big(\frac{4(\mu+C_{g_{yy}})^2}{\mu C_{g_{yy}}}+8\Big)\Big(\frac{L_{g,2}C_{f_y}}{\mu}+L_{f,1}\Big)^2\frac{1}{\mu^2}+1\bigg)\frac{(\mu+L_{g,1})^2L_y^2}{\mu L_{g,1}C_\beta} \nonumber \\
    &+ \frac{4(\mu+C_{g_{yy}})(\mu+L_{g,1})L_y^2}{\mu^3 L_{g,1}\varphi_0} \bigg(\frac{L_{g,2}C_{f_y}}{\mu}+L_{f,1}\bigg)^2 + \frac{4(\mu+C_{g_{yy}})^2L_v^2}{\mu C_{g_{yy}}\gamma_0}, \nonumber \\
    b_0 :=& C_\beta + C_\gamma + \frac{4(\mu+C_{g_{yy}})C_\beta^2}{\mu^4} \bigg(\frac{L_{g,2}C_{f_y}}{\mu}+L_{f,1}\bigg)^2 + \frac{4(\mu+C_{g_{yy}})C_\gamma^2}{\mu^2} \nonumber \\
    & + \Big(\frac{4(\mu+C_{g_{yy}})^2}{\mu C_{g_{yy}}}+8\Big)\Big(\frac{L_{g,2}C_{f_y}}{\mu}+L_{f,1}\Big)^2\frac{1}{\mu^2}\Big(\frac{C_\beta^2}{\beta_{0}}-\beta_{0}\Big) \nonumber \\
    & + \bigg[\Big(\frac{4(\mu+C_{g_{yy}})^2}{\mu C_{g_{yy}}}+8\Big)\Big(\frac{L_{g,2}C_{f_y}}{\mu}+L_{f,1}\Big)^2\frac{1}{\mu^2}+1\bigg]\bigg(\frac{(\mu+L_{g,1})C_\beta^2}{\mu^2} + \frac{(\mu+L_{g,1})^2L_y^2}{\mu L_{g,1}\varphi_{0}}\bigg).
\end{align}
}
\endgroup 

\noindent
When such $k_2$ and $k_3$ do not exist, we have $\varphi_{t} \leq C_\varphi$ for all $t \leq T$. 
\end{lemma}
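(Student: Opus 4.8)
The plan is to collapse the three coupled accumulators into a single scalar recursion for $\varphi_{t+1}$ and then resolve the self-reference that this recursion carries through $\alpha_{t+1}$. When neither $k_2$ nor $k_3$ exists, \Cref{lm:bar} already gives $\beta_t\le C_\beta$ and $\gamma_t\le C_\gamma$ for all $t\le T$, so $\varphi_t=\max\{\beta_t,\gamma_t\}\le C_\beta+C_\gamma=C_\varphi$, which is the claim in that case. When at least one of $k_2,k_3$ exists, the same lemma gives $\beta_t\le C_\beta$ and $\gamma_t\le C_\gamma$ for every $t\le k_{\min}$ (since $k_{\min}\le k_2$ and $k_{\min}\le k_3$), hence $\varphi_t\le C_\varphi$ on $\{t\le k_{\min}\}$; so all the work is in the range $t>k_{\min}$.

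For $t\ge k_{\min}$ I would first record the per-step inequality
\[
\varphi_{t+1}-\varphi_t\ \le\ \frac{\|\nabla_y g(x_t,y_t)\|^2}{\varphi_{t+1}}+\frac{\|\nabla_v R(x_t,y_t,v_t)\|^2}{\varphi_{t+1}},
\]
which follows by splitting on whether $\varphi_{t+1}$ equals $\beta_{t+1}$ or $\gamma_{t+1}$: if $\varphi_{t+1}=\beta_{t+1}$ then $\varphi_{t+1}-\varphi_t\le\beta_{t+1}-\beta_t=\|\nabla_y g(x_t,y_t)\|^2/(\beta_{t+1}+\beta_t)\le\|\nabla_y g(x_t,y_t)\|^2/\varphi_{t+1}$, and symmetrically in the other case. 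Telescoping from $k_{\min}$ and using $\varphi_{k_{\min}}\le C_\varphi$ gives
\[
\varphi_{t+1}\ \le\ C_\varphi+\sum_{k=k_{\min}}^{t}\frac{\|\nabla_y g(x_k,y_k)\|^2}{\varphi_{k+1}}+\sum_{k=k_{\min}}^{t}\frac{\|\nabla_v R(x_k,y_k,v_k)\|^2}{\varphi_{k+1}}.
\]
I would bound each sum by splitting it at $k_2$ (resp.\ $k_3$): over indices below $k_2$ one has $\beta_{k+1}\le C_\beta$ and $\varphi_{k+1}\ge\varphi_0\ge\beta_0$, so that part telescopes to the constant $\tfrac{C_\beta^2}{\beta_0}-\beta_0$ (and likewise $\tfrac{C_\gamma^2}{\gamma_0}-\gamma_0$ for the $v$-sum); over indices at least $k_2$ one uses $\varphi_{k+1}\ge\beta_{k+1}>C_\beta\ge1$ together with \Cref{lm:sumG} for the $y$-sum and \Cref{lm:sumR} for the $v$-sum, observing that the $\sum\|\nabla_y g\|^2/\beta_{k+1}$ nested inside the bound of \Cref{lm:sumR} is itself controlled by \Cref{lm:sumG} plus the same constant. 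Using $1/\varphi_{k+1}\le1$, $1/\varphi_{k+1}\le1/\beta_{k+1}$ and $C_\gamma\ge\gamma_0$ to collapse every $\bar{\nabla}f$–sum into $\sum_{k=0}^{t}\|\bar{\nabla}f(x_k,y_k,v_k)\|^2/\alpha_{k+1}^2$, and absorbing all remaining constants into $b_0$ and the total coefficient of that sum into $a_0$, I obtain
\[
\varphi_{t+1}\ \le\ b_0+a_0\sum_{k=0}^{t}\frac{\|\bar{\nabla}f(x_k,y_k,v_k)\|^2}{\alpha_{k+1}^2},
\]
with $a_0,b_0$ exactly the constants in \cref{def:barab_a0b0}.

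The remaining — and main — obstacle is that the right-hand side still depends on $\varphi$ through $\alpha_{t+1}$, so one must close a two-way loop. First, since $\alpha_0\ge1$ and $\alpha_{k+1}^2=\alpha_0^2+\sum_{j=0}^{k}\|\bar{\nabla}f(x_j,y_j,v_j)\|^2$, \Cref{lm:log} gives $\sum_{k=0}^{t}\|\bar{\nabla}f(x_k,y_k,v_k)\|^2/\alpha_{k+1}^2\le\log(\alpha_{t+1}^2)=2\log\alpha_{t+1}$. Second, from $\alpha_{k+1}\ge\|\bar{\nabla}f(x_k,y_k,v_k)\|$ one gets $\alpha_{t+1}-\alpha_t\le\|\bar{\nabla}f(x_t,y_t,v_t)\|$, so telescoping and then applying \Cref{lm:basic}(d) with \Cref{lm:vk} and the monotonicity of $\varphi$ gives
\[
\alpha_{t+1}\ \le\ \alpha_0+\sum_{k=0}^{t}\|\bar{\nabla}f(x_k,y_k,v_k)\|\ \le\ \alpha_0+(t+1)\big(C_{g_{xy}}\bar a\,\varphi_{t+1}+C_{g_{xy}}\bar b\sqrt t+C_{f_x}\big),
\]
hence $\log\alpha_{t+1}\le\log(t+1)+\log\big(C_{g_{xy}}\bar a\,\varphi_{t+1}+C_{g_{xy}}\bar b\sqrt t+C_{f_x}+\alpha_0\big)$. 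Plugging both bounds into $\varphi_{t+1}\le b_0+2a_0\log\alpha_{t+1}$, factoring out $C_{g_{xy}}\bar a$, bounding the remaining logarithm by $\log(\varphi_{t+1}+1)+\tfrac12\log t+\log\!\big(1+\tfrac{C_{g_{xy}}\bar b+C_{f_x}+\alpha_0}{C_{g_{xy}}\bar a}\big)$ up to absolute constants, and finally using the elementary inequality $\log u\le\tfrac{u}{4a_0}+\log(4a_0)$ with $u=\varphi_{t+1}+1$ so that the $\varphi_{t+1}$ produced on the right has coefficient $\le\tfrac12$, I can move that term to the left-hand side. Rearranging produces $\varphi_{t+1}\le a_1\log t+b_1$ with $a_1=6a_0$ and $b_1$ as in \cref{def:a1b1}; re-indexing $t+1\mapsto t$ (and absorbing a $\log2$ into $b_1$) gives the stated bound $\varphi_t\le a_1\log t+b_1$ for all $t>k_{\min}$.

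I expect the only genuine difficulty to be this last coupling step: $\varphi$ and $\alpha$ bound one another, and one must simultaneously exploit the $\mathcal{O}(\sqrt t)$ a priori growth of $\|v_k\|$ from \Cref{lm:vk} (so that $\alpha_{t+1}$ is at most $\mathcal{O}\!\big(t(\varphi_{t+1}+\sqrt t)\big)$, forcing $\log\alpha_{t+1}=\mathcal{O}(\log t+\log\varphi_{t+1})$) and the concavity of $\log$ (to absorb the self-referential $\log\varphi_{t+1}$ back onto the left-hand side). Everything else — the per-step telescoping, the splits at $k_2,k_3$, and the reconciliation of the accumulated constants with the definitions of $a_0,b_0,a_1,b_1$ — is routine, though care is needed to track the several starting indices ($k_{\min},k_2,k_3,k_2-1,k_3-1,k_3-2$) that appear across \Cref{lm:sumG,lm:sumR,lm:beta} and to handle the degenerate subcases where only one of $k_2,k_3$ exists (then the corresponding sum is entirely constant and the other is treated as above).
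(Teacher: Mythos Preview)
Your proposal is correct and follows essentially the same route as the paper: telescope $\varphi_{t+1}$ via the per–step bound, feed \Cref{lm:sumG} and \Cref{lm:sumR} into the two sums, reduce everything to $a_0\sum_k \|\bar\nabla f\|^2/\alpha_{k+1}^2 + b_0$, and then close the $\varphi\!\leftrightarrow\!\alpha$ loop using \Cref{lm:log} and the $\|v_k\|\le \bar a\,\varphi_{k+1}+\bar b\sqrt k$ bound from \Cref{lm:vk}.

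The one substantive difference is how the self–referential $\log\varphi_{t+1}$ is absorbed. The paper does \emph{not} use the universal concavity inequality $\log u\le u/(4a_0)+\log(4a_0)$; instead it exploits the thresholds built into \cref{def:C}, namely $C_\beta,C_\gamma\ge 64a_0^2$, which force $\varphi_{t+1}\ge\min\{C_\beta,C_\gamma\}\ge 64a_0^2$ for $t\ge k_{\min}$ and hence $4a_0\log\varphi_{t+1}\le\varphi_{t+1}$ exactly (via $\sqrt{\varphi_{t+1}}\ge 8a_0$ and $x\ge\log x$). This is what produces the stated $b_1$ in \cref{def:a1b1} on the nose. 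Your concavity route is valid and arguably more transparent, but it leaves behind an extra additive term of order $4a_0\log(4a_0)$, so your claim that you recover ``exactly the constants in \cref{def:barab_a0b0}'' and \cref{def:a1b1} is slightly off; you would get the same $a_1=6a_0$ but a (harmlessly) larger $b_1$.
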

\begin{proof}
To begin with, 
we first show the following result as the first two lines of \cref{eq:boundy4}: 
since $\beta_t$ and $\gamma_t$ are positive and increasing monotonically with $t$, we can easily have
\begingroup
\allowdisplaybreaks
\begin{align}
     0 \leq& \min\{\beta_{t+1}^2, \gamma_{t+1}^2\} - \min\{\beta_t^2, \gamma_t^2\} \nonumber \\
     =& \big(\beta_{t+1}^2 + \gamma_{t+1}^2 - \max\{\beta_{t+1}^2, \gamma_{t+1}^2\}\big) - \big(\beta_{t}^2 + \gamma_{t}^2 - \max\{\beta_{t}^2, \gamma_{t}^2\}\big) \nonumber \\
     \overset{(a)}{=}& (\beta_{t+1}^2 + \gamma_{t+1}^2) - (\beta_{t}^2 + \gamma_{t}^2) - (\varphi_{t+1}^2 - \varphi_t^2), \nonumber
\end{align}
\endgroup
where (a) uses the definition $\varphi_t:=\max\{\beta_t, \gamma_t\}$. 
Similar to \cref{eq:boundy4}, we have
\begingroup
\allowdisplaybreaks
\begin{align}
    \varphi_{t+1}^2 - \varphi_t^2 
    \leq& (\beta_{t+1}^2 - \beta_{t}^2) + (\gamma_{t+1}^2 - \gamma_{t}^2) = \|\nabla_y g(x_t, y_t)\|^2 + \|\nabla_v R(x_t, y_t, v_t)\|^2, \nonumber
\end{align}
which indicates that 
\begin{align}\label{eq:varphi_1}
    \varphi_{t+1} \leq& \varphi_t + \frac{\|\nabla_y g(x_t, y_t)\|^2}{\varphi_{t+1} + \varphi_t} + \frac{\|\nabla_v R(x_t, y_t, v_t)\|^2}{\varphi_{t+1} + \varphi_t} \nonumber \\
    \leq& \varphi_t + \frac{\|\nabla_y g(x_t, y_t)\|^2}{\beta_{t+1} + \beta_t} + \frac{\|\nabla_v R(x_t, y_t, v_t)\|^2}{\varphi_{t+1}} \nonumber \\
    \leq& \varphi_t + \frac{\|\nabla_y g(x_t, y_t)\|^2}{\beta_{t+1}} + \frac{\|\nabla_v R(x_t, y_t, v_t)\|^2}{\varphi_{t+1}}. 
\end{align}
\endgroup

\noindent
Note that, to simplify the proof, 
we define $\sum_{k=m}^n l_k = 0$ for any $m>n$ and non-negative sequence $\{l_k\}$. 
According to the definitions of $\bm{k_2}$ and $\bm{k_3}$ in \Cref{lm:bar}, the proof can be split into the following four cases. 

\noindent
{\bf Case 1: neither $k_2$ nor $k_3$ exists:}
for any $t \in (0,T)$, we can easily have {\small $\varphi_t = \max\{\beta_t, \gamma_t\} \leq \max\{C_\beta, C_\gamma\} \leq C_\varphi$}.  

\noindent
{\bf Case 2: $k_2$ exists but $k_3$ does not:} 
by using the third line of \cref{eq:boundy4}, 
for any $t \in (0,T)$, we have 
\begingroup
\allowdisplaybreaks
\begin{align}\label{eq:boundphi_case2}
    \varphi_{t+1} \leq \beta_{t+1} + \gamma_{t+1} \leq C_\beta + \sum_{k=k_2}^t\frac{\|\nabla_y g(x_k, y_k)\|^2}{\beta_{k+1}} + C_\gamma,
\end{align}
\endgroup
where we take $\sum_{k=k_2}^t\frac{\|\nabla_y g(x_k, y_k)\|^2}{\beta_{k+1}} = 0$ for any $t < k_2$. 

\noindent
{\bf Case 3: $k_3$ exists but $k_2$ does not:} 
from the second line of \cref{eq:varphi_1}, for any $t \in (0,T)$, we have 
\begingroup
\allowdisplaybreaks
\begin{align}\label{eq:boundphi_case3}
    \varphi_{t+1} \overset{(a)}{\leq}& \varphi_t + \frac{\|\nabla_y g(x_t, y_t)\|^2}{\beta_{t+1} + \beta_t} + \frac{\|\nabla_v R(x_t, y_t, v_t)\|^2}{\varphi_{t+1}} \nonumber \\
    \leq& \varphi_{k_3} + \sum_{k=k_3}^t\frac{\|\nabla_y g(x_k, y_k)\|^2}{\beta_{k+1} + \beta_k} + \sum_{k=k_3}^t\frac{\|\nabla_v R(x_k, y_k, v_k)\|^2}{\varphi_{k+1}} \nonumber \\
    \leq& \beta_{k_3} + \gamma_{k_3} + \sum_{k=k_3}^t\frac{\|\nabla_y g(x_k, y_k)\|^2}{\beta_{k+1} + \beta_k} + \sum_{k=k_3}^t\frac{\|\nabla_v R(x_k, y_k, v_k)\|^2}{\varphi_{k+1}} \nonumber \\
    \overset{(b)}{=}& \beta_{t+1} + \gamma_{k_3} + \sum_{k=k_3}^t\frac{\|\nabla_v R(x_k, y_k, v_k)\|^2}{\varphi_{k+1}} \nonumber \\
    \leq& \beta_{t+1} + C_\gamma + \sum_{k=k_3}^t\frac{\|\nabla_v R(x_k, y_k, v_k)\|^2}{\varphi_{k+1}} \nonumber \\
    \leq& C_\beta + C_\gamma + \sum_{k=k_3}^t\frac{\|\nabla_v R(x_k, y_k, v_k)\|^2}{\varphi_{k+1}},
\end{align}
\endgroup
where (a) uses the second line of \cref{eq:varphi_1}; and we take $\sum_{k=k_3}^t \frac{\|\nabla_v R(x_k, y_k, v_k)\|^2}{\varphi_{k+1}} = 0$ for any $t < k_3$; (b) uses the first line of \cref{eq:boundy4}. 

\noindent
{\bf Case 4: both $k_2$ and $k_3$ exist:} from the third line of \cref{eq:boundphi_case3}, for any $t \in (0,T)$, we have
\begingroup
\allowdisplaybreaks
\begin{align}\label{eq:boundphi_case4}
    \varphi_{t+1} \leq& \beta_{k_3} + \gamma_{k_3} + \sum_{k=k_3}^t\frac{\|\nabla_y g(x_k, y_k)\|^2}{\beta_{k+1}} + \sum_{k=k_3}^t\frac{\|\nabla_v R(x_k, y_k, v_k)\|^2}{\varphi_{k+1}} \nonumber \\
    \overset{(a)}{\leq}& \beta_{k_2} + \sum_{k=k_2}^{k_3-1}\frac{\|\nabla_y g(x_k, y_k)\|^2}{\beta_{k+1}} + C_\gamma + \sum_{k=k_3}^t\frac{\|\nabla_y g(x_k, y_k)\|^2}{\beta_{k+1}} + \sum_{k=k_3}^t\frac{\|\nabla_v R(x_k, y_k, v_k)\|^2}{\varphi_{k+1}} \nonumber \\
    =& C_\beta + C_\gamma + \sum_{k=k_2}^t\frac{\|\nabla_y g(x_k, y_k)\|^2}{\beta_{k+1}} + \sum_{k=k_3}^t\frac{\|\nabla_v R(x_k, y_k, v_k)\|^2}{\varphi_{k+1}} ,
\end{align}
\endgroup
where (a) uses the third line of \cref{eq:boundy4}; 
and we take $\sum_{k=k_2}^{k_3-1}\frac{\|\nabla_y g(x_k, y_k)\|^2}{\beta_{k+1}}=0$ when $k_2 \geq k_3$, and $\sum_{k=k_2}^t\frac{\|\nabla_y g(x_k, y_k)\|^2}{\beta_{k+1}} = 0$ for any $t < k_2$ and $\sum_{k=k_3}^t \frac{\|\nabla_v R(x_k, y_k, v_k)\|^2}{\gamma_{k+1}} = 0$ for any $t < k_3$. It is easy to see that the upper bound of $\varphi_{t+1}$ in \cref{eq:boundphi_case4} is the largest among all cases. Thus, in the remaining proof, we only explore the upper bound of $\varphi_{t}$ in {\bf Case 4}. 

\noindent
{To further explore the bound of $\varphi_t$, we need to use some auxiliary results and bounds. So we split them into three parts as follows.}

\noindent
{\bf Part I: an auxiliary bound of $\sum\frac{\|\bar{\nabla}f(x_k, y_k, v_k)\|^2}{\alpha_{k+1}^2}$.}

\noindent
To further explore {\bf Case 4}, we begin with a common term $\sum_{k=k_0}^t\frac{\|\bar{\nabla}f(x_k, y_k, v_k)\|^2}{\alpha_{k+1}^2}$ for any $k_0 \leq t$. Recall in \Cref{lm:vk}, we have
\begin{align}\nonumber
    \|v_{k}\| \leq \frac{\sqrt{2}}{\mu}{\varphi_{k+1}} + \frac{\sqrt{2}C_{f_y}}{\mu}\sqrt{k} =: \bar{a}{\varphi_{k+1}} + \bar{b}\sqrt{k},
\end{align}
where $\bar{a}$ and $\bar{b}$ refer to \cref{def:barab_a0b0}. 
According to \Cref{lm:log}, since $\alpha_0 \geq 1$, for any integer $t > 0$, we have 
\begingroup
\allowdisplaybreaks
\begin{align}\label{eq:boundf}
    \sum_{k=k_0}^t\frac{\|\bar{\nabla}f(x_k, y_k, v_k)\|^2}{\alpha_{k+1}^2} 
    \leq& \sum_{k=0}^t\frac{\|\bar{\nabla}f(x_k, y_k, v_k)\|^2}{\alpha_{k+1}^2} \nonumber \\
    \leq& \log\bigg(\sum_{k=0}^t \|\bar{\nabla}f(x_k, y_k, v_k)\|^2 + \alpha_0^2\bigg) +1\nonumber \\
    \overset{(a)}{\leq}& \log\bigg(\sum_{k=0}^t \big(C_{g_{xy}}\bar{a}{\varphi_{k+1}} + C_{g_{xy}}\bar{b}\sqrt{k}+C_{f_x}\big)^2 + \alpha_0^2\bigg) +1\nonumber \\
    \leq & \log\bigg(\Big(\sum_{k=0}^t C_{g_{xy}}\bar{a}{\varphi_{k+1}} + C_{g_{xy}}\bar{b}\sqrt{k}+C_{f_x} + \alpha_0\Big)^2\bigg) +1\nonumber \\
    =& 2\log\bigg(\sum_{k=0}^t C_{g_{xy}}\bar{a}{\varphi_{k+1}} + C_{g_{xy}}\bar{b}\sqrt{k}+C_{f_x} + \alpha_0\bigg) +1 \nonumber \\
    \leq & 2\log\bigg((t+1)(C_{g_{xy}}\bar{a}{\varphi_{t+1}} + C_{g_{xy}}\bar{b}\sqrt{t}+C_{f_x} + \alpha_0)\bigg) +1 \nonumber \\
    = & 2\log(t+1) + 2\log\big(C_{g_{xy}}\bar{a}{\varphi_{t+1}} + C_{g_{xy}}\bar{b}\sqrt{t}+C_{f_x} + \alpha_0\big) +1 \nonumber \\
    \leq & 2\log(t+1) + 2\log\big((C_{g_{xy}}\bar{a}{\varphi_{t+1}} + C_{g_{xy}}\bar{b}+C_{f_x} + \alpha_0)\sqrt{t}\big) +1 \nonumber \\
    \leq & 3\log(t+1) + 2\log(C_{g_{xy}}\bar{a}{\varphi_{t+1}} + C_{g_{xy}}\bar{b}+C_{f_x} + \alpha_0) +1,
\end{align}
\endgroup
where (a) follows from \Cref{as:grad} and \Cref{lm:vk}. Therefore, we obtain the upper bound of {\small$\sum_{k=k_0}^t\frac{\|\bar{\nabla}f(x_k, y_k, v_k)\|^2}{\alpha_{k+1}^2}$ for any $k_0 \leq t$} in \cref{eq:boundf}. {\bf Part I} is completed. 

\noindent
{\bf Part II: a more general bound of $\sum\frac{\|\nabla_y g(x_k, y_k)\|^2}{\beta_{k+1}}$.}

\noindent
In \Cref{lm:sumG}, we show the bound of $\sum_{k=k_2}^t\frac{\|\nabla_y g(x_k, y_k)\|^2}{\beta_{k+1}}$ when $k_2$ exists. In {\bf Part II}, we further provide a rough bound of $\sum_{k=\Tilde{k}}^t\frac{\|\nabla_y g(x_k, y_k)\|^2}{\beta_{k+1}}$ for any potential $\Tilde{k} \leq T$. 
Firstly, if $\Tilde{k} \geq k_2$, it is easy to have 
\begin{align}
    \sum_{k=\Tilde{k}}^t \frac{\|\nabla_y g(x_k, y_k)\|^2}{\beta_{k+1}} \leq \sum_{k=k_2}^t \frac{\|\nabla_y g(x_k, y_k)\|^2}{\beta_{k+1}}; \nonumber
\end{align}
secondly, if $\Tilde{k} < k_2$, we have 
\begingroup
\allowdisplaybreaks
\begin{align}
    \sum_{k=\Tilde{k}}^t \frac{\|\nabla_y g(x_k, y_k)\|^2}{\beta_{k+1}} 
    \leq &\sum_{k=\Tilde{k}}^{k_2-1} \frac{\|\nabla_y g(x_k, y_k)\|^2}{\beta_{k+1}} + \sum_{k=k_2}^t \frac{\|\nabla_y g(x_k, y_k)\|^2}{\beta_{k+1}} \nonumber \\
    \leq & \frac{\sum_{k=\Tilde{k}}^{k_2-1}\|\nabla_y g(x_k, y_k)\|^2}{\beta_0} + \sum_{k=k_2}^t \frac{\|\nabla_y g(x_k, y_k)\|^2}{\beta_{k+1}} \nonumber \\
    \leq & \frac{\beta_{k_2}^2 - \beta_{\Tilde{k}}^2}{\beta_0} + \sum_{k=k_2}^t \frac{\|\nabla_y g(x_k, y_k)\|^2}{\beta_{k+1}} \nonumber \\
    \leq & \frac{C_\beta^2 - \beta_{0}^2}{\beta_{0}} + \sum_{k=k_2}^t \frac{\|\nabla_y g(x_k, y_k)\|^2}{\beta_{k+1}} \nonumber \\
    = & \frac{C_\beta^2}{\beta_{0}}-\beta_{0} + \sum_{k=k_2}^t \frac{\|\nabla_y g(x_k, y_k)\|^2}{\beta_{k+1}}. \nonumber
\end{align}
\endgroup
Combining these two situations, since $C_\beta \geq \beta_0$, for any $\Tilde{k} \leq t$, we have 
\begingroup
\allowdisplaybreaks
{
\begin{align}\label{eq:boundgk0}
    \sum_{k=\Tilde{k}}^t \frac{\|\nabla_y g(x_k, y_k)\|^2}{\beta_{k+1}} 
    \leq& \frac{C_\beta^2}{\beta_{0}}-\beta_{0} + \sum_{k=k_2}^t \frac{\|\nabla_y g(x_k, y_k)\|^2}{\beta_{k+1}} \nonumber \\
    \overset{(a)}{\leq}& \frac{C_\beta^2}{\beta_{0}}-\beta_{0} + \frac{(\mu+L_{g,1})C_\beta^2}{\mu^2} + \frac{(\mu+L_{g,1})^2L_y^2}{\mu L_{g,1}\varphi_{0}} \nonumber \\
    & + \frac{(\mu+L_{g,1})^2L_y^2}{\mu L_{g,1}}\sum_{k=k_2}^t\frac{\|\bar{\nabla}f(x_k, y_k, v_k)\|^2}{\alpha_{k+1}^2\varphi_{k+1}},
\end{align}}
where (a) uses \Cref{lm:sumG}. Thus, {\bf Part II}  is completed. 
\endgroup

\noindent
{\bf Part III: the bound of $\varphi_t$ in Case 4.} 

\noindent
Here, we explore the upper bound of $\varphi_t$ in {\bf Case 4}. 
Recalling \cref{eq:boundphi_case4}, we have 
\begin{align}
    \varphi_{t+1} \leq & C_\beta + C_\gamma + \sum_{k=k_2}^t\frac{\|\nabla_y g(x_k, y_k)\|^2}{\beta_{k+1}} + \sum_{k=k_3}^t\frac{\|\nabla_v R(x_k, y_k, v_k)\|^2}{\varphi_{k+1}} = C_\beta + C_\gamma = C_\varphi, \nonumber
\end{align}
for $t \leq k_{\text{min}}:=\min\{k_2,k_3\}$. For $t > k_{\text{min}}$, we have
\begingroup
\allowdisplaybreaks
{
\small
\begin{align}
    \varphi_{t+1} \leq & C_\beta + C_\gamma + \sum_{k=k_2}^t\frac{\|\nabla_y g(x_k, y_k)\|^2}{\beta_{k+1}} + \sum_{k=k_3}^t\frac{\|\nabla_v R(x_k, y_k, v_k)\|^2}{\varphi_{k+1}} \nonumber \\
    \overset{(a)}{\leq}& C_\beta + C_\gamma + \sum_{k=k_2}^t\frac{\|\nabla_y g(x_k, y_k)\|^2}{\beta_{k+1}} \nonumber \\
    & + \frac{4(\mu+C_{g_{yy}})C_\beta^2}{\mu^4} \bigg(\frac{L_{g,2}C_{f_y}}{\mu}+L_{f,1}\bigg)^2 + \frac{4(\mu+C_{g_{yy}})C_\gamma^2}{\mu^2} \nonumber \\
    &+ \frac{4(\mu+C_{g_{yy}})(\mu+L_{g,1})L_y^2}{\mu^3 L_{g,1}\varphi_0} \bigg(\frac{L_{g,2}C_{f_y}}{\mu}+L_{f,1}\bigg)^2\sum_{k=k_2-1}^{k_3-2}\frac{\|\Bar{\nabla} f(x_k, y_k, v_k)\|^2}{\alpha_{k+1}^2} \nonumber \\
    &+ \frac{4(\mu+C_{g_{yy}})^2L_v^2}{\mu C_{g_{yy}}C_\gamma}\sum_{k=k_3-1}^t \frac{\|\bar{\nabla}f(x_k, y_k, v_k)\|^2}{\alpha_{k+1}^2} \nonumber \\
    &+ \bigg(\frac{4(\mu+C_{g_{yy}})^2}{\mu C_{g_{yy}}}+8\bigg)\bigg(\frac{L_{g,2}C_{f_y}}{\mu}+L_{f,1}\bigg)^2\frac{1}{\mu^2}\sum_{k=k_3-1}^t\frac{\|\nabla_y g(x_k, y_k)\|^2}{\beta_{k+1}} \nonumber \\
    \overset{(b)}{\leq}& \bigg[\Big(\frac{4(\mu+C_{g_{yy}})^2}{\mu C_{g_{yy}}}+8\Big)\Big(\frac{L_{g,2}C_{f_y}}{\mu}+L_{f,1}\Big)^2\frac{1}{\mu^2}+1\bigg]\sum_{k=k_2}^t\frac{\|\nabla_y g(x_k, y_k)\|^2}{\beta_{k+1}} \nonumber \\
    & + C_\beta + C_\gamma + \frac{4(\mu+C_{g_{yy}})C_\beta^2}{\mu^4} \bigg(\frac{L_{g,2}C_{f_y}}{\mu}+L_{f,1}\bigg)^2 + \frac{4(\mu+C_{g_{yy}})C_\gamma^2}{\mu^2} \nonumber \\
    & + \Big(\frac{4(\mu+C_{g_{yy}})^2}{\mu C_{g_{yy}}}+8\Big)\Big(\frac{L_{g,2}C_{f_y}}{\mu}+L_{f,1}\Big)^2\frac{1}{\mu^2}\Big(\frac{C_\beta^2}{\beta_{0}}-\beta_{0}\Big) \nonumber \\
    & + \frac{4(\mu+C_{g_{yy}})(\mu+L_{g,1})L_y^2}{\mu^3 L_{g,1}\varphi_0} \bigg(\frac{L_{g,2}C_{f_y}}{\mu}+L_{f,1}\bigg)^2\sum_{k=k_2-1}^{t}\frac{\|\Bar{\nabla} f(x_k, y_k, v_k)\|^2}{\alpha_{k+1}^2} \nonumber \\
    & + \frac{4(\mu+C_{g_{yy}})^2L_v^2}{\mu C_{g_{yy}}C_\gamma}\sum_{k=k_3-1}^t \frac{\|\bar{\nabla}f(x_k, y_k, v_k)\|^2}{\alpha_{k+1}^2} \nonumber \\
    \overset{(c)}{\leq}& \bigg(\Big(\frac{4(\mu+C_{g_{yy}})^2}{\mu C_{g_{yy}}}+8\Big)\Big(\frac{L_{g,2}C_{f_y}}{\mu}+L_{f,1}\Big)^2\frac{1}{\mu^2}+1\bigg)\frac{(\mu+L_{g,1})^2L_y^2}{\mu L_{g,1}}\sum_{k=k_2}^t\frac{\|\bar{\nabla}f(x_k, y_k, v_k)\|^2}{\alpha_{k+1}^2\varphi_{k+1}}  \nonumber \\
    & + \frac{4(\mu+C_{g_{yy}})(\mu+L_{g,1})L_y^2}{\mu^3 L_{g,1}\varphi_0} \bigg(\frac{L_{g,2}C_{f_y}}{\mu}+L_{f,1}\bigg)^2\sum_{k=k_2-1}^{t}\frac{\|\Bar{\nabla} f(x_k, y_k, v_k)\|^2}{\alpha_{k+1}^2} \nonumber \\
    & + \frac{4(\mu+C_{g_{yy}})^2L_v^2}{\mu C_{g_{yy}}C_\gamma}\sum_{k=k_3-1}^t \frac{\|\bar{\nabla}f(x_k, y_k, v_k)\|^2}{\alpha_{k+1}^2} \nonumber \\
    & + C_\beta + C_\gamma + \frac{4(\mu+C_{g_{yy}})C_\beta^2}{\mu^4} \bigg(\frac{L_{g,2}C_{f_y}}{\mu}+L_{f,1}\bigg)^2 + \frac{4(\mu+C_{g_{yy}})C_\gamma^2}{\mu^2} \nonumber \\
    & + \Big(\frac{4(\mu+C_{g_{yy}})^2}{\mu C_{g_{yy}}}+8\Big)\Big(\frac{L_{g,2}C_{f_y}}{\mu}+L_{f,1}\Big)^2\frac{1}{\mu^2}\Big(\frac{C_\beta^2}{\beta_{0}}-\beta_{0}\Big) \nonumber \\
    & + \bigg[\Big(\frac{4(\mu+C_{g_{yy}})^2}{\mu C_{g_{yy}}}+8\Big)\Big(\frac{L_{g,2}C_{f_y}}{\mu}+L_{f,1}\Big)^2\frac{1}{\mu^2}+1\bigg]\bigg(\frac{(\mu+L_{g,1})C_\beta^2}{\mu^2} + \frac{(\mu+L_{g,1})^2L_y^2}{\mu L_{g,1}\varphi_{0}}\bigg) \nonumber \\
    \leq & \bigg[\bigg(\Big(\frac{4(\mu+C_{g_{yy}})^2}{\mu C_{g_{yy}}}+8\Big)\Big(\frac{L_{g,2}C_{f_y}}{\mu}+L_{f,1}\Big)^2\frac{1}{\mu^2}+1\bigg)\frac{(\mu+L_{g,1})^2L_y^2}{\mu L_{g,1}C_\beta}  \nonumber \\
    & \quad + \frac{4(\mu+C_{g_{yy}})(\mu+L_{g,1})L_y^2}{\mu^3 L_{g,1}\varphi_0} \bigg(\frac{L_{g,2}C_{f_y}}{\mu}+L_{f,1}\bigg)^2\bigg]\sum_{k=k_2-1}^{t}\frac{\|\Bar{\nabla} f(x_k, y_k, v_k)\|^2}{\alpha_{k+1}^2} \nonumber \\
    & + \frac{4(\mu+C_{g_{yy}})^2L_v^2}{\mu C_{g_{yy}}\gamma_0}\sum_{k=k_3-1}^t \frac{\|\bar{\nabla}f(x_k, y_k, v_k)\|^2}{\alpha_{k+1}^2} \nonumber \\
    & + C_\beta + C_\gamma + \frac{4(\mu+C_{g_{yy}})C_\beta^2}{\mu^4} \bigg(\frac{L_{g,2}C_{f_y}}{\mu}+L_{f,1}\bigg)^2 + \frac{4(\mu+C_{g_{yy}})C_\gamma^2}{\mu^2} \nonumber \\
    & + \Big(\frac{4(\mu+C_{g_{yy}})^2}{\mu C_{g_{yy}}}+8\Big)\Big(\frac{L_{g,2}C_{f_y}}{\mu}+L_{f,1}\Big)^2\frac{1}{\mu^2}\Big(\frac{C_\beta^2}{\beta_{0}}-\beta_{0}\Big) \nonumber \\
    & + \bigg[\Big(\frac{4(\mu+C_{g_{yy}})^2}{\mu C_{g_{yy}}}+8\Big)\Big(\frac{L_{g,2}C_{f_y}}{\mu}+L_{f,1}\Big)^2\frac{1}{\mu^2}+1\bigg]\bigg(\frac{(\mu+L_{g,1})C_\beta^2}{\mu^2} + \frac{(\mu+L_{g,1})^2L_y^2}{\mu L_{g,1}\varphi_{0}}\bigg) \nonumber \\
    \overset{(d)}{=:}& a_0 \sum_{k=\min\{k_2-1, k_3-1\}}^t \frac{\|\bar{\nabla}f(x_k, y_k, v_k)\|^2}{\alpha_{k+1}^2} + b_0 \nonumber \\
    \leq & a_0 \sum_{k=\min\{k_2, k_3\}}^t \frac{\|\bar{\nabla}f(x_k, y_k, v_k)\|^2}{\alpha_{k+1}^2} + a_0 + b_0 \nonumber \\
    \overset{(e)}{\leq}& a_0 \bigg[3\log(t+1) + 2\log\bigg({\varphi_{t+1}} + \frac{C_{g_{xy}}\bar{b}+C_{f_x}+\alpha_0}{C_{g_{xy}}\bar{a}}\bigg) + 2\log(C_{g_{xy}}\bar{a}) +1\bigg] + a_0 + b_0,
\end{align}
}
\endgroup
where (a) uses \Cref{lm:sumR}; 
(b) uses the first line in \cref{eq:boundgk0} by replacing $\Tilde{k}$ with $k_3-1$; 
(c) results from \cref{eq:boundy5}; 
(d) refers to \cref{def:barab_a0b0}; 
(e) uses \cref{eq:boundf}. 
Since $\min\{k_2, k_3\} \leq T$, 
we have $\varphi_{t+1} \geq \min\{C_\beta, C_\gamma\} \geq \max\{64a_0^2, 1\}$, which indicate that 

\noindent
(i) if $8a_0 \leq 1$, we have
\begingroup
\allowdisplaybreaks
\begin{align}
    4a_0\log(\varphi_{t+1}) \leq \frac{\log(\varphi_{t+1})}{2} \leq \frac{\varphi_{t+1}}{2} \leq \varphi_{t+1}; \nonumber
\end{align}
\endgroup
(ii) if $8a_0 > 1$, we have
\begingroup
\allowdisplaybreaks
\begin{align}
    \varphi_{t+1} - 4a_0\log(\varphi_{t+1}) = \varphi_{t+1} - 8a_0\log(\sqrt{\varphi_{t+1}}) \geq 8a_0\big(\sqrt{\varphi_{t+1}} - \log(\sqrt{\varphi_{t+1}})\big) \geq 0. \nonumber
\end{align}
\endgroup
Combining (i) and (ii), we have $4a_0\log(\varphi_{t+1}) \leq \varphi_{t+1}$. Then we obtain
{
\small
\begin{align}
    \varphi_{t+1} \leq& a_0 \bigg[3\log(t+1) + 2\log\bigg({\varphi_{t+1}} + \frac{C_{g_{xy}}\bar{b}+C_{f_x}+\alpha_0}{C_{g_{xy}}\bar{a}}\bigg) + 2\log(C_{g_{xy}}\bar{a}) +1\bigg] + a_0 + b_0 \nonumber \\
    \leq& a_0 \bigg[3\log(t+1) + 2\log({\varphi_{t+1}}) + 2\log\bigg(1 + \frac{C_{g_{xy}}\bar{b}+C_{f_x}+\alpha_0}{C_{g_{xy}}\bar{a}}\bigg) + 2\log(C_{g_{xy}}\bar{a}) +1\bigg] + a_0 + b_0 \nonumber \\
    \leq& \frac{1}{2}\varphi_{t+1} + a_0 \bigg[3\log(t+1) + 2\log\bigg(1 + \frac{C_{g_{xy}}\bar{b}+C_{f_x}+\alpha_0}{C_{g_{xy}}\bar{a}}\bigg) + 2\log(C_{g_{xy}}\bar{a}) +1\bigg] + a_0 + b_0, \nonumber
\end{align}
}
which indicates that 
\begin{align}\label{eq:boundphi}
    \varphi_{t+1} \leq&  6a_0\log(t+1) + 4a_0\log\Big(1 + \frac{C_{g_{xy}}\bar{b}+C_{f_x}+\alpha_0}{C_{g_{xy}}\bar{a}}\Big) + 4a_0\log(C_{g_{xy}}\bar{a}) +4a_0 + 2b_0 \nonumber \\
    \overset{(a)}{=:}& a_1 \log(t+1) + b_1, 
\end{align}
where (a) refers to \cref{def:a1b1}.
Thus, {\bf Part III} is completed and the proof of this lemma is completed.
\end{proof}

\begin{lemma}\label{lm:somebounds}
    Under Assumptions \ref{as:sc}, \ref{as:lip}, for any integer $k_0 \in [0,t)$, we have the upper bounds in terms of logarithmic functions as
\begingroup
\allowdisplaybreaks
\begin{align}
    \sum_{k=k_0}^t& \frac{\|\bar{\nabla}f(x_k, y_k, v_k)\|^2}{\alpha_{k+1}^2} \leq 5\log(t+1) + c_2, \nonumber \\
    \sum_{k=k_0}^t& \frac{\|\nabla_y g(x_k, y_k)\|^2}{\beta_{k+1}} \leq a_2 \log(t+1) + b_2, \nonumber \\
    \sum_{k=k_0}^t& \frac{\|\nabla_v R(x_k, y_k, v_k)\|^2}{\varphi_{k+1}} \leq a_3 \log(t+1) + b_3, \nonumber
\end{align}
\endgroup
where referring to \cref{def:a1b1}, \cref{def:barab_a0b0}, 
$c_2$, $a_2$, $b_2$, $a_3$, $b_3$ are defined as
\begingroup
\allowdisplaybreaks
{
\small
\begin{align}\label{def:c1_a2b2_a3b3}
    c_2 :=& 2\log\big(C_{g_{xy}}\bar{a}a_1 + C_{g_{xy}}\bar{a}b_1 + C_{g_{xy}}\bar{b}+C_{f_x} + \alpha_0\big) +1, \nonumber \\
    a_2:=& \frac{5(\mu+L_{g,1})^2L_y^2}{\mu L_{g,1}C_\beta}, \quad b_2 := \frac{(\mu+L_{g,1})^2L_y^2}{\mu L_{g,1}C_\beta}c_2 + \Big(\frac{C_\beta^2}{\beta_{0}}-\beta_{0}+\frac{(\mu+L_{g,1})C_\beta^2}{\mu^2} + \frac{(\mu+L_{g,1})^2L_y^2}{\mu L_{g,1}\varphi_{0}}\Big), \nonumber \\
    a_3 :=& \frac{20(\mu+C_{g_{yy}})(\mu+L_{g,1})L_y^2}{\mu^3 L_{g,1}\varphi_0} \bigg(\frac{L_{g,2}C_{f_y}}{\mu}+L_{f,1}\bigg)^2 + \frac{20(\mu+C_{g_{yy}})^2L_v^2}{\mu C_{g_{yy}}C_\gamma} \nonumber \\
    & + \big(\frac{4(\mu+C_{g_{yy}})^2}{\mu C_{g_{yy}}}+8\big)\big(\frac{L_{g,2}C_{f_y}}{\mu}+L_{f,1}\big)^2\frac{a_2}{\mu^2}, \nonumber \\
    b_3 :=& \frac{C_\gamma^2}{\gamma_0} - \gamma_0 + \frac{4(\mu+C_{g_{yy}})C_\beta^2}{\mu^4} \big(\frac{L_{g,2}C_{f_y}}{\mu}+L_{f,1}\big)^2 + \frac{4(\mu+C_{g_{yy}})C_\gamma^2}{\mu^2} \nonumber \\
    &+ \bigg(\frac{4(\mu+C_{g_{yy}})(\mu+L_{g,1})L_y^2}{\mu^3 L_{g,1}\varphi_0} \Big(\frac{L_{g,2}C_{f_y}}{\mu}+L_{f,1}\Big)^2 + \frac{4(\mu+C_{g_{yy}})^2L_v^2}{\mu C_{g_{yy}}C_\gamma}\bigg)c_2 \nonumber \\
    & + \big(\frac{4(\mu+C_{g_{yy}})^2}{\mu C_{g_{yy}}}+8\big)\big(\frac{L_{g,2}C_{f_y}}{\mu}+L_{f,1}\big)^2\frac{b_2}{\mu^2}. 
\end{align}}
\endgroup
\end{lemma}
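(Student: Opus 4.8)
The plan is to read off all three estimates from \Cref{lm:varphi} together with the intermediate bounds already established inside its proof --- namely \cref{eq:boundf} and \cref{eq:boundgk0} --- and from \Cref{lm:sumG} and \Cref{lm:sumR}. I will prove the three inequalities in the order listed, since each feeds into the next. For the $\bar{\nabla}f$ bound I start from \cref{eq:boundf}, which already gives
\[
\sum_{k=k_0}^t\frac{\|\bar{\nabla}f(x_k,y_k,v_k)\|^2}{\alpha_{k+1}^2}\le 3\log(t+1)+2\log\big(C_{g_{xy}}\bar{a}\varphi_{t+1}+C_{g_{xy}}\bar{b}+C_{f_x}+\alpha_0\big)+1,
\]
and substitute the bound $\varphi_{t+1}\le a_1\log(t+1)+b_1$ from \Cref{lm:varphi} (its constant-regime bound $\varphi_{t+1}\le C_\varphi$ for small $t$ is also dominated by the same expression). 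The only nontrivial manipulation is the elementary inequality $\log\big(A\log(t+1)+B\big)\le\log(t+1)+\log(A+B)$, valid for $A,B\ge 0$ and $t\ge 0$ because $\log(t+1)\le t+1$; applying it with $A=C_{g_{xy}}\bar{a}a_1$ and $B=C_{g_{xy}}\bar{a}b_1+C_{g_{xy}}\bar{b}+C_{f_x}+\alpha_0$ turns the $2\log(\cdot)$ term into $2\log(t+1)+(c_2-1)$, so the right side is at most $5\log(t+1)+c_2$ with $c_2$ exactly as in \cref{def:c1_a2b2_a3b3}.

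For the $\nabla_y g$ bound I split on whether $k_2$ of \Cref{lm:bar} exists. If it does not, then $\beta_{t+1}\le C_\beta$ throughout, so the sum is at most $\beta_0^{-1}(C_\beta^2-\beta_0^2)\le b_2$, a constant. If $k_2$ exists, \cref{eq:boundgk0} reduces the sum to a constant plus $\frac{(\mu+L_{g,1})^2L_y^2}{\mu L_{g,1}}\sum_{k=k_2}^t\frac{\|\bar{\nabla}f(x_k,y_k,v_k)\|^2}{\alpha_{k+1}^2\varphi_{k+1}}$; since $k\ge k_2$ forces $\varphi_{k+1}\ge\beta_{k+1}>C_\beta\ge 1$ by the definition of $k_2$ and \cref{def:C}, I replace $1/\varphi_{k+1}$ by $1/C_\beta$ and apply the first bound to get $\sum_{k=k_2}^t\frac{\|\bar{\nabla}f\|^2}{\alpha_{k+1}^2\varphi_{k+1}}\le C_\beta^{-1}\big(5\log(t+1)+c_2\big)$; collecting constants yields precisely $a_2\log(t+1)+b_2$. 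The $\nabla_v R$ bound is analogous: if $k_3$ does not exist then $\gamma_{t+1}\le C_\gamma$ and the sum is at most $\gamma_0^{-1}(C_\gamma^2-\gamma_0^2)\le b_3$; if $k_3$ exists, I first extend \Cref{lm:sumR} from $[k_3,t]$ to $[k_0,t]$ --- which adds the constant $C_\gamma^2/\gamma_0-\gamma_0$ exactly as in the $\beta$-computation of \cref{eq:boundgk0} --- and then substitute the first bound into the two $\sum\|\bar{\nabla}f\|^2/\alpha_{k+1}^2$ terms and the second bound into the $\sum\|\nabla_y g\|^2/\beta_{k+1}$ term on its right-hand side; reading off the coefficient of $\log(t+1)$ gives $a_3$ and reading off the constant gives $b_3$.

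There is no genuine mathematical obstacle here; the work is pure bookkeeping, and the points requiring care are: (i) the passage from $\log$ of a logarithmically growing argument back to a logarithmic bound, which must respect nonnegativity of the constants involved; (ii) the index shifts when passing from sums indexed from $k_2$ or $k_3$ to sums indexed from an arbitrary $k_0$, including the convention $\sum_{k=m}^n(\cdot)=0$ for $m>n$ (relevant when $k_0$ already exceeds $k_2$ or $k_3$); and (iii) verifying term by term that the constants produced coincide with those in \cref{def:c1_a2b2_a3b3} --- in particular, the factor $1/C_\beta$ pulled out of $1/\varphi_{k+1}$ is exactly what converts the coefficient $\frac{(\mu+L_{g,1})^2L_y^2}{\mu L_{g,1}}$ appearing in \Cref{lm:sumG} into the coefficient $a_2=\frac{5(\mu+L_{g,1})^2L_y^2}{\mu L_{g,1}C_\beta}$ of the final statement.
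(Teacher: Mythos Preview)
Your proposal is correct and follows essentially the same three-part structure as the paper's proof: start from \cref{eq:boundf} and plug in the $\varphi_{t+1}$ bound from \Cref{lm:varphi} to get the $\bar{\nabla}f$ estimate; then split the $\nabla_y g$ sum on whether $k_2$ exists and, when it does, feed \cref{eq:boundgk0} with the just-obtained $\bar{\nabla}f$ bound (using $\varphi_{k+1}>C_\beta$ for $k\ge k_2$); and finally do the analogous split on $k_3$ for the $\nabla_v R$ sum, invoking \Cref{lm:sumR} plus the two earlier bounds. The elementary inequality you isolate for Part~I, the $1/\varphi_{k+1}\le 1/C_\beta$ replacement in Part~II, and the $C_\gamma^2/\gamma_0-\gamma_0$ extension constant in Part~III are exactly the devices the paper uses, so there is no substantive difference.
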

\begin{proof}
Based on the logarithmic-function form bound in \Cref{lm:varphi}, we can further have the logarithmic-function form bounds of the components in \Cref{lm:objective} as the following 3 parts. 

\noindent
{\bf Part I: the bound of $\sum \frac{\|\bar{\nabla}f(x_k, y_k, v_k)\|^2}{\alpha_{k+1}^2}$ in terms of logarithmic function.} 

\noindent
Firstly, we bound $\sum_{k=k_0}^t \frac{\|\bar{\nabla}f(x_k, y_k, v_k)\|^2}{\alpha_{k+1}^2}$ for arbitrary $k_0 < t$. 
Back to \cref{eq:boundf}, by plugging in \cref{eq:boundphi}, we have
\begingroup
\allowdisplaybreaks
\begin{align}\label{eq:boundf2}
    \sum_{k=k_0}^t& \frac{\|\bar{\nabla}f(x_k, y_k, v_k)\|^2}{\alpha_{k+1}^2} \nonumber \\
    \leq& 3\log(t+1) + 2\log(C_{g_{xy}}\bar{a}\varphi_{t+1} + C_{g_{xy}}\bar{b}+C_{f_x} + \alpha_0) +1 \nonumber \\
    \overset{(a)}{\leq}& 3\log(t+1) + 2\log\big(C_{g_{xy}}\bar{a}a_1\log(t+1) + C_{g_{xy}}\bar{a}b_1 + C_{g_{xy}}\bar{b}+C_{f_x} + \alpha_0\big) +1 \nonumber \\
    \leq& 3\log(t+1) + 2\log\big(C_{g_{xy}}\bar{a}a_1(t+1) + C_{g_{xy}}\bar{a}b_1 + C_{g_{xy}}\bar{b}+C_{f_x} + \alpha_0\big) +1 \nonumber \\
    \leq& 3\log(t+1) + 2\log\big((C_{g_{xy}}\bar{a}a_1 + C_{g_{xy}}\bar{a}b_1 + C_{g_{xy}}\bar{b}+C_{f_x} + \alpha_0)(t+1)\big) +1 \nonumber \\
    \leq& 5\log(t+1) + 2\log\big(C_{g_{xy}}\bar{a}a_1 + C_{g_{xy}}\bar{a}b_1 + C_{g_{xy}}\bar{b}+C_{f_x} + \alpha_0\big) +1 \nonumber \\
    \overset{(b)}{=:}& 5\log(t+1) + c_2, 
\end{align}
\endgroup
where (a) results from \cref{eq:boundphi}; (b) refers to \cref{def:c1_a2b2_a3b3}.

\noindent
{\bf Part II: the bound of $\sum \frac{\|\nabla_y g(x_k, y_k)\|^2}{\beta_{k+1}}$ in terms of logarithmic function.} 

\noindent
Secondly, we bound $\sum_{k=k_0}^t \frac{\|\nabla_y g(x_k, y_k)\|^2}{\beta_{k+1}}$.
We split this part into two cases using \Cref{lm:bar}. 

\noindent
{\bf Case 1:} If $\beta_{t+1} \leq C_\beta$, we have
\begin{align}
    \sum_{k=k_0}^t \frac{\|\nabla_y g(x_k, y_k)\|^2}{\beta_{k+1}} \leq \frac{\sum_{k=k_0}^t \|\nabla_y g(x_k, y_k)\|^2}{\beta_0} \leq \frac{\beta_{t+1}^2 - \beta_{k_0}^2}{\beta_0} \leq \frac{C_\beta^2 - \beta_0^2}{\beta_0} = \frac{C_\beta^2}{\beta_0} - \beta_0 \leq b_2. \nonumber
\end{align}

\noindent
{\bf Case 2:} If $\beta_{t+1} > C_\beta$, we have $k_2 \leq t$, where $k_2$ refers to \Cref{lm:bar}. Then we can use \cref{eq:boundgk0}, which indicates  
\begingroup
\allowdisplaybreaks
{
\small
\begin{align}\label{eq:boundgfinal}
    \sum_{k=k_0}^t& \frac{\|\nabla_y g(x_k, y_k)\|^2}{\beta_{k+1}} \nonumber \\
    \leq& \bigg(\frac{C_\beta^2}{\beta_{0}}-\beta_{0}+\frac{(\mu+L_{g,1})C_\beta^2}{\mu^2} + \frac{(\mu+L_{g,1})^2L_y^2}{\mu L_{g,1}\varphi_{0}}\bigg) + \frac{(\mu+L_{g,1})^2L_y^2}{\mu L_{g,1}C_\beta}\sum_{k=k_2}^t\frac{\|\bar{\nabla}f(x_k, y_k, v_k)\|^2}{\alpha_{k+1}^2} \nonumber \\
    \leq & \frac{5(\mu+L_{g,1})^2L_y^2}{\mu L_{g,1}C_\beta} \log(t+1) + \frac{(\mu+L_{g,1})^2L_y^2}{\mu L_{g,1}C_\beta}c_2 + \bigg(\frac{C_\beta^2}{\beta_{0}}-\beta_{0}+\frac{(\mu+L_{g,1})C_\beta^2}{\mu^2} + \frac{(\mu+L_{g,1})^2L_y^2}{\mu L_{g,1}\varphi_{0}}\bigg) \nonumber \\
    \overset{(a)}{=:}& a_2 \log(t+1) + b_2, 
\end{align}}
\endgroup
where the second inequality uses \eqref{eq:boundf2}, and (a) refers to \cref{def:c1_a2b2_a3b3}. 
Since the upper bound of {\bf Case 2} is larger, we take \cref{eq:boundgfinal} as our final result. 

\noindent
{\bf Part III: the bound of $\sum \frac{\|\nabla_v R(x_k, y_k, v_k)\|^2}{\varphi_{k+1}}$ in terms of logarithmic function.} 

\noindent
Last, we bound $\sum_{k=k_0}^t \frac{\|\nabla_v R(x_k, y_k, v_k)\|^2}{\varphi_{k+1}}$.
We split this part into two cases using \Cref{lm:bar}. 

\noindent
{\bf Case 1:} If $\gamma_{t+1} \leq C_\gamma$, we have
\begin{align}
    \sum_{k=k_0}^t \frac{\|\nabla_v R(x_k, y_k, v_k)\|^2}{\varphi_{k+1}} \leq \frac{\sum_{k=k_0}^t\|\nabla_v R(x_k, y_k, v_k)\|^2}{\varphi_0} \leq \frac{C_\gamma^2 - \gamma_{0}^2}{\gamma_0} \leq \frac{C_\gamma^2}{\gamma_0} - \gamma_0 \leq b_3. \nonumber
\end{align}

\noindent
{\bf Case 2:} If $\gamma_{t+1} > C_\gamma$, we have $k_3 \leq t$, where $k_3$ refers to \Cref{lm:bar}. 
\begingroup
\allowdisplaybreaks
\begin{align}\label{eq:boundRfinal}
    \sum_{k=k_0}^t& \frac{\|\nabla_v R(x_k, y_k, v_k)\|^2}{\varphi_{k+1}} \nonumber \\
    \overset{(a)}{\leq} &\sum_{k=k_0}^{k_3-1} \frac{\|\nabla_v R(x_k, y_k, v_k)\|^2}{\varphi_{k+1}} + \sum_{k=k_3}^t \frac{\|\nabla_v R(x_k, y_k, v_k)\|^2}{\varphi_{k+1}} \nonumber \\
    \overset{(b)}{\leq} & \frac{C_\gamma^2}{\gamma_0} - \gamma_0 + \frac{4(\mu+C_{g_{yy}})C_\beta^2}{\mu^4} \bigg(\frac{L_{g,2}C_{f_y}}{\mu}+L_{f,1}\bigg)^2 + \frac{4(\mu+C_{g_{yy}})C_\gamma^2}{\mu^2} \nonumber \\
    &+ \frac{4(\mu+C_{g_{yy}})(\mu+L_{g,1})L_y^2}{\mu^3 L_{g,1}\varphi_0} \bigg(\frac{L_{g,2}C_{f_y}}{\mu}+L_{f,1}\bigg)^2\sum_{k=k_2-1}^{k_3-2}\frac{\|\Bar{\nabla} f(x_k, y_k, v_k)\|^2}{\alpha_{k+1}^2} \nonumber \\
    &+ \frac{4(\mu+C_{g_{yy}})^2L_v^2}{\mu C_{g_{yy}}C_\gamma}\sum_{k=k_3-1}^t \frac{\|\bar{\nabla}f(x_k, y_k, v_k)\|^2}{\alpha_{k+1}^2} \nonumber \\
    &+ \bigg(\frac{4(\mu+C_{g_{yy}})^2}{\mu C_{g_{yy}}}+8\bigg)\bigg(\frac{L_{g,2}C_{f_y}}{\mu}+L_{f,1}\bigg)^2\frac{1}{\mu^2}\sum_{k=k_3-1}^t\frac{\|\nabla_y g(x_k, y_k)\|^2}{\beta_{k+1}} \nonumber \\
    \overset{(c)}{\leq} & \frac{C_\gamma^2}{\gamma_0} - \gamma_0 + \frac{4(\mu+C_{g_{yy}})C_\beta^2}{\mu^4} \bigg(\frac{L_{g,2}C_{f_y}}{\mu}+L_{f,1}\bigg)^2 + \frac{4(\mu+C_{g_{yy}})C_\gamma^2}{\mu^2} \nonumber \\
    &+ \bigg(\frac{4(\mu+C_{g_{yy}})(\mu+L_{g,1})L_y^2}{\mu^3 L_{g,1}\varphi_0} \Big(\frac{L_{g,2}C_{f_y}}{\mu}+L_{f,1}\Big)^2 + \frac{4(\mu+C_{g_{yy}})^2L_v^2}{\mu C_{g_{yy}}C_\gamma}\bigg)\big(5\log(t+1)+c_2\big) \nonumber \\
    &+ \bigg(\frac{4(\mu+C_{g_{yy}})^2}{\mu C_{g_{yy}}}+8\bigg)\bigg(\frac{L_{g,2}C_{f_y}}{\mu}+L_{f,1}\bigg)^2\frac{1}{\mu^2}\big(a_2 \log(t+1) + b_2\big) \nonumber \\
    \overset{(d)}{=:} & a_3 \log(t+1) + b_3,
\end{align}
\endgroup
where (a) allows $\sum_{k=k_0}^{k_3-1} \frac{\|\nabla_v R(x_k, y_k, v_k)\|^2}{\varphi_{k+1}} = 0$ when $k_0\geq k_3$; (b) uses $C_\gamma \geq \gamma_0$ and \Cref{lm:sumR}; (c) follows from \cref{eq:boundf2} and \cref{eq:boundgfinal}; 
(d) refers to \cref{def:c1_a2b2_a3b3}. Since the upper bound of {\bf Case 2} is larger, we take \cref{eq:boundRfinal} as our final result. 

\noindent
Thus, the proof is complete. 
\end{proof}
\noindent
Next, we show the upper bound of $\alpha_t$. 
\begin{lemma}[The upper bound of $\alpha_{t}$]\label{lm:alpha}
Under Assumptions \ref{as:sc}, \ref{as:lip}, \ref{as:inf}, suppose the number of total iteration rounds in \Cref{alg:main} is $T$. 
If there exists $k_1 \leq T$ described in \Cref{lm:bar}, we have 
$$
\left\{
\begin{aligned}
\alpha_{t} \leq & C_\alpha, \quad & t \leq k_1; \\
\alpha_{t} \leq & C_\alpha + \Big(a_4 \log(t) + b_4 + {4}\big(\Phi(x_0) - \inf_x \Phi(x)\big)\Big)\varphi_{t},\quad & t \geq k_1, 
\end{aligned}
\right.
$$
where $a_4$, $b_4$ are defined as
\begingroup
\allowdisplaybreaks
\begin{align}\label{def:a4b4}
    a_4 :=& \frac{2\bar{L}^2a_2}{\mu^2C_\alpha}\Big[1 + \frac{2}{\mu^2}\big(\frac{L_{g,2}C_{f_y}}{\mu}+L_{f,1}\big)^2\Big] + \frac{4\bar{L}^2a_3}{\mu^2C_\alpha} \nonumber \\
    b_4 :=& \frac{2\bar{L}^2b_2}{\mu^2C_\alpha}\Big[1 + \frac{2}{\mu^2}\big(\frac{L_{g,2}C_{f_y}}{\mu}+L_{f,1}\big)^2\Big] + \frac{4\bar{L}^2b_3}{\mu^2C_\alpha} + \frac{2L_{\Phi}}{\varphi_{0}^2}\frac{C_\alpha^2}{\alpha_0^2},
\end{align}
\endgroup
and the upper bound of $\varphi_t := \max\{\beta_t, \gamma_t\}$ refers to \Cref{lm:varphi}. 
When such $k_1$ does not exist, we have $\alpha_t \leq C_\alpha$ for any $t \leq T$. 
\end{lemma}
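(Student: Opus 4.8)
\textbf{Proof proposal for Lemma~\ref{lm:alpha} (the upper bound of $\alpha_t$).}

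The plan is to mirror the argument of \Cref{lm:alpha_double} but now incorporating the logarithmic bounds established in \Cref{lm:somebounds} and the $\varphi_t$-bound from \Cref{lm:varphi}. First, I would invoke \Cref{lm:bar}(a) to split into cases. If $k_1$ does not exist, then $\alpha_t \le \alpha_T \le C_\alpha$ for all $t$ by monotonicity, which gives the last claim immediately. If $k_1$ exists, then for $t \le k_1$ we have $\alpha_t \le C_\alpha$ directly from \Cref{lm:bar}, so the only real work is the bound for $t \ge k_1$.

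For $t \ge k_1$, I would start from the telescoping identity $\alpha_{t+1} = \alpha_t + \frac{\|\bar\nabla f(x_t,y_t,v_t)\|^2}{\alpha_{t+1}+\alpha_t} \le \alpha_{k_1} + \sum_{k=k_1}^{t} \frac{\|\bar\nabla f(x_k,y_k,v_k)\|^2}{\alpha_{k+1}}$, exactly as in \cite{ward2020adagrad}. Since $\alpha_{k+1} \ge C_\alpha$ for $k \ge k_1$, I can write $\frac{1}{\alpha_{k+1}} = \frac{\varphi_{k+1}}{\alpha_{k+1}\varphi_{k+1}} \le \frac{\varphi_{t+1}}{C_\alpha}\cdot\frac{1}{\alpha_{k+1}\varphi_{k+1}}\cdot\alpha_{k+1}$; more carefully, I would bound $\frac{\|\bar\nabla f\|^2}{\alpha_{k+1}} \le \frac{\varphi_{t+1}}{C_\alpha}\cdot\frac{\|\bar\nabla f\|^2}{\alpha_{k+1}\varphi_{k+1}}$ only after controlling the descent. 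The cleaner route: use the tight descent inequality \eqref{eq:objective_title2} from \Cref{lm:objective}, which holds for $t\ge k_1$, rearrange it to
\[
\frac{\|\bar\nabla f(x_k,y_k,v_k)\|^2}{\alpha_{k+1}\varphi_{k+1}} \le 4\big(\Phi(x_k)-\Phi(x_{k+1})\big) + \frac{2\bar L^2}{\mu^2}\Big[1+\tfrac{2}{\mu^2}\big(\tfrac{L_{g,2}C_{f_y}}{\mu}+L_{f,1}\big)^2\Big]\frac{\|\nabla_y g(x_k,y_k)\|^2}{\alpha_{k+1}\varphi_{k+1}} + \frac{4\bar L^2}{\mu^2}\frac{\|\nabla_v R(x_k,y_k,v_k)\|^2}{\alpha_{k+1}\varphi_{k+1}},
\]
sum over $k=k_1,\dots,t$, telescope the $\Phi$ terms, and for the remaining terms use $\alpha_{k+1}\ge C_\alpha$ (for $k\ge k_1$) together with the logarithmic bounds $\sum \frac{\|\nabla_y g\|^2}{\beta_{k+1}} \le a_2\log(t+1)+b_2$ and $\sum\frac{\|\nabla_v R\|^2}{\varphi_{k+1}}\le a_3\log(t+1)+b_3$ from \Cref{lm:somebounds} (noting $\varphi_{k+1}\ge\beta_{k+1}$ for the $v$-term and $\varphi_{k+1}=\max\ge\beta_{k+1}$). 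I also need to handle $\Phi(x_{k_1})$ by telescoping the non-tight inequality \eqref{eq:objective_title1} from $0$ to $k_1-1$, which contributes the term $\frac{2L_\Phi}{\varphi_0^2}\cdot\frac{C_\alpha^2}{\alpha_0^2}$ in $b_4$ (using $\alpha_{k+1}^2\varphi_{k+1}^2 \ge \alpha_0^2\varphi_0^2$ and $\sum_{k=0}^{k_1-1}\|\bar\nabla f\|^2 \le \alpha_{k_1}^2 \le C_\alpha^2$), plus the $\sum\epsilon'$-type terms which now are the same log terms. Combining, $\sum_{k=k_1}^t \frac{\|\bar\nabla f\|^2}{\alpha_{k+1}\varphi_{k+1}} \le a_4\log(t)+b_4+4(\Phi(x_0)-\inf\Phi)$ with $a_4,b_4$ as in \eqref{def:a4b4}; then multiplying through by $\varphi_{t+1}\ge\varphi_{k+1}$ converts this to a bound on $\sum\frac{\|\bar\nabla f\|^2}{\alpha_{k+1}}$, and adding $\alpha_{k_1}\le C_\alpha$ yields the claim.

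The main obstacle I anticipate is the bookkeeping of which $\varphi$-factors can be pulled out of the sum and bounded by $\varphi_{t+1}$ versus $\varphi_0$ at different points: the descent lemma produces terms divided by $\alpha_{k+1}\varphi_{k+1}$, the telescoping identity for $\alpha$ needs terms divided by $\alpha_{k+1}$ alone, and \Cref{lm:somebounds} gives sums divided by $\beta_{k+1}$ or $\varphi_{k+1}$ — reconciling these three normalizations while keeping all constants matching the stated $a_4,b_4$ requires care, especially the factor $\varphi_{t+1}$ that appears multiplicatively in the final bound (so the bound on $\alpha_t$ is genuinely coupled to \Cref{lm:varphi}, not independent of it). A secondary subtlety is ensuring the index shifts (e.g., $\sum_{k=k_1}^t$ vs.\ $\sum_{k=0}^t$, and the $k_0$ in \Cref{lm:somebounds} being taken as $k_1$ or $0$ as needed) are consistent, but \Cref{lm:somebounds} is stated for arbitrary $k_0\in[0,t)$ so this should go through cleanly.
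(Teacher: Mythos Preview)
Your proposal is correct and follows essentially the same route as the paper: split via \Cref{lm:bar}, for $t\ge k_1$ rearrange the tight descent inequality \eqref{eq:objective_title2} and sum, control $\Phi(x_{k_1})$ by telescoping \eqref{eq:objective_title1} over $k<k_1$ (yielding the $\frac{2L_\Phi}{\varphi_0^2}\frac{C_\alpha^2}{\alpha_0^2}$ term), merge the $\nabla_y g$ and $\nabla_v R$ sums over the two ranges, apply \Cref{lm:somebounds} with $k_0=0$, and finally pull out $\varphi_{t+1}$ from $\sum_{k\ge k_1}\frac{\|\bar\nabla f\|^2}{\alpha_{k+1}\varphi_{k+1}}$ to get the telescoped bound on $\alpha_{t+1}$. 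The bookkeeping concern you flag is exactly where the argument is delicate, but your sketch handles it correctly.
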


\begin{proof}
According to \Cref{lm:bar}, the proof can be split into the following three cases. 

\noindent
{\bf Case 1:} if $\alpha_T \leq C_\alpha$, for any $t < T$, we have the upper bound of $\alpha_{t+1}$ as $\alpha_{t+1} \leq C_\alpha$. 

\noindent
{\bf Case 2:} if $\alpha_T > C_\alpha$, there exists $k_1 \leq T$ described in \Cref{lm:bar}. 
Then we have the upper bound of $\alpha_{t+1}$ as $\alpha_{t+1} \leq C_\alpha$ for any $t < k_1$. 

\noindent
{\bf Case 3:} in the remaining proof, we only consider and explore the case $k_1 \leq t \leq T$ when $\alpha_T > C_\alpha$.

\noindent
From \Cref{lm:objective}, for $k \geq k_1$, we have
\begingroup
\allowdisplaybreaks
\begin{align}
    \Phi(x_{k+1}) 
    \leq& \Phi(x_k) - \frac{1}{2\alpha_{k+1}\varphi_{k+1}}\|\nabla \Phi(x_k)\|^2 - \frac{1}{4\alpha_{k+1}\varphi_{k+1}}\|\bar{\nabla}f(x_k, y_k, v_k)\|^2 \nonumber \\
    & + \frac{\bar{L}^2}{2\mu^2}\bigg[1 + \frac{2}{\mu^2}\Big(\frac{L_{g,2}C_{f_y}}{\mu}+L_{f,1}\Big)^2\bigg]\frac{\big\|\nabla_y g(x_k,y_k)\big\|^2}{\alpha_{k+1}\varphi_{k+1}} + \frac{\bar{L}^2}{\mu^2}\frac{\big\|\nabla_v R(x_k,y_k,v_k)\|^2}{\alpha_{k+1}\varphi_{k+1}}, \nonumber
\end{align}
\endgroup
which indicates that 
\begin{align}
    &\frac{\|\bar{\nabla}f(x_k, y_k, v_k)\|^2}{\alpha_{k+1}\varphi_{k+1}} \leq 4\big(\Phi(x_k) - \Phi(x_{k+1})\big) \nonumber \\
    &\quad \quad \quad \quad \quad + \frac{2\bar{L}^2}{\mu^2}\bigg[1 + \frac{2}{\mu^2}\Big(\frac{L_{g,2}C_{f_y}}{\mu}+L_{f,1}\Big)^2\bigg]\frac{\big\|\nabla_y g(x_k,y_k)\big\|^2}{\alpha_{k+1}\varphi_{k+1}} + \frac{4\bar{L}^2}{\mu^2}\frac{\big\|\nabla_v R(x_k,y_k,v_k)\|^2}{\alpha_{k+1}\varphi_{k+1}}. \nonumber
\end{align}
By taking summation, we have 
\begingroup
\allowdisplaybreaks
\begin{align}\label{eq:boundffinal1}
    \sum_{k=k_1}^t & \frac{\|\bar{\nabla}f(x_k, y_k, v_k)\|^2}{\alpha_{k+1}\varphi_{k+1}} \nonumber \\
    \leq & 4\big(\Phi(x_{k_1}) - \inf_x \Phi(x)\big) + \frac{2\bar{L}^2}{\mu^2C_\alpha}\bigg[1 + \frac{2}{\mu^2}\Big(\frac{L_{g,2}C_{f_y}}{\mu}+L_{f,1}\Big)^2\bigg]\sum_{k=k_1}^t\frac{\big\|\nabla_y g(x_k,y_k)\big\|^2}{\varphi_{k+1}} \nonumber \\
    & + \frac{4\bar{L}^2}{\mu^2C_\alpha}\sum_{k=k_1}^t\frac{\big\|\nabla_v R(x_k,y_k,v_k)\|^2}{\varphi_{k+1}} .
\end{align}
\endgroup
For $\Phi(x_{k_1})$, by telescoping \cref{eq:objective_title1} in \Cref{lm:objective}, we get
\begingroup
\allowdisplaybreaks
\begin{align}\label{eq:boundPhik}
    \Phi(x_{k_1}) \leq& \Phi(x_{0}) + \frac{L_{\Phi}}{2}\sum_{k=0}^{k_1-1}\frac{\|\bar{\nabla}f(x_t, y_t, v_t)\|^2}{\alpha_{t+1}^2\varphi_{t+1}^2} \nonumber \\
    & + \frac{\bar{L}^2}{2\mu^2}\bigg[1 + \frac{2}{\mu^2}\Big(\frac{L_{g,2}C_{f_y}}{\mu}+L_{f,1}\Big)^2\bigg]\sum_{k=0}^{k_1-1}\frac{\big\|\nabla_y g(x_t,y_t)\big\|^2}{\alpha_{t+1}\varphi_{t+1}} \nonumber \\
    & + \frac{\bar{L}^2}{\mu^2}\sum_{k=0}^{k_1-1}\frac{\big\|\nabla_v R(x_t,y_t,v_t)\|^2}{\alpha_{t+1}\varphi_{t+1}}. 
\end{align}
\endgroup
By plugging \cref{eq:boundPhik} into \cref{eq:boundffinal1}, we have
\begingroup
\allowdisplaybreaks
\begin{align}
    \sum_{k=k_1}^t & \frac{\|\bar{\nabla}f(x_k, y_k, v_k)\|^2}{\alpha_{k+1}\varphi_{k+1}} \nonumber \\
    \leq & 4\big(\Phi(x_{0}) - \inf_x \Phi(x)\big) + \frac{2\bar{L}^2}{\mu^2C_\alpha}\bigg[1 + \frac{2}{\mu^2}\Big(\frac{L_{g,2}C_{f_y}}{\mu}+L_{f,1}\Big)^2\bigg]\sum_{k=0}^t\frac{\big\|\nabla_y g(x_k,y_k)\big\|^2}{\varphi_{k+1}} \nonumber \\
    & + \frac{4\bar{L}^2}{\mu^2C_\alpha}\sum_{k=0}^t\frac{\big\|\nabla_v R(x_k,y_k,v_k)\|^2}{\varphi_{k+1}} + \frac{2L_{\Phi}}{\varphi_{0}^2}\frac{C_\alpha^2}{\alpha_0^2}  \nonumber \\
    \leq & 4\big(\Phi(x_{0}) - \inf_x \Phi(x)\big) + \frac{2\bar{L}^2}{\mu^2C_\alpha}\bigg[1 + \frac{2}{\mu^2}\Big(\frac{L_{g,2}C_{f_y}}{\mu}+L_{f,1}\Big)^2\bigg]\sum_{k=0}^t\frac{\big\|\nabla_y g(x_k,y_k)\big\|^2}{\beta_{k+1}} \nonumber \\
    & + \frac{4\bar{L}^2}{\mu^2C_\alpha}\sum_{k=0}^t\frac{\big\|\nabla_v R(x_k,y_k,v_k)\|^2}{\varphi_{k+1}} + \frac{2L_{\Phi}}{\varphi_{0}^2}\frac{C_\alpha^2}{\alpha_0^2}  \nonumber \\
    \overset{(a)}{\leq}& 4\big(\Phi(x_{0}) - \inf_x \Phi(x)\big) + \frac{2\bar{L}^2}{\mu^2C_\alpha}\bigg[1 + \frac{2}{\mu^2}\Big(\frac{L_{g,2}C_{f_y}}{\mu}+L_{f,1}\Big)^2\bigg]\big(a_2\log(t+1) + b_2\big) \nonumber \\
    & + \frac{4\bar{L}^2}{\mu^2C_\alpha}\big(a_3\log(t+1) + b_3\big) + \frac{2L_{\Phi}}{\varphi_{0}^2}\frac{C_\alpha^2}{\alpha_0^2}  \nonumber \\
    \overset{(b)}{=:}& a_4 \log(t+1) + b_4 + 4\big(\Phi(x_{0}) - \inf_x \Phi(x)\big),
\end{align}
\endgroup
where 
(a) plugs in \cref{eq:boundgfinal} and \cref{eq:boundRfinal}; (b) refers to \cref{def:a4b4}.  
This immediately implies
\begingroup
\allowdisplaybreaks
\begin{align}
    \sum_{k=k_1}^t \frac{\|\bar{\nabla}f(x_k, y_k, v_k)\|^2}{\alpha_{k+1}} \leq \Big(a_4 \log(t+1) + b_4 + 4\big(\Phi(x_0) - \inf_x \Phi(x)\big)\Big)\varphi_{t+1}. 
\end{align}
\endgroup
Similarly, we can have the upper bound of $\alpha_{t+1}$ as
\begingroup
\allowdisplaybreaks
\begin{align}\label{eq:boundalphafinal}
    \alpha_{t+1} \leq& \alpha_{k_1} + \sum_{k=k_1}^t \frac{\|\bar{\nabla}f(x_k, y_k, v_k)\|^2}{\alpha_{k+1}} \nonumber \\
    \leq& C_\alpha + \Big(a_4 \log(t+1) + b_4 + 4\big(\Phi(x_0) - \inf_x \Phi(x)\big)\Big)\varphi_{t+1}.
\end{align}
\endgroup
Then the upper bound of $\alpha_{t+1}$ is proved. 
\end{proof}

\subsection{Proof of Theorem \ref{thm:main}}

Here we still assume the total iteration rounds of Algorithm \ref{alg:main} is $T$. 
According to \Cref{lm:bar}, the proof can be split into the following two cases. 

\noindent
{\bf Case 1:} If $\alpha_T \leq C_\alpha$, then by \Cref{lm:objective} and \Cref{lm:alpha}, we have 
\begingroup
\allowdisplaybreaks
\begin{align}
    \frac{\|\nabla \Phi(x_t)\|^2}{\alpha_{t+1}\varphi_{t+1}} \leq& 2\big(\Phi(x_t) - \Phi(x_{t+1})\big) + \frac{L_{\Phi}}{\alpha_{t+1}^2\varphi_{t+1}^2}\|\bar{\nabla}f(x_t, y_t, v_t)\|^2\nonumber \\
    & + \frac{\bar{L}^2}{\mu^2}\bigg[1 + \frac{2}{\mu^2}\Big(\frac{L_{g,2}C_{f_y}}{\mu}+L_{f,1}\Big)^2\bigg]\frac{\big\|\nabla_y g(x_t,y_t)\big\|^2}{\alpha_{t+1}\varphi_{t+1}} + \frac{2\bar{L}^2}{\mu^2}\frac{\big\|\nabla_v R(x_t,y_t,v_t)\|^2}{\alpha_{t+1}\varphi_{t+1}}, \nonumber
\end{align}
\endgroup

\noindent
By taking the average, we have 
\begingroup
\allowdisplaybreaks
\begin{align}
    \frac{1}{T}\sum_{t=0}^{T-1} \frac{\|\nabla \Phi(x_t)\|^2}{\alpha_{t+1}\varphi_{t+1}} 
    \leq & \frac{2}{T}\big(\Phi(x_0) - \Phi(x_{T})\big) + \frac{L_{\Phi}}{\alpha_0^2\varphi_0^2}\frac{1}{T}\sum_{t=0}^{T-1}\|\bar{\nabla}f(x_t, y_t, v_t)\|^2\nonumber \\
    & + \frac{\bar{L}^2}{\mu^2}\bigg[1 + \frac{2}{\mu^2}\Big(\frac{L_{g,2}C_{f_y}}{\mu}+L_{f,1}\Big)^2\bigg]\frac{1}{T}\sum_{t=0}^{T-1}\frac{\big\|\nabla_y g(x_t,y_t)\big\|^2}{\alpha_{t+1}\varphi_{t+1}} \nonumber \\
    & + \frac{2\bar{L}^2}{\mu^2}\frac{1}{T}\sum_{t=0}^{T-1}\frac{\big\|\nabla_v R(x_t,y_t,v_t)\|^2}{\alpha_{t+1}\varphi_{t+1}} \nonumber \\
    \leq & \frac{2}{T}\big(\Phi(x_0) - \Phi(x_{T})\big) + \frac{L_{\Phi}C_\alpha^2}{T\alpha_0^2\varphi_0^2}\nonumber \\
    & + \frac{\bar{L}^2}{\mu^2\alpha_0T}\bigg[1 + \frac{2}{\mu^2}\Big(\frac{L_{g,2}C_{f_y}}{\mu}+L_{f,1}\Big)^2\bigg]\sum_{t=0}^{T-1}\frac{\big\|\nabla_y g(x_t,y_t)\big\|^2}{\beta_{t+1}} \nonumber \\
    & + \frac{2\bar{L}^2}{\mu^2\alpha_0T}\sum_{t=0}^{T-1}\frac{\big\|\nabla_v R(x_t,y_t,v_t)\|^2}{\varphi_{t+1}} \nonumber \\
    \overset{(a)}{\leq} & \frac{2}{T}\big(\Phi(x_0) - \inf_x \Phi(x)\big) + \frac{L_{\Phi}C_\alpha^2}{T\alpha_0^2\varphi_0^2} \nonumber \\
    & + \frac{\bar{L}^2}{\mu^2\alpha_0T}\bigg[1 + \frac{2}{\mu^2}\Big(\frac{L_{g,2}C_{f_y}}{\mu}+L_{f,1}\Big)^2\bigg]\big(a_2\log(T) + b_2\big) \nonumber \\
    & + \frac{2\bar{L}^2}{\mu^2\alpha_0T}\big(a_3\log(T) + b_3\big) \nonumber \\
    =& \frac{1}{2T}\Big(a_4 \log(T) + b_4 + 4\big(\Phi(x_0) - \inf_x \Phi(x)\big)\Big), 
\end{align}
\endgroup
where (a) uses \Cref{lm:somebounds} with $k_0 = 0$.

\noindent
{\bf Case 2: }
If $\alpha_T > C_\alpha$, by \Cref{lm:bar}, 
there exists $ k_1 \leq T_0$ such that $\alpha_{k_1} \leq C_\alpha$, $\alpha_{k_1+1} > C_\alpha$. 

\noindent
Then for $t < k_1$ when $\alpha_T > C_\alpha$, from \Cref{lm:objective}, we have
\begin{align}
    \frac{\|\nabla \Phi(x_t)\|^2}{\alpha_{t+1}\varphi_{t+1}} \leq& 2\big(\Phi(x_t) - \Phi(x_{t+1})\big) + \frac{L_{\Phi}}{\alpha_{t+1}^2\varphi_{t+1}^2}\|\bar{\nabla}f(x_t, y_t, v_t)\|^2\nonumber \\
    & + \frac{\bar{L}^2}{\mu^2}\bigg[1 + \frac{2}{\mu^2}\Big(\frac{L_{g,2}C_{f_y}}{\mu}+L_{f,1}\Big)^2\bigg]\frac{\big\|\nabla_y g(x_t,y_t)\big\|^2}{\alpha_{t+1}\varphi_{t+1}} + \frac{2\bar{L}^2}{\mu^2}\frac{\big\|\nabla_v R(x_t,y_t,v_t)\|^2}{\alpha_{t+1}\varphi_{t+1}}.\nonumber
\end{align}
For $t \geq k_1$ when $\alpha_T > C_\alpha$, from \Cref{lm:objective}, we have
\begin{align}
    \frac{\|\nabla \Phi(x_t)\|^2}{\alpha_{t+1}\varphi_{t+1}} \leq& 2\big(\Phi(x_t) - \Phi(x_{t+1})\big)\nonumber \\
    & + \frac{\bar{L}^2}{\mu^2}\bigg[1 + \frac{2}{\mu^2}\Big(\frac{L_{g,2}C_{f_y}}{\mu}+L_{f,1}\Big)^2\bigg]\frac{\big\|\nabla_y g(x_t,y_t)\big\|^2}{\alpha_{t+1}\varphi_{t+1}} + \frac{2\bar{L}^2}{\mu^2}\frac{\big\|\nabla_v R(x_t,y_t,v_t)\|^2}{\alpha_{t+1}\varphi_{t+1}}. \nonumber
\end{align}
By taking the average, we can merge $t < k_1$ and $t \geq k_1$ as 
\begingroup
\allowdisplaybreaks
\begin{align}\label{eq:boundPhi1}
    \frac{1}{T}\sum_{t=0}^{T-1} \frac{\|\nabla \Phi(x_t)\|^2}{\alpha_{t+1}\varphi_{t+1}} 
    =& \frac{1}{T}\sum_{t=0}^{k_1-1} \frac{\|\nabla \Phi(x_t)\|^2}{\alpha_{t+1}\varphi_{t+1}} + \frac{1}{T}\sum_{t=k_1}^{T-1} \frac{\|\nabla \Phi(x_t)\|^2}{\alpha_{t+1}\varphi_{t+1}} \nonumber \\
    \leq & \frac{2}{T}\big(\Phi(x_0) - \Phi(x_{k_1})\big) + \frac{L_{\Phi}}{\alpha_0^2\varphi_0^2}\frac{1}{T}\sum_{t=0}^{k_1-1}\|\bar{\nabla}f(x_t, y_t, v_t)\|^2\nonumber \\
    & + \frac{\bar{L}^2}{\mu^2}\bigg[1 + \frac{2}{\mu^2}\Big(\frac{L_{g,2}C_{f_y}}{\mu}+L_{f,1}\Big)^2\bigg]\frac{1}{T}\sum_{t=0}^{k_1-1}\frac{\big\|\nabla_y g(x_t,y_t)\big\|^2}{\alpha_{t+1}\varphi_{t+1}} \nonumber \\
    & + \frac{2\bar{L}^2}{\mu^2}\frac{1}{T}\sum_{t=0}^{k_1-1}\frac{\big\|\nabla_v R(x_t,y_t,v_t)\|^2}{\alpha_{t+1}\varphi_{t+1}} \nonumber \\
    & + \frac{2}{T}\big(\Phi(x_{k_1}) - \Phi(x_{T})\big) \nonumber \\
    & + \frac{\bar{L}^2}{\mu^2}\bigg[1 + \frac{2}{\mu^2}\Big(\frac{L_{g,2}C_{f_y}}{\mu}+L_{f,1}\Big)^2\bigg]\frac{1}{T}\sum_{t={k_1}}^{T-1}\frac{\big\|\nabla_y g(x_t,y_t)\big\|^2}{\alpha_{t+1}\varphi_{t+1}} \nonumber \\
    & + \frac{2\bar{L}^2}{\mu^2}\frac{1}{T}\sum_{t={k_1}}^{T-1}\frac{\big\|\nabla_v R(x_t,y_t,v_t)\|^2}{\alpha_{t+1}\varphi_{t+1}} \nonumber \\
    \leq & \frac{2}{T}\big(\Phi(x_0) - \inf_x \Phi(x)\big) + \frac{L_{\Phi}}{\alpha_0^2\varphi_0^2}\frac{1}{T}\sum_{t=0}^{k_1-1}\|\bar{\nabla}f(x_t, y_t, v_t)\|^2\nonumber \\
    & + \frac{\bar{L}^2}{\mu^2}\bigg[1 + \frac{2}{\mu^2}\Big(\frac{L_{g,2}C_{f_y}}{\mu}+L_{f,1}\Big)^2\bigg]\frac{1}{T}\sum_{t=0}^{T-1}\frac{\big\|\nabla_y g(x_t,y_t)\big\|^2}{\alpha_{t+1}\varphi_{t+1}} \nonumber \\
    & + \frac{2\bar{L}^2}{\mu^2}\frac{1}{T}\sum_{t=0}^{T-1}\frac{\big\|\nabla_v R(x_t,y_t,v_t)\|^2}{\alpha_{t+1}\varphi_{t+1}} \nonumber \\
    \leq & \frac{2}{T}\big(\Phi(x_0) - \inf_x \Phi(x)\big) + \frac{L_{\Phi}}{\alpha_0^2\varphi_0^2}\frac{1}{T}\sum_{t=0}^{k_1-1}\|\bar{\nabla}f(x_t, y_t, v_t)\|^2\nonumber \\
    & + \frac{\bar{L}^2}{\mu^2\alpha_0T}\bigg[1 + \frac{2}{\mu^2}\Big(\frac{L_{g,2}C_{f_y}}{\mu}+L_{f,1}\Big)^2\bigg]\sum_{t=0}^{T-1}\frac{\big\|\nabla_y g(x_t,y_t)\big\|^2}{\varphi_{t+1}} \nonumber \\
    & + \frac{2\bar{L}^2}{\mu^2\alpha_0T}\sum_{t=0}^{T-1}\frac{\big\|\nabla_v R(x_t,y_t,v_t)\|^2}{\varphi_{t+1}} \nonumber \\
    \overset{(a)}{\leq} & \frac{2}{T}\big(\Phi(x_0) - \inf_x \Phi(x)\big) + \frac{L_{\Phi}C_\alpha^2}{T\alpha_0^2\varphi_0^2} \nonumber \\
    & + \frac{\bar{L}^2}{\mu^2\alpha_0T}\bigg[1 + \frac{2}{\mu^2}\Big(\frac{L_{g,2}C_{f_y}}{\mu}+L_{f,1}\Big)^2\bigg]\big(a_2\log(T) + b_2\big) \nonumber \\
    & + \frac{2\bar{L}^2}{\mu^2\alpha_0T}\big(a_3\log(T) + b_3\big) \nonumber \\
    =& \frac{1}{2T}\Big(a_4 \log(T) + b_4 + 4\big(\Phi(x_0) - \inf_x \Phi(x)\big)\Big), 
\end{align}
\endgroup
where (a) uses \Cref{lm:somebounds} by plugging in $k_0 = 0$.

\noindent
Note that {\bf Case 1} and {\bf Case 2} indicate the same result.
Thus, we have
\begingroup
\allowdisplaybreaks
\begin{align}
    \frac{1}{T}\sum_{t=0}^{T-1} \|\nabla \Phi(x_t)\|^2 
    \leq& \frac{1}{2T}\Big(a_4 \log(T) + b_4 + 4\big(\Phi(x_0) - \inf_x \Phi(x)\big)\Big)\alpha_T\varphi_T \nonumber \\
    \overset{(a)}{\leq}& \frac{1}{2T}\Big[\Big(a_4 \log(T) + b_4 + 4\big(\Phi(x_0) - \inf_x \Phi(x)\big)\Big)^2\varphi_T^2 \nonumber \\
    & \quad \quad + C_\alpha\Big(a_4 \log(T) + b_4 + 4\big(\Phi(x_0) - \inf_x \Phi(x)\big)\Big)\varphi_T\Big] \nonumber \\
    \overset{(b)}{\leq}& \frac{1}{2T}\Big[\Big(a_4 \log(T) + b_4 + 4\big(\Phi(x_0) - \inf_x \Phi(x)\big)\Big)^2\big(a_1\log(T)+b_1\big)^2 \nonumber \\
    & \quad \quad + C_\alpha\Big(a_4 \log(T) + b_4 + 4\big(\Phi(x_0) - \inf_x \Phi(x)\big)\Big)\big(a_1\log(T)+b_1\big)\Big] \nonumber \\
    = & \mathcal{O}\bigg(\frac{\log^4(T)}{T}\bigg). \nonumber
\end{align}
\endgroup
where (a) follows from \Cref{lm:alpha}; 
(b) results from \Cref{lm:varphi}. 
Thus, the proof is finished.

\subsection{Complexity Analysis of \Cref{alg:main} (Proof of \Cref{cor:main})}
Recall in \Cref{thm:main}, we know that there exist a constant $M$ such that
\begin{align}
    \frac{1}{T}\sum_{t=0}^{T-1}\|\nabla \Phi(x_t)\|^2 \leq \frac{M\log^4(T)}{T}. \nonumber
\end{align}
When we set the iteration number $T=\frac{MN}{\epsilon}\log^4 (\frac{M}{\epsilon})$ and assume the constant $N = 12^4$, we have 
\begingroup
\allowdisplaybreaks
{\small
\begin{align}
    \frac{M\log^4(T)}{T} =& \frac{M\log^4(\frac{MN}{\epsilon}\log^4 (\frac{M}{\epsilon}))}{\frac{MN}{\epsilon}\log^4 (\frac{M}{\epsilon})} \nonumber \\
    \leq & \frac{[\log(N) + \log(\frac{M}{\epsilon}) + 4\log(\log(\frac{M}{\epsilon}))]^4}{N\log^4(\frac{M}{\epsilon})}\cdot\epsilon \nonumber \\
    \overset{(a)}{\leq} & \bigg(\frac{\log(N) + 2\log(\frac{M}{\epsilon})}{N^{\frac{1}{4}}\log(\frac{M}{\epsilon})}\bigg)^4\cdot\epsilon \overset{(b)}{\leq} \epsilon, \nonumber
\end{align}}
\endgroup
where (a) follows from the inequality $\log(\log(\frac{M}{\epsilon})) \leq \frac{1}{4}\log(\frac{M}{\epsilon})$ for sufficiently small $\epsilon$; (b) holds because $\log(N) + 2\log(\frac{M}{\epsilon}) \leq N^{\frac{1}{4}}\log(\frac{M}{\epsilon})$ for $N = 12^4$ and $\epsilon$ is sufficiently small. 
Thus, to achieve $\epsilon$-accurate stationary point, we require $T =\frac{MN}{\epsilon}\log^4 (\frac{M}{\epsilon}) = \mathcal{O}\big(\frac{1}{\epsilon}\log^4 (\frac{1}{\epsilon})\big)$, and the gradient complexity is given by ${\rm Gc}(\epsilon) = \Omega(T) = \mathcal{O}\big(\frac{1}{\epsilon}\log^4 (\frac{1}{\epsilon})\big)$.

\end{document}